\def\thisismainpaper{0}   
\newtheorem{theorem}{Theorem}[section] 
\newtheorem{corollary}{Corollary}[theorem]
\newtheorem{lemma}{Lemma}[section]
\newtheorem{definition}{Definition}[section]
\newcommand{\nosemic}{\renewcommand{\@endalgocfline}{\relax}}
\newcommand{\dosemic}{\renewcommand{\@endalgocfline}{\algocf@endline}}
\let\oldnl\nl
\newcommand{\nonl}{\renewcommand{\nl}{\let\nl\oldnl}}
\def\cost{\mbox{cost}}
\def\dist{\mbox{dist}}
\def\opt{\mbox{opt}}
\def\approx{\mbox{approx}}
\def\wmin{w_{\min}}
\def\argmax{\mathop{\arg\max}}
\def\argmin{\mathop{\arg\min}}
\def\dVC{d_{\mbox{\tiny VC}}}
\newcommand{\rev}[1]{\textcolor{black}{#1}}
\newcommand{\appendixnumbering}[1] {\setcounter{definition}{0}
\setcounter{lemma}{0}
\setcounter{theorem}{0}
\setcounter{corollary}{0}
\setcounter{subsection}{0}
\renewcommand{\thedefinition}{#1.\arabic{definition}}
\renewcommand{\thelemma}{#1.\arabic{lemma}}
\renewcommand{\thetheorem}{#1.\arabic{theorem}}
\renewcommand{\thecorollary}{#1.\arabic{corollary}}
\renewcommand{\thesubsection}{#1.\arabic{subsection}.}
\renewcommand{\thesubsectiondis}{#1.\arabic{subsection}.}}
\begin{document}

\IEEEoverridecommandlockouts
\title{Robust Coreset Construction for Distributed Machine Learning}


\author[Hanlin Lu et al. ]
       {Hanlin Lu, \emph{Student Member, IEEE,} Ming-Ju Li, Ting He, \emph{Senior Member, IEEE,}
       Shiqiang Wang, \emph{Member, IEEE}, Vijaykrishnan Narayanan, \emph{Fellow, IEEE,} and Kevin S Chan, \emph{Senior Member, IEEE}
       \thanks{\scriptsize This paper has been accepted by IEEE Journal on Selected Areas in Communications. }
       \thanks{\scriptsize H. Lu, M. Li, T. He, and V. Narayanan are with Pennsylvania State University, University Park, PA 16802 USA (email: \{hzl263, mxl592, tzh58, vxn9\}@psu.edu). \\
       \indent S. Wang is with IBM T. J. Watson Center, Yorktown Heights, NY 10598 USA (email: wangshiq@us.ibm.com).\\
       \indent K. Chan is with Army Research Laboratory, Adelphi, MD 20783 USA (email: kevin.s.chan.civ@mail.mil). }
       \thanks{\scriptsize This research was partly sponsored by the U.S. Army Research Laboratory and the U.K. Ministry of Defence under Agreement Number W911NF-16-3-0001. Narayanan and Hanlin were partly supported by NSF 1317560. The views and conclusions contained in this document are those of the authors and should not be interpreted as representing the official policies, either expressed or implied, of the U.S. Army Research Laboratory, the U.S. Government, the U.K. Ministry of Defence or the U.K. Government. The U.S. and U.K. Governments are authorized to reproduce and distribute reprints for Government purposes notwithstanding any copyright notation hereon.\looseness=-1 }
       \vspace{-0.2in}
       }
       
\maketitle

\begin{abstract}
\rev{Coreset, which is a summary of the original dataset in the form of a small weighted set in the same sample space, provides a promising approach to enable machine learning over distributed data. Although viewed as a proxy of the original dataset, each coreset is only designed to approximate the cost function of a specific machine learning problem, and thus different coresets are often required to solve different machine learning problems, increasing the communication overhead. 
We resolve this dilemma by developing robust coreset construction algorithms that can support a variety of machine learning problems. Motivated by empirical evidence that suitably-weighted $k$-clustering centers provide a robust coreset, we harden the observation by establishing theoretical conditions under which the coreset provides a guaranteed approximation for a broad range of machine learning problems, and developing both centralized and distributed algorithms to generate coresets satisfying the conditions. The robustness of the proposed algorithms is verified through extensive experiments on diverse datasets with respect to both supervised and unsupervised learning problems.  
}
\end{abstract}

\begin{IEEEkeywords}
Coreset, distributed machine learning, distributed k-means, distributed k-median. 
\end{IEEEkeywords}

\section{ Introduction}

\rev{Sensor-driven distributed intelligent systems are becoming ubiquitous in a variety of applications such as precision agriculture, machine health monitoring, environmental tracking, traffic management, and infrastructure security. While the surge in distributed data generation powered by various sensors 
has enabled novel features based on machine learning techniques, 
there are many challenges towards collecting such distributed data. Various resource and application constraints make it challenging to gather voluminous data from distributed data sources, including limitation on network connectivity and bandwidth, limitation on power consumption, and the need to preserve the privacy of raw data.  Consequently, there is an increasing need for techniques to efficiently apply machine learning on distributed datasets.}

\rev{The current distributed machine learning approaches can be broadly classified as follows: (1) those that globally aggregate the outputs of local models; (2) those that construct global models from individual models derived from local data, and (3) those that share representative local data with a global aggregator. An example of the first approach involves independent computations at individual nodes and sharing the outputs of local models~\cite{Peteiro-Barral12PAI}. These independent outputs are aggregated using methods such as majority voting at a global aggregator. In contrast to the first approach, 
the second approach shares the models created from local data 
\cite{McMahan16AISTATS,Wang18INFOCOM,konevcny2016federated}. The individual models are combined to create a global model using techniques such as weighted average. The third approach \cite{Balcan13NIPS,Kannan14COLT,Barger16SDM}, which is the focus of this work, is to share summaries of local data towards the creation of a shared global model.  Our focus on this approach is driven by its promise to train multiple models using the same data summary, amortizing the communication overhead between the edge nodes and the global aggregator across machine learning problems. }

\begin{figure}[t!]
   \centerline{\includegraphics[width=.8\linewidth]{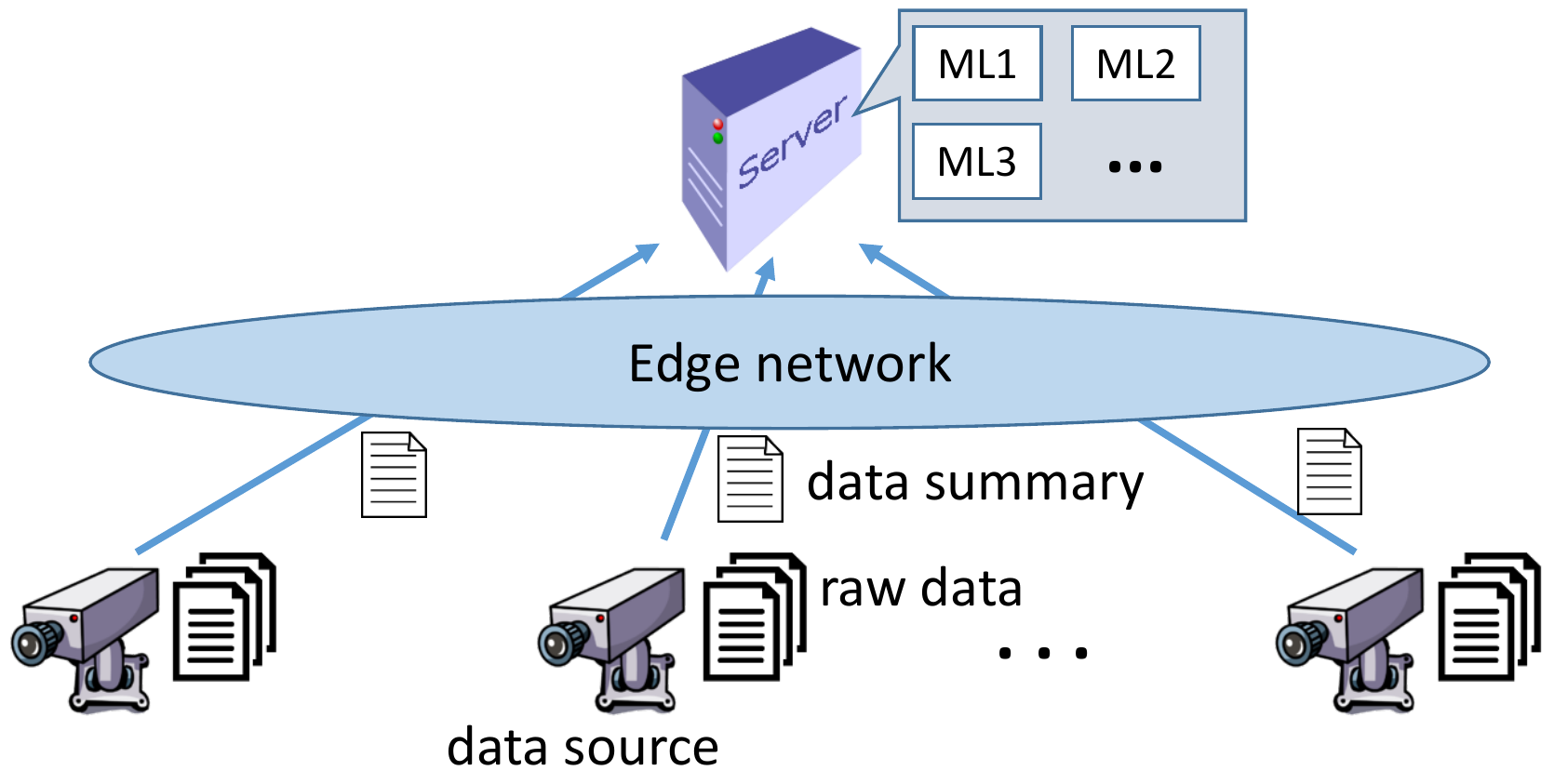}}
     \vspace{-.5em}
    \caption{Application scenario (ML $i$: machine learning model $i$).}
    \label{fig:systemmodel}
    \vspace{-1em}
\end{figure}

\rev{We are particularly interested in the edge-based learning scenario illustrated in Figure~\ref{fig:systemmodel}~\cite{Wang18INFOCOM}, where data sources report local summaries to an edge server, which then computes various machine learning models from these summaries. Specifically, our approach utilizes a \emph{coreset} for data summarization~\cite{Badoiu02STOC}. Creation of a coreset involves identifying a smaller weighted dataset that is representative of the more voluminous original dataset, while guaranteeing provable approximations. While there exist other data summarization approaches such as sketches, coresets are more convenient to use due to their ability to preserve the sample space of the original dataset. Although various algorithms have been developed to construct coresets with guaranteed approximation to the original dataset in terms of training machine learning models (see Section~\ref{subsec:Coreset Construction Algorithms}), existing coreset construction algorithms are tailor-made, which means that we have to collect different coresets to solve different problems. Our work attempts to identify whether a single coreset can be used to amortize the communication overhead over a broad set of machine learning problems.}\looseness=-1

\rev{In our preliminary study \cite{lu19GLOBECOM}, we observed, in a centralized setting with a single data source, that a particular type of coreset generated by $k$-means/median clustering gave a good approximation in training simple machine learning models. 
However, in practice, data are often distributed across multiple nodes, requiring a distributed way to construct the coreset with a low communication overhead. 
In addition, many modern machine learning models are based on neural networks, which were not studied in \cite{lu19GLOBECOM}. 
In this work, we deepen the study by including neural networks into the models of interest and extending the coreset construction algorithm to the distributed setting. 
We show that by carefully allocating a given coreset size among different data sources as well as between local centers and randomized samples, the proposed algorithm can automatically tune to the dataset at hand and thus achieves robustness to both the machine learning problem on top and the underlying data distribution. Through extensive experiments on diverse datasets, we verify the effectiveness of the proposed algorithms in supporting the learning of both unsupervised and supervised models 
with up to thousands of parameters, while only transferring a tiny fraction of the raw data. \looseness=-1
}


\subsection{Related Work}
Distributed learning is considered one of the most promising lines of research for large-scale learning \cite{Peteiro-Barral12PAI}, particularly for naturally distributed data. The main challenge in distributed learning is to incorporate information from each distributed dataset, without the high overhead of collecting all the data.

Traditionally, this is achieved by collecting the outputs of learned models or the models themselves \cite{Chan93AAAI}. The first approach (i.e., collecting outputs) is more popular among earlier works. For example, \cite{Kittler98IEEE} proposed various heuristic decision rules (e.g., majority vote) to combine outputs of local classifiers, and \cite{Wolpert92NN} proposed to train a global classifier using labeled outputs of local classifiers. The solution in \cite{Wolpert92NN} was modified in \cite{Tsoumakas02ECAI} to improve efficiency for large-scale distributed data, and
extended in \cite{Chan93AAAI} to include various ways of composing the global training set.  The idea was later used to build a descriptive model from distributed data \cite{Guo02LNCS}. To further improve the accuracy, a distributed-pasting-votes framework was proposed in \cite{Chawla04JMLR} to learn sets of classifiers (ensembles).

The second approach (i.e., collecting models) is more useful when we want to learn not just one answer, but the rule to give answers. For example, the distributed boosting framework in \cite{Lazarevic02DPD} requires nodes to share locally trained classifiers, and the federated learning framework in \cite{McMahan16AISTATS,smith2017federated} requires nodes to report locally learned models to a single node, which then aggregates the models and broadcasts the result to others.

Meanwhile, research on data summarization has inspired a third approach: collecting data summaries. Data summaries, e.g., coresets, sketches, projections \cite{Phillips16CoRR,Munteanu18KI,wang2017sketching}, are derived datasets that are much smaller than the original dataset, and can hence be transferred to a central location with a low communication overhead. This approach has been adopted in recent works, e.g., \cite{Balcan13NIPS,Kannan14COLT,Barger16SDM,makarychev2019performance}. 
We are particularly interested in a specific type of data summary, \emph{coreset}, as it can be used as a proxy of the original dataset. See Section~\ref{subsec:Coreset Construction Algorithms} for a detailed review of  related works on coreset.  


\subsection{Summary of Contributions}
 

We are the first to explore using coreset to support diverse machine learning problems on distributed data. Specifically: 

\begin{enumerate}
\item \rev{We empirically show that although existing coreset construction algorithms are designed for specific machine learning problems, an algorithm based on $k$-means clustering yields good performance for different problems.} \looseness=-1
\item We harden the above observation by proving that the optimal $k$-clustering (including $k$-means/median) gives a coreset that provides a guaranteed approximation for any machine learning problem with a sufficiently continuous cost function (\Cref{thm:1}). We further prove that the same holds for the coreset given by a suboptimal $k$-clustering algorithm, as long as it satisfies certain assumptions (\Cref{thm:2}).
\item \rev{We adapt an existing algorithm designed to support distributed $k$-clustering to construct a robust coreset over distributed data with a very low communication overhead.}
\item Our evaluations on diverse machine learning problems \rev{and datasets} verify that $k$-clustering (especially $k$-means) \rev{and its distributed approximations} provide coresets good for learning a variety of machine learning models, \rev{including neural networks with thousands of parameters, at a tiny fraction of the communication overhead}.  \looseness=-1 
\end{enumerate}

\textbf{Roadmap.} Section~\ref{sec:Background} reviews the background on coreset. Section~\ref{sec:Robust Coreset Construction} presents our main theoretical results on the universal performance guarantee of $k$-clustering-based coreset. Section~\ref{sec:Application in Distributed Setting} presents our distributed coreset construction algorithm.  Section~\ref{sec:Performance Evaluation} evaluates the proposed algorithm. Section~\ref{sec:Conclusion} concludes the paper. 
\if\thisismainpaper1
Supporting proofs, analysis, and evaluations are provided in \cite{coreset19:report}.
\fi

\section{Background}\label{sec:Background}

\subsection{Coreset and Machine Learning}\label{subsec:Coreset and Machine Learning}

Many machine learning problems can be cast as a cost (or loss) minimization problem. Given a dataset in $d$-dimensional space $P\subseteq \mathbb{R}^d$, a generic machine learning problem over $P$ can be characterized by a {solution space} $\mathcal{X}$, a \emph{per-point cost function} $\cost(p,x)$ ($p\in P$, $x\in\mathcal{X}$), and an \emph{overall cost function} $\cost(P,x)$ ($x\in\mathcal{X}$) that aggregates the per-point costs over $P$. For generality, we consider $P$ to be a weighted set, where each $p\in P$ has weight $w_p$. Let $\wmin \coloneqq \min_{p\in P} w_p$ denote the minimum weight. For an unweighted dataset, we have $w_p\equiv 1$. The machine learning problem is then to solve 
\begin{align}
x^* = \argmin_{x\in\mathcal{X}} \cost(P,x)    
\end{align}
for the optimal model parameter $x^*$. 

\emph{Example:} Let $\dist(p,x):= \| p - x \|_2$ denote the Euclidean distance between points $p$ and $x$. The \emph{minimum enclosing ball (MEB)} problem \cite{Badoiu02STOC} aims at minimizing the maximum distance between any data point and a center, i.e., $\cost(p,x)=\dist(p,x)$, $\cost(P,x) = \max_{p\in P}\cost(p,x)$, and $\mathcal{X}=\mathbb{R}^d$. 
The \emph{$k$-means clustering} problem aims at minimizing the weighted sum of the squared distance between each data point and the nearest center in a set of $k$ centers, i.e., $\cost(p,x) = \min_{x_i\in x}\dist(p,x_i)^2$, $\cost(P,x) = \sum_{p\in P}w_p \cost(p,x)$, and $\mathcal{X} = \{x:=\{x_i\}_{i=1}^k:\: x_i\in\mathbb{R}^d\}$. 

Typically, the overall cost is defined as: (i) \emph{sum cost}, i.e., $\cost(P,x)= \sum_{p\in P}w_p \cost(p,x)$ (e.g., $k$-means), or (ii) \emph{maximum cost}, i.e., $\cost(P,x) = \max_{p\in P}\cost(p,x)$ (e.g., MEB).

A coreset is a small weighted dataset in the same space as the original dataset that approximates the original dataset in terms of cost, formally defined below.

\begin{definition}[\cite{Feldman11STOC}]\label{def:epsilon-coreset}
A weighted set $S\subseteq \mathbb{R}^d$ with weights $u_q$ ($q\in S$) is an $\epsilon$-coreset for P with respect to (w.r.t.) $\cost(P,x)$ ($x\in \mathcal{X}$) if $\forall x\in \mathcal{X}$,
\begin{align}
(1-\epsilon)\cost(P,x) \leq \cost(S,x) \leq (1+\epsilon)\cost(P,x),
\end{align}
where $\cost(S,x)$ is defined in the same way as $\cost(P,x)$, i.e., $\cost(S,x) = \sum_{q\in S}u_q\cost(q,x)$ for sum cost, and $\cost(S,x) = \max_{q\in S}\cost(q,x)$ for maximum cost.
\end{definition}

From Definition~\ref{def:epsilon-coreset}, it is clear that the quality of a coreset depends on the cost function it needs to approximate, and hence the machine learning problem it supports.

\subsection{Coreset Construction Algorithms}\label{subsec:Coreset Construction Algorithms}

Because of the dependence on the cost function (Definition~\ref{def:epsilon-coreset}), existing coreset construction algorithms are tailor-made for specific machine learning problems.
Here we briefly summarize common approaches for coreset construction and representative algorithms, and refer to \cite{Phillips16CoRR,Munteanu18KI} for detailed surveys. 

\emph{1) Farthest point algorithms:} Originally proposed for MEB \cite{Badoiu02STOC,Badoiu03SODA}, these algorithms iteratively add to the coreset a point far enough or farthest from the current center, and stop when the enclosing ball of the coreset, expanded by $1+\epsilon$, includes all data points. This coreset has been used to compute $\epsilon$-approximation to several clustering problems, including $k$-center clustering, 1-cylinder clustering, and $k$-flat clustering \cite{Badoiu02STOC,Har-Peled02SCG}. As \emph{support vector machine (SVM)} training can be formulated as MEB problems \cite{Tsang05JMLR}, similar algorithms have been used to support SVM \cite{Tsang05JMLR,Har-Peled07IJCAI}. Variations have been used for dimensionality reduction \cite{Feldman16NIPS} and probabilistic MEB \cite{Feldman14SOCG}. These algorithms are considered as variations of the Frank-Wolfe algorithm \cite{Clarkson10TALG}. 

\emph{2) Random sampling algorithms:} These algorithms construct a coreset by sampling from the original dataset. The basic version, uniform sampling, usually requires a large coreset size to achieve a good approximation. Advanced versions use  \emph{sensitivity sampling} \cite{Langberg10SODA}, where each data point is sampled with a probability proportional to its contribution to the overall cost. Proposed for numerical integration \cite{Langberg10SODA}, the idea was extended into a framework supporting projective clustering problems that include $k$-median/means and \emph{principle component analysis (PCA)} as special cases \cite{Feldman11STOC}. The framework has been used to generate coresets for other problems, e.g., dictionary learning \cite{Feldman13JMIV} and dependency networks \cite{Molina18AAAI} , and is further generalized in \cite{Braverman16CoRR}. Although the framework can instantiate algorithms for different machine learning problems by plugging in different cost functions, the resulting coreset only guarantees approximation for the specific problem defined by the plugged-in cost function. 

\emph{3) Geometric decomposition algorithms:} These algorithms divide the sample space or input dataset into partitions, and then selecting points to represent each partition. Specific instances have been developed for weighted facility problems \cite{Feldman06FOCS}, Euclidean graph problems \cite{Frahling05STOC}, $k$-means/median \cite{Barger16SDM,Har-Peled04STOC}.

While there are a few works not fully covered by the above approaches, e.g., SVD-based algorithms in \cite{Feldman13SODA,Boutsidis13IT}, the above represents the key approaches used by existing coreset construction algorithms. Using a generic merge-and-reduce approach in \cite{Feldman11NIPS}, all these algorithms can be used to construct coresets of distributed datasets. Of course, the resulting coresets are still tailor-made for specific problems. 
In contrast, we seek coreset construction algorithms which can construct coresets that simultaneously support multiple machine learning problems.

\section{Robust Coreset Construction}\label{sec:Robust Coreset Construction}

Our main result is that selecting \emph{representative points} using clustering techniques yields a coreset that achieves a good approximation for a broad set of machine learning problems.
We will start with a centralized setting in this section, where the raw data reside at a single data source (that needs to compute and report the coreset to a server as illustrated in Figure~\ref{fig:systemmodel}), and leave the distributed setting where the raw data are distributed across multiple data sources to Section~\ref{sec:Application in Distributed Setting}. 

\subsection{Motivating Experiment}\label{subsec:Motivating Experiment}

\begin{figure}

\centering
\begin{minipage}{.24\textwidth}
\centerline{
  \includegraphics[width=\textwidth]{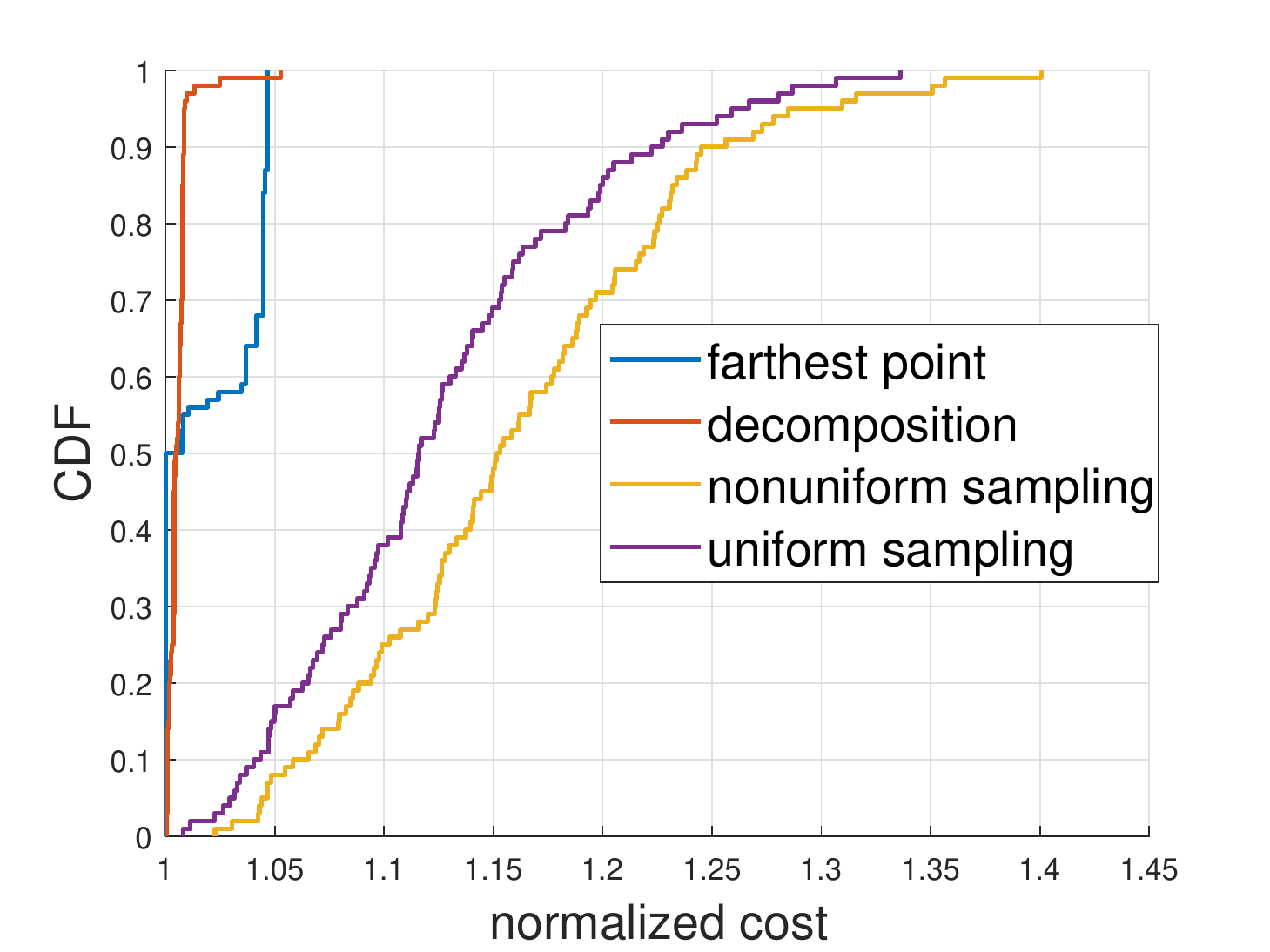}}
  \centerline{(a) MEB}
\end{minipage}\hfill
\begin{minipage}{.24\textwidth}
\centerline{
  \includegraphics[width=\textwidth]{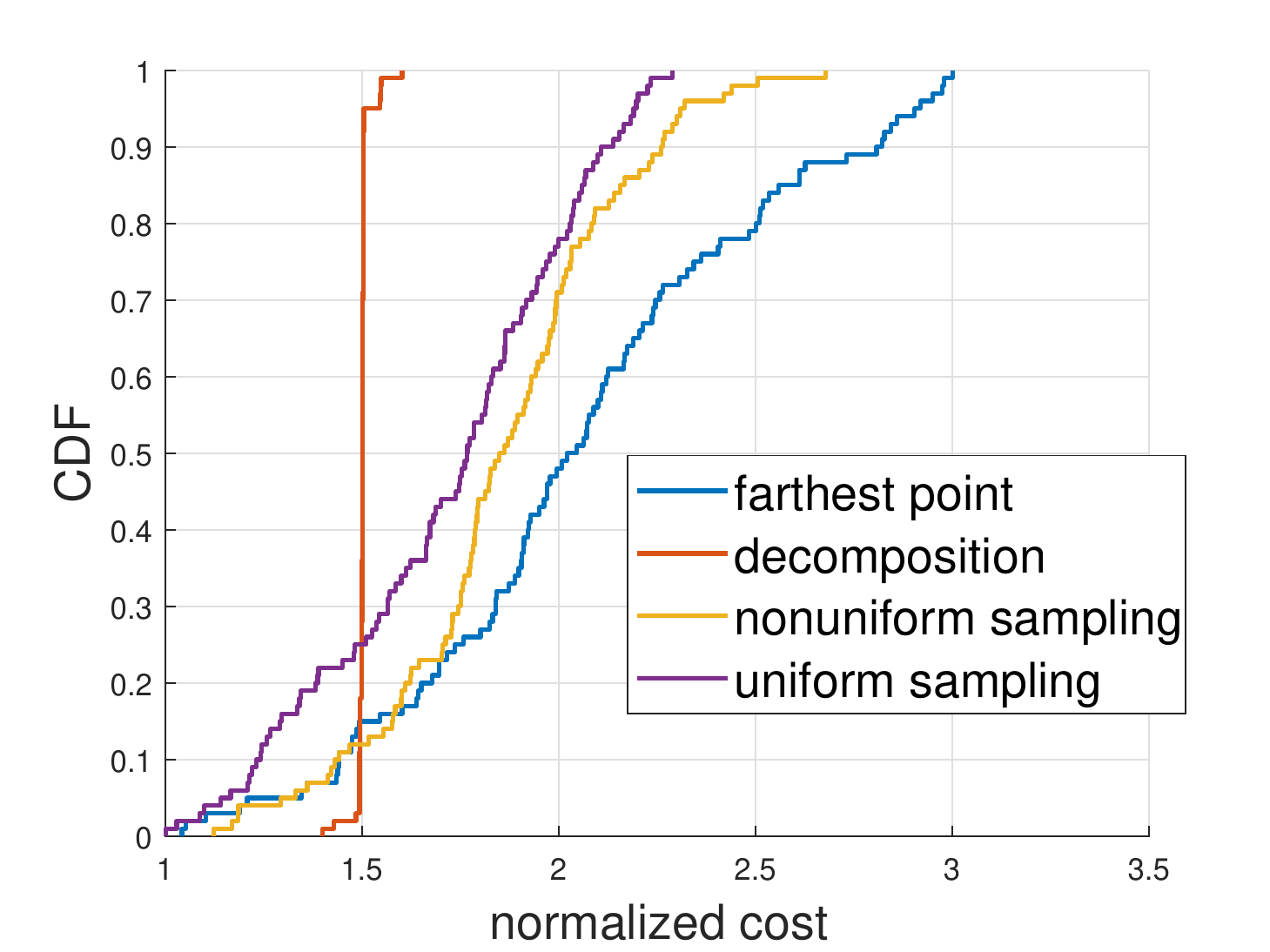}}
  \centerline{(b) $k$-means ($k=2$) }
\end{minipage}
\caption{Comparison of coreset construction algorithms (coreset size: 8). }  \label{fig:motivating experiment}
\end{figure}

We start with an initial experiment that compares selected algorithms representing the three approaches  in Section~\ref{subsec:Coreset Construction Algorithms}: (i) the algorithm in \cite{Badoiu03SODA} (`farthest point') representing farthest point algorithms, (ii) the framework in \cite{Feldman11STOC} instantiated for $k$-means (`nonuniform sampling') and the uniform sampling algorithm (`uniform sampling'), both representing random sampling algorithms, and (iii) the algorithm in \cite{Barger16SDM} (`decomposition') representing geometric decomposition algorithms. As the algorithm in \cite{Badoiu03SODA} was designed for MEB and the algorithms in  \cite{Feldman11STOC,Barger16SDM} were designed for $k$-means, we evaluate both MEB and $k$-means. 

The evaluation is based on a synthetic dataset containing $4000$ points uniformly distributed in $[1, 50]^3$; evaluations on real datasets will be presented later (Section~\ref{sec:Performance Evaluation}). All the algorithms are tuned to have the same average coreset size. We evaluate the performance of a coreset $S$ by the \emph{normalized cost}, defined as $\cost(P,x_S)/\cost(P,x^*)$, where $x^*$ is the model learned from the original dataset $P$, and $x_S$ is the model learned from the coreset. The smaller the normalized cost, the better the performance. As these coreset construction algorithms are randomized, we plot the CDF of the normalized costs computed over $100$ Monte Carlo runs in Figure~\ref{fig:motivating experiment}.


Not surprisingly, a coreset construction algorithm designed for one problem can perform poorly for another, e.g., the farthest point algorithm \cite{Badoiu03SODA} designed for MEB performs poorly for $k$-means. Interestingly, the decomposition algorithm \cite{Barger16SDM}, although designed for $k$-means, also performs well for MEB. This observation triggers a question: Is the superior performance of the decomposition algorithm \cite{Barger16SDM} just a coincidence, or is there something fundamental?

\subsection{The $k$-clustering Problem}

At the core, the decomposition algorithm in \cite{Barger16SDM} constructs a $k$-point coreset by partitioning the dataset into $k$ clusters using $k$-means clustering, and then using the cluster centers as the coreset points. To analyze its performance in supporting a general machine learning problem, 
we introduce a few definitions.

Given a weighted dataset $P\subseteq \mathbb{R}^d$ with weight $w_p$ ($p\in P$), and a set $Q = \{ q_1, ..., q_k\}$ of $k\geq 1$ points in $\mathbb{R}^d$ (referred to as \emph{centers}), the cost of clustering $P$ into $Q$ is defined as
\begin{equation}\label{eq:k-clustering cost}
c(P, Q) = \sum_{p \in P}w_p (\min_{q\in Q} \dist(p, q))^z,
\end{equation}
for a constant $z>0$. 
The \emph{$k$-clustering} problem is to find the set of $k$ centers that minimizes (\ref{eq:k-clustering cost}). For $z = 1$, this is the $k$-median problem. For $z = 2$, this is the $k$-means problem. We will use the solution to the $k$-clustering problem to construct coresets, based on which we can solve general machine learning problems. We use $c(P,\cdot)$ to denote the cost function of this auxiliary problem and $\cost(P,\cdot)$ to denote the cost function of a general machine learning problem of interest. 

We denote by $\mu(P)$ the optimal center for $1$-clustering of $P$. It is known that for $z=2$, $\mu(P)$ is the sample mean:
\begin{equation}
\mu(P) = \frac{1}{\sum_{p \in P} w_p}\sum_{p \in P}w_p \cdot p.
\end{equation}

We denote by $\opt(P,k)$ the optimal cost for $k$-clustering of $P$. 
It is known that $k$-means and $k$-median are both NP-hard problems \cite{Aloise09ML,Megiddo84JC}, for which efficient heuristics exist (e.g., Lloyd's algorithm and variations) \cite{Arthur07SODA}. Let $\approx(P, k)$ denote the cost of a generic $k$-clustering algorithm, which always satisfies $\approx(P, k)\geq \opt(P, k)$. 

Each set of $k$ centers $Q=\{q_i\}_{i=1}^k$ induces a partition of $P$ into $\{P_1,\ldots,P_k\}$, where $P_i$ is the subset of points in $P$ whose closest center in $Q$ is $q_i$ (ties broken arbitrarily). For ease of presentation, we use\footnote{Throughout the paper, for $k\in \mathbb{Z}^+$, $[k]:=\{1,\ldots,k\}$. } $\{P_i\}_{i\in [k]}$ to denote the partition induced by the optimal $k$-clustering, and $\{\widetilde{P}_i\}_{i\in [k]}$ to denote the partition induced by a suboptimal $k$-clustering.

\subsection{Coreset by Optimal $k$-clustering}

We will show that the superior performance of the algorithm in \cite{Barger16SDM} observed in Section~\ref{subsec:Motivating Experiment} is not a coincidence; instead, it is a fundamental property of any coreset computed by $k$-clustering, as long as the cost function of the machine learning problem satisfies certain continuity conditions. 

\emph{Sketch of analysis:} At a high level, our analysis is based on the following observations:
\begin{enumerate}
\item If doubling the number of centers only reduces the optimal $k$-clustering cost by a little, then using two centers instead of one in any cluster gives little reduction to its clustering cost (Lemma~\ref{lem:1}).
\item If selecting two centers in a cluster $P_i$ gives little reduction to its clustering cost, then all the points in $P_i$ must be close to its center $\mu(P_i)$ (Lemma~\ref{lem:2}), as otherwise selecting an outlier as the second center would have reduced the cost substantially. 
\item If each data point is represented by a coreset point with a similar per-point cost, then the coreset gives a good approximation of the overall cost (Lemmas~\ref{lem:3} and \ref{lem:4}).
\end{enumerate}
Therefore, for any machine learning problem with a sufficiently continuous cost function, if the condition in item (1) is satisfied, then the per-point cost of each $k$-clustering center will closely approximate the per-point costs of all the points in its cluster, and hence the set of $k$-clustering centers will give a good coreset (Theorem~\ref{thm:1}).

\emph{Complete analysis:} We now present the precise statements, supported by proofs in
\if\thisismainpaper1
 Appendix~A in \cite{coreset19:report}.
\else
 Appendix~A.
\fi

\begin{lemma}\label{lem:1}
For any $\epsilon' > 0$, if $\opt(P, k) - \opt(P, 2k) \leq \epsilon'$, then $\opt(P_i, 1) - \opt(P_i, 2) \leq \epsilon'$ ($\forall i\in [k]$), where $\{ P_i \}^k_{i=1}$ is the partition of $P$ generated by the optimal $k$-clustering.
\end{lemma}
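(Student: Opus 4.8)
The plan is to turn the global gap $\opt(P,k)-\opt(P,2k)$ into a sum of per-cluster gaps and then exploit nonnegativity term by term. The backbone is two estimates: an \emph{exact decomposition} of $\opt(P,k)$ over the optimal partition, and an \emph{upper bound} on $\opt(P,2k)$ obtained by concatenating per-cluster solutions.

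First I would establish $\opt(P,k)=\sum_{i\in[k]}\opt(P_i,1)$. Let $Q^*=\{q_1,\dots,q_k\}$ be an optimal set of $k$ centers inducing the partition $\{P_i\}_{i\in[k]}$. Since each $p\in P_i$ is assigned to its nearest center $q_i$, the cost splits as $\opt(P,k)=\sum_i\sum_{p\in P_i}w_p\,\dist(p,q_i)^z=\sum_i c(P_i,\{q_i\})$. A standard exchange argument then shows each $q_i$ must be an optimal $1$-center of $P_i$: if some $q_i'$ achieved $c(P_i,\{q_i'\})<c(P_i,\{q_i\})$, replacing $q_i$ by $q_i'$ in $Q^*$ would strictly lower the total cost, because points outside $P_i$ still retain their former centers and thus cannot become more expensive, contradicting the optimality of $Q^*$. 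Hence $c(P_i,\{q_i\})=\opt(P_i,1)$ and the decomposition follows.

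Next I would bound $\opt(P,2k)\le\sum_{i\in[k]}\opt(P_i,2)$. For each $i$, let $Q_i$ be an optimal pair of centers for $P_i$, so $c(P_i,Q_i)=\opt(P_i,2)$; their union $Q=\bigcup_i Q_i$ contains $2k$ centers, whence $\opt(P,2k)\le c(P,Q)$. Because $Q_i\subseteq Q$, every $p\in P_i$ has cost $\min_{q\in Q}\dist(p,q)^z\le\min_{q\in Q_i}\dist(p,q)^z$ under $Q$, so $c(P,Q)\le\sum_i c(P_i,Q_i)=\sum_i\opt(P_i,2)$, giving the claim. Subtracting the two estimates yields
\[
\opt(P,k)-\opt(P,2k)\ \ge\ \sum_{i\in[k]}\bigl(\opt(P_i,1)-\opt(P_i,2)\bigr).
\]
Each summand is nonnegative by monotonicity $\opt(P_i,2)\le\opt(P_i,1)$ (enlarging a center set never increases cost), so if the left side is at most $\epsilon'$, every individual nonnegative term is at most $\epsilon'$, which is precisely $\opt(P_i,1)-\opt(P_i,2)\le\epsilon'$ for all $i\in[k]$.

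The only delicate step I expect is the exact equality $\opt(P,k)=\sum_i\opt(P_i,1)$. The inequality $\opt(P,k)\ge\sum_i\opt(P_i,1)$ is immediate, but pinning down that the optimal global centers coincide with the per-cluster optimal $1$-centers requires both the exchange argument above and the fact that the induced partition is by nearest center. Everything after that is monotonicity plus the observation that minimizing over a larger center set can only decrease cost.
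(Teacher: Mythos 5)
Your proof is correct and takes essentially the same route as the paper's: decompose $\opt(P,k)$ over the optimal partition, bound $\opt(P,2k)$ from above by the cost of the union of per-cluster optimal $2$-clustering centers, subtract, and conclude term by term from nonnegativity of each per-cluster gap. The only difference is that you prove the exact equality $\opt(P,k)=\sum_{i\in[k]}\opt(P_i,1)$ via an exchange argument, whereas the paper simply asserts it by definition---and in fact only the immediate inequality $\opt(P,k)\ge\sum_{i\in[k]}\opt(P_i,1)$ is needed for the subtraction, so the step you flag as delicate can be bypassed entirely.
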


\begin{lemma}\label{lem:2}
If $\opt(P_i, 1) - \opt(P_i, 2) \leq \epsilon'$, then $\dist(p, \mu(P_i)) \leq (\frac{\epsilon'}{\wmin})^{\frac{1}{z}}$, $\forall p \in P_i$.
\end{lemma}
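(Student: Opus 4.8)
The plan is to argue by contraposition. I will show that if some point of $P_i$ is far from its optimal $1$-clustering center $\mu(P_i)$, then replacing one center by two reduces the clustering cost of $P_i$ by strictly more than $\epsilon'$, contradicting the hypothesis $\opt(P_i,1)-\opt(P_i,2)\le\epsilon'$. Concretely, I would suppose toward a contradiction that there exists $p^*\in P_i$ with $\dist(p^*,\mu(P_i)) > (\epsilon'/\wmin)^{1/z}$, and derive a violation of the assumed cost gap.

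The key step is to exhibit an explicit (generally suboptimal) $2$-clustering of $P_i$ that already achieves low cost, namely the center set $Q=\{\mu(P_i),\,p^*\}$. Since $\opt(P_i,2)$ is the minimum cost over all sets of two centers, we immediately get $\opt(P_i,2)\le c(P_i,Q)$. To bound $c(P_i,Q)$ from above, I would charge every point $p\in P_i$ other than $p^*$ to the center $\mu(P_i)$ — which can only overestimate $c(P_i,Q)$, since by definition (\ref{eq:k-clustering cost}) each point contributes its \emph{minimum} distance to a center in $Q$ — while $p^*$ itself contributes zero because $p^*\in Q$. Recalling that $\opt(P_i,1)=\sum_{p\in P_i}w_p\,\dist(p,\mu(P_i))^z$, this yields
\begin{equation*}
\opt(P_i,2)\le c(P_i,Q)\le \opt(P_i,1)-w_{p^*}\,\dist(p^*,\mu(P_i))^z.
\end{equation*}

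Rearranging gives $\opt(P_i,1)-\opt(P_i,2)\ge w_{p^*}\,\dist(p^*,\mu(P_i))^z$, and I would then lower-bound the right-hand side using $w_{p^*}\ge\wmin$ together with the assumed separation $\dist(p^*,\mu(P_i))^z>\epsilon'/\wmin$. This produces $\opt(P_i,1)-\opt(P_i,2) > \wmin\cdot(\epsilon'/\wmin)=\epsilon'$, the desired contradiction, completing the proof.

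I do not expect a serious obstacle: the argument is elementary once the right witness $Q$ is identified, and the main idea is exactly the one flagged in the sketch, namely that an outlier makes a highly effective second center. The only points requiring mild care are (i) justifying that charging all non-$p^*$ points to $\mu(P_i)$ is a legitimate \emph{upper} bound on $c(P_i,Q)$, which holds because the clustering cost uses the nearest center in $Q$; and (ii) tracking the weights so the factor $\wmin$ cancels cleanly against the exponent $z$ in the separation bound. Note finally that the strict inequality in the separation hypothesis aligns with the non-strict inequality $\dist(p,\mu(P_i))\le(\epsilon'/\wmin)^{1/z}$ in the statement, so the boundary case $\dist(p,\mu(P_i))=(\epsilon'/\wmin)^{1/z}$ causes no difficulty.
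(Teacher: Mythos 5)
Your proof is correct and takes essentially the same route as the paper's: both exhibit the two-center set $\{\mu(P_i),p\}$ as a feasible $2$-clustering and derive the key inequality $\opt(P_i,1)-\opt(P_i,2)\ge w_p\,\dist(p,\mu(P_i))^z$, from which the distance bound follows. Your contrapositive framing and the simpler charging of all other points to $\mu(P_i)$ (rather than the paper's nearest-center partition into $P_i^1,P_i^2$) are only cosmetic variations on the same argument.
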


\begin{lemma}\label{lem:3}
For any machine learning problem with cost function $\cost(P, x) = \sum_{p \in P} w_p \cost(p, x)$, if $\exists$ a partition $\{ P_i \}^k_{i=1}$ of $P$ such that $\forall x \in \mathcal{X}$, $i \in [k]$, and $p \in P_i$,
\begin{align}\label{eq:lem3}
&\hspace{-1em} (1-\epsilon)\cost(p, x)\leq \cost(\mu(P_i), x) \leq (1+\epsilon)\cost(p, x),
\end{align}
then $S = \{ \mu(P_i) \}^k_{i=1}$  with weight $u_{\mu(P_i)} =  \sum_{p\in P_i}w_p$ is an $\epsilon$-coreset for P w.r.t. $\cost(P, x)$.
\end{lemma}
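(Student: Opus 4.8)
The plan is to prove the coreset inequality directly, cluster by cluster, exploiting the additive (sum-cost) structure together with the fact that the partition $\{P_i\}_{i\in[k]}$ is disjoint and covers $P$. The goal is to show that for every $x\in\mathcal{X}$ we have $(1-\epsilon)\cost(P,x)\le\cost(S,x)\le(1+\epsilon)\cost(P,x)$, where by construction $\cost(S,x)=\sum_{i\in[k]}u_{\mu(P_i)}\cost(\mu(P_i),x)$ with $u_{\mu(P_i)}=\sum_{p\in P_i}w_p$, and this is exactly the defining property of an $\epsilon$-coreset in Definition~\ref{def:epsilon-coreset}.

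First I would fix an arbitrary $x\in\mathcal{X}$ and an index $i\in[k]$, and apply the hypothesis \eqref{eq:lem3} to each $p\in P_i$. Multiplying the two-sided bound by the nonnegative weight $w_p$ and summing over all $p\in P_i$ leaves the middle term as $\sum_{p\in P_i}w_p\cost(\mu(P_i),x)$; since $\cost(\mu(P_i),x)$ does not depend on $p$, this collapses to precisely $u_{\mu(P_i)}\cost(\mu(P_i),x)$. Writing $\cost(P_i,x):=\sum_{p\in P_i}w_p\cost(p,x)$ for the weighted cost of cluster $i$, I obtain $(1-\epsilon)\cost(P_i,x)\le u_{\mu(P_i)}\cost(\mu(P_i),x)\le(1+\epsilon)\cost(P_i,x)$. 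This is the heart of the argument: the choice of weight $u_{\mu(P_i)}$ is exactly what makes the coefficient of the representative's per-point cost equal the total weight of the cluster it stands for.

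Next I would sum this per-cluster inequality over $i\in[k]$. Because $\{P_i\}_{i\in[k]}$ is a partition of $P$, the cluster sums recombine into the whole-set sums, namely $\sum_{i\in[k]}\cost(P_i,x)=\cost(P,x)$ and $\sum_{i\in[k]}u_{\mu(P_i)}\cost(\mu(P_i),x)=\cost(S,x)$, which yields $(1-\epsilon)\cost(P,x)\le\cost(S,x)\le(1+\epsilon)\cost(P,x)$. Since $x$ was arbitrary, this holds for all $x\in\mathcal{X}$, completing the proof. I do not anticipate a genuine obstacle here, as the statement reduces to linearity and monotonicity of sums; the only points requiring care are that the premise \eqref{eq:lem3} is assumed uniformly in $x$ (so the bound survives fixing each $x$ before summing) and that the weights $w_p$ are nonnegative (so multiplying the inequality by $w_p$ preserves its direction). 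The substantive work of verifying that \eqref{eq:lem3} actually holds for $k$-clustering centers under a continuity condition is what Lemmas~\ref{lem:1}--\ref{lem:2} and Theorem~\ref{thm:1} address, not this lemma.
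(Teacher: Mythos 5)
Your proof is correct and matches the paper's own argument essentially line for line: multiply the per-point bound \eqref{eq:lem3} by the nonnegative weight $w_p$, sum over $p\in P_i$ so that the middle term collapses to $u_{\mu(P_i)}\cost(\mu(P_i),x)$, and then sum over $i\in[k]$ using the partition structure to recover $\cost(P,x)$ and $\cost(S,x)$. Your added remarks on the uniformity in $x$ and the nonnegativity of the weights are sound and consistent with the paper's (more terse) presentation.
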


\begin{lemma}\label{lem:4}
For any machine learning problem with cost function $\cost(P, x) = \max_{p \in P} \cost(p, x)$, if $\exists$ a partition $\{ P_i \}^k_{i=1}$ of $P$ such that (\ref{eq:lem3}) holds for any $x \in \mathcal{X}$, $i \in [k]$, and $p \in P_i$,
then $S = \{ \mu(P_i) \}^k_{i=1}$ (with arbitrary weights) is an $\epsilon$-coreset for $P$ w.r.t. $\cost(P, x)$.
\end{lemma}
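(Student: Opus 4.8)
The plan is to mirror the argument for the sum-cost case (Lemma~\ref{lem:3}), but to exploit the fact that under maximum aggregation the cost is determined by a single worst point, which makes the weights irrelevant. Since $S=\{\mu(P_i)\}_{i=1}^k$, the coreset cost is $\cost(S,x)=\max_{i\in[k]}\cost(\mu(P_i),x)$, independent of the weights $u_{\mu(P_i)}$; this immediately justifies the ``arbitrary weights'' claim. The goal is then to sandwich this quantity between $(1-\epsilon)\cost(P,x)$ and $(1+\epsilon)\cost(P,x)$ for every $x\in\mathcal{X}$, using only the per-point hypothesis (\ref{eq:lem3}).

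For the upper bound, I would fix any $x$ and any cluster index $i$. By the right inequality in (\ref{eq:lem3}), $\cost(\mu(P_i),x)\le(1+\epsilon)\cost(p,x)$ for every $p\in P_i$, and since $\cost(p,x)\le\max_{p'\in P}\cost(p',x)=\cost(P,x)$, each center satisfies $\cost(\mu(P_i),x)\le(1+\epsilon)\cost(P,x)$. Taking the maximum over $i$ preserves the bound, giving $\cost(S,x)\le(1+\epsilon)\cost(P,x)$.

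For the lower bound, the key step is to track the point that achieves the overall cost. Let $p^\star\in\argmax_{p\in P}\cost(p,x)$, so that $\cost(P,x)=\cost(p^\star,x)$, and let $i^\star$ be the cluster containing $p^\star$. Applying the left inequality of (\ref{eq:lem3}) to $p^\star\in P_{i^\star}$ yields $\cost(\mu(P_{i^\star}),x)\ge(1-\epsilon)\cost(p^\star,x)=(1-\epsilon)\cost(P,x)$. Since $\cost(S,x)$ is a maximum over all centers, it dominates this particular term, so $\cost(S,x)\ge(1-\epsilon)\cost(P,x)$. Combining the two bounds establishes the $\epsilon$-coreset property.

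The only real subtlety --- and what I expect to be the crux --- is that the lower bound cannot be obtained cluster-by-cluster the way the upper bound is: it is essential to locate the single worst point $p^\star$ of $P$ and invoke the hypothesis for \emph{its own} cluster $P_{i^\star}$, rather than comparing per-cluster maxima. Everything else is a direct consequence of the monotonicity of the maximum and the two-sided per-point bound, with no dependence on the weights.
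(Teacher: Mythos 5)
Your proof is correct and takes essentially the same route as the paper's: the paper obtains both bounds at once by taking the maximum of (\ref{eq:lem3}) first over $p\in P_i$ and then over $i\in[k]$, which is exactly your cluster-wise upper bound combined with your lower bound via the worst point's own cluster (the nested maximum $\max_{i}\max_{p\in P_i}\cost(p,x)$ is attained at your $p^\star$ in cluster $i^\star$). Your explicit tracking of $p^\star$ and the remark that weights are irrelevant under maximum aggregation are just an unwound presentation of the same argument.
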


We now prove the main theorem based on Lemmas~\ref{lem:1}--\ref{lem:4}. \looseness=-1

\begin{theorem}\label{thm:1}
If $\opt(P, k) - \opt(P, 2k) \leq \wmin (\frac{\epsilon}{\rho})^z$, then the optimal $k$-clustering of $P$ gives an $\epsilon$-coreset for $P$ w.r.t. both the sum cost and the maximum cost for any per-point cost function satisfying (i) $\cost(p, x) \geq 1$, and (ii) $\cost(p,x)$ is $\rho$-Lipschitz-continuous in $p$, $\forall x \in \mathcal{X}$.
\end{theorem}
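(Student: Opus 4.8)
The plan is to chain Lemmas~\ref{lem:1}--\ref{lem:4} exactly in the order the sketch suggests, with the hypothesis $\opt(P,k)-\opt(P,2k)\le \wmin(\epsilon/\rho)^z$ playing the role of the quantity $\epsilon'$ in the earlier lemmas. Concretely, I would set $\epsilon' \coloneqq \wmin(\epsilon/\rho)^z$, so that the theorem's hypothesis is precisely the precondition of Lemma~\ref{lem:1}. Applying Lemma~\ref{lem:1} then yields $\opt(P_i,1)-\opt(P_i,2)\le \epsilon'$ for every $i\in[k]$, where $\{P_i\}_{i\in[k]}$ is the partition induced by the optimal $k$-clustering. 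Feeding this into Lemma~\ref{lem:2} gives $\dist(p,\mu(P_i))\le (\epsilon'/\wmin)^{1/z}$ for all $p\in P_i$.

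The first genuinely useful computation is to substitute the chosen value of $\epsilon'$ into this distance bound. Since $(\epsilon'/\wmin)^{1/z}=\bigl((\epsilon/\rho)^z\bigr)^{1/z}=\epsilon/\rho$, the distance bound collapses to the clean statement that every point lies within $\epsilon/\rho$ of its cluster center, i.e. $\dist(p,\mu(P_i))\le \epsilon/\rho$ for all $i\in[k]$ and all $p\in P_i$. This is exactly the radius at which the $\rho$-Lipschitz hypothesis becomes effective: invoking condition~(ii), for every $x\in\mathcal{X}$ I obtain the additive guarantee
\begin{align}
|\cost(p,x)-\cost(\mu(P_i),x)| \le \rho\cdot\dist(p,\mu(P_i)) \le \rho\cdot\frac{\epsilon}{\rho}=\epsilon.
\end{align}

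The step I expect to be the crux is converting this \emph{additive} error of $\epsilon$ into the \emph{multiplicative} (relative) error demanded by the coreset definition. Here condition~(i), $\cost(p,x)\ge 1$, is exactly what is needed: because $\epsilon \le \epsilon\cdot\cost(p,x)$, the additive bound upgrades to $|\cost(p,x)-\cost(\mu(P_i),x)|\le \epsilon\,\cost(p,x)$, which rearranges to
\begin{align}
(1-\epsilon)\cost(p,x)\le \cost(\mu(P_i),x)\le (1+\epsilon)\cost(p,x).
\end{align}
This is verbatim the per-point hypothesis~(\ref{eq:lem3}) required by Lemmas~\ref{lem:3} and~\ref{lem:4}, now established for the partition $\{P_i\}_{i\in[k]}$ and all $x\in\mathcal{X}$.

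To finish, I would simply apply the two packaging lemmas. For a sum-cost objective, Lemma~\ref{lem:3} certifies that $S=\{\mu(P_i)\}_{i\in[k]}$ with weights $u_{\mu(P_i)}=\sum_{p\in P_i}w_p$ is an $\epsilon$-coreset; for a maximum-cost objective, Lemma~\ref{lem:4} certifies the same (with arbitrary weights). Since these are the two cost aggregations named in the statement, the theorem follows. I would flag that the only nonroutine idea in the whole argument is the use of the normalization $\cost(p,x)\ge 1$ to trade additive Lipschitz error for relative error; the rest is bookkeeping that relies on the precise exponent matching engineered by the choice $\epsilon'=\wmin(\epsilon/\rho)^z$.
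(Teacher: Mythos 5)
Your proposal is correct and follows essentially the same route as the paper's own proof: choose $\epsilon' = \wmin(\epsilon/\rho)^z$, chain Lemma~\ref{lem:1} into Lemma~\ref{lem:2} to get $\dist(p,\mu(P_i))\le \epsilon/\rho$, use $\rho$-Lipschitz continuity for the additive bound, upgrade to relative error via $\cost(p,x)\ge 1$, and close with Lemmas~\ref{lem:3} and~\ref{lem:4}. Your explicit simplification of $(\epsilon'/\wmin)^{1/z}$ to $\epsilon/\rho$ before invoking the Lipschitz condition is a cosmetic reordering of the paper's algebra, not a different argument.
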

\begin{proof}
By \Cref{lem:1}, $\opt(P, k) - \opt(P, 2k) \leq \epsilon'$ implies $\opt(P_i, 1) - \opt(P_i, 2) \leq \epsilon'$, $\forall$ cluster $P_i$ generated by the optimal $k$-clustering.
By \Cref{lem:2}, this in turn implies that $\dist(p, \mu(P_i)) \leq (\frac{\epsilon'}{\wmin})^{\frac{1}{z}}$, $\forall p \in P_i$. Because $\cost(p, x)$ is $\rho$-Lipschitz-continuous in $p$ for all $x\in \mathcal{X}$, we have
\begin{align}
\mid \cost(p, x) - \cost(\mu(P_i), x) \mid \leq \rho (\frac{\epsilon'}{\wmin})^{\frac{1}{z}}, \forall x \in \mathcal{X}, p \in P_i.
\end{align}
Moreover, as $\cost(p,x) \geq 1$,
\[
\begin{aligned}
\frac{\mid \cost(p, x) - \cost(\mu(P_i), x) \mid}{\cost(p, x)} \leq \rho (\frac{\epsilon'}{\wmin})^{\frac{1}{z}} = \epsilon
\end{aligned}
\]
for $\epsilon' = \wmin (\frac{\epsilon}{\rho})^z$. By \Cref{lem:3}, $k$-clustering gives an $\epsilon$-coreset for $P$ w.r.t. the sum cost; by \Cref{lem:4}, $k$-clustering gives an $\epsilon$-coreset for $P$ w.r.t. the maximum cost.
\end{proof}

Often in practice, the coreset size must satisfy some upper bound specified by the maximum communication overhead. In this case, we can rephrase \Cref{thm:1} to characterize the quality of approximation as a function of the coreset size.

\begin{corollary}\label{coro:1}
Given a maximum coreset size $k\in \mathbb{Z}^+$ (positive integers), for any cost function satisfying the conditions in Theorem~\ref{thm:1}, the optimal $k$-clustering gives an $\epsilon$-coreset for $P$ w.r.t. this cost function, where
\begin{align}
\epsilon = \rho\left({\frac{\opt(P,k)-\opt(P,2k)}{\wmin}} \right)^{\frac{1}{z}}. \label{eq:epsilon, opt}
\end{align}
\end{corollary}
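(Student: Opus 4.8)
The plan is to obtain this corollary as a direct algebraic inversion of \Cref{thm:1}, with no new estimates required. In \Cref{thm:1} the approximation level $\epsilon$ is treated as given and the theorem asks how small the gap $\opt(P,k)-\opt(P,2k)$ must be to guarantee an $\epsilon$-coreset. Here the situation is reversed: the coreset size $k$ is fixed, so the gap $\Delta := \opt(P,k)-\opt(P,2k)$ is a fixed nonnegative quantity determined by $P$ and $k$, and I would instead solve for the smallest $\epsilon$ for which the premise of \Cref{thm:1} is met.

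First I would set the hypothesis of \Cref{thm:1} to hold with equality, $\Delta = \wmin(\epsilon/\rho)^z$, which is the boundary case of the non-strict inequality $\Delta \leq \wmin(\epsilon/\rho)^z$ and hence still satisfies the premise. Solving this relation for $\epsilon$ is routine: dividing by $\wmin$, taking the $z$-th root (legitimate and monotone since $z>0$), and multiplying by $\rho$ yields
\begin{align}
\epsilon = \rho\left(\frac{\Delta}{\wmin}\right)^{1/z} = \rho\left(\frac{\opt(P,k)-\opt(P,2k)}{\wmin}\right)^{1/z},
\end{align}
which is exactly the claimed expression. With this choice of $\epsilon$ the premise of \Cref{thm:1} is satisfied, so its conclusion applies verbatim: the optimal $k$-clustering gives an $\epsilon$-coreset for $P$ with respect to both the sum cost and the maximum cost, for any per-point cost function satisfying $\cost(p,x)\geq 1$ and $\rho$-Lipschitz continuity in $p$. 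The corollary thus merely repackages the size-versus-accuracy trade-off already established through Lemmas~\ref{lem:1}--\ref{lem:4} into an explicit formula for $\epsilon$ as a function of $k$.

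Since the argument is a pure rearrangement of an already-proved statement, I do not expect any substantive obstacle. The one point meriting a moment of care is confirming that taking the inequality to equality stays within the regime covered by \Cref{thm:1}, which it does precisely because that premise is a non-strict inequality. As a secondary sanity check, $\Delta \geq 0$ always holds because $\opt(P,2k)\leq\opt(P,k)$ (allowing more centers cannot increase the optimal clustering cost), so the $z$-th root acts on a nonnegative argument and $\epsilon$ is well-defined and nonnegative.
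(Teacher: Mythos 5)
Your proposal is correct and matches the paper's proof, which likewise observes that setting $\epsilon$ by (\ref{eq:epsilon, opt}) makes the premise $\opt(P,k)-\opt(P,2k)\leq \wmin(\epsilon/\rho)^z$ hold (with equality), so \Cref{thm:1} applies directly. Your added check that $\opt(P,2k)\leq \opt(P,k)$ ensures the $z$-th root is well-defined is a sensible, if unstated in the paper, detail.
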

\begin{proof}
This is a direct implication of \Cref{thm:1}, as setting $\epsilon$ by (\ref{eq:epsilon, opt}) satisfies the condition in \Cref{thm:1}.
\end{proof}

\emph{Remark:} 
Condition (i) in \Cref{thm:1} is easily satisfied by any machine learning problem with nonnegative per-point costs, as we can add `$+1$' to the cost function without changing the optimal solution. Even without this condition, a similar proof will show that the coreset $S$ given by $k$-clustering approximates the original dataset $P$ in that $|\cost(P,x)-\cost(S,x)| \leq \widetilde{\epsilon}$ ($\forall x\in \mathcal{X}$), where $\widetilde{\epsilon} = \epsilon \sum_{p\in P}w_p$ for the sum cost, and $\widetilde{\epsilon} = \epsilon$ for the maximum cost. 

Condition (ii) is satisfied by many machine learning problems with distance-based cost functions. For example, for MEB, $\cost(p,x)= \dist(p,x)$, where $x\in \mathbb{R}^d$ denotes the center of the enclosing ball. For any data points $p,p'\in \mathbb{R}^d$, by the triangle inequality, we have:
\begin{align}
|\dist(p,x)-\dist(p',x)| \leq \dist(p,p'). 
\end{align}
Hence, its cost function is 1-Lipschitz-continuous (i.e., $\rho=1$). 
\if\thisismainpaper1
See Appendix~B in \cite{coreset19:report} for more examples. 
\else
See Appendix~B for more examples. 
\fi
In Section~\ref{sec:Performance Evaluation}, we will stress-test our coreset when this condition is violated. 

From the proof of Theorem~\ref{thm:1}, it is easy to see that the theorem holds as long as the distance between each data point and its nearest $k$-clustering center is bounded by $\epsilon/\rho$, i.e., $\dist(p, \mu(P_i))\leq \epsilon/\rho$ for all $i\in [k]$ and $p\in P_i$. This implies that Corollary~\ref{coro:1} actually holds for 
\begin{align}\label{eq:epsilon - opt, new}
    \epsilon = \rho \left(\max_{i\in [k]} \max_{p\in P_i}\: \dist(p, \mu(P_i)) \right), 
\end{align}
which can be much smaller than (\ref{eq:epsilon, opt}) for large datasets.

\subsection{Coreset by Suboptimal $k$-clustering}\label{subsec:Coreset Given by Suboptimal $k$-clustering}

While \Cref{thm:1} and \Cref{coro:1} suggest that the optimal $k$-clustering gives a good coreset, the $k$-clustering problem is NP-hard \cite{Aloise09ML,Megiddo84JC}. 
The question is whether similar performance guarantee holds for the coreset computed by an efficient but suboptimal $k$-clustering algorithm. To this end, we introduce a few assumptions on the $k$-clustering algorithm: 

\emph{\bf Assumption 1 (local optimality):} Given the partition $\{\widetilde{P}_i\}_{i=1}^k$ generated by the algorithm, the center it selects in each $\widetilde{P}_i$ is $\mu(\widetilde{P}_i)$, i.e., the optimal 1-clustering center for $\widetilde{P}_i$.

\emph{\bf Assumption 2 (self-consistency):} For any $P$ and any $k\geq 1$, the cost of the algorithm satisfies
\begin{align}\label{eq:self-consistency}
\approx(P,2k) \leq \sum_{i=1}^k \approx(\widetilde{P}_i,2).
\end{align}

\emph{\bf Assumption 3 (greedy dominance):} For any $P$, the $2$-clustering cost of the algorithm satisfies
\begin{align}\label{eq:greedy dominance}
\approx(P, 2) \leq c(P, \{\mu(P), p^*\}),
\end{align}
where $c(P, Q)$ is defined in (\ref{eq:k-clustering cost}), and $p^* \coloneqq \argmax_{p\in P} w_p \cdot\dist(p,\mu(P))^z$ is the point with the highest 1-clustering cost. 

These are mild assumptions that should be satisfied or approximately satisfied by any good $k$-clustering algorithm. Assumption~1 is easy to satisfy, as computing the 1-mean is easy (i.e., sample mean), and there exists an algorithm \cite{Cohen16STOC} that can compute the 1-median to arbitrary precision in nearly linear time. Assumption~2 means that applying the algorithm for $2k$-clustering of $P$ should perform no worse than first using the algorithm to partition $P$ into $k$ clusters, and then computing $2$-clustering of each cluster. Assumption~3 means that for $k=2$, the algorithm should perform no worse than a greedy heuristic that starts with the 1-clustering center, and then adds the point with the highest clustering cost as the second center. 
We will discuss how to ensure these assumptions for the proposed algorithm in Section~\ref{subsec:Coreset Construction Algorithm}.

We show that for any $k$-clustering algorithm satisfying these assumptions, statements analogous to \Cref{lem:1} and \Cref{lem:2} can be made (proofs in
\if\thisismainpaper1
 Appendix~A in \cite{coreset19:report}).
\else
 Appendix~A).
\fi 
Let $\{ \widetilde{P}_i \}^k_{i=1}$ denote the partition of $P$ generated by the $k$-clustering algorithm.

\begin{lemma}\label{lem:1 suboptimal}
For any $\epsilon' > 0$, if $\approx(P, k) - \approx(P, 2k)$ $\leq \epsilon'$, then $\approx(\widetilde{P}_i, 1) - \approx(\widetilde{P}_i, 2) \leq \epsilon'$ for any $i\in [k]$.
\end{lemma}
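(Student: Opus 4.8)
The plan is to mirror the proof of \Cref{lem:1}, replacing the three structural facts about the optimal clustering cost with the three assumptions on the suboptimal algorithm. In the optimal case one uses (a) $\opt(P,k)=\sum_i \opt(P_i,1)$, (b) $\opt(P,2k)\le\sum_i\opt(P_i,2)$, and (c) the monotonicity $\opt(P_i,1)\ge\opt(P_i,2)$ of clustering cost in the number of centers; here Assumptions~1--3 are designed to play exactly these three roles, so the whole proof becomes a matter of substituting them in.

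First I would establish the analog of (a). By Assumption~1 the center the algorithm places in each $\widetilde{P}_i$ is $\mu(\widetilde{P}_i)$, so the reported $k$-clustering cost decomposes as $\approx(P,k)=\sum_{i=1}^k c(\widetilde{P}_i,\{\mu(\widetilde{P}_i)\})$. Since $\mu(\widetilde{P}_i)$ is the optimal $1$-center and, again by Assumption~1, the algorithm also returns $\mu(\widetilde{P}_i)$ when run for a single cluster, each summand equals $\approx(\widetilde{P}_i,1)=\opt(\widetilde{P}_i,1)$, giving $\approx(P,k)=\sum_i\approx(\widetilde{P}_i,1)$. Next, Assumption~2 (self-consistency) directly supplies the analog of (b), namely $\approx(P,2k)\le\sum_i\approx(\widetilde{P}_i,2)$. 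Subtracting the two relations yields
\begin{align}
\approx(P,k)-\approx(P,2k)\ge\sum_{i=1}^k\bigl(\approx(\widetilde{P}_i,1)-\approx(\widetilde{P}_i,2)\bigr).
\end{align}

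It then remains to show that each bracketed term is nonnegative, so that the hypothesis $\approx(P,k)-\approx(P,2k)\le\epsilon'$ forces every term into $[0,\epsilon']$; this is the step where Assumption~3 is needed and is the only nonobvious point. Applying greedy dominance to $\widetilde{P}_i$ gives $\approx(\widetilde{P}_i,2)\le c(\widetilde{P}_i,\{\mu(\widetilde{P}_i),p^*\})$, and since including the extra center $p^*$ can only lower the per-point minima, $c(\widetilde{P}_i,\{\mu(\widetilde{P}_i),p^*\})\le c(\widetilde{P}_i,\{\mu(\widetilde{P}_i)\})=\opt(\widetilde{P}_i,1)=\approx(\widetilde{P}_i,1)$. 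Hence $\approx(\widetilde{P}_i,1)-\approx(\widetilde{P}_i,2)\ge0$ for every $i$, and combined with the displayed inequality each term is at most $\epsilon'$, which is the claim. The main obstacle is precisely this nonnegativity: unlike the optimal cost, the inequality $\approx(\cdot,2)\le\approx(\cdot,1)$ is not automatic for a heuristic, and Assumption~3 is exactly the hypothesis that rescues it.
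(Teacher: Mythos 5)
Your proof is correct, and it follows the paper's skeleton exactly up to the last step: both you and the paper use Assumption~1 to get $\approx(P,k)=\sum_{i=1}^k\approx(\widetilde{P}_i,1)$, use Assumption~2 to get $\approx(P,2k)\leq\sum_{i=1}^k\approx(\widetilde{P}_i,2)$, and then reduce the lemma to the nonnegativity of each term $\approx(\widetilde{P}_i,1)-\approx(\widetilde{P}_i,2)$. Where you diverge is in how that nonnegativity is established. You invoke Assumption~3 (greedy dominance): $\approx(\widetilde{P}_i,2)\leq c(\widetilde{P}_i,\{\mu(\widetilde{P}_i),p^*\})\leq c(\widetilde{P}_i,\{\mu(\widetilde{P}_i)\})=\approx(\widetilde{P}_i,1)$, which is valid. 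The paper instead gets it from Assumption~1 alone, applied to the algorithm's $2$-clustering of $\widetilde{P}_i$: if $\{\widetilde{P}_i^1,\widetilde{P}_i^2\}$ is the induced sub-partition, then $\approx(\widetilde{P}_i,2)=\sum_{p\in\widetilde{P}_i^1}w_p\dist(p,\mu(\widetilde{P}_i^1))^z+\sum_{p\in\widetilde{P}_i^2}w_p\dist(p,\mu(\widetilde{P}_i^2))^z\leq\sum_{p\in\widetilde{P}_i}w_p\dist(p,\mu(\widetilde{P}_i))^z=\approx(\widetilde{P}_i,1)$, since each $\mu(\widetilde{P}_i^j)$ is the optimal $1$-center of its sub-cluster. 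So your final remark that Assumption~3 ``is exactly the hypothesis that rescues'' nonnegativity is not quite right: local optimality already suffices, and the paper's route makes \Cref{lem:1 suboptimal} depend only on Assumptions~1--2, reserving Assumption~3 for \Cref{lem:2 suboptimal}, where it is genuinely indispensable (to bound the cost drop from adding the farthest point as a second center). Since \Cref{thm:2} assumes all three conditions anyway, your use of Assumption~3 here costs nothing for the theorem, but the paper's version is the more economical statement of what the lemma actually requires.
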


\begin{lemma}\label{lem:2 suboptimal}
If $\approx(\widetilde{P}_i, 1)-\approx(\widetilde{P}_i, 2)\leq \epsilon'$, then $\dist(p,\mu(\widetilde{P}_i)) \leq ({\epsilon'\over \wmin})^{1\over z}$, $\forall p\in \widetilde{P}_i$.
\end{lemma}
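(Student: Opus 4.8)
The plan is to adapt the proof of \Cref{lem:2} to the suboptimal setting, the core idea being to exhibit a concrete $2$-center solution on $\widetilde{P}_i$ whose cost is noticeably smaller than the $1$-clustering cost, and then to read off a bound on the farthest point from the guaranteed size of this cost reduction. First I would invoke Assumption~1 to anchor the $1$-clustering term: since the algorithm places its single center at $\mu(\widetilde{P}_i)$, we have $\approx(\widetilde{P}_i, 1) = c(\widetilde{P}_i, \{\mu(\widetilde{P}_i)\}) = \sum_{p\in\widetilde{P}_i} w_p\,\dist(p,\mu(\widetilde{P}_i))^z$.

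Next, let $p^* \coloneqq \argmax_{p\in\widetilde{P}_i} w_p\,\dist(p,\mu(\widetilde{P}_i))^z$ be the point of highest $1$-clustering cost, and consider the candidate $2$-center set $\{\mu(\widetilde{P}_i), p^*\}$. In this candidate, $p^*$ is itself a center and so contributes zero, while every other point is no farther from $\mu(\widetilde{P}_i)$ than before; hence $c(\widetilde{P}_i, \{\mu(\widetilde{P}_i), p^*\}) \leq \approx(\widetilde{P}_i, 1) - w_{p^*}\dist(p^*,\mu(\widetilde{P}_i))^z$. Here is where the suboptimal case departs from \Cref{lem:2}: I cannot simply assert $\approx(\widetilde{P}_i, 2)\leq c(\widetilde{P}_i,\{\mu(\widetilde{P}_i),p^*\})$ from optimality, because $\approx(\widetilde{P}_i, 2)$ is merely the cost returned by the algorithm rather than a minimum over all $2$-center sets. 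This is exactly the gap that Assumption~3 (greedy dominance) is designed to close, giving $\approx(\widetilde{P}_i, 2)\leq c(\widetilde{P}_i,\{\mu(\widetilde{P}_i),p^*\})$. Chaining the two inequalities yields $\approx(\widetilde{P}_i, 1)-\approx(\widetilde{P}_i, 2)\geq w_{p^*}\dist(p^*,\mu(\widetilde{P}_i))^z$.

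Finally I would combine this with the hypothesis $\approx(\widetilde{P}_i, 1)-\approx(\widetilde{P}_i, 2)\leq\epsilon'$ to conclude $w_{p^*}\dist(p^*,\mu(\widetilde{P}_i))^z\leq\epsilon'$. Because $p^*$ maximizes $w_p\,\dist(p,\mu(\widetilde{P}_i))^z$ over $\widetilde{P}_i$, the same bound holds for every $p\in\widetilde{P}_i$, and using $w_p\geq\wmin$ gives $\wmin\,\dist(p,\mu(\widetilde{P}_i))^z\leq\epsilon'$, i.e., $\dist(p,\mu(\widetilde{P}_i))\leq(\epsilon'/\wmin)^{1/z}$, as claimed.

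The main obstacle, and really the only nontrivial point, is the step that in \Cref{lem:2} was free by optimality of the $2$-clustering: replacing the minimum $\opt(\widetilde{P}_i,2)$ by the algorithm's cost $\approx(\widetilde{P}_i,2)$ breaks the inequality $\approx(\widetilde{P}_i,2)\leq c(\widetilde{P}_i,\{\mu(\widetilde{P}_i),p^*\})$ unless the algorithm is at least as good as this greedy choice. Recognizing that Assumption~3 supplies precisely this inequality, and that Assumption~1 is what lets me equate $\approx(\widetilde{P}_i,1)$ with the clustering cost evaluated at $\mu(\widetilde{P}_i)$, is the key to making the optimal-case argument go through essentially verbatim; everything else is the same elementary \emph{drop the farthest point} estimate.
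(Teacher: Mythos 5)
Your proposal is correct and matches the paper's own proof essentially step for step: it uses Assumption~1 to write $\approx(\widetilde{P}_i,1)$ as the clustering cost at $\mu(\widetilde{P}_i)$, Assumption~3 to bound $\approx(\widetilde{P}_i,2)$ by the cost of the greedy candidate $\{\mu(\widetilde{P}_i),p^*\}$ (which drops the farthest-cost point $p^*$), and then the maximality of $p^*$ together with $w_p\geq\wmin$ to conclude. Your identification of Assumption~3 as precisely the substitute for the optimality argument in \Cref{lem:2} is exactly the role it plays in the paper.
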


\begin{theorem}\label{thm:2}
If $\approx(P,k)-\approx(P,2k) \leq \wmin({\epsilon\over \rho})^z$, where $\approx(P,k)$ is the cost of a (possibly suboptimal) $k$-clustering algorithm satisfying Assumptions~1--3, then the centers computed by the algorithm for $k$-clustering of $P$ give an $\epsilon$-coreset for $P$ w.r.t. both the sum cost and the maximum cost for any per-point cost function satisfying (i--ii) in \Cref{thm:1}.
\end{theorem}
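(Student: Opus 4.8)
The plan is to replay the proof of \Cref{thm:1} almost verbatim, with the suboptimal counterparts \Cref{lem:1 suboptimal} and \Cref{lem:2 suboptimal} taking the place of \Cref{lem:1} and \Cref{lem:2}. Concretely, I would set $\epsilon' = \wmin(\epsilon/\rho)^z$ so that the hypothesis $\approx(P,k)-\approx(P,2k) \leq \wmin(\epsilon/\rho)^z$ reads $\approx(P,k)-\approx(P,2k)\leq \epsilon'$. \Cref{lem:1 suboptimal}—whose proof is where Assumptions~2 and~3 are consumed—then yields $\approx(\widetilde{P}_i,1)-\approx(\widetilde{P}_i,2)\leq \epsilon'$ for every cluster $\widetilde{P}_i$ produced by the algorithm, and \Cref{lem:2 suboptimal} converts this into the geometric bound $\dist(p,\mu(\widetilde{P}_i))\leq (\epsilon'/\wmin)^{1/z}$ for all $p\in\widetilde{P}_i$.

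The second step is the Lipschitz-to-multiplicative conversion, identical to that in \Cref{thm:1}. Using $\rho$-Lipschitz continuity of $\cost(\cdot,x)$ together with the substitution $(\epsilon'/\wmin)^{1/z} = \epsilon/\rho$, I would bound $|\cost(p,x)-\cost(\mu(\widetilde{P}_i),x)| \leq \rho(\epsilon'/\wmin)^{1/z} = \epsilon$ for all $x\in\mathcal{X}$ and $p\in\widetilde{P}_i$; dividing by $\cost(p,x)\geq 1$ turns this additive bound into the two-sided multiplicative inequality (\ref{eq:lem3}), now with the partition $\{\widetilde{P}_i\}_{i\in[k]}$ in place of $\{P_i\}_{i\in[k]}$.

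The final step applies \Cref{lem:3} (sum cost) and \Cref{lem:4} (maximum cost) to the partition $\{\widetilde{P}_i\}_{i\in[k]}$. Here Assumption~1 is essential: it guarantees that the center the algorithm actually places in $\widetilde{P}_i$ is $\mu(\widetilde{P}_i)$, so the coreset $S = \{\mu(\widetilde{P}_i)\}_{i\in[k]}$ (weighted by $u_{\mu(\widetilde{P}_i)} = \sum_{p\in\widetilde{P}_i} w_p$ in the sum case, and by arbitrary weights in the maximum case) is exactly the algorithm's output and meets the hypotheses of \Cref{lem:3} and \Cref{lem:4}. This delivers the $\epsilon$-coreset guarantee for both cost aggregations, completing the proof.

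I do not anticipate a genuine obstacle in the main chain of implications, precisely because the three assumptions have been pre-digested into \Cref{lem:1 suboptimal} and \Cref{lem:2 suboptimal}; modulo those lemmas, the argument is a mechanical mirror of \Cref{thm:1}. The one point that demands care is bookkeeping: \Cref{lem:3} and \Cref{lem:4} were stated for an arbitrary partition whose coreset points are the $1$-clustering optima $\mu(\cdot)$ of its parts, so I must verify that the algorithm's partition $\{\widetilde{P}_i\}$ and the centers it reports genuinely coincide with $\{\mu(\widetilde{P}_i)\}$—which is exactly what Assumption~1 secures—rather than silently reusing the optimal partition $\{P_i\}$.
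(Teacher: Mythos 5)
Your proposal is correct and follows essentially the same route as the paper: the paper's proof likewise replays the argument of \Cref{thm:1} with \Cref{lem:1 suboptimal} and \Cref{lem:2 suboptimal} substituted for \Cref{lem:1} and \Cref{lem:2}, and explicitly notes that \Cref{lem:3} and \Cref{lem:4} apply to any partition, in particular $\{\widetilde{P}_i\}_{i=1}^k$. Your closing observation that Assumption~1 is what guarantees the algorithm's reported centers coincide with $\{\mu(\widetilde{P}_i)\}$ is a point the paper leaves implicit, and making it explicit is a small improvement in rigor.
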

\begin{proof}
The proof follows the same steps as that of \Cref{thm:1}, except that \Cref{lem:1} is replaced by \Cref{lem:1 suboptimal}, and \Cref{lem:2} is replaced by \Cref{lem:2 suboptimal}. Note that Lemmas~\ref{lem:3} and \ref{lem:4} hold for any partition of $P$, which in this case is $\{\widetilde{P}_i\}_{i=1}^k$ generated by the $k$-clustering algorithm. 
\end{proof}

Similar to \Cref{coro:1}, we can rephrase \Cref{thm:2} to characterize the quality of a coreset   of a specified size.

\begin{corollary}\label{coro:3}
Given a maximum coreset size $k\in \mathbb{Z}^+$, for any cost function satisfying the conditions in \Cref{thm:1} and any $k$-clustering algorithm satisfying Assumptions~1--3, the centers computed by the algorithm for $k$-clustering of $P$ give an $\epsilon$-coreset for $P$ w.r.t. the given cost function, where
\begin{align}
\epsilon = \rho \left({\approx(P,k) - \approx(P,2k)\over \wmin}\right)^{1\over z}.
\end{align}
\end{corollary}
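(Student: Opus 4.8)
The plan is to obtain this corollary as an immediate consequence of \Cref{thm:2}, in exact parallel to how \Cref{coro:1} is derived from \Cref{thm:1}. The central observation is that the value of $\epsilon$ asserted in the statement is precisely the one that turns the hypothesis of \Cref{thm:2} into an equality; hence no new estimation is required, and the whole argument reduces to inverting the relation between $\epsilon$ and the clustering-cost gap $\approx(P,k)-\approx(P,2k)$.

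Concretely, I would start from the claimed expression $\epsilon = \rho\left(\frac{\approx(P,k)-\approx(P,2k)}{\wmin}\right)^{1/z}$ and run the algebra backwards: dividing by $\rho$, raising both sides to the power $z$ (valid since $z>0$ makes $t\mapsto t^z$ monotone on the nonnegative reals), and multiplying by $\wmin$ gives
\begin{align}
\wmin\left(\frac{\epsilon}{\rho}\right)^z = \approx(P,k)-\approx(P,2k).
\end{align}
In particular the bound $\approx(P,k)-\approx(P,2k)\leq \wmin(\epsilon/\rho)^z$ holds (with equality), which is exactly the precondition required by \Cref{thm:2}.

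With that precondition in hand I would invoke \Cref{thm:2} directly: since the cost function is assumed to satisfy conditions (i)--(ii) of \Cref{thm:1} and the clustering algorithm is assumed to satisfy Assumptions~1--3, the theorem guarantees that the centers computed for the $k$-clustering of $P$ form an $\epsilon$-coreset with respect to both the sum cost and the maximum cost, which is the desired conclusion.

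Since this is essentially a restatement of \Cref{thm:2} solved for the approximation parameter, there is no substantive obstacle; the closest thing is confirming that the expression for $\epsilon$ is well defined, i.e., that the gap $\approx(P,k)-\approx(P,2k)$ is nonnegative so its $1/z$-th root is real. I would note that this is the natural regime of interest---doubling the number of centers should not increase a reasonable algorithm's clustering cost---and that Assumption~2 reinforces it, since $\approx(P,2k)\leq\sum_{i=1}^k\approx(\widetilde{P}_i,2)\leq\sum_{i=1}^k\approx(\widetilde{P}_i,1)=\approx(P,k)$ once one grants that the algorithm's per-cluster $2$-clustering cost does not exceed its $1$-clustering cost. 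Thus the root is well defined and the corollary reads as a clean quantitative reformulation of \Cref{thm:2}.
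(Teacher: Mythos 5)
Your proposal is correct and matches the paper's approach exactly: the paper gives no separate argument for \Cref{coro:3}, treating it (as with \Cref{coro:1}) as a direct implication of \Cref{thm:2}, since the stated $\epsilon$ makes the hypothesis $\approx(P,k)-\approx(P,2k)\leq \wmin(\epsilon/\rho)^z$ hold with equality. Your added check that the gap is nonnegative (via Assumption~1 giving $\approx(P,k)=\sum_i \approx(\widetilde{P}_i,1)$ and $\approx(\widetilde{P}_i,2)\leq\approx(\widetilde{P}_i,1)$, combined with Assumption~2) is a sound refinement of exactly the inequalities the paper establishes in its proof of \Cref{lem:1 suboptimal}.
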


\subsection{Coreset Construction Algorithm}\label{subsec:Coreset Construction Algorithm}

Based on \Cref{thm:2}, we propose a centralized $k$-clustering-based coreset construction algorithm, called \emph{Robust Coreset Construction (RCC)} (Algorithm~\ref{Alg:k-clustering coreset}), which uses a $k$-clustering algorithm as subroutine in lines~\ref{RCC:1} and \ref{RCC:2}. 
If the coreset size $k$ is predetermined, we can directly start from line~\ref{RCC:2}. The constant $z=1$ if the adopted clustering algorithm is for $k$-median, or $z=2$ if it is for $k$-means. 

\begin{algorithm}[tb]
\caption{ Robust Coreset Construction $(P,\epsilon,\rho)$}
\label{Alg:k-clustering coreset}
\SetKwInOut{Input}{input}\SetKwInOut{Output}{output}
\Input{A weighted set $P$ with minimum weight $\wmin$, approximation error $\epsilon>0$, Lipschitz constant $\rho$ }
\Output{An $\epsilon$-coreset $S$ for $P$ w.r.t. a cost function satisfying \Cref{thm:2} }
\ForEach{$k=1,\ldots,|P|$}
{\If{$\approx(P,k)-\approx(P,2k)\leq \wmin({\epsilon\over \rho})^z$ \label{RCC:1}}
{break\;}
}
$(\{\mu(\widetilde{P}_i)\}_{i=1}^k, \{\widetilde{P}_i\}_{i=1}^k)\leftarrow k\mbox{-clustering}(P,k)$\; \label{RCC:2}
$S\leftarrow \{\mu(\widetilde{P}_i)\}_{i=1}^k$, where $\mu(\widetilde{P}_i)$ has weight $\sum_{p\in\widetilde{P}_i}w_p$\;
return $S$\;
\end{algorithm}
\normalsize
\setlength{\textfloatsep}{.5em}

\emph{The $k$-clustering subroutine:} Algorithm~\ref{Alg:k-clustering coreset} can use any $k$-clustering algorithm as subroutine, although our performance guarantee holds only if the algorithm satisfies Assumptions~1--3. 
We note that these assumptions are easy to satisfy 
if $z=2$
. Consider the standard $k$-means algorithm (i.e., Lloyd's algorithm), which iteratively assigns each point to the nearest center and updates the centers to the means of the clusters. Clearly, this algorithm satisfies Assumption~1.  Moreover, with the following initialization, it also satisfies Assumptions~2 and 3. For $(2k)$-clustering of $P$: 
\begin{enumerate}
\item if $k=1$, then use the mean $\mu(P)$ and the point $p^*$ with the highest clustering cost as defined in (\ref{eq:greedy dominance}) as the initial centers, which helps to satisfy Assumption~3;  
\item if $k>1$, then first compute $k$-clustering of $P$, and then compute $2$-clustering of each of the $k$ clusters 
(both by calling the same algorithm recursively)
; finally, use the union of the $2$-clustering centers as the initial centers, 
which helps to satisfy Assumption~2.
\end{enumerate}
Any odd number of initial centers are chosen randomly. 
Since iterations can only reduce the cost, Lloyd's algorithm with this initialization satisfies Assumptions 1--3. 

In theory, the above initialization plus a Lloyd-style algorithm can satisfy Assumptions~1--3 for an arbitrary $z>0$, given a subroutine to compute the optimal $1$-clustering center $\mu(P)$. For $z=1$, there is an algorithm to compute $\mu(P)$ to an arbitrary precision in nearly linear time~\cite{Cohen16STOC}. 


\section{Distributed Coreset Construction}\label{sec:Application in Distributed Setting}

In a distributed setting, the entire dataset $P$ is distributed across $n$ ($n>1$) nodes 
(i.e., data sources)
$v_1,\ldots,v_n$, where each $v_j$ has a subset $P_j\subseteq P$. 
We have shown in Section~\ref{sec:Robust Coreset Construction} that the $k$-clustering centers of $P$ form a robust coreset. However, computing the global $k$-clustering centers of a distributed dataset is highly non-trivial. Note that a naive solution that only includes local centers in the global coreset may select nearly identical points at different nodes if the local datasets are similar, which is non-optimal and inefficient.

\textbf{An existing algorithm:}
The state-of-the-art solution to the distributed $k$-clustering problem is based on an algorithm called \emph{Communication-aware Distributed Coreset Construction (CDCC)} \cite{Balcan13NIPS}. In our context, this solution works as follows: 
\begin{enumerate}
\item the server allocates a given sample size $t$ among the nodes, such that the sample size $t_j$ allocated to node $v_j$ is proportional to the local $k$-clustering cost $c(P_j, B_j)$ reported by $v_j$; 
\item each node $v_j$ generates and reports a local coreset $D_j$, consisting of the local  centers $B_j$ and $t_j$ points sampled i.i.d. from $P_j$, where each $p\in P_j$ has a sampling probability proportional to the cost of clustering $p$ to the nearest center in $B_j$;
\item the server computes a set of $k$-clustering centers $Q_D$ from the coreset $D=\bigcup_{j=1}^n D_i$.
\end{enumerate}
It is shown in \cite{Balcan13NIPS} that if $t=O({1\over \epsilon^2}(kd + \log{1\over \delta}))$ for $k$-median and $t=O({1\over \epsilon^4}(kd+\log{1\over \delta})+nk\log{nk\over \delta})$ for $k$-means, then with probability at least $1-\delta$, $D$ is an $\epsilon$-coreset for $P$ w.r.t. the cost function of $k$-median/means. According to Definition~\ref{def:epsilon-coreset}, this implies that if $Q_P$ is the set of optimal $k$-clustering centers for $P$, then $c(P,Q_D)/c(P,Q_P) \leq(1+\epsilon)/(1-\epsilon)$. 

\textbf{Adaptation for coreset construction:} 
First, we skip step (3) (i.e., computation of $Q_D$) and directly use $D=\bigcup_{j=1}^n D_j$ as the coreset. This is because the coreset of a coreset cannot have a better quality than the original coreset \cite{Feldman11NIPS}. 

Moreover, in CDCC, the number of local centers $k$ is a given parameter as it is only designed to support $k$-clustering. Since our goal is to support a variety of machine learning problems, the number of local centers $k_j$ at each $v_j$ becomes a design parameter that can vary across nodes. Given a global coreset size $N$, we will show that the approximation error of the constructed coreset depends on $(k_j)_{j=1}^n$ through ${1\over \sqrt{N-\sum_{j=1}^n k_j}}\sum_{j=1}^n \approx{(P_j,k_j)}$ (see \Cref{thm:DRCC_sum-cost_kmedian}). Thus, we set $(k_j)_{j=1}^n$ to minimize this error, and obtain the remaining $t = N-\sum_{j=1}^n k_j$ points by sampling. 

\begin{algorithm}[tb]
\caption{ Distributed Robust Coreset Construction $((P_j)_{j=1}^n,N,K)$}
\label{Alg:distributed RCC}
\SetKwInOut{Input}{input}\SetKwInOut{Output}{output}
\SetKwFor{EachNode}{each $v_j$ ($j\in [n]$):}{}{}
\SetKwFor{TheServer}{the server:}{}{}
\Input{A distributed dataset $(P_j)_{j=1}^n$, global coreset size $N$, maximum number of local centers $K$}
\Output{A coreset $D=\bigcup_{j=1}^n (S_j\cup B_j^{k_j})$ for $P=\bigcup_{j=1}^n P_j$ }
\BlankLine
\EachNode{}{
\dosemic compute local approximate $k$-clustering centers $B_j^k$ on $P_j$ for $k=1,\ldots,K$\;\label{DRCC: compute local centers}
report $(c(P_j,B_j^k))_{k=1}^K$ to the server\; \label{DRCC: report local costs}
}
\BlankLine
\TheServer{}{
\dosemic find $(k_j)_{j=1}^n$ that minimizes ${1\over \sqrt{N-\sum_{j=1}^n k_j}}\sum_{j=1}^n c(P_j, B_j^{k_j})$ s.t. $k_j\in [K]$ and $\sum_{j=1}^n k_j\leq N$\; \label{DRCC: find k_j}
randomly allocate $t=N-\sum_{j=1}^n k_j$ points i.i.d. among $v_1,\ldots,v_n$, where each point belongs to $v_j$ with probability ${c(P_j,B_j^{k_j})\over \sum_{j=1}^n c(P_j,B_j^{k_j})}$\; \label{DRCC: allocate t}
communicate $(k_j, t_j, {C\over t})$ to each $v_j$ ($j\in [n]$), where $t_j$ is the number of points allocated to $v_j$ and $C=\sum_{l=1}^n c(P_l,B_l^{k_l})$\; \label{DRCC: communicate configuration}
}
\BlankLine
\EachNode{}{
\dosemic sample a set $S_j$ of $t_j$ points i.i.d. from $P_j$, where each sample equals $p\in P_j$ with probability ${m_p\over c(P_j,B_j^{k_j})}$ for $m_p=c(\{p\},B_j^{k_j})$\; \label{DRCC: sample S_j}
set the weight of each $q\in S_j$ to $u_q = {C w_q \over t m_q}$\; \label{DRCC: set sample weight}
set the weight of each $b\in B_j^{k_j}$ to $u_b = \sum_{p\in P_b}w_p - \sum_{q\in P_b\cap S_j} u_q$, where $P_b$ is the set of points in $P_j$ whose closest center in $B_j^{k_j}$ is $b$\; \label{DRCC: set center weight}
report each point $q\in S_j\cup B_j^{k_j}$ and its weight $u_q$ to the server\; \label{DRCC: report S_j and B_j}
}
\end{algorithm}
\normalsize
\setlength{\textfloatsep}{.5em}

Combining these ideas yields a distributed coreset construction algorithm called \emph{Distributed Robust Coreset Construction (DRCC)} (Algorithm~\ref{Alg:distributed RCC}). The algorithm works in three steps: (1) each node reports its local $k$-clustering cost for a range of $k$ (lines~\ref{DRCC: compute local centers}-\ref{DRCC: report local costs}), (2) the server uses the reported costs to configure the number of local centers $k_j$ and the number of random samples $t_j$ at each node $v_j$ (lines~\ref{DRCC: find k_j}-\ref{DRCC: communicate configuration}), and (3) each node independently constructs a local coreset using a combination of samples and local centers (lines~\ref{DRCC: sample S_j}-\ref{DRCC: report S_j and B_j}).   
DRCC generalizes CDCC in that: (i) it allows the input dataset to be weighted ($w_p$: weight of input point $p$; $u_q$: weight of coreset point $q$); (ii) it allows the number of local centers to be different for different nodes. In the special case of $k_j\equiv k$ for all $j\in [n]$ and $w_p\equiv 1$ for all $p\in P$, DRCC is reduced to CDCC. 

\textbf{Communication overhead:} DRCC has a communication overhead of $O(Kn)$, measured by the total number of scalars reported by the nodes besides the coreset itself. 
In practice, $K$ should be a small constant to allow efficient computation of local $k$-clustering for $k\in [K]$. 
This overhead is much smaller than the $O((n-1)Nd)$ overhead of the merge-and-reduce approach\footnote{
Specifically, the merge-and-reduce approach works by applying a centralized coreset construction algorithm (e.g., Algorithm~\ref{Alg:k-clustering coreset}) repeatedly to combine local coresets into a global coreset. Given $n$ local coresets computed by each of the $n$ nodes, each containing $N$ points in $\mathbb{R}^d$, the centralized algorithm needs to be applied $n-1$ times, each time requiring a local coreset to be transmitted to the location of another local coreset. This results in a total communication overhead of $(n-1)Nd$.   
} in \cite{Feldman11NIPS}. 

\textbf{Quality of coreset:} Regarding the quality of the coreset, we have proved the following result in 
\if\thisismainpaper1
Appendix~A in \cite{coreset19:report}.
\else
Appendix~A.
\fi
Given a general per-point cost function $\cost(p,x)$, define $f_x(p) \coloneqq w_p(\cost(p,x)-\cost(b_p,x) + \rho \dist(p,b_p))$, where $b_p$ is the center in $B_j^{k_j}$ closest to $p\in P_j$. Let $\dim(F,P)$ denote the \emph{dimension of the function space} $F\coloneqq \{f_x(p):\: x\in\mathcal{X}\}$ \cite{Balcan13NIPS}. 

\begin{theorem}\label{thm:DRCC_sum-cost_kmedian}
If $\cost(p,x)$ is $\rho$-Lipschitz-continuous in $p$ for any $x\in\mathcal{X}$, then $\exists t = O({1\over \epsilon^2}(\dim(F,P)+\log{1\over \delta})$ such that with probability at least $1-\delta$, the coreset $D$ constructed by DRCC based on local $k$-median clustering, which contains $k_j$ local centers from $v_j$ ($j\in [n]$) and $t$ random samples, satisfies
\begin{align}\label{eq:DRCC approx error}
\Big| \sum_{p\in P} w_p \cost(p,x)  - \sum_{q\in D} u_q \cost(q,x)\Big| 
\leq  {2 \epsilon \rho } \sum_{j=1}^n c(P_j, B_j^{k_j}) 
\end{align}
for all $x\in\mathcal{X}$.
\end{theorem}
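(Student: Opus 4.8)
The plan is to adapt the sensitivity-sampling (importance-sampling) analysis of CDCC in \cite{Balcan13NIPS}, but to track the \emph{general} per-point cost $\cost(p,x)$ instead of the $k$-median cost, using the auxiliary function $f_x$ to reduce the whole statement to a single uniform concentration bound. The linchpin is that $f_x$ is simultaneously nonnegative and bounded relative to the sampling distribution. Since $\cost(p,x)$ is $\rho$-Lipschitz in $p$, we have $|\cost(p,x)-\cost(b_p,x)|\le\rho\dist(p,b_p)$, hence $0\le \cost(p,x)-\cost(b_p,x)+\rho\dist(p,b_p)\le 2\rho\dist(p,b_p)$, so
\begin{align}
0 \le f_x(p) \le 2\rho\, w_p\dist(p,b_p) = 2\rho\, m_p, \quad \forall x\in\mathcal{X},\ p\in P. \nonumber
\end{align}
Here I use $z=1$ ($k$-median), so that $m_p = c(\{p\},B_j^{k_j}) = w_p\dist(p,b_p)$ and $C = \sum_j c(P_j,B_j^{k_j}) = \sum_{p\in P} w_p\dist(p,b_p)$.

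First I would algebraically simplify the left-hand side of (\ref{eq:DRCC approx error}). Substituting the center weights $u_b = \sum_{p\in P_b}w_p - \sum_{q\in P_b\cap S_j}u_q$ from line~\ref{DRCC: set center weight}, the center weights recombine so that the coreset cost becomes
\begin{align}
\sum_{q\in D}u_q\cost(q,x) = \sum_{p\in P}w_p\cost(b_p,x) + \sum_{q\in S}u_q\big(\cost(q,x)-\cost(b_q,x)\big), \nonumber
\end{align}
with $S=\bigcup_j S_j$. Matching this against $\sum_p w_p\cost(p,x)=\sum_p w_p\cost(b_p,x)+\sum_p w_p(\cost(p,x)-\cost(b_p,x))$, the common center term cancels. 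Rewriting both remaining pieces through $f_x$, each acquires an additive $\rho C$ that also cancels: the true part equals $\sum_p f_x(p)-\rho C$, while the sampled part equals $\tfrac{C}{t}\sum_{q\in S}\tfrac{f_x(q)}{m_q}-\rho C$ (the latter because $u_q\dist(q,b_q)=C/t$ for every $q$ and $|S|=t$). Thus the error in (\ref{eq:DRCC approx error}) is \emph{exactly}
\begin{align}
\Big|\sum_{p\in P}f_x(p) - \tfrac{C}{t}\sum_{q\in S}\tfrac{f_x(q)}{m_q}\Big|. \nonumber
\end{align}

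Next I would recognize $\tfrac{C}{t}\sum_{q\in S}\tfrac{f_x(q)}{m_q}$ as an unbiased estimator of $\sum_p f_x(p)$ and control it uniformly. The two-stage sampling of lines~\ref{DRCC: allocate t} and \ref{DRCC: sample S_j} is equivalent to drawing $t$ i.i.d.\ points from $P$ with per-point probability $m_p/C$, so for one draw $q$ the variable $Y=Cf_x(q)/m_q$ has $\mathbb{E}[Y]=\sum_p f_x(p)$ and, by the boundedness above, $0\le Y\le 2\rho C$. For fixed $x$, Hoeffding gives deviation at most $2\epsilon\rho C$ once $t=O(\epsilon^{-2}\log\tfrac{1}{\delta})$. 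To make this hold for all $x\in\mathcal{X}$ at once, I would invoke the uniform-convergence machinery of \cite{Balcan13NIPS} on the function space $F=\{f_x:x\in\mathcal{X}\}$: after normalizing $Y/(2\rho C)\in[0,1]$, a sample size $t=O(\epsilon^{-2}(\dim(F,P)+\log\tfrac{1}{\delta}))$ keeps the empirical average within $2\epsilon\rho C = 2\epsilon\rho\sum_j c(P_j,B_j^{k_j})$ of its mean for every $f_x$ with probability at least $1-\delta$, which is precisely (\ref{eq:DRCC approx error}).

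The main obstacle is this final step, converting the pointwise concentration into a guarantee uniform over the possibly infinite solution space $\mathcal{X}$; this is exactly where the dimension $\dim(F,P)$ enters and where the generalized VC/pseudo-dimension bound of \cite{Balcan13NIPS} must be applied with care. The enabling ingredient is the boundedness $f_x(p)\le 2\rho m_p$, which is what lets the single sampling distribution $m_p/C$ dominate \emph{every} $f_x$ uniformly, and the cancellation of the two $\rho C$ terms is what keeps the final bound proportional to $\sum_j c(P_j,B_j^{k_j})$ rather than to a looser quantity.
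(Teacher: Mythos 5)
Your proposal is correct and follows essentially the same route as the paper's proof: the same auxiliary function $f_x(p)=w_p(\cost(p,x)-\cost(b_p,x)+\rho\,\dist(p,b_p))$ with the same boundedness $0\le f_x(p)\le 2\rho m_p$, the same algebraic cancellation (via the center weights $u_b$ and the identity $u_q\dist(q,b_q)=C/t$) reducing the coreset error exactly to $\bigl|\sum_{p\in P}f_x(p)-\sum_{q\in S}u'_q f_x(q)\bigr|$, and the same appeal to the sensitivity-sampling uniform-convergence lemma of \cite{Balcan13NIPS} with sample size $t=O({1\over\epsilon^2}(\dim(F,P)+\log{1\over\delta}))$. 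The only difference is presentational—you simplify the target error first and apply concentration second, while the paper does the reverse—so there is nothing substantive to add.
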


Here, the parameter $\dim(F,P)$ is a property of the machine learning problem under consideration, which intuitively measures the degree of freedom in the solution $x\in \mathcal{X}$, e.g., $\dim(F,P)=O(kd)$ for $k$-means/median in $d$-dimensional space \cite{Feldman11STOC}. 
\if\thisismainpaper1
See Appendix~C in \cite{coreset19:report} for more discussions. 
\else
See Appendix~C for more discussions. 
\fi

Due to the relationship between $t$ and $\epsilon$ given in Theorem~\ref{thm:DRCC_sum-cost_kmedian}, the bound on the right-hand side of (\ref{eq:DRCC approx error}) depends on parameters $t$ and $(k_j)_{j=1}^n$ through ${1\over \sqrt{t}}\sum_{j=1}^n c(P_j, B_j^{k_j})$. 
Specifically, given a total coreset size $N$, the right-hand side of (\ref{eq:DRCC approx error}) is 
\begin{align}
    O\left({\rho \sqrt{\dim(F,P)+\log{1\over \delta}}\over \sqrt{N-\sum_{j=1}^n k_j}}  \cdot \sum_{j=1}^n c(P_j, B_j^{k_j})\right). \label{eq:rewrite DRCC approx error}
\end{align}
This error bound tells us that the approximation error decreases with the coreset size $N$ at roughly $O(1/\sqrt{N})$. The error, however, may not be monotone with the numbers of local centers $k_j$'s, as increasing their values decreases both  $\sqrt{N-\sum_{j=1}^n k_j}$ and $\sum_{j=1}^n c(P_j, B_j^{k_j})$. Thus, we select $(k_j)_{j=1}^n$ to minimize the error bound in line~\ref{DRCC: find k_j} of Algorithm~\ref{Alg:distributed RCC}. 
As the server needs to know $(c(P_j,B_j^k))_{k=1}^K$ ($\forall j\in [n]$) to solve this minimization over $k_j\in [K]$, the choice of the parameter $K$ faces a tradeoff: a larger $K$ yields a larger solution space and possibly a better configuration of $(k_j)_{j=1}^n$ to minimize the approximation error, but incurs a higher communication (and computation) overhead at the nodes. The optimal $K$ will depend on the desirable tradeoff and the specific dataset.

\emph{Remark:} The performance bound in \Cref{thm:DRCC_sum-cost_kmedian} is on the absolute error, instead of the relative error as guaranteed by an $\epsilon$-coreset. 
Nevertheless, if $\exists \beta>0$ and $(k_j)_{j=1}^n$ such that $\cost(P,x) = \sum_{p\in P}w_p\cost(p,x) \geq \beta \sum_{j=1}^n c(P_j, B_j^{k_j})$ for all $x\in\mathcal{X}$, then (\ref{eq:DRCC approx error}) implies that $D$ is an $\epsilon$-coreset for $P$ w.r.t. $\cost(P,x)$ with probability at least $1-\delta$ if $t=O({\rho^2\over \epsilon^2}(\dim(F,P)+\log{1\over \delta}))$, i.e., the total coreset size $N=O({\rho^2\over \epsilon^2}(\dim(F,P)+\log{1\over \delta})+\sum_{j=1}^n k_j)$. In the special case where $\cost(P,x)$ is the $k$-median clustering cost and $k_j\equiv k$, we have $\rho=1$ 
\if\thisismainpaper1
(Appendix~B in \cite{coreset19:report})
\else
(Appendix~B)
\fi
and $\dim(F,P) =O(kd)$ \cite{Feldman11STOC}, and thus the size of an $\epsilon$-coreset is $O({1\over \epsilon^2}(kd+\log{1\over \delta})+kn)$, which generalizes the result in \cite{Balcan13NIPS} to weighted datasets. 

\section{Performance Evaluation}\label{sec:Performance Evaluation}

We evaluate the proposed coreset construction algorithms and their benchmarks on a variety of machine learning problems and datasets, and compare the cost of each model learned on a coreset with the cost of the model learned on the original dataset. We first perform evaluations in a centralized setting to compare different approaches to construct coresets, and then evaluate different ways of applying the most promising approach in a distributed setting.

\emph{\bf Coreset construction algorithms:} In the centralized setting, we evaluate RCC based on $k$-median clustering (`RCC-kmedian') and RCC based on $k$-means clustering (`RCC-kmeans'), together with benchmarks including the algorithm in \cite{Badoiu03SODA} (`farthest point'), the framework in \cite{Feldman11STOC} instantiated for $k$-means (`nonuniform sampling'), and uniform sampling (`uniform sampling'). We note that the algorithm in \cite{Barger16SDM} (`decomposition' in Figure~\ref{fig:motivating experiment}) is essentially RCC based on $k$-means clustering, hence omitted.

\begin{table}[tb]
\footnotesize
\renewcommand{\arraystretch}{1.3}
\caption{Parameters of Datasets} \label{tab:dataset}
\centering
\begin{tabular}{r|c|c|c}
  \hline
dataset & size ($|P|$) & dimension ($d$) & \#distinct labels ($L$)   \\
  \hline
  Fisher's iris & 150 & 5 & 3  \\
  \hline
  Facebook & 500 & 19 & 4 \\
  \hline
  Pendigits & 7494 & 17 & 10   \\
  \hline
 MNIST & 70000 & 401 & 10  \\
  \hline
  HAR & 10299 & 562 & 6 \\
  \hline
\end{tabular}
\end{table}
\normalsize

\begin{table*}[tb]
\small
\renewcommand{\arraystretch}{1.3}
\caption{Machine Learning Cost Functions} \label{tab:cost}
\centering
\begin{tabular}{r|c|c}
  \hline
  problem & overall cost function\footnotemark & $\rho$  \\
  \hline
  MEB & $\max_{p\in P}\dist(x,p)$ & $1$   \\
  \hline
  $k$-means & $\sum_{p\in P}w_p\cdot \min_{q_i\in x} \dist(q_i,p)^2$ & $2\Delta$   \\
  \hline
  PCA & $\sum_{p\in P}w_p\cdot \dist(p, x p)^2$ & $2\Delta(l+1)$   \\
  \hline
  SVM & $\sum_{p\in P} w_p \max(0, 1- p_d(p_{1:d-1}^T x_{1:d-1} + x_d ))$ & $\infty$  \\
  \hline
  Neural Net & $\sum_{p\in P} (-p_d) \cdot \log(o_p)$, where $o_p$ is the output for input $p_{1:d-1}$ & NP-hard \cite{virmaux2018lipschitz} \\
  \hline
\end{tabular}
 \vspace{-0.2in}
\end{table*}
\normalsize
\footnotetext{The model $x$ denotes the center of enclosing ball for MEB and the set of centers for $k$-means. For PCA, $x=W W^T$, where $W$ is a $d\times l$ matrix consisting of the first $l$ ($l<d$) principle components as columns. For SVM, 
$x_{1:d-1}\in \mathbb{R}^{d-1}$ is the coefficient vector and $x_d\in \mathbb{R}$ is the offset. }

In the distributed setting, we take the best-performing algorithm in the centralized setting ('RCC-kmeans') and evaluate its distributed extensions -- including CDCC \cite{Balcan13NIPS} and DRCC. 

\emph{\bf Datasets:} We use five real datasets: (1) Fisher's iris data \cite{FisherIris}, which is a $5$-dimensional dataset consisting of measurements of $150$ iris specimens and their species, (2) Facebook metrics \cite{FacebookMetrics}, which is a $19$-dimensional dataset consisting of features of $500$ posts published in a popular Facebook page, (3) Pendigits data \cite{Pendigits}, which is a $17$-dimensional dataset consisting of feature vectors of $7494$ handwritten digits, (4) MNIST data \cite{MNIST}, which consists of $60,000$ images of handwritten digits for training plus $10,000$ images for testing, each trimmed to $20\times 20$ pixels, and 
(5) Human Activity Recognition (HAR) using Smartphones data \cite{Anguita13ESANN}, which contains $10,299$ samples of smartphone sensor readings during $6$ different activities, each sample containing $561$ readings.

\begin{figure}[t]
\begin{minipage}{0.238\textwidth}
\centerline{
\includegraphics[width=\textwidth,height=3.3cm]{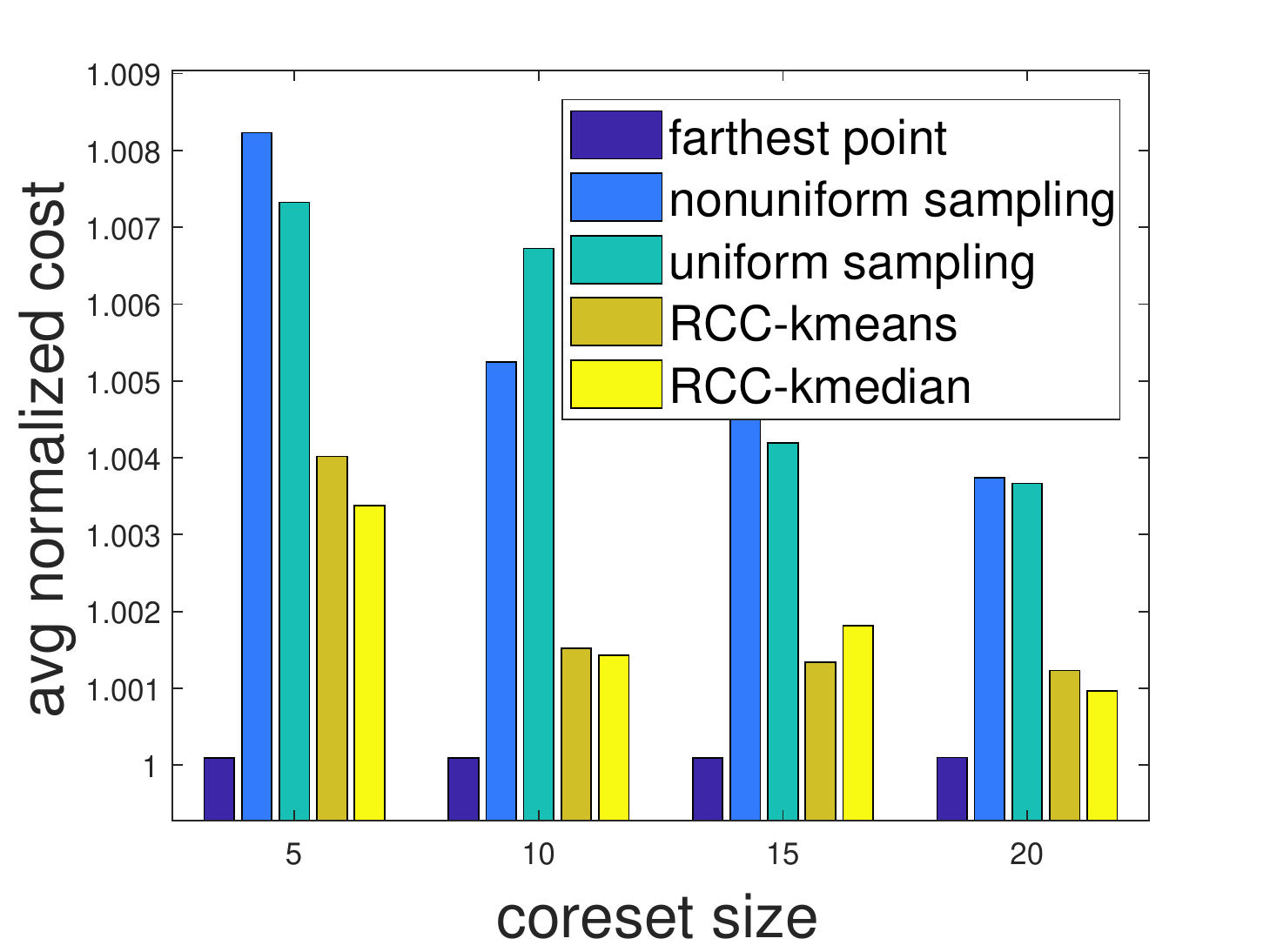}}
\centerline{\scriptsize (a) MEB}
\end{minipage}
\begin{minipage}{0.238\textwidth}
\centerline{
\includegraphics[width=\textwidth,height=3.3cm]{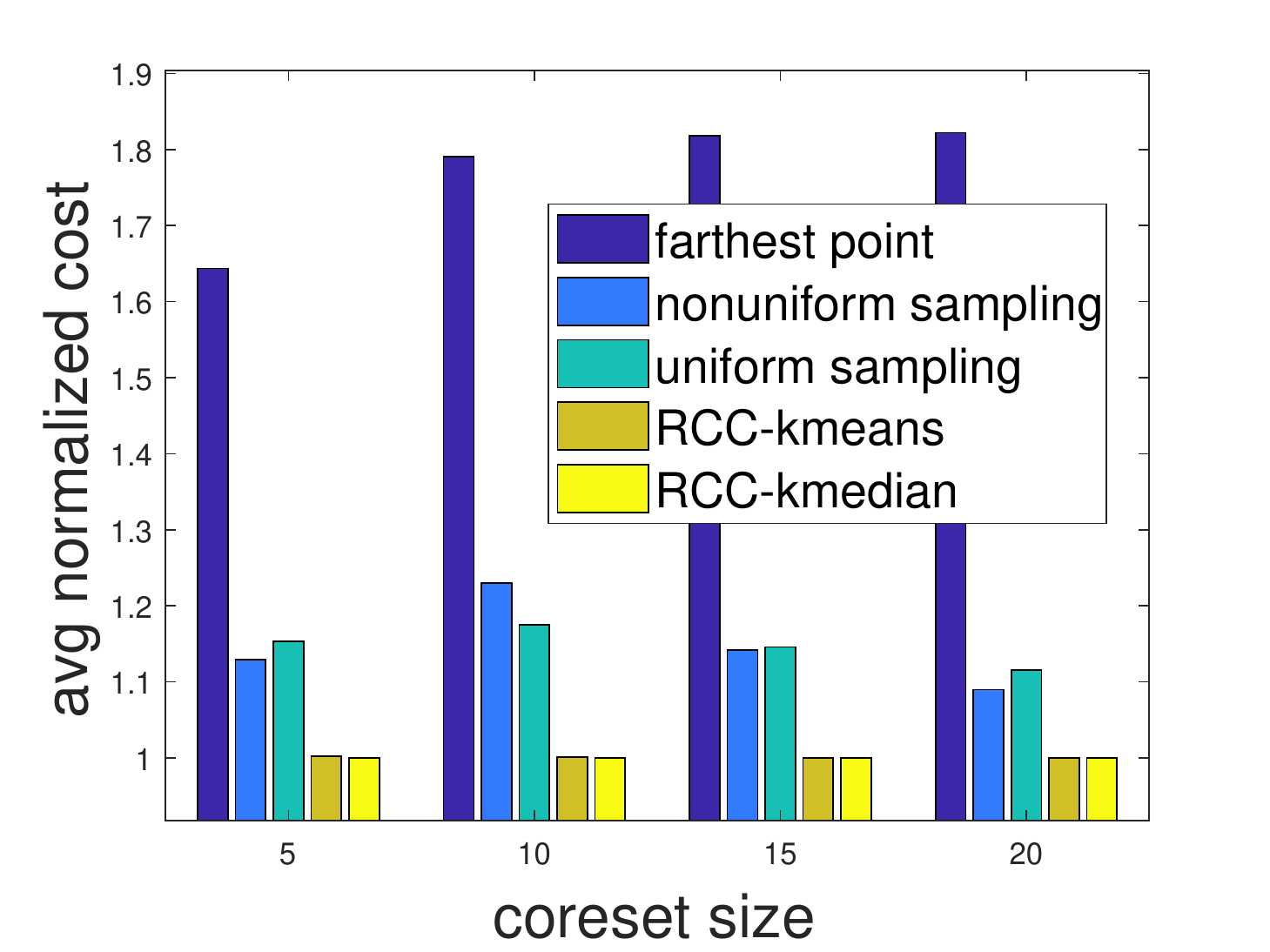}}
\centerline{\scriptsize (b) $k$-means ($k=2$)}
\end{minipage}
  \begin{minipage}{.238\textwidth}
  \centerline{
   \includegraphics[width=\textwidth,,height=3.3cm]{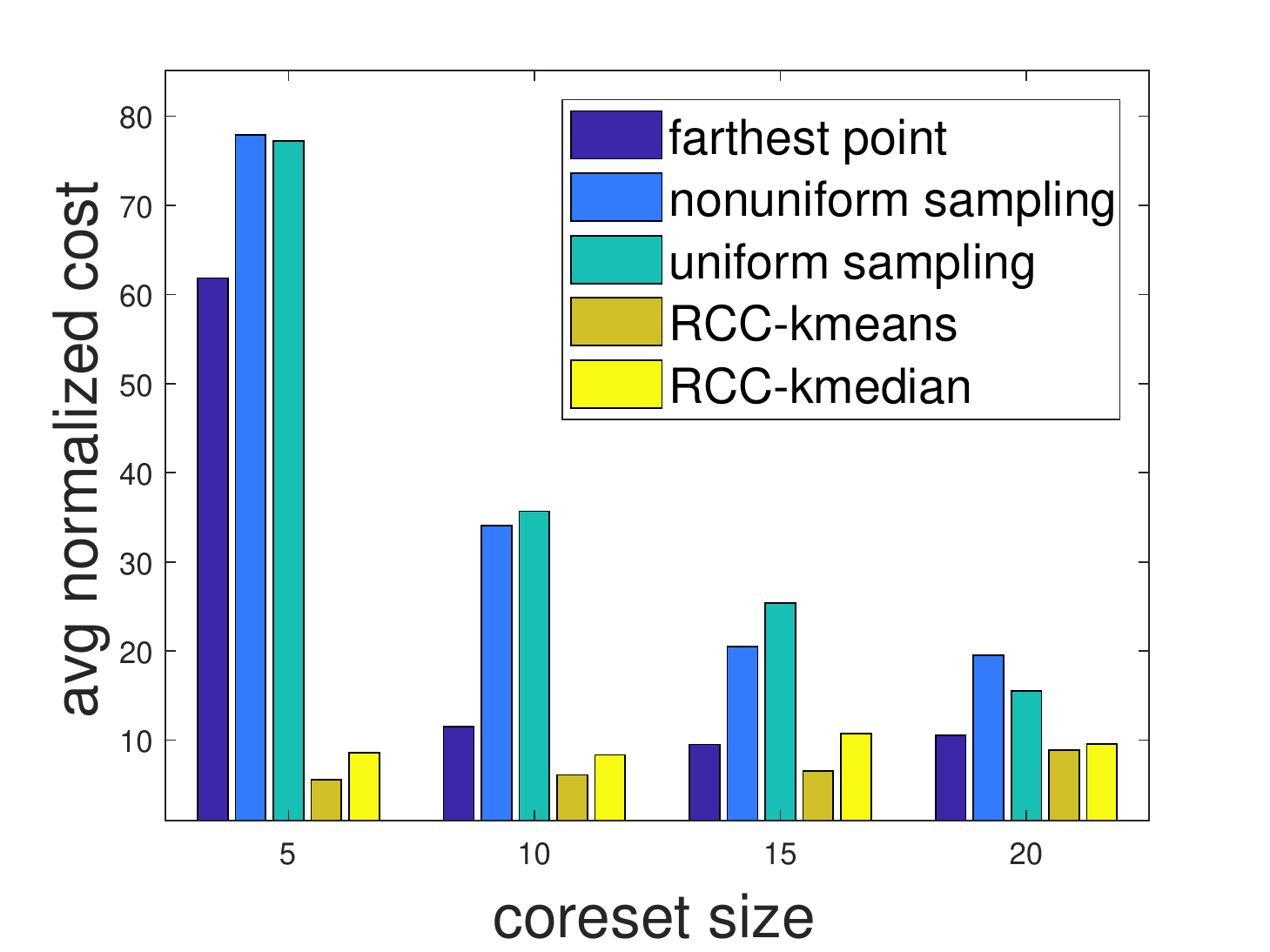}}
    \centerline{\scriptsize (c) PCA (3 components) }
  \end{minipage}
  \begin{minipage}{0.238\textwidth}
    \centerline{
  \includegraphics[width=\textwidth,height=3.3cm]{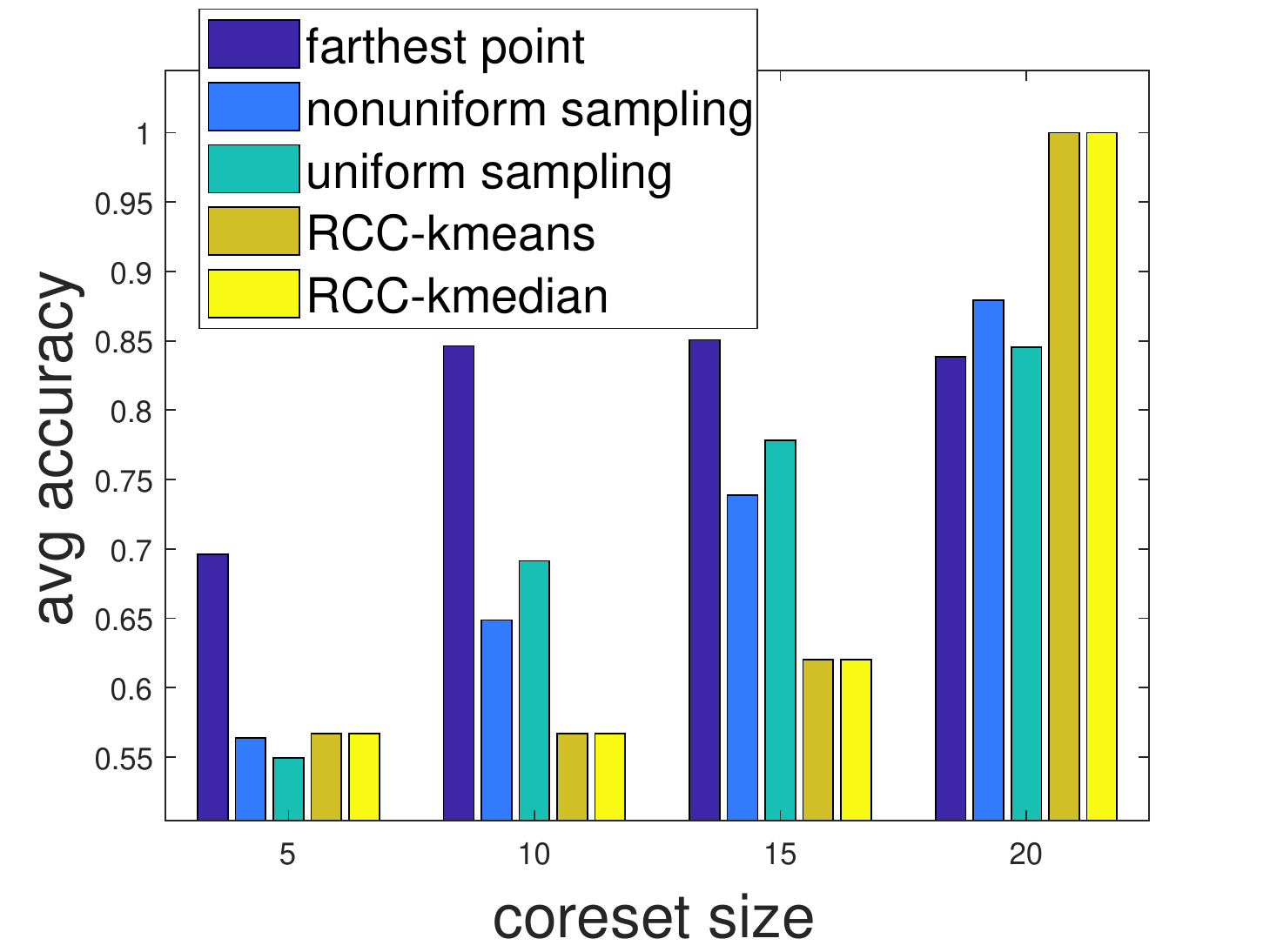}}
\centerline{\scriptsize (d) SVM (`setosa': 1; others: -1)}   
   \end{minipage}    
\caption{Evaluation on Fisher's iris dataset with varying coreset size (label: `species'). }
\label{fig:iris_size}
\end{figure}

\begin{figure}[t]
\begin{minipage}{0.238\textwidth}
\centerline{
\includegraphics[width=\textwidth,height=3.3cm]{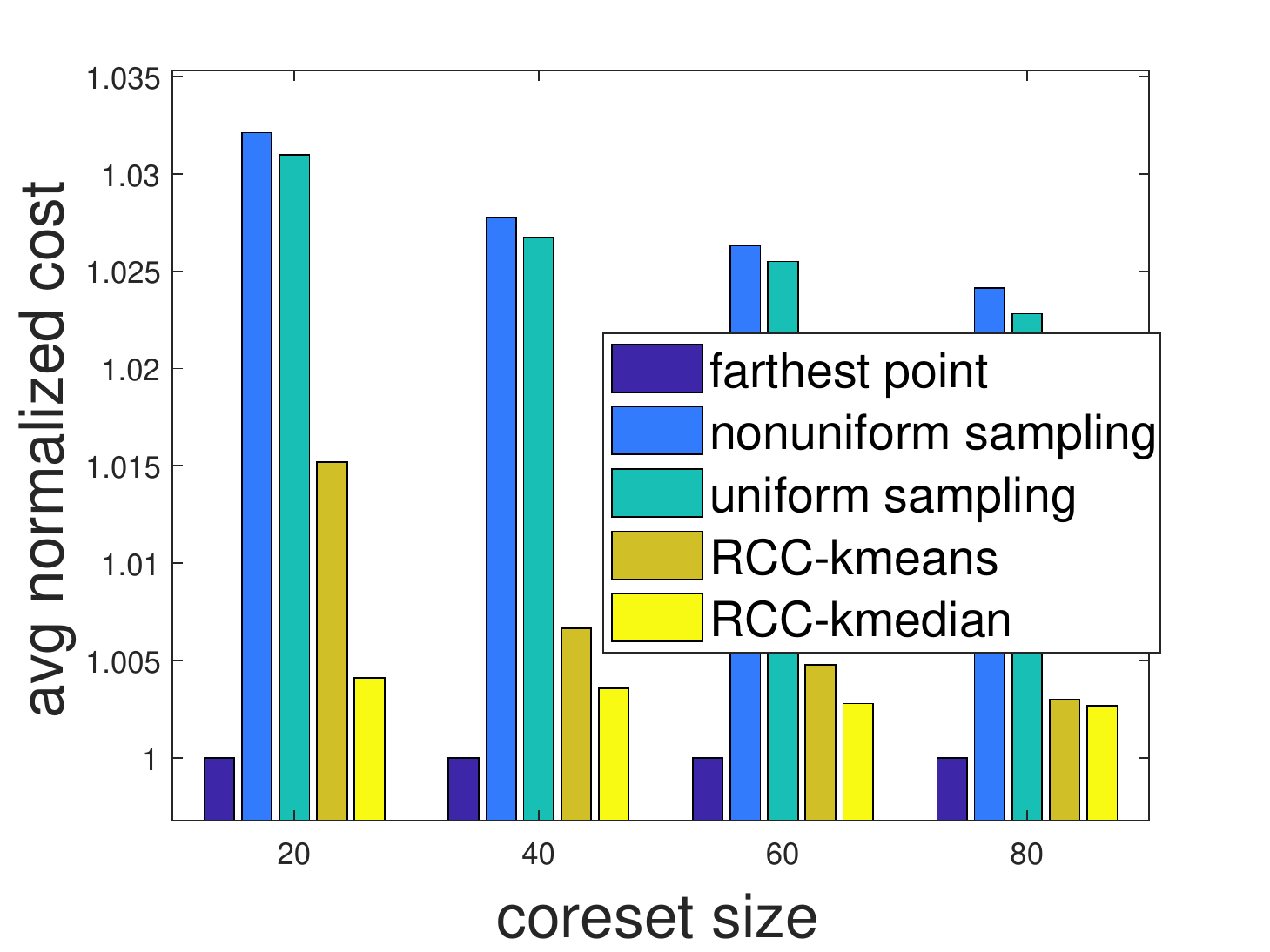}}
\centerline{\scriptsize (a) MEB}
\end{minipage}
\begin{minipage}{0.238\textwidth}
\centerline{
\includegraphics[width=\textwidth,height=3.3cm]{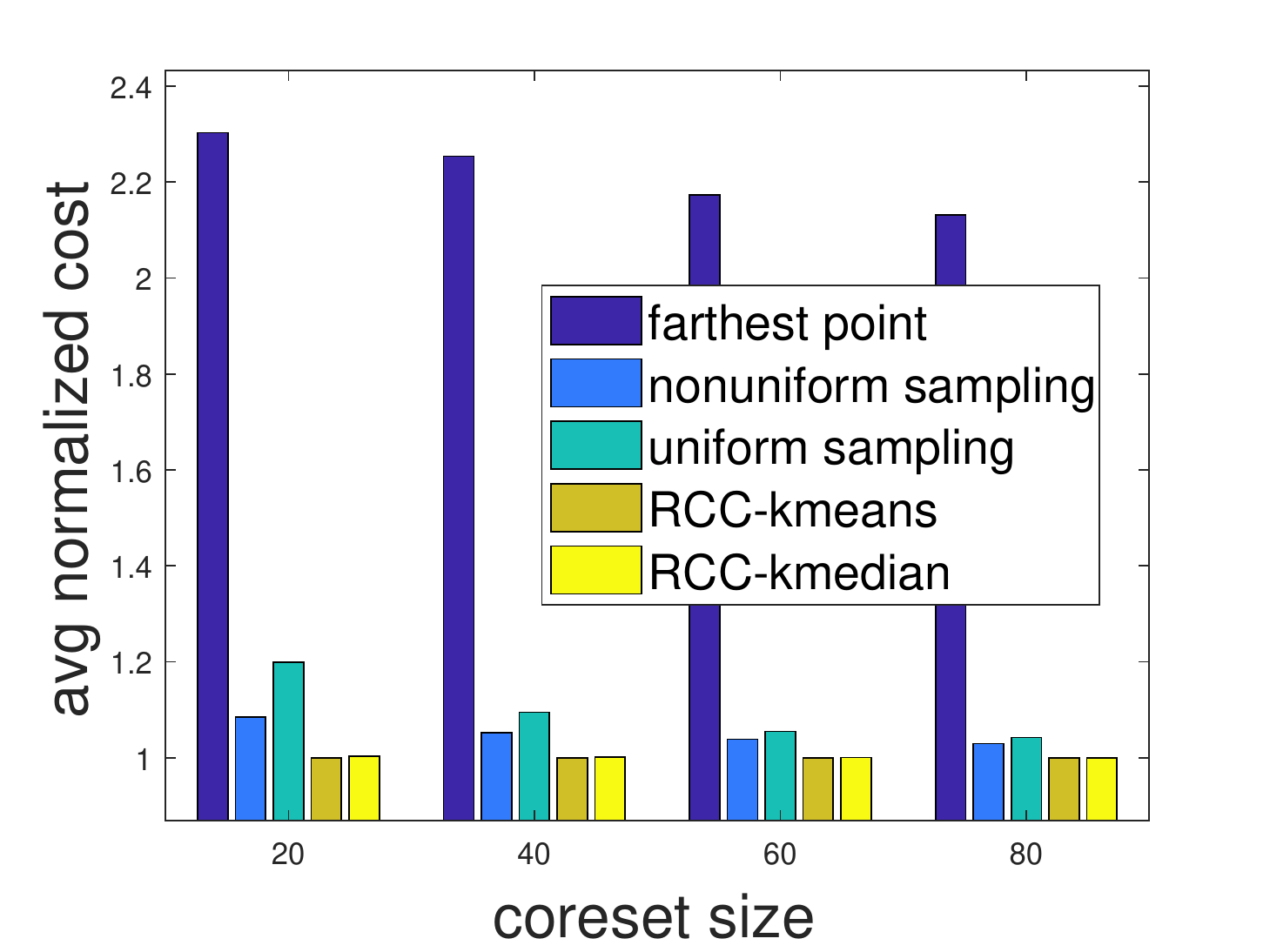}}
\centerline{\scriptsize (b) $k$-means ($k=2$)}
\end{minipage}
  \begin{minipage}{.238\textwidth}
  \centerline{
   \includegraphics[width=\textwidth,,height=3.3cm]{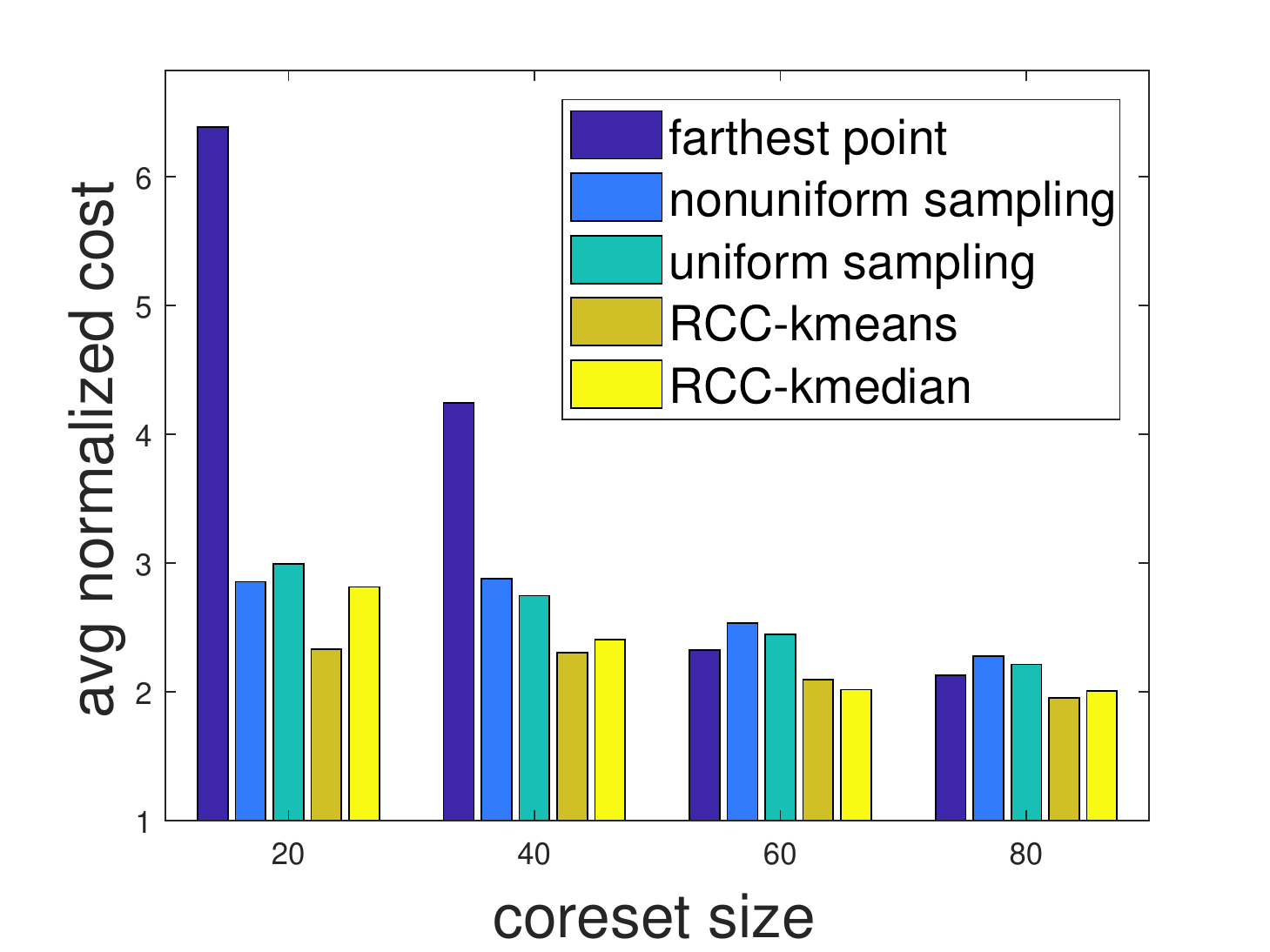}}
    \centerline{\scriptsize (c) PCA (5 components) }
  \end{minipage}
  \begin{minipage}{0.238\textwidth}
    \centerline{
  \includegraphics[width=\textwidth,height=3.3cm]{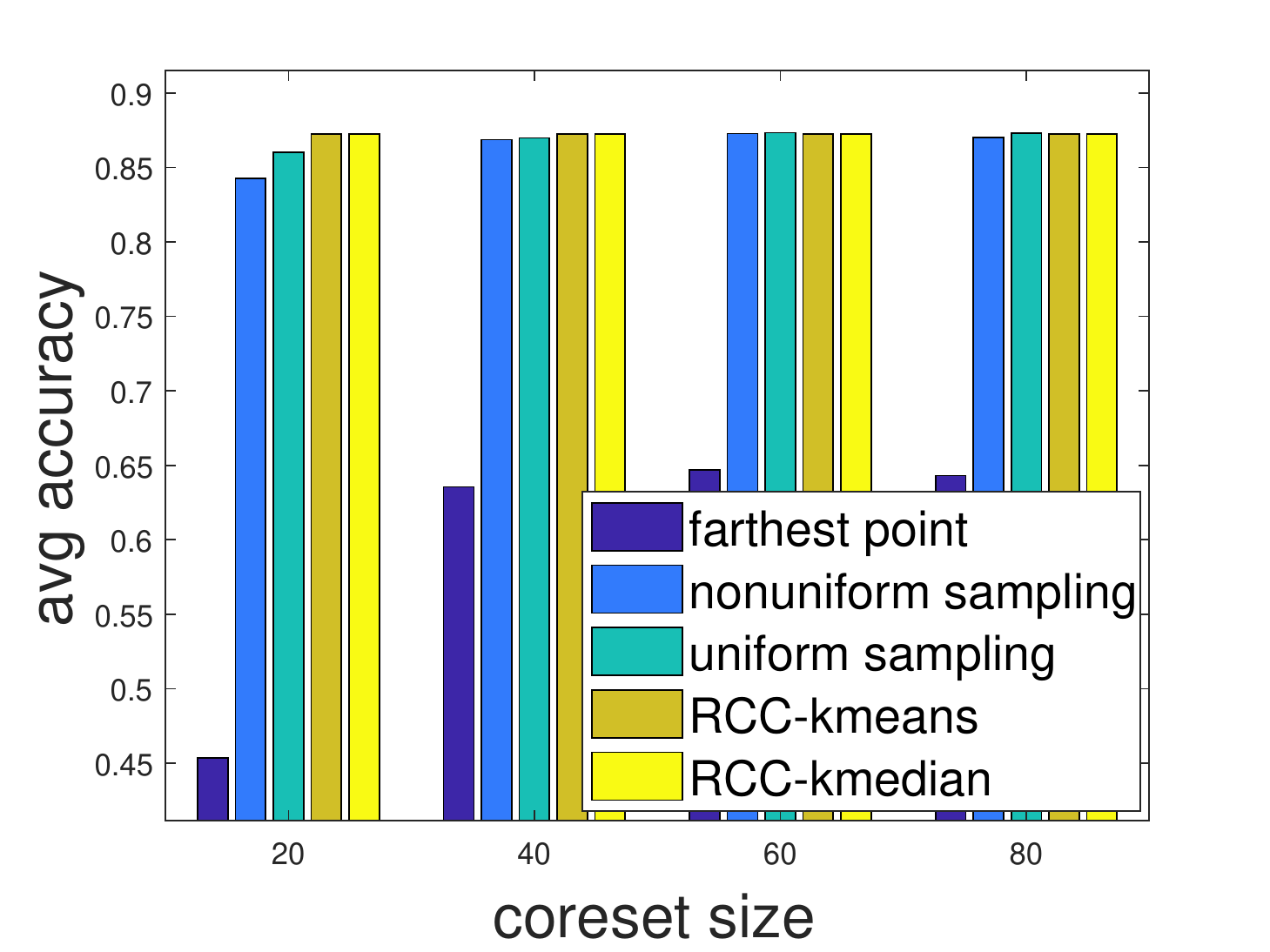}}
\centerline{\scriptsize (d) SVM (`photo': 1; others: -1)}   
   \end{minipage}    
\caption{Evaluation on Facebook metrics dataset with varying coreset size (label: `type'). }
\label{fig:facebook_size}
   \vspace{0em}
\end{figure}

\begin{figure}[tb]
\begin{minipage}{0.238\textwidth}
\centerline{
\includegraphics[width=\textwidth,height=3.3cm]{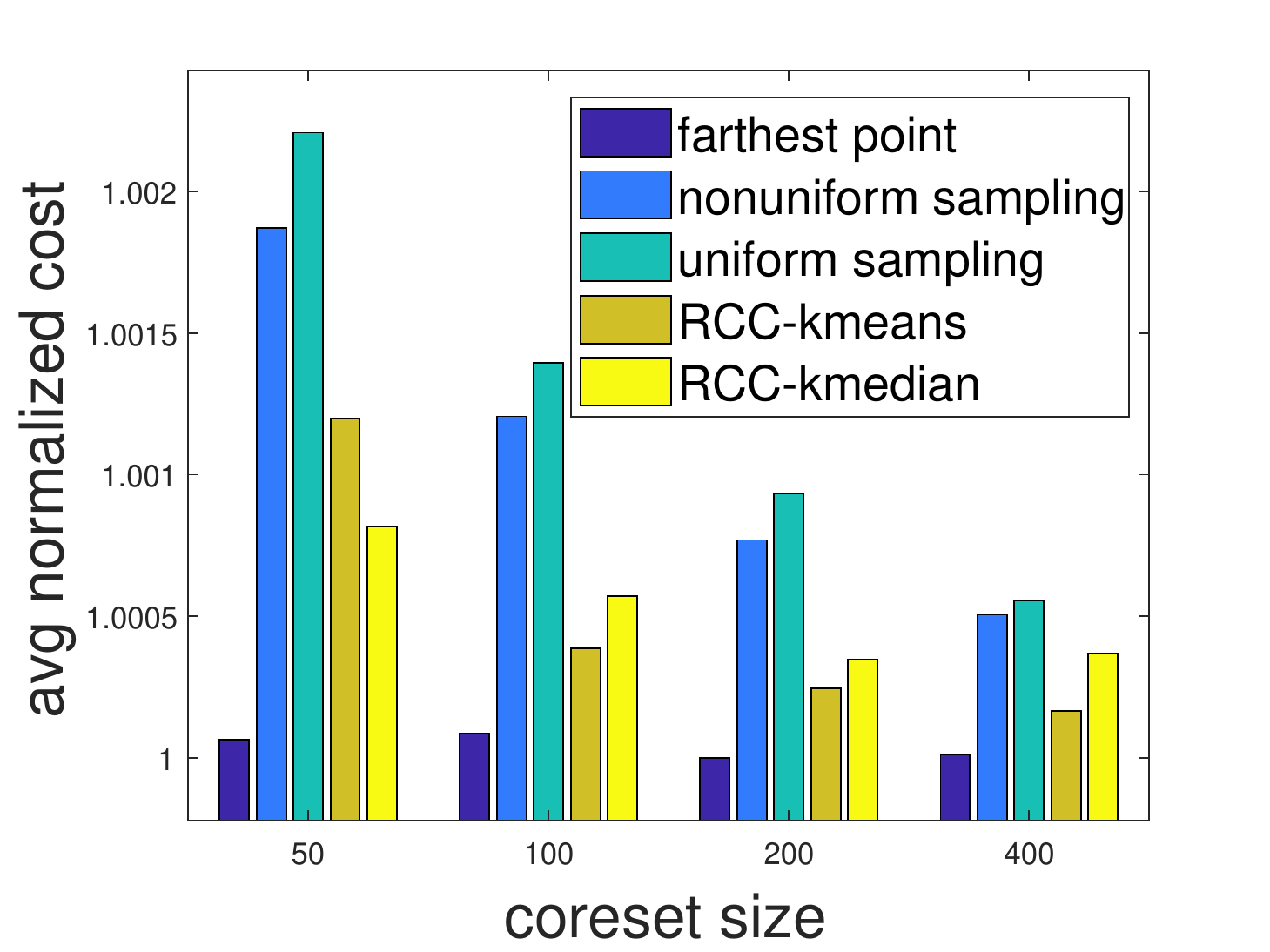}}
\centerline{\scriptsize (a) MEB}
\end{minipage}
\begin{minipage}{0.238\textwidth}
\centerline{
\includegraphics[width=\textwidth,height=3.3cm]{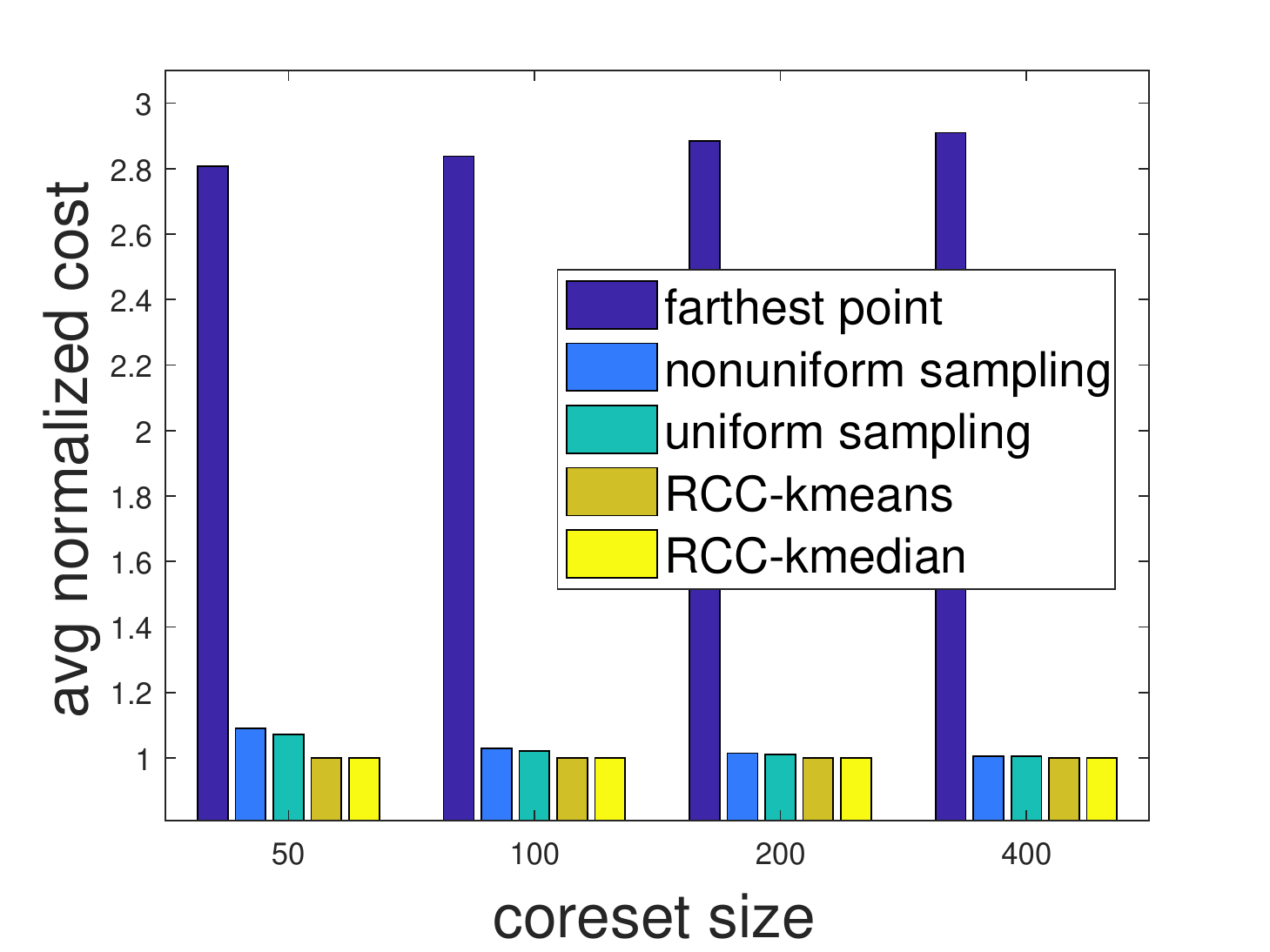}}
\centerline{\scriptsize (b) $k$-means ($k=2$)}
\end{minipage}
  \begin{minipage}{.238\textwidth}
  \centerline{
   \includegraphics[width=\textwidth,,height=3.3cm]{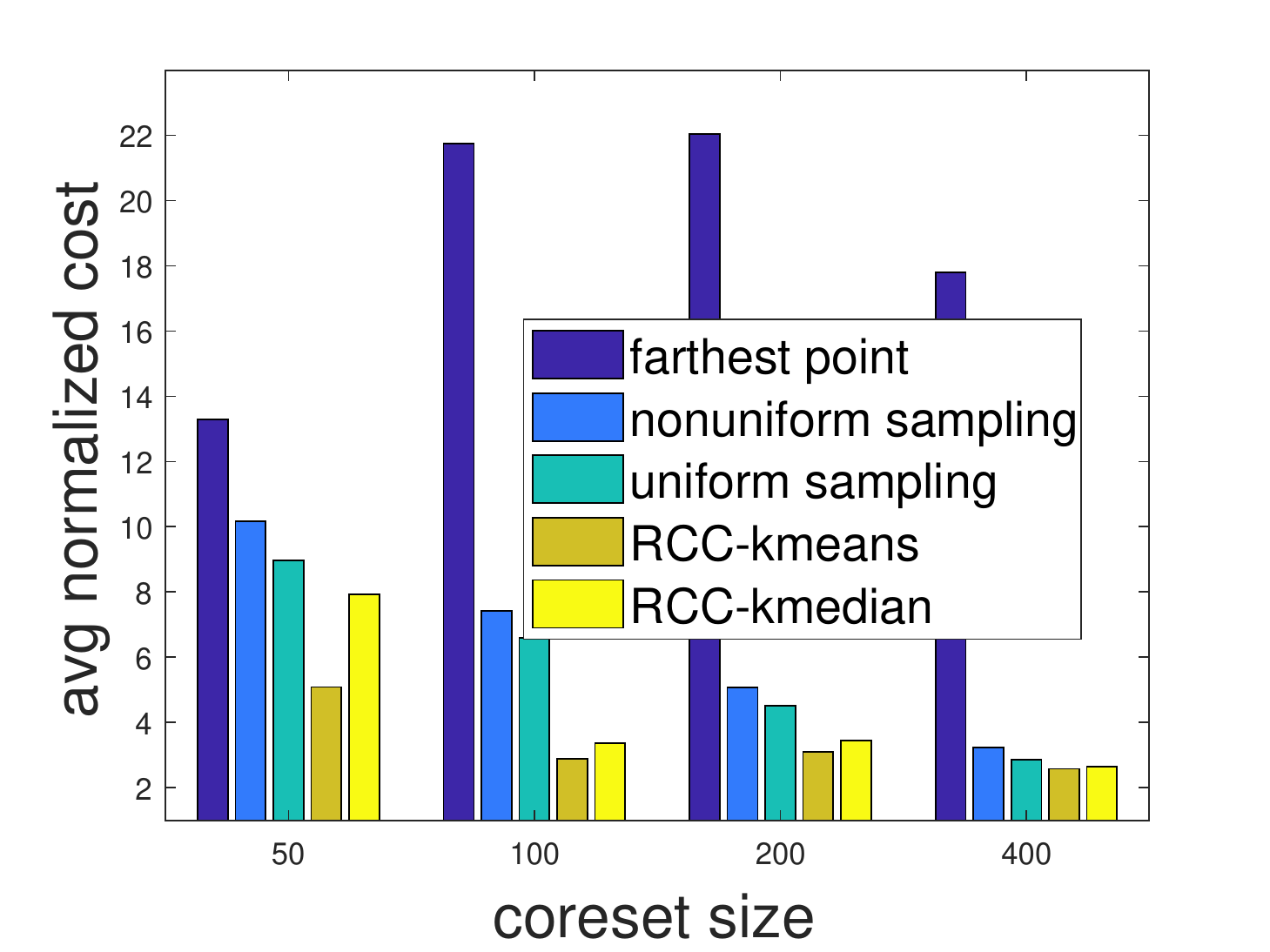}}
    \centerline{\scriptsize (c) PCA (11 components) }
  \end{minipage}
  \begin{minipage}{0.238\textwidth}
    \centerline{
  \includegraphics[width=\textwidth,height=3.3cm]{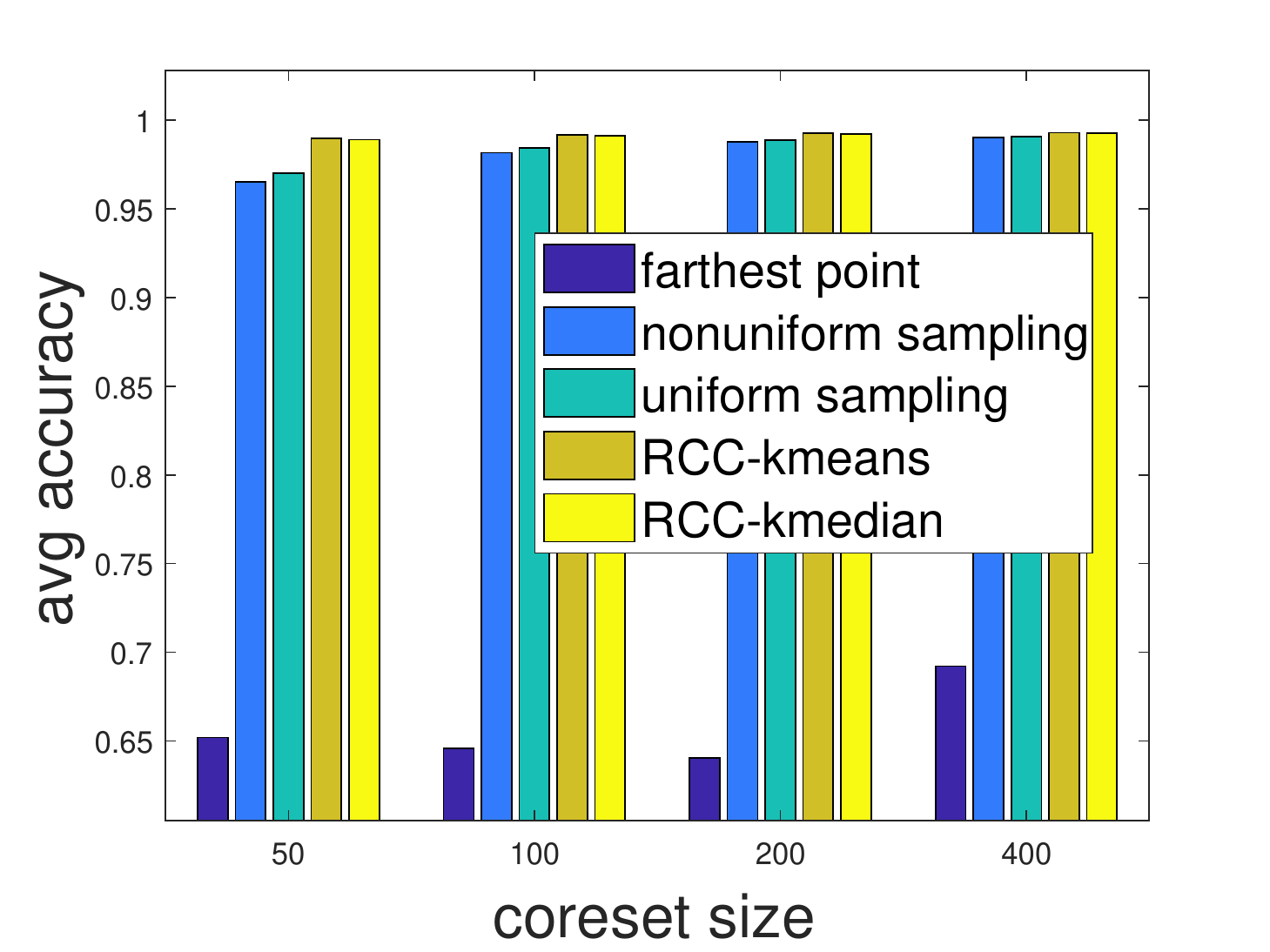}}
\centerline{\scriptsize (d) SVM (`0': 1; others: -1)}   
   \end{minipage}    
\caption{Evaluation on Pendigits with varying coreset size (label: `digit'). }
\label{fig:pendigits_size}
\end{figure}

\begin{figure}[tb]
\begin{minipage}{0.238\textwidth}
\centerline{
\includegraphics[width=\textwidth,height=3.3cm]{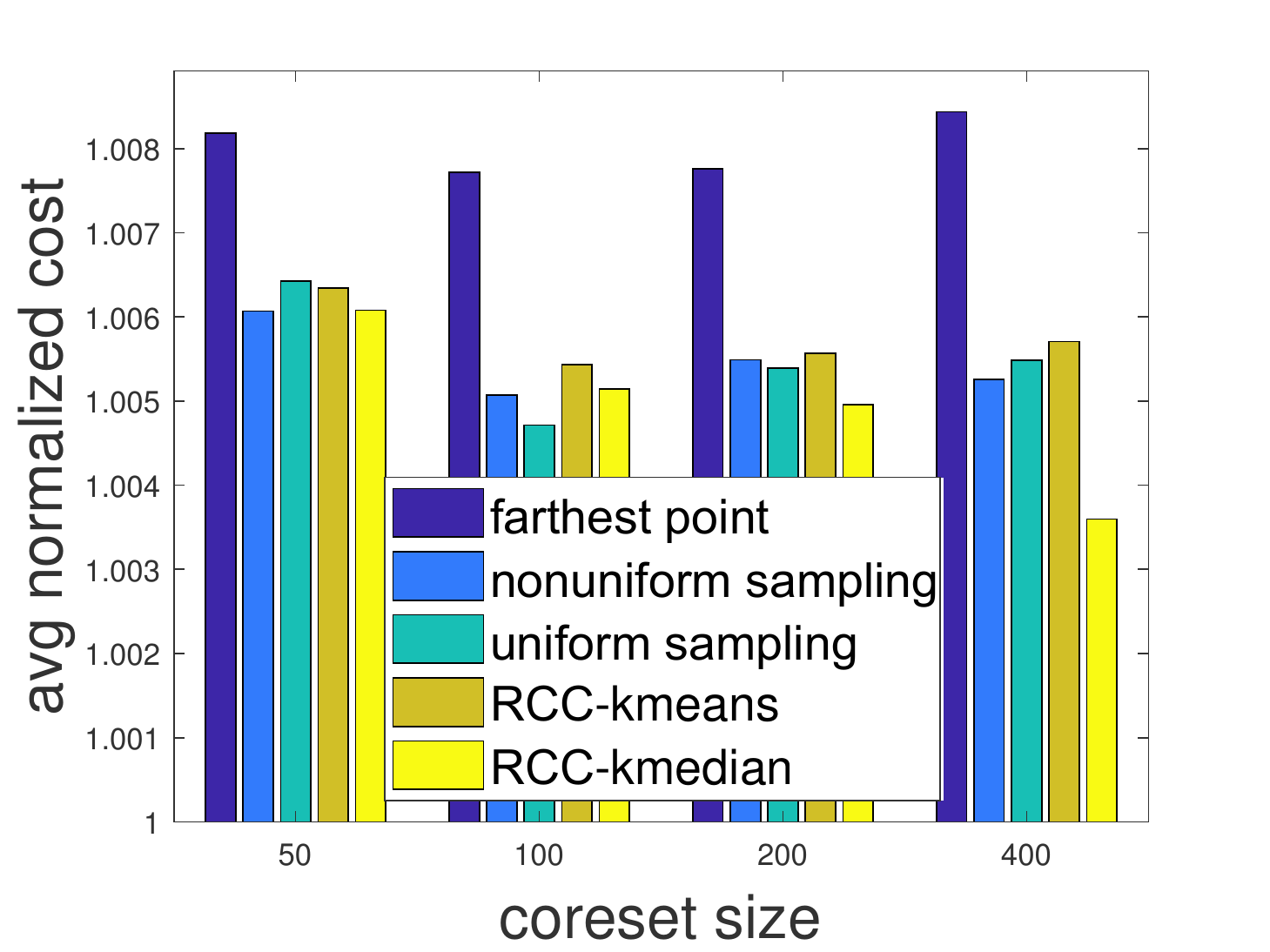}}
\centerline{\scriptsize (a) MEB}
\end{minipage}
\begin{minipage}{0.238\textwidth}
\centerline{
\includegraphics[width=\textwidth,height=3.3cm]{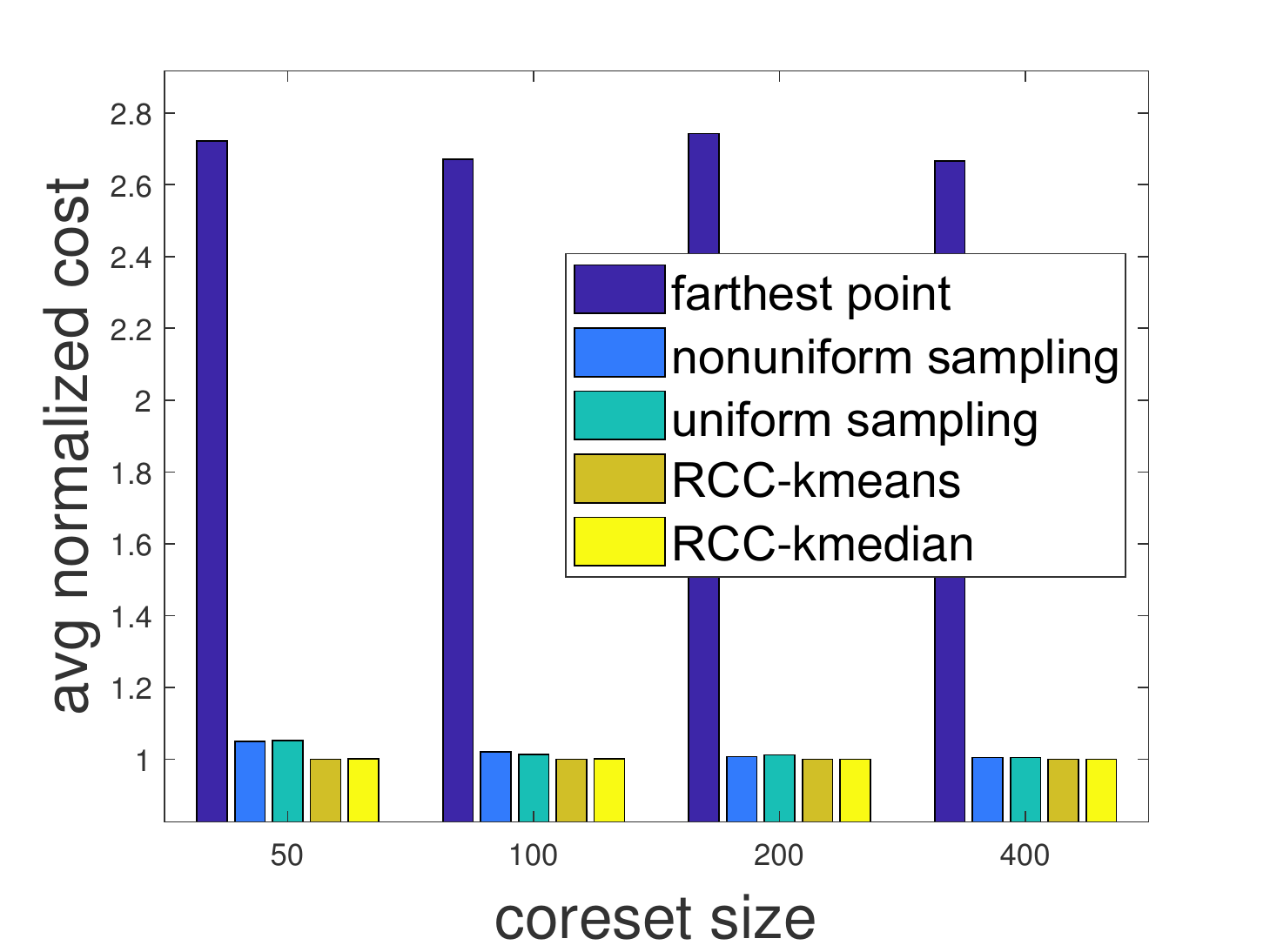}}
\centerline{\scriptsize (b) $k$-means ($k=2$)}
\end{minipage}
  \begin{minipage}{.238\textwidth}
  \centerline{
   \includegraphics[width=\textwidth,,height=3.3cm]{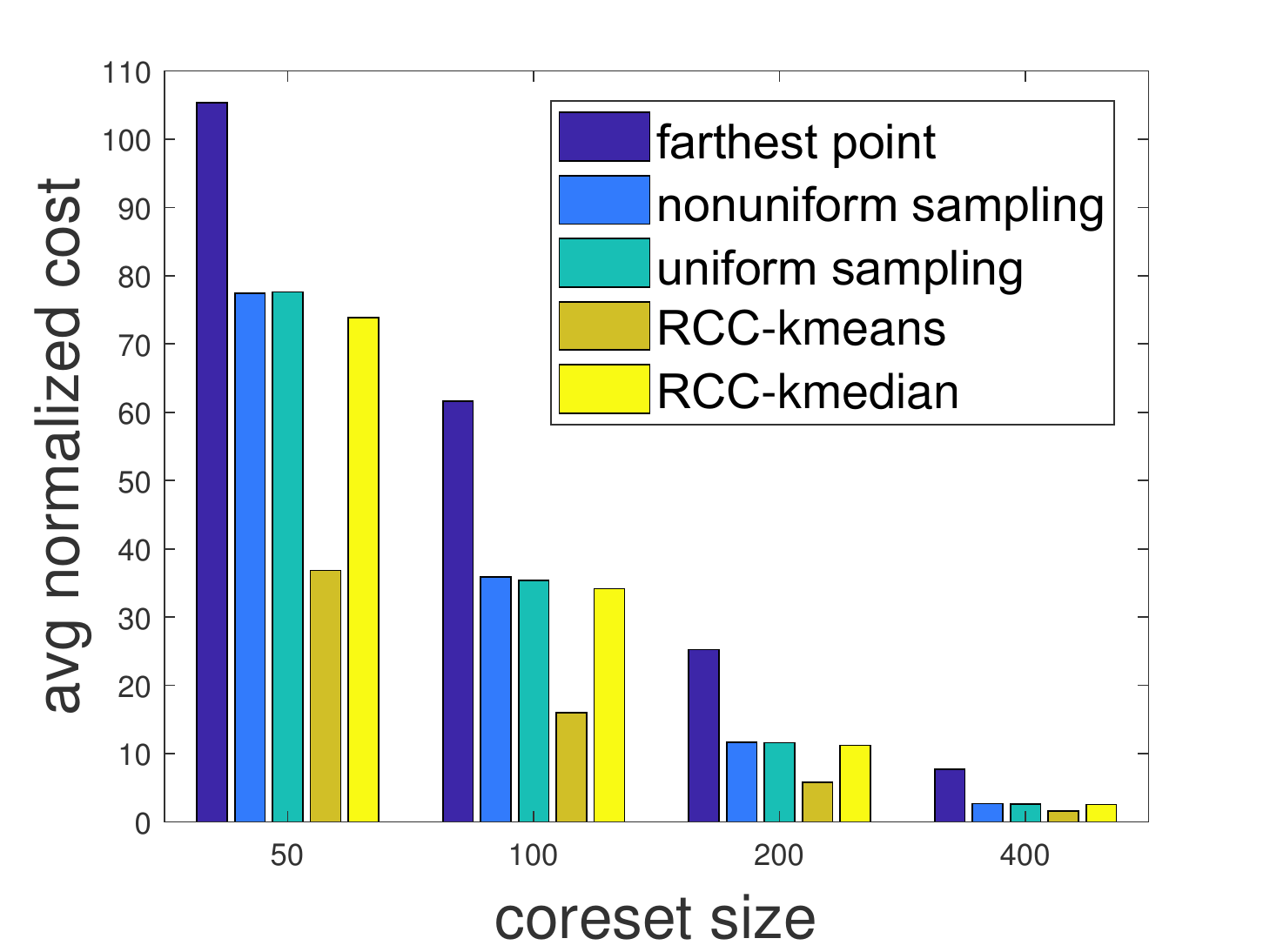}}
    \centerline{\scriptsize (c) PCA (300 components) }
  \end{minipage}
  \begin{minipage}{0.238\textwidth}
    \centerline{
  \includegraphics[width=\textwidth,height=3.3cm]{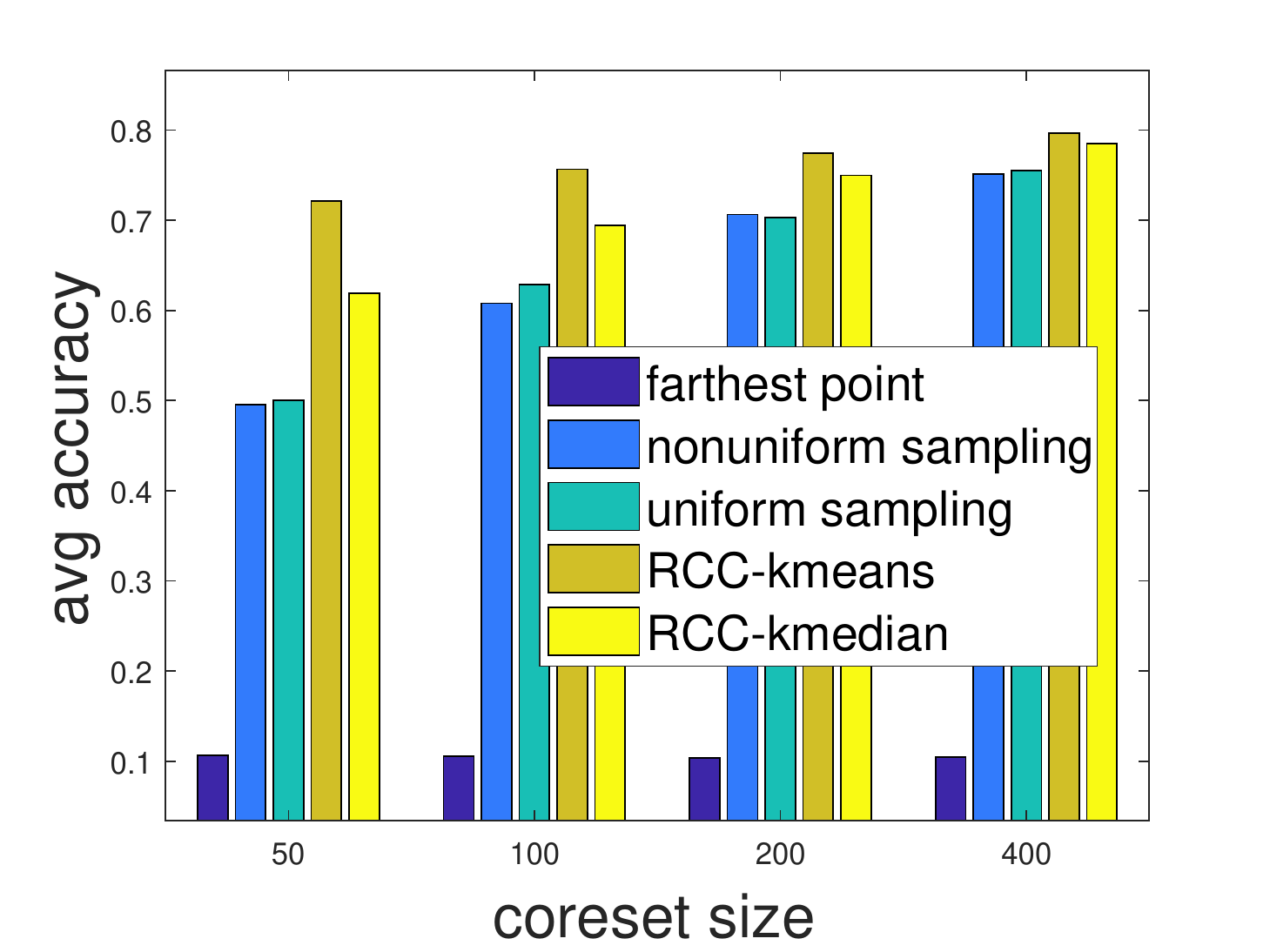}}
\centerline{\scriptsize (d) NN} 
   \end{minipage}    
\caption{Evaluation on MNIST with varying coreset size (label: `labels'). }
\label{fig:mnist_size}
   \vspace{0em}
\end{figure}

\begin{figure}[tb]
\begin{minipage}{0.238\textwidth}
\centerline{
\includegraphics[width=\textwidth,height=3.3cm]{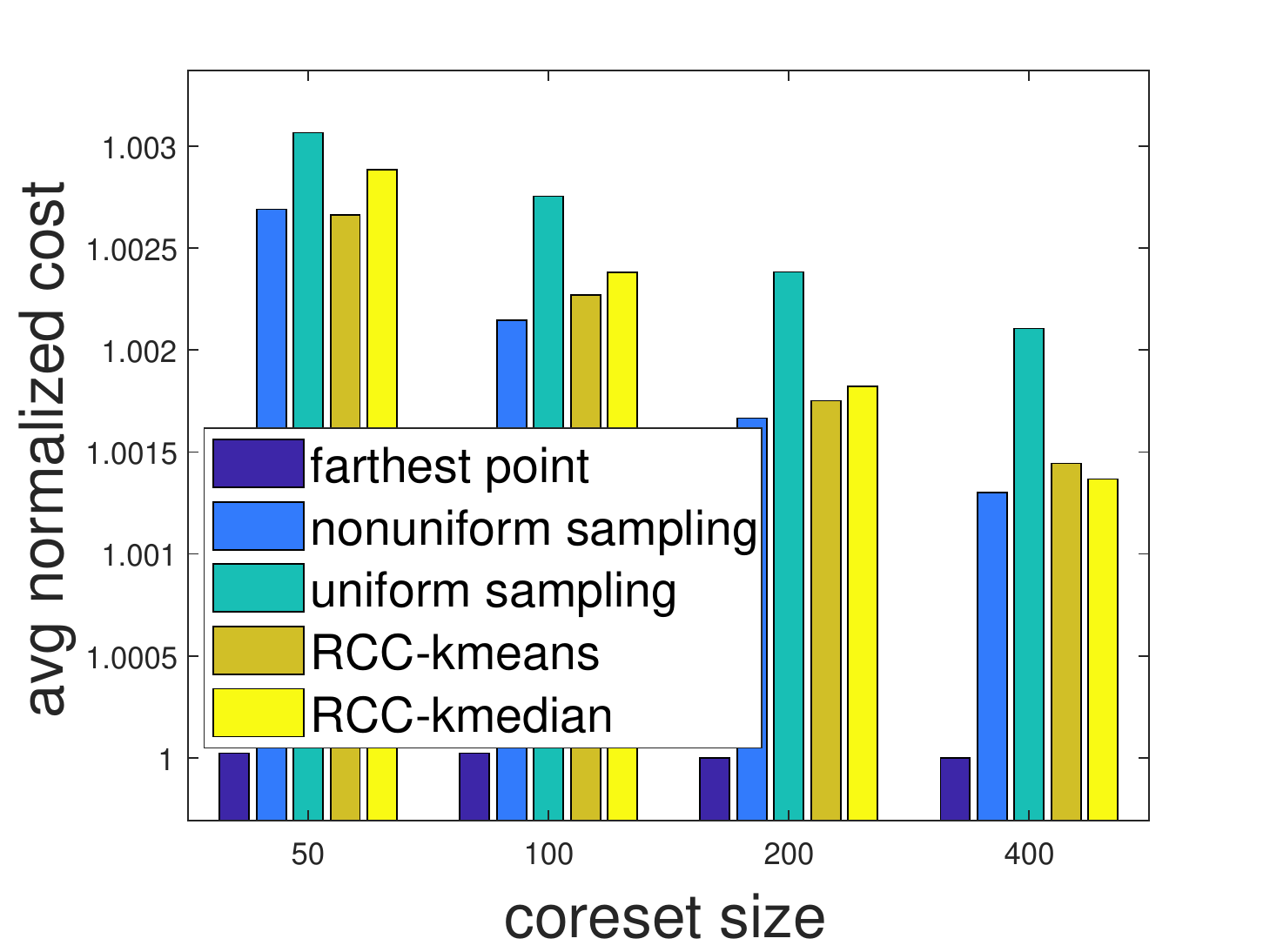}}
\centerline{\scriptsize (a) MEB}
\end{minipage}
\begin{minipage}{0.238\textwidth}
\centerline{
\includegraphics[width=\textwidth,height=3.3cm]{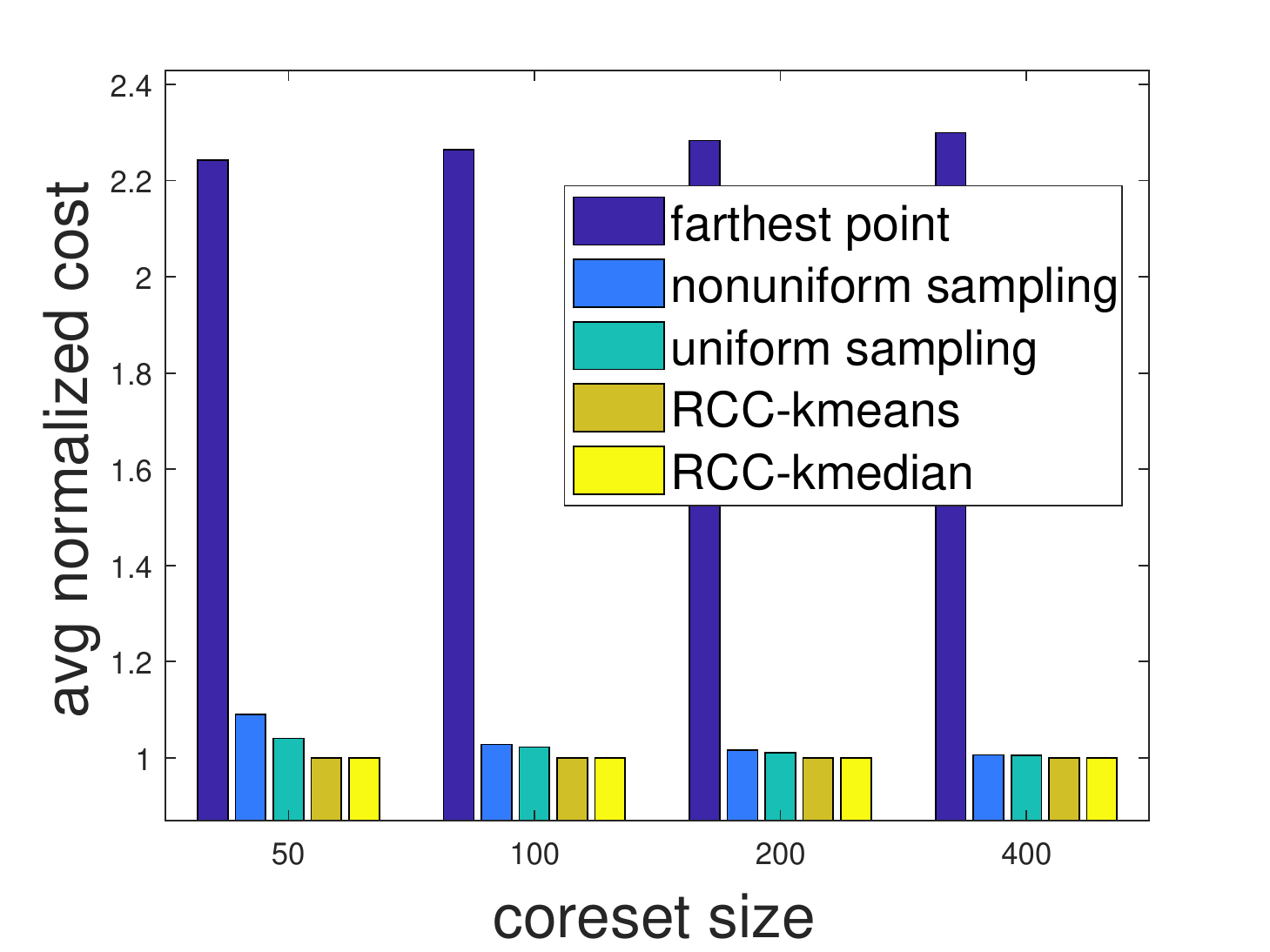}}
\centerline{\scriptsize (b) $k$-means ($k=2$)}
\end{minipage}
  \begin{minipage}{.238\textwidth}
  \centerline{
   \includegraphics[width=\textwidth,,height=3.3cm]{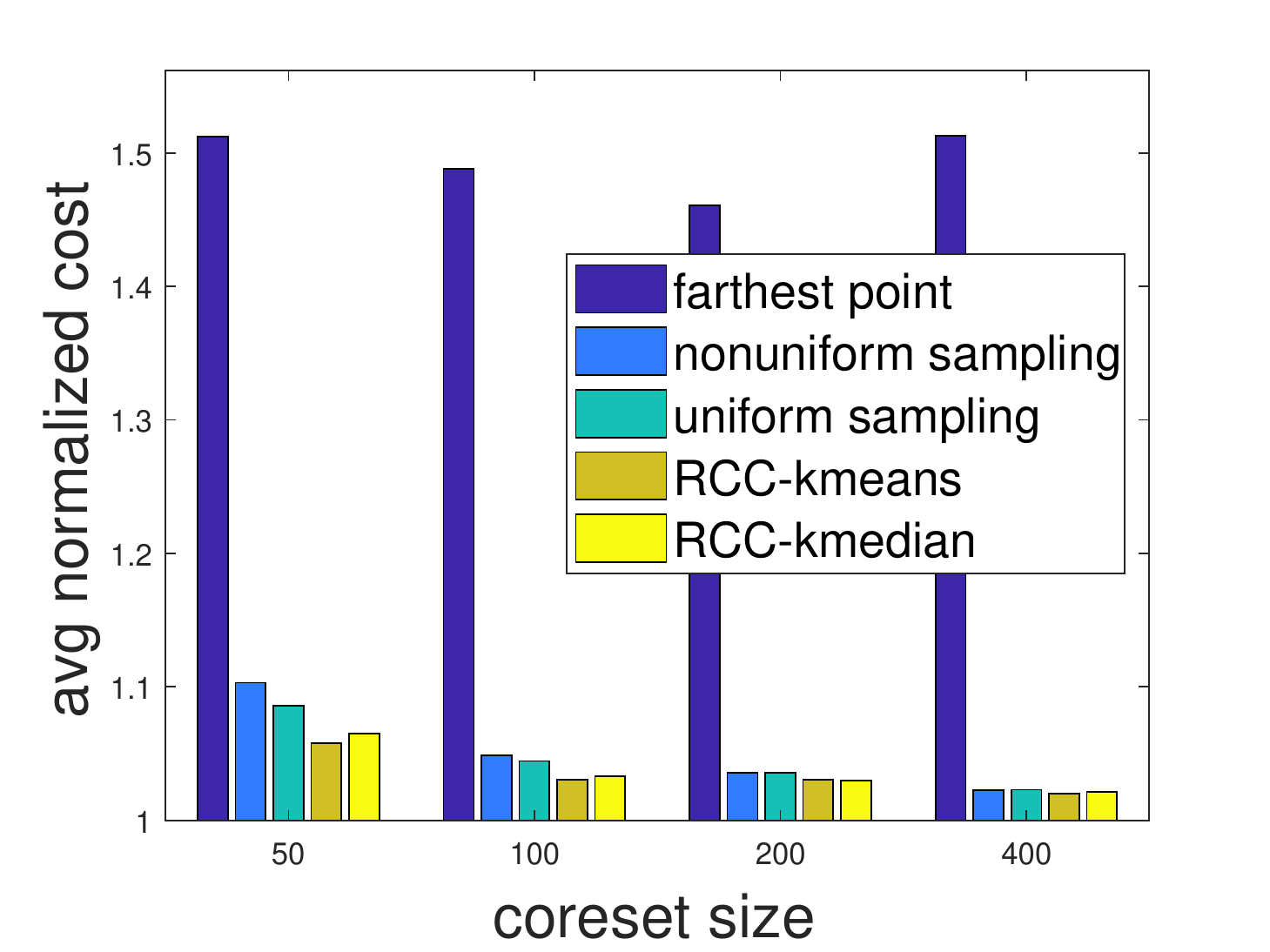}}
    \centerline{\scriptsize (c) PCA (7 components) }
  \end{minipage}
  \begin{minipage}{0.238\textwidth}
    \centerline{
  \includegraphics[width=\textwidth,height=3.3cm]{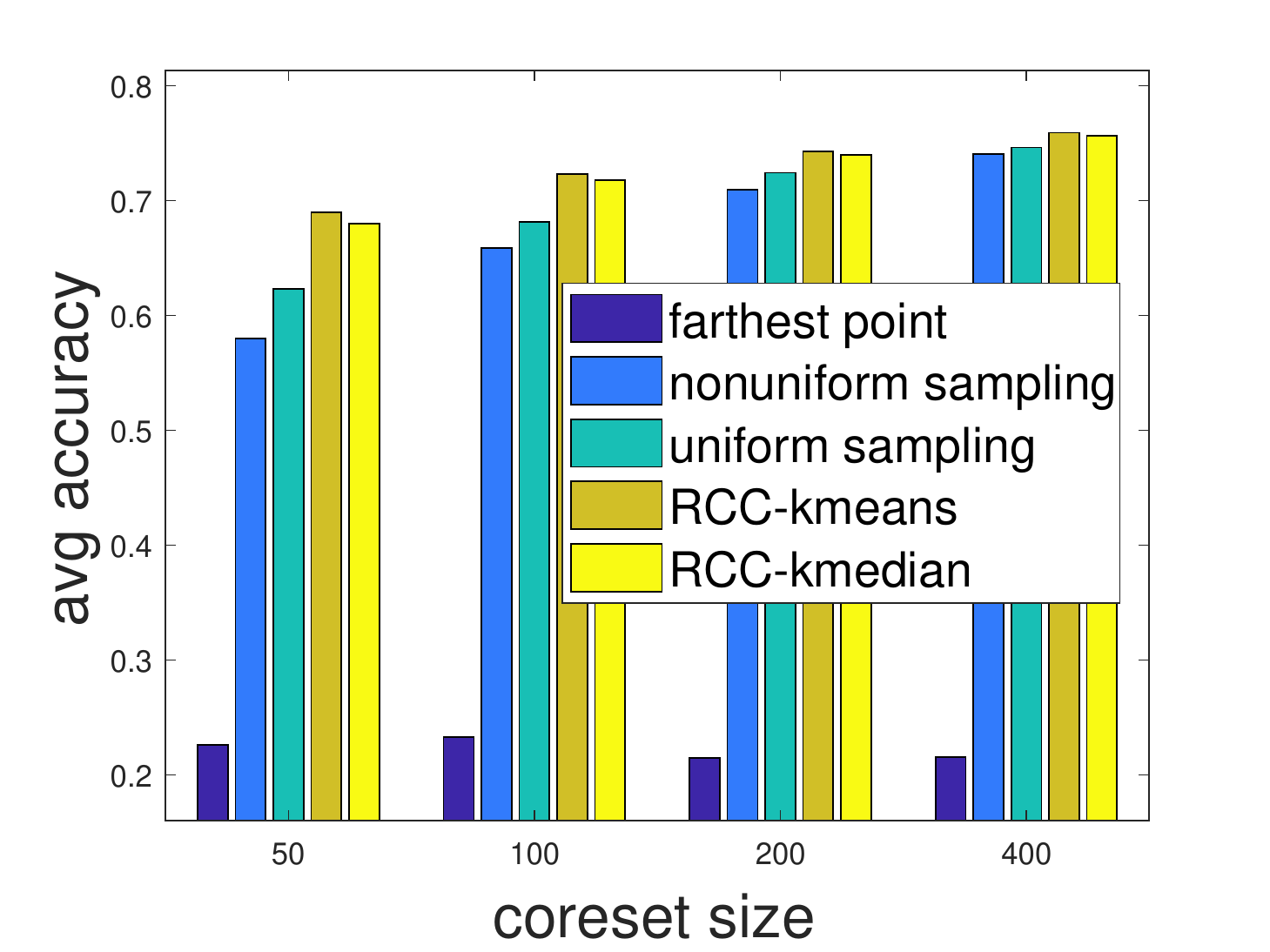}}
\centerline{\scriptsize (d) NN} 
   \end{minipage}    
\caption{Evaluation on HAR with varying coreset size (label: `labels'). }
\label{fig:har_size}
   \vspace{0em}
\end{figure}


\begin{figure}[tb]
\begin{minipage}{0.238\textwidth}
\centerline{
\includegraphics[width=\textwidth,height=3.3cm]{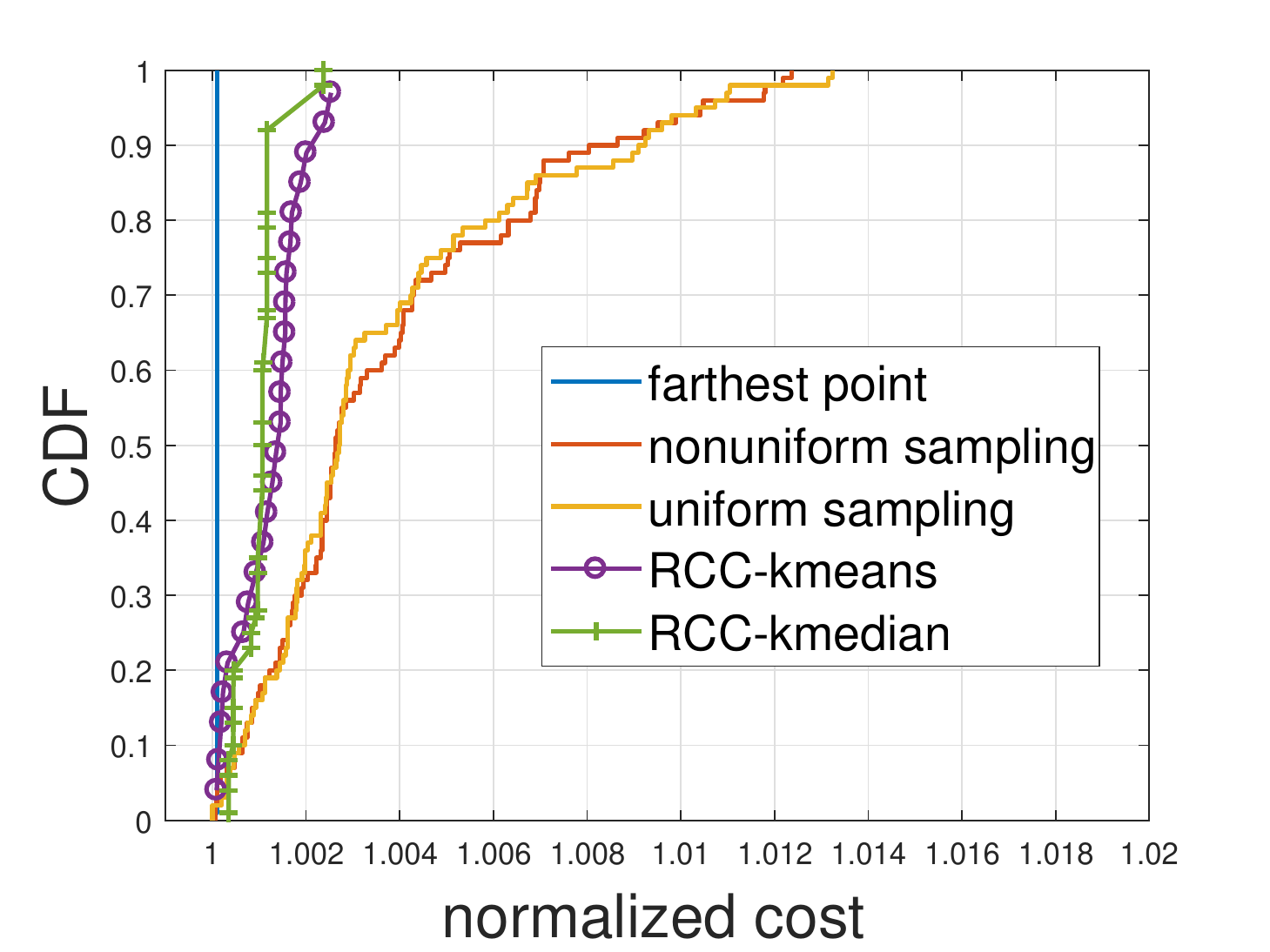}}
\centerline{\scriptsize (a) MEB}
\end{minipage}
\begin{minipage}{0.238\textwidth}
\centerline{
\includegraphics[width=\textwidth,height=3.3cm]{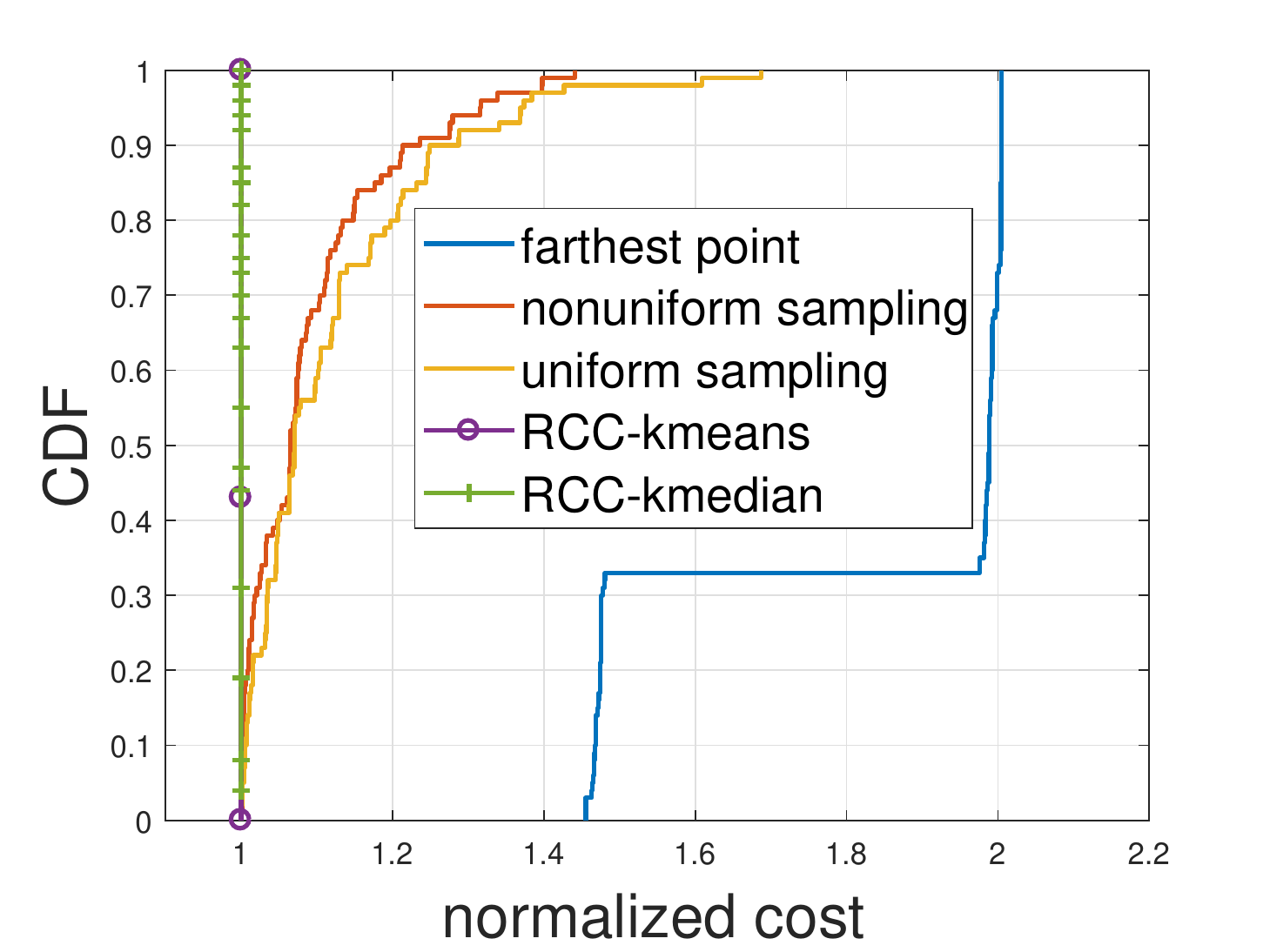}}
\centerline{\scriptsize (b) $k$-means ($k=2$)}
\end{minipage}
  \begin{minipage}{.238\textwidth}
  \centerline{
  \includegraphics[width=\textwidth,,height=3.3cm]{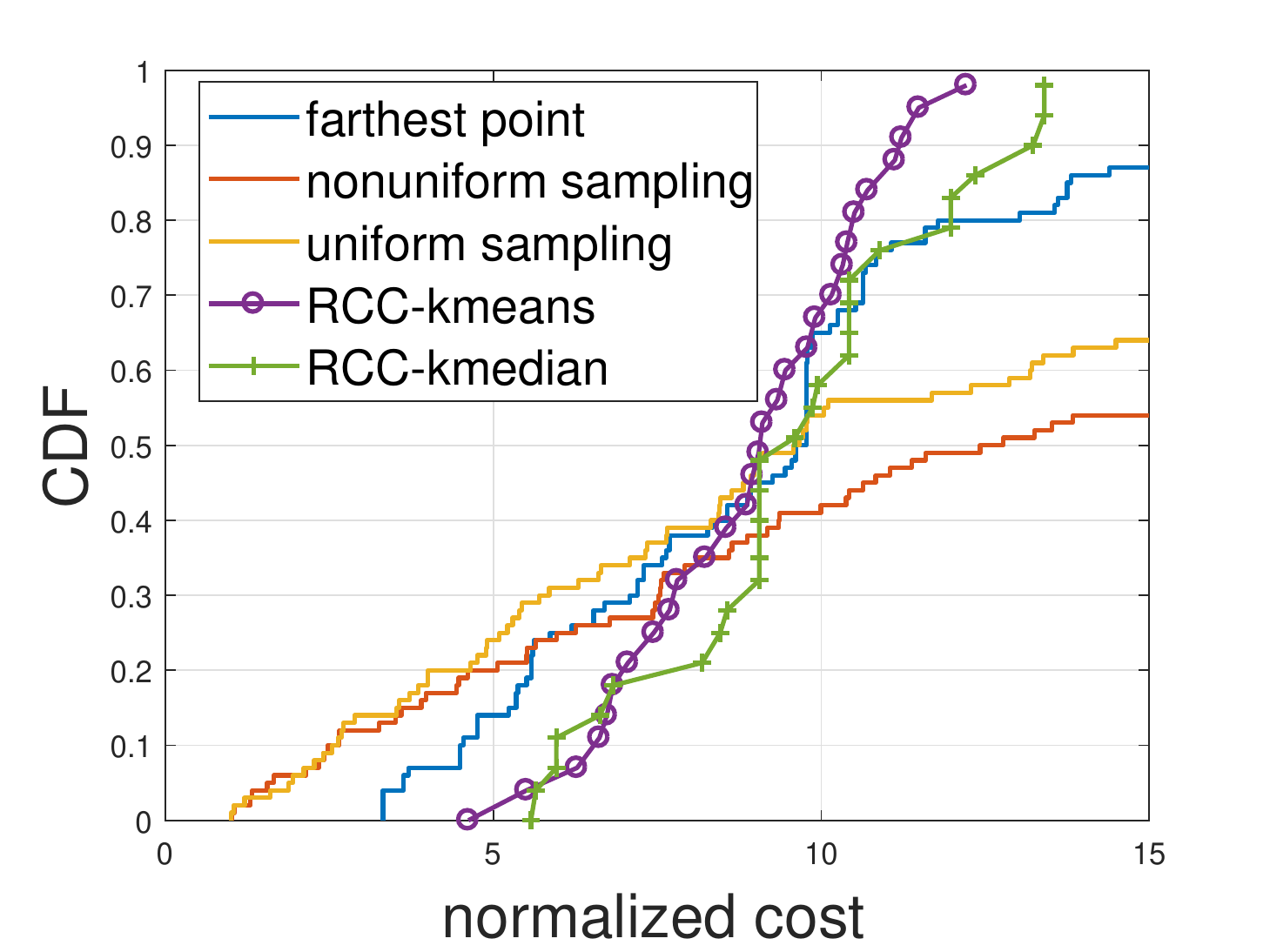}}
    \centerline{\scriptsize (c) PCA (3 components)}
  \end{minipage}
  \begin{minipage}{0.238\textwidth}
    \centerline{
  \includegraphics[width=\textwidth,height=3.3cm]{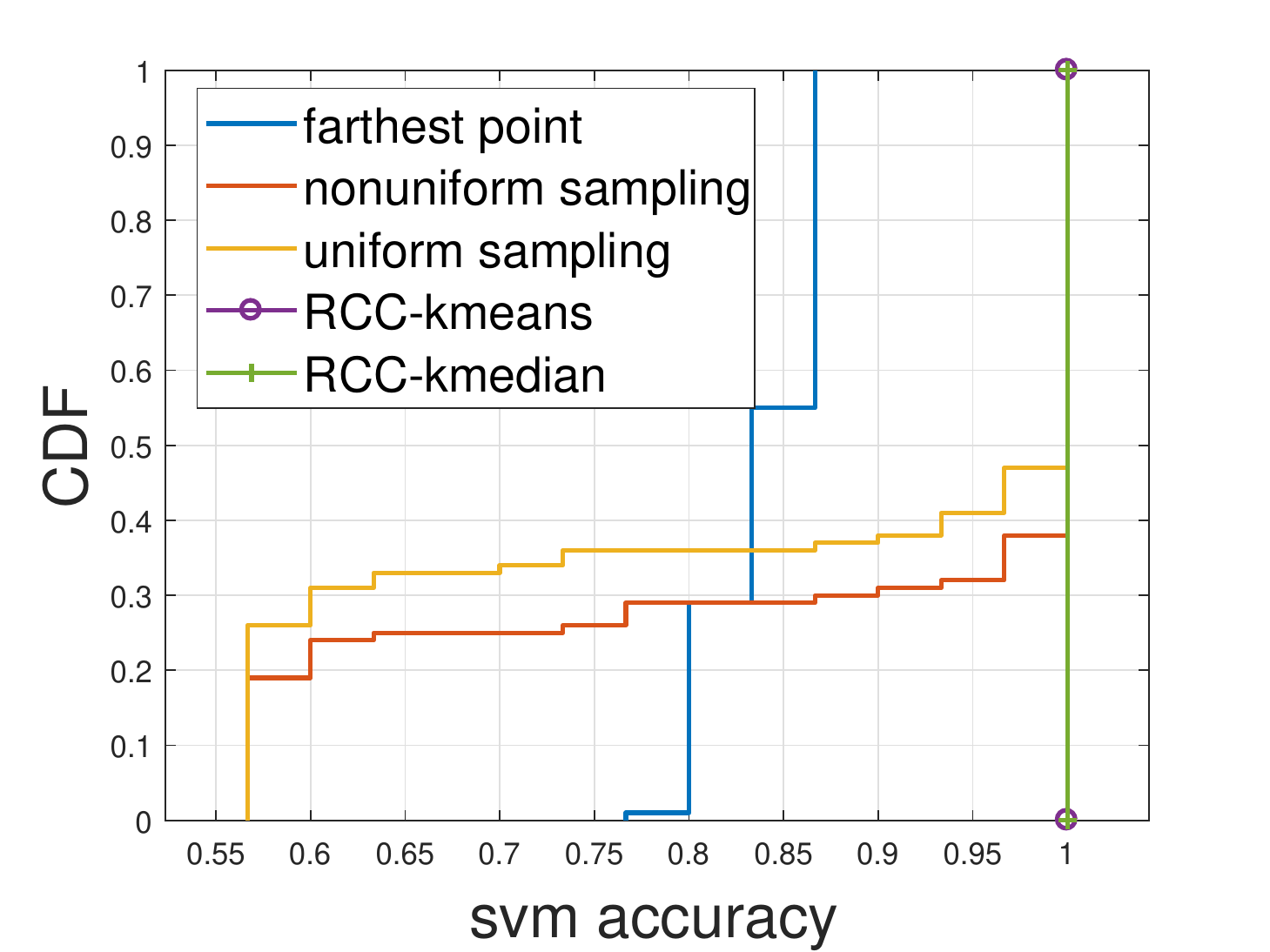}}
\centerline{\scriptsize (d) SVM (`setosa': 1; others: -1)}   
  \end{minipage}    
\caption{Detailed evaluation on Fisher's iris dataset (label: `species', coreset size: $20$).}
\label{fig:fisher}
\end{figure}

\begin{figure}[tb]
\begin{minipage}{0.238\textwidth}
\centerline{
\includegraphics[width=\textwidth,height=3.6cm]{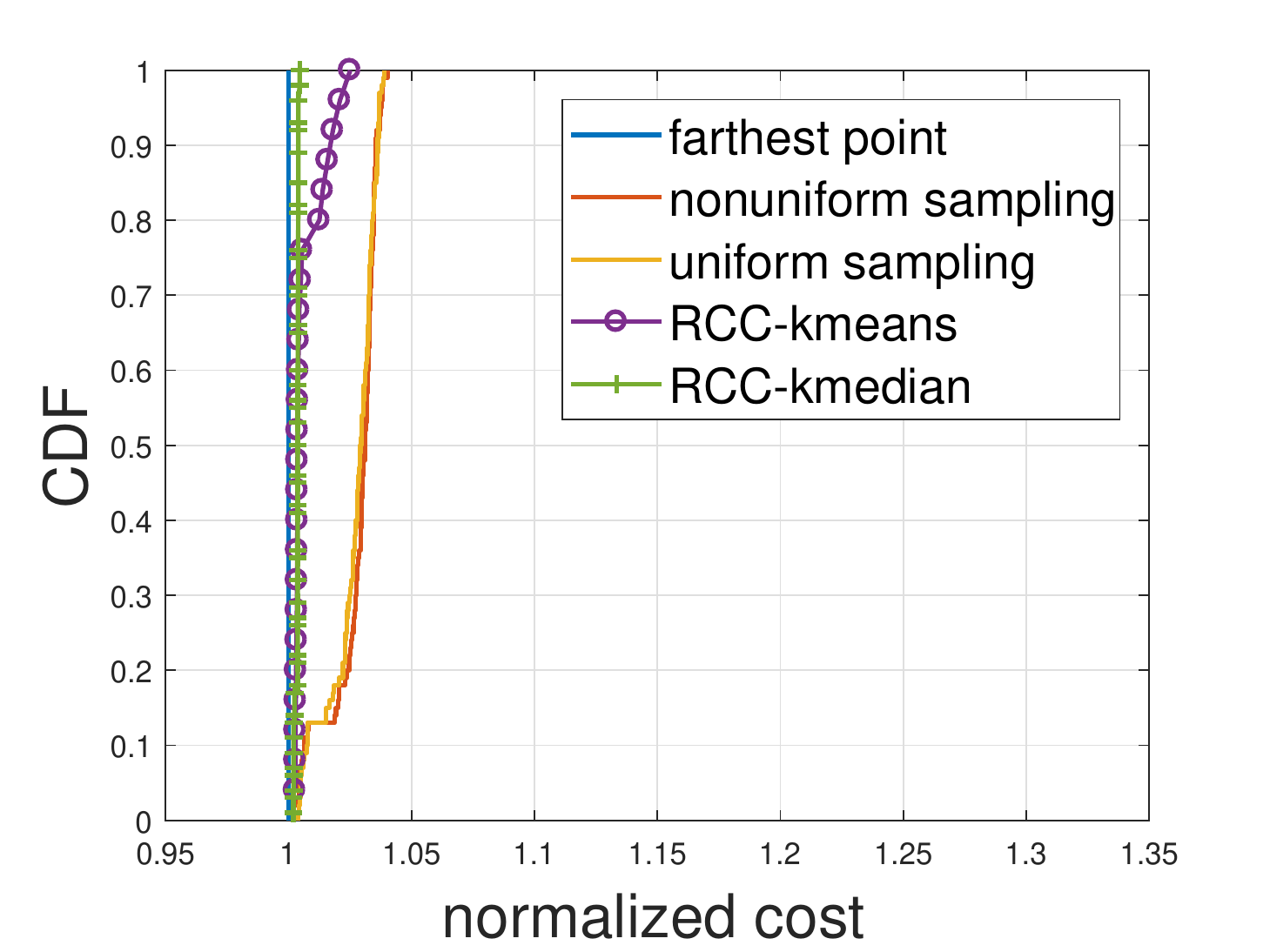}}
\centerline{\scriptsize (a) MEB}
\end{minipage}
\begin{minipage}{0.238\textwidth}
\centerline{
\includegraphics[width=\textwidth,height=3.6cm]{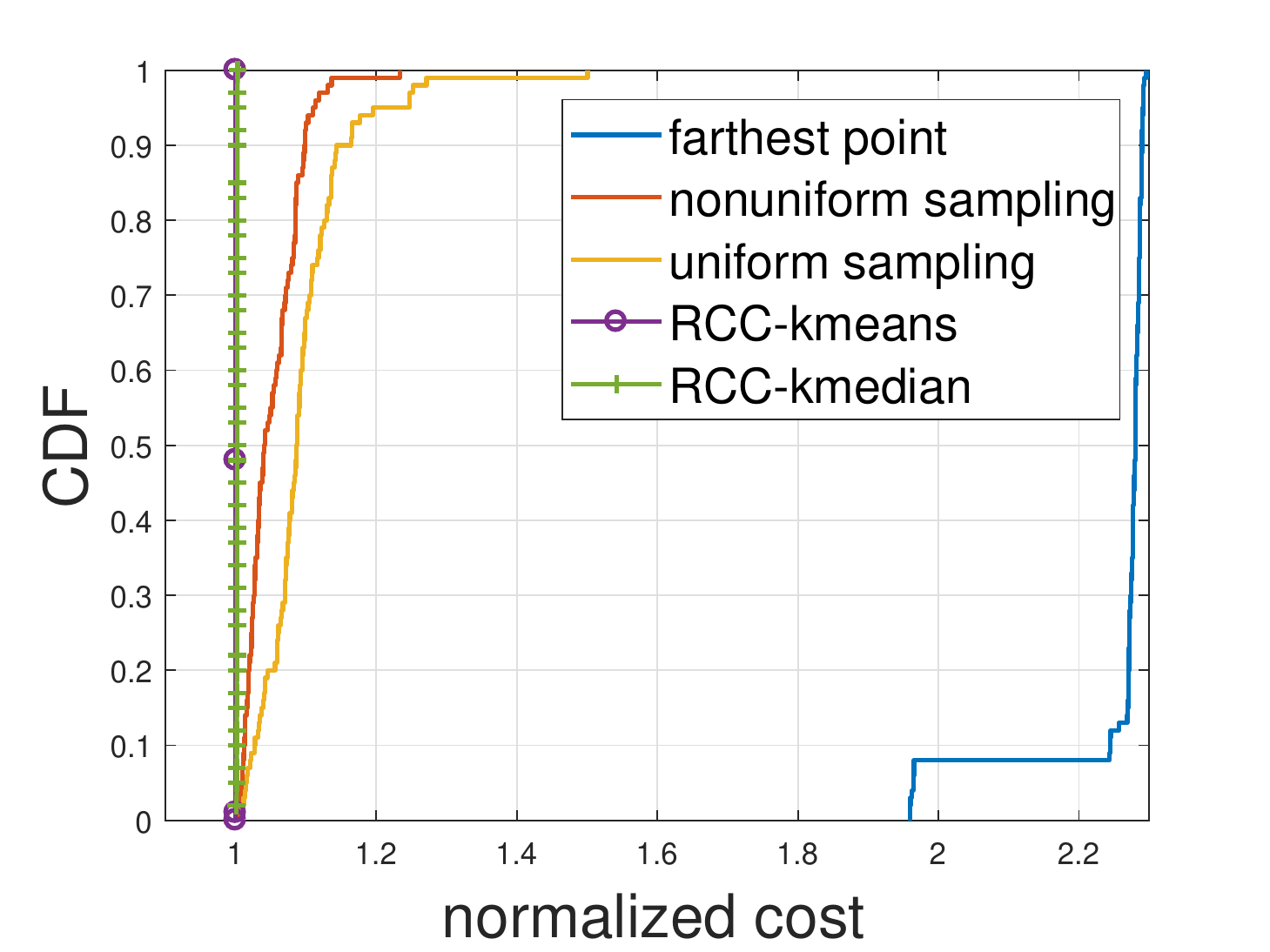}}
\centerline{\scriptsize (b) $k$-means ($k=2$)}
\end{minipage}
  \begin{minipage}{.238\textwidth}
  \centerline{
  \includegraphics[width=\textwidth,,height=3.6cm]{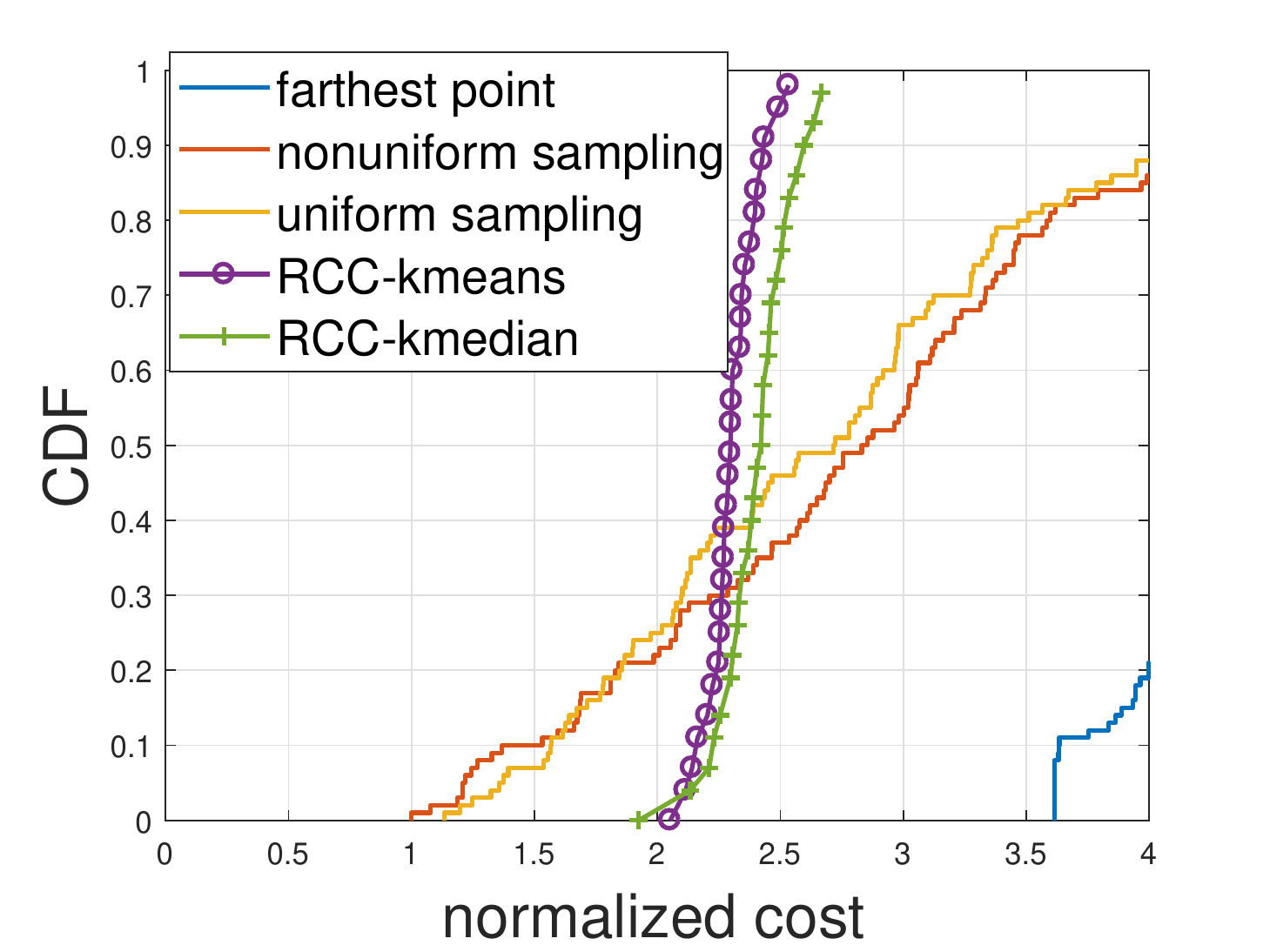}}
    \centerline{\scriptsize (c) PCA (5 components) }
  \end{minipage}
  \begin{minipage}{0.238\textwidth}
    \centerline{
  \includegraphics[width=\textwidth,height=3.6cm]{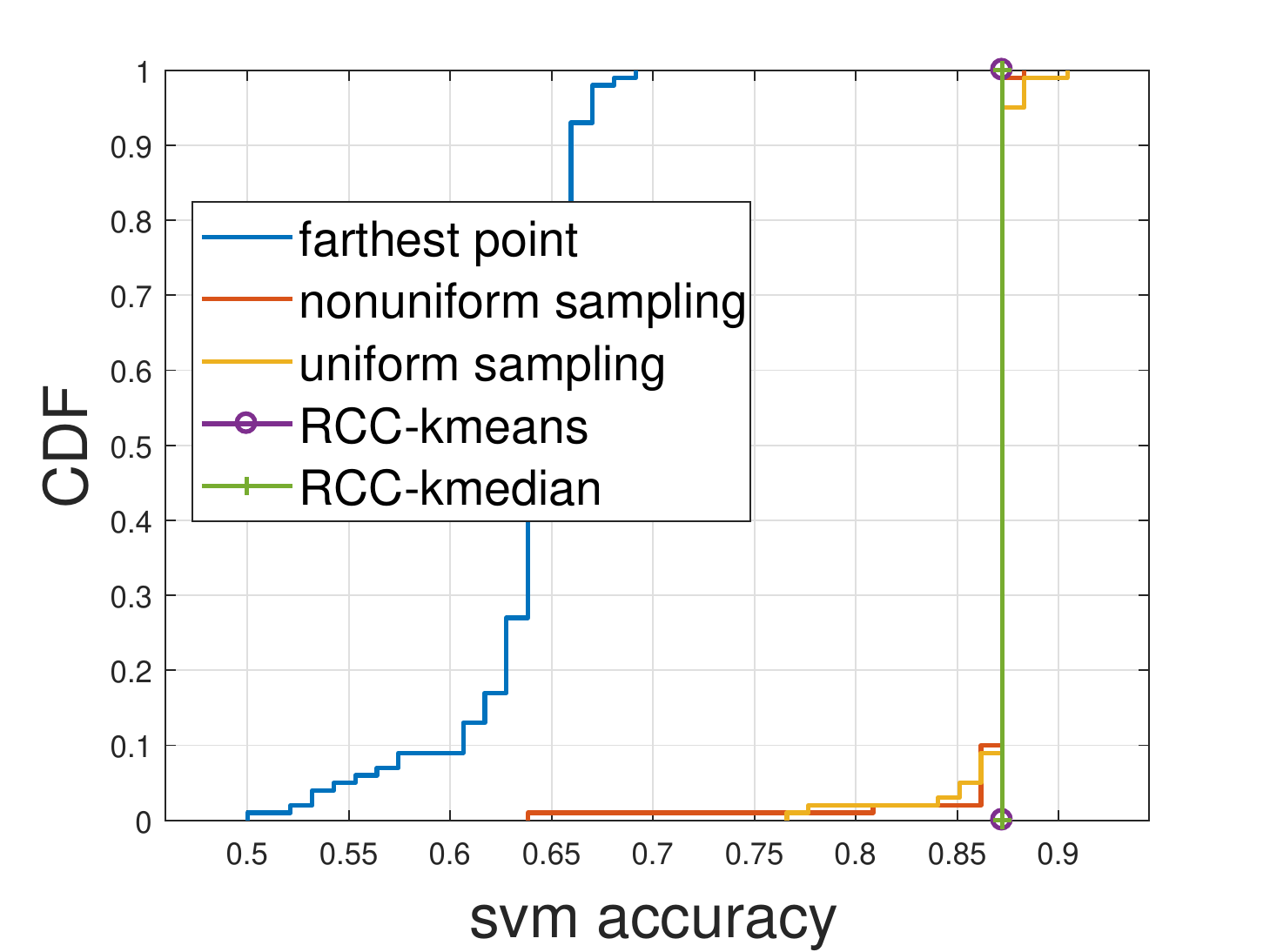}}
\centerline{\scriptsize (d) SVM (`photo': 1; others: -1) }   
  \end{minipage}    
\caption{Detailed evaluation on Facebook metrics dataset (label: `type', coreset size: $40$).}
\label{fig:facebook}
\end{figure}

We normalize each numerical dimension to $[0,1]$. We map labels to numbers such that the distance between two points with different labels is no smaller than the distance between points with the same label.
Given a $d$-dimensional dataset (including labels) with $L$ types of labels, we map type-$l$ label to $(l-1)\tau$ ($l\in [L]$) for $\tau = \lceil \sqrt{d-1} \rceil$. See Table~\ref{tab:dataset} for a summary. 
In testing SVM, we map one label to `1' and the rest to `-1'. 
Each data point has a unit weight. 

To generate distributed datasets, we use three schemes: (i) \emph{uniform}, where the points are uniformly distributed across $n$ nodes, (ii) \emph{specialized}, where each node is associated with one label and contains all the data points with this label, and (iii) \emph{hybrid}, where the first $n_0$ nodes are ``specialized'' as in (ii), and the remaining data are randomly partitioned among the remaining nodes. \looseness=-1

\emph{\bf Machine learning problems:} We evaluate three unsupervised learning problems---MEB, $k$-means, and PCA, and two supervised learning problem---SVM and Neural Network (NN). In our NN experiment, we define a three-layer network, whose hidden layer has 100 neurons. As an NN of this size is typically used on datasets with at least $100$ features, we only evaluate NN on MNIST and HAR, and replace it by SVM for the other datasets.
Table~\ref{tab:cost} gives their cost functions, where for a data point $p\in \mathbb{R}^d$, $p_{1:d-1}\in \mathbb{R}^{d-1}$ denotes the numerical portion and $p_d\in \mathbb{R}$ denotes the label. The meaning of the model parameter $x$ is problem-specific, as explained in the footnote. 
We also provide (upper bounds of) the Lipschitz constant $\rho$ except for NN, since it is NP-hard to evaluate $\rho$ for even a two-layer network \cite{virmaux2018lipschitz}; see 
\if\thisismainpaper1
Appendix~B in \cite{coreset19:report} 
\else
Appendix~B
\fi
for analysis. Here $l$ is the number of principle components computed by PCA, and $\Delta$ is the diameter of the sample space. In our experiments, $\Delta = \sqrt{(d-1)(L^2-2L+2)}$, which is $4.5$ for Fisher's iris, $13.4$ for Facebook, $36.2$ for Pendigits, $181.1$ for MNIST, 
and $120.8$ for HAR. While SVM and NN do not have a meaningful $\rho$, we still include them to stress-test our algorithm.  

\emph{\bf Performance metrics:} For the unsupervised learning problems (MEB, $k$-means, and PCA), we evaluate the performance by the normalized cost as explained in Section~\ref{subsec:Motivating Experiment}. For the supervised learning problems (SVM and NN), we evaluate the performance by the accuracy in predicting the labels of testing data. MNIST and HAR datasets are already divided into training set and testing set. For other datasets, we use the first $80\%$ of data for training, and the rest for testing. 

\emph{\bf Results in centralized setting:} 
Figures~\ref{fig:iris_size}--\ref{fig:har_size} show the performances achieved at a variety of coreset sizes, averaged over $100$ Monte Carlo runs. Better performance is indicated by lower cost for an unsupervised learning problem or higher accuracy for a supervised learning problem.  
Note that even the largest coresets generated in these experiments are much smaller (by $84$--$99.3\%$) than the original dataset, implying significant reduction in the communication cost by reporting a coreset instead of the raw data. 

We see that the proposed algorithms (`RCC-kmeans' and `RCC-kmedian') perform either the best or comparably to the best across all the datasets and all the machine learning problems. The farthest point algorithm in \cite{Badoiu03SODA}, designed for MEB, can perform very poorly for other machine learning problems. The sampling-based algorithms ('nonuniform sampling' \cite{Feldman11STOC} and `uniform sampling') perform relatively poorly for MEB and PCA. 
Generally, we see that the advantages of RCC algorithms are more significant at small coreset sizes. One exception is the SVM accuracy for Fisher's iris (Figure~\ref{fig:iris_size}~(d)), where points on the peripheral of the dataset (which are likely to be chosen by the farthest point algorithm) happen to have different labels and induce a rough partition between the points labeled `$1$' and those labeled `$-1$', causing better performance for `farthest point' at very small coreset sizes.

\begin{figure}[t]
\begin{minipage}{0.238\textwidth}
\centerline{
\includegraphics[width=\textwidth,height=3.6cm]{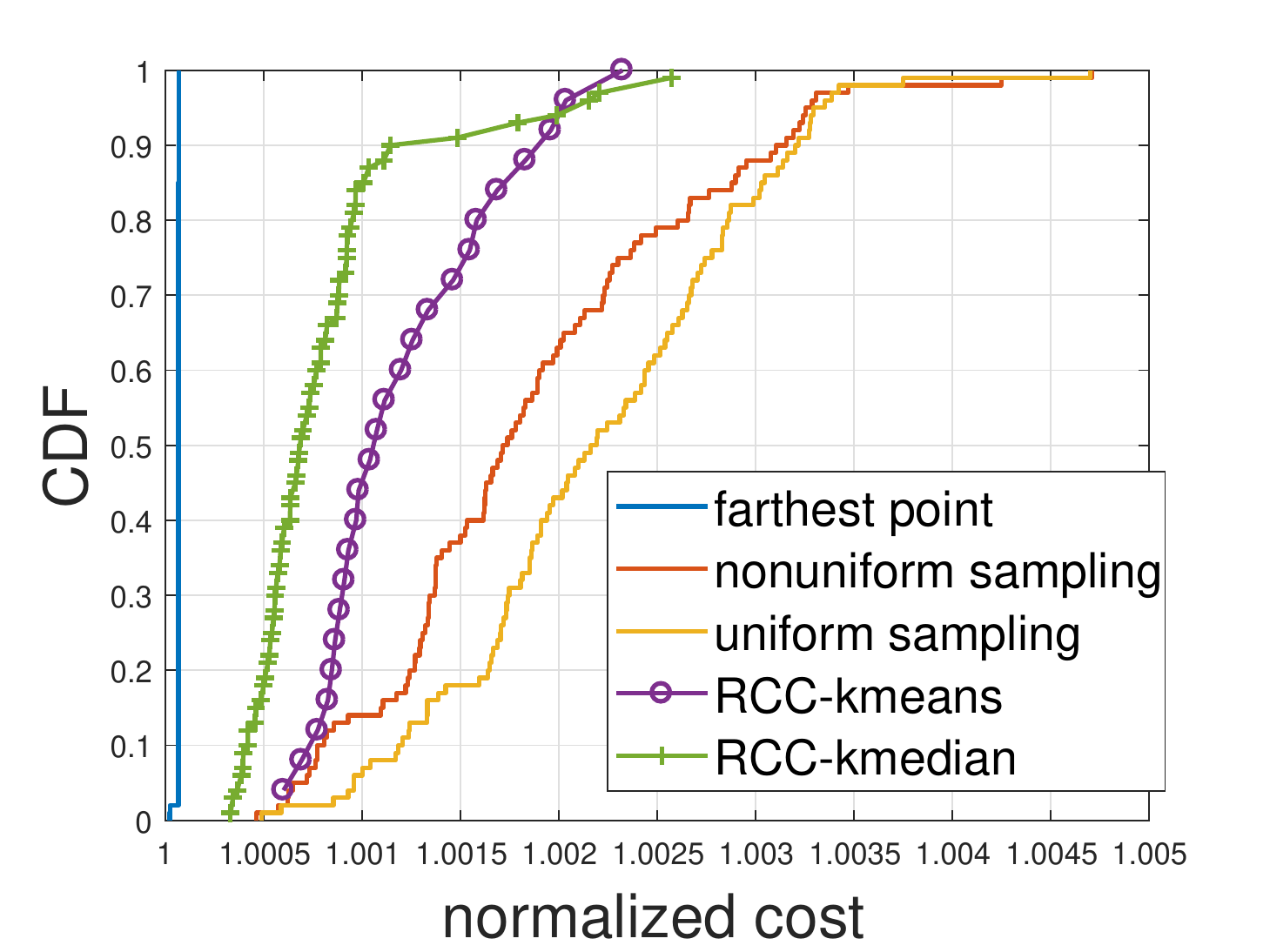}}
\centerline{\scriptsize (a) MEB}
\end{minipage}
\begin{minipage}{0.238\textwidth}
\centerline{
\includegraphics[width=\textwidth,height=3.6cm]{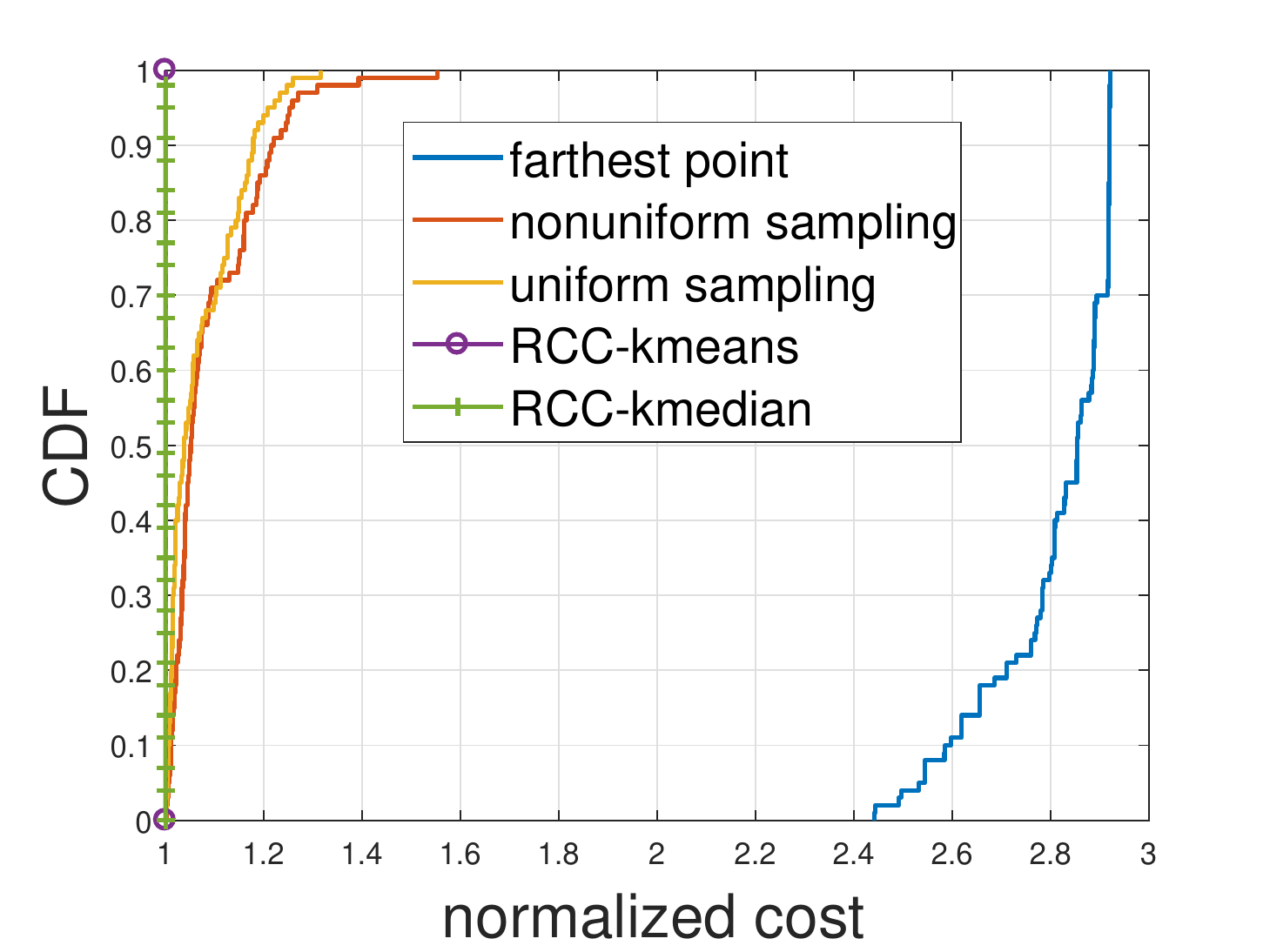}}
\centerline{\scriptsize (b) $k$-means ($k=2$)}
\end{minipage}
  \begin{minipage}{.238\textwidth}
  \centerline{
   \includegraphics[width=\textwidth,,height=3.6cm]{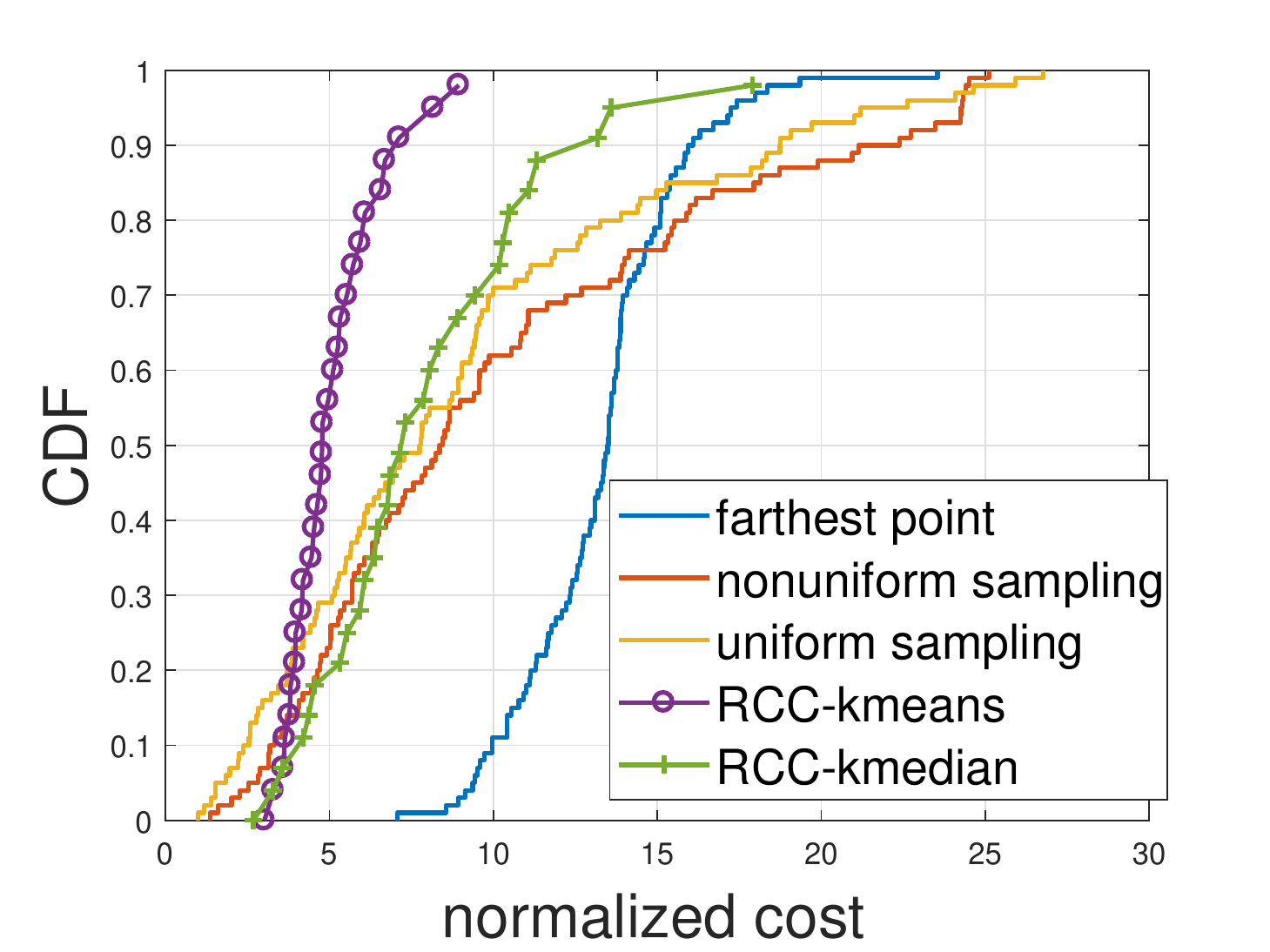}}
    \centerline{\scriptsize (c) PCA (11 components) }
  \end{minipage}
  \begin{minipage}{0.238\textwidth}
    \centerline{
  \includegraphics[width=\textwidth,height=3.6cm]{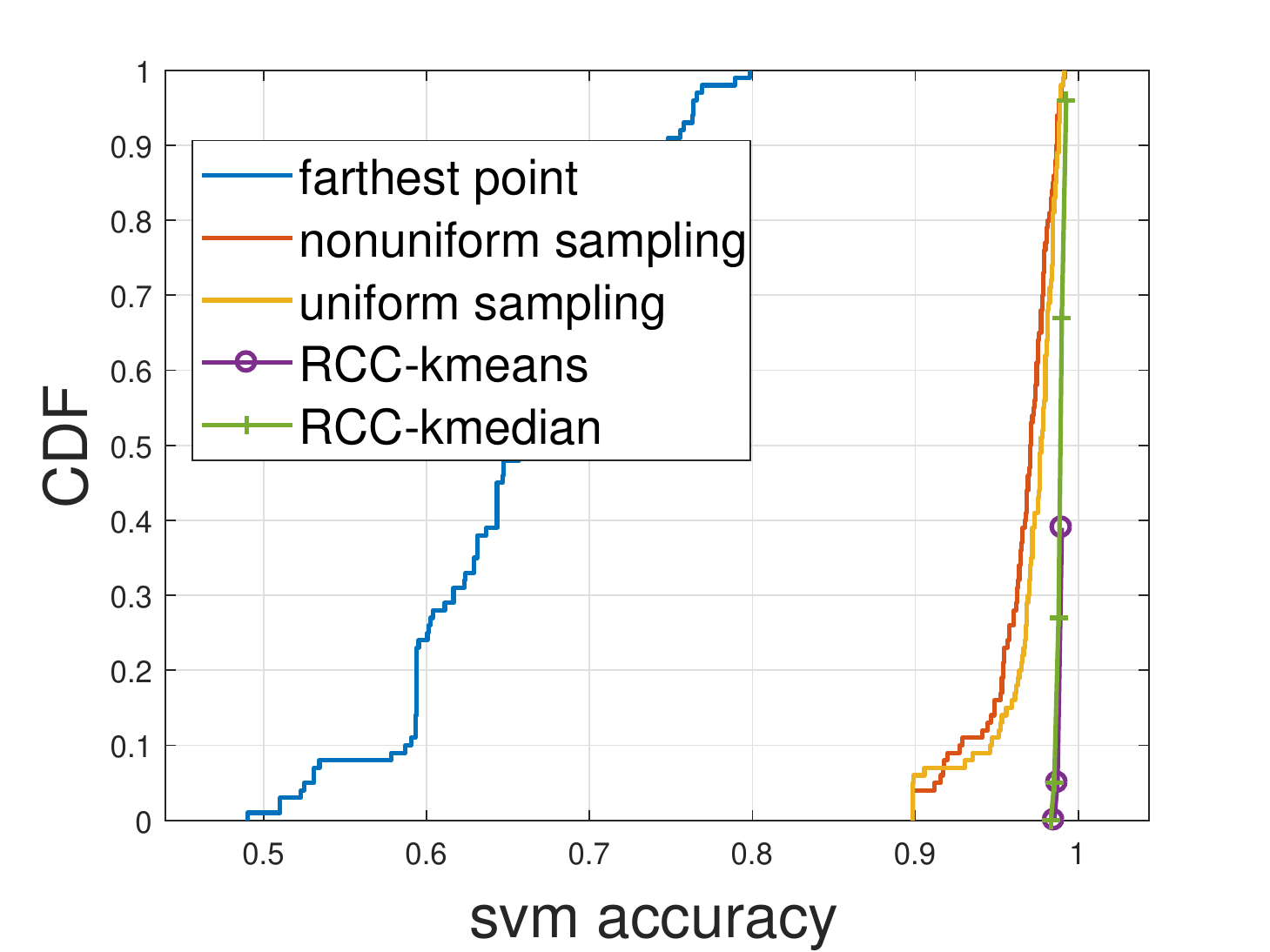}}
\centerline{\scriptsize (d) SVM (`0': 1; others: -1) }   
   \end{minipage}    
\caption{Detailed evaluation on Pendigits dataset (label: `digit', coreset size: $40$).  }
\label{fig:pendigits}
\end{figure}

\begin{figure}[t]
\begin{minipage}{0.238\textwidth}
\centerline{
\includegraphics[width=1.05\textwidth,height=3.6cm]{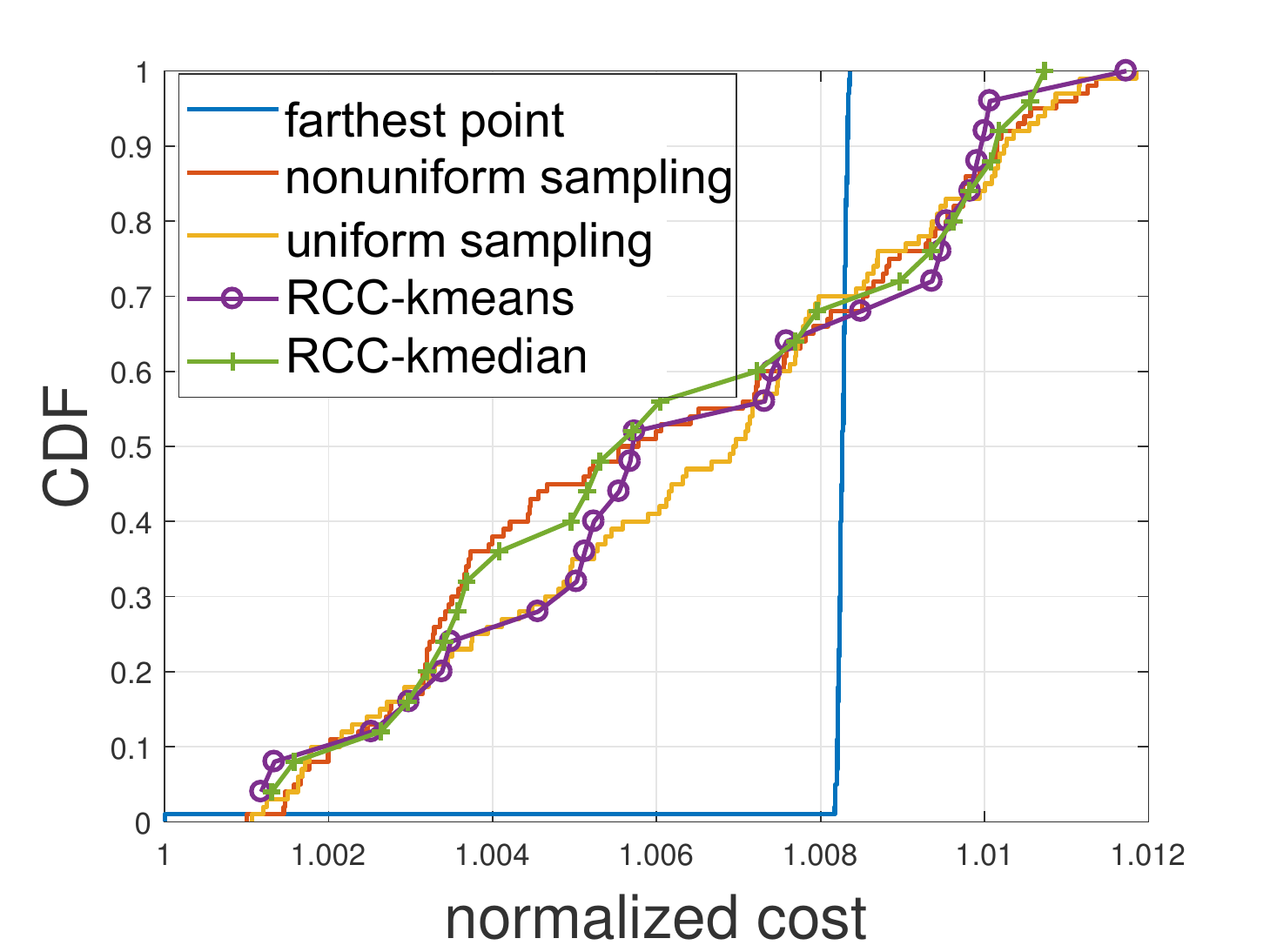}}
\centerline{\scriptsize (a) MEB}
\end{minipage}
\begin{minipage}{0.238\textwidth}
\centerline{
\includegraphics[width=1.05\textwidth,height=3.6cm]{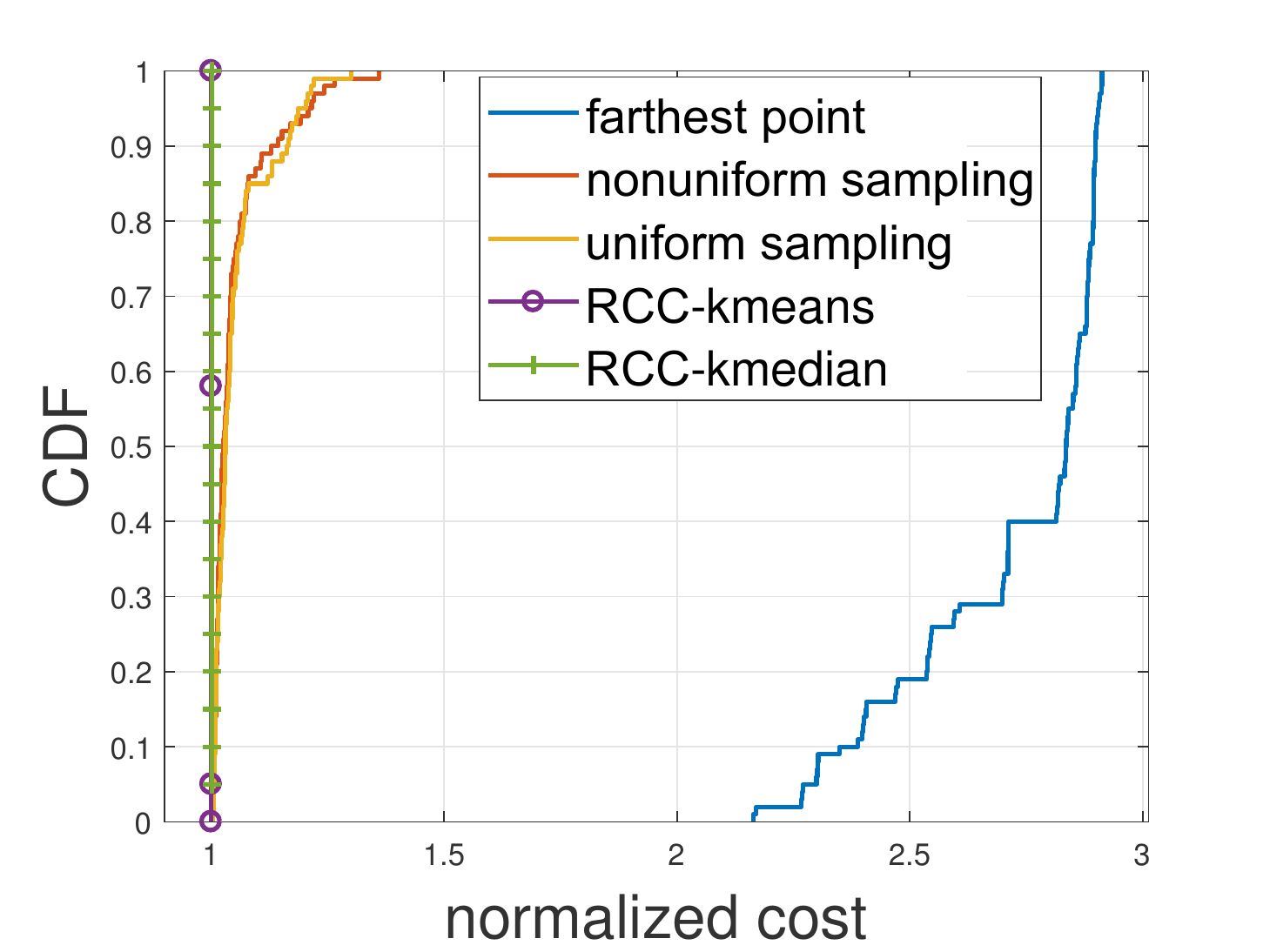}}
\centerline{\scriptsize (b) $k$-means ($k=2$)}
\end{minipage}
  \begin{minipage}{.238\textwidth}
  \centerline{
   \includegraphics[width=1.05\textwidth,,height=3.6cm]{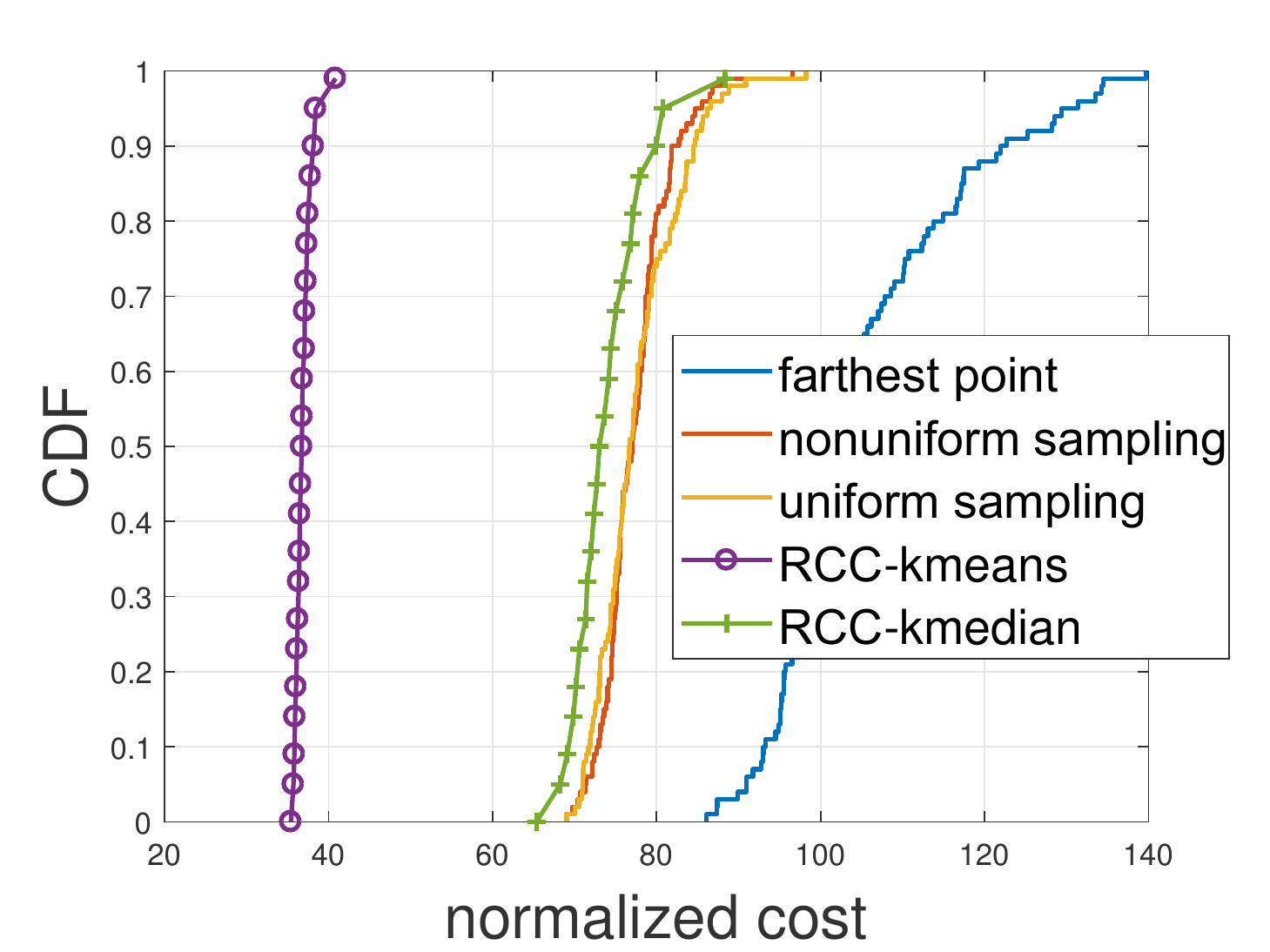}}
    \centerline{\scriptsize (c) PCA (300 components) }
  \end{minipage}
  \begin{minipage}{0.238\textwidth}
    \centerline{
  \includegraphics[width=1.05\textwidth,height=3.6cm]{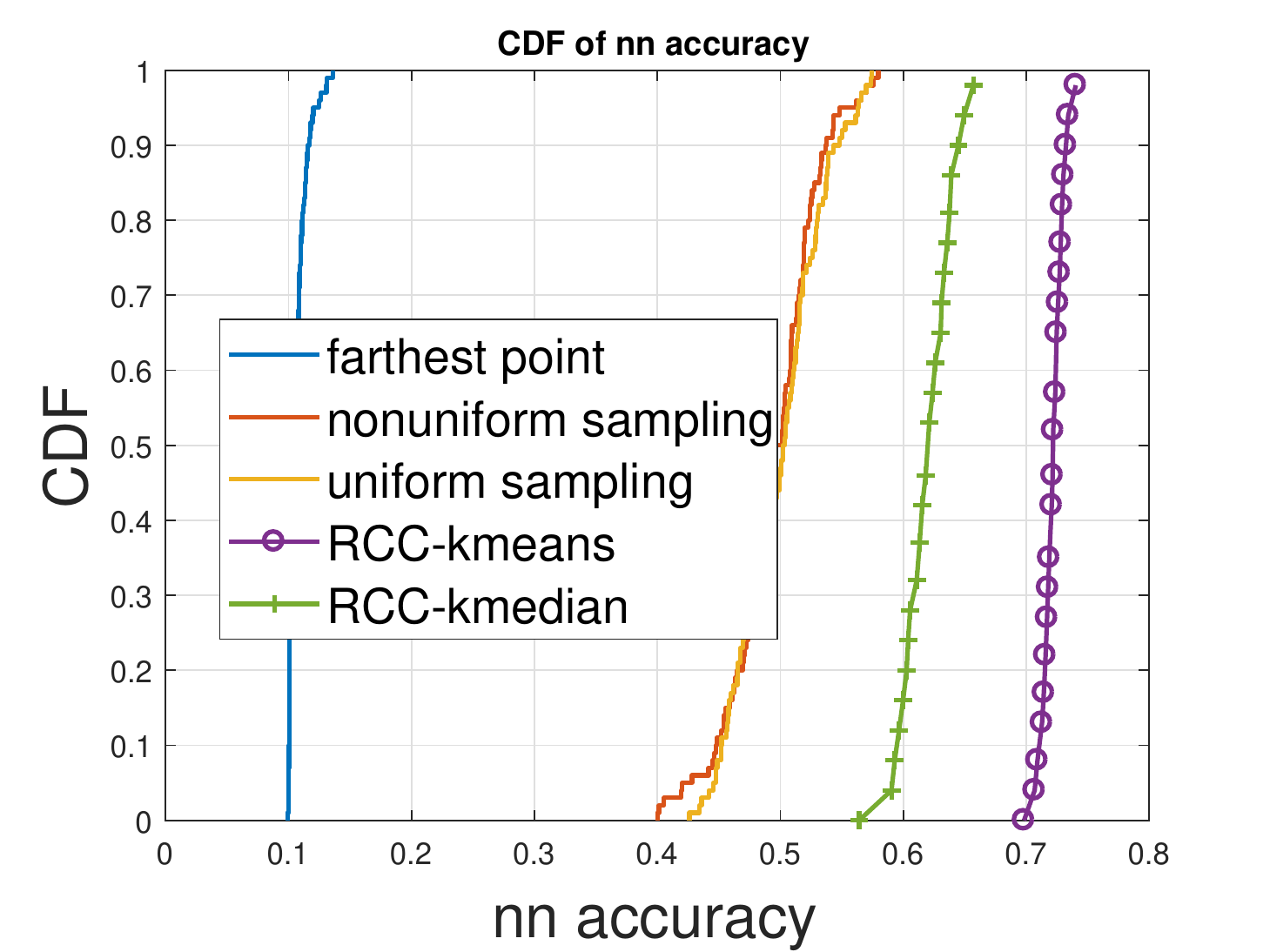}}
\centerline{\scriptsize (d) NN} 
   \end{minipage}    
\caption{Detailed evaluation on MNIST dataset (label: `labels', coreset size: $50$). }
\label{fig:mnist50}
  \vspace{0em}
\end{figure}

\begin{figure}[t]
\begin{minipage}{0.238\textwidth}
\centerline{
\includegraphics[width=1.05\textwidth,height=3.6cm]{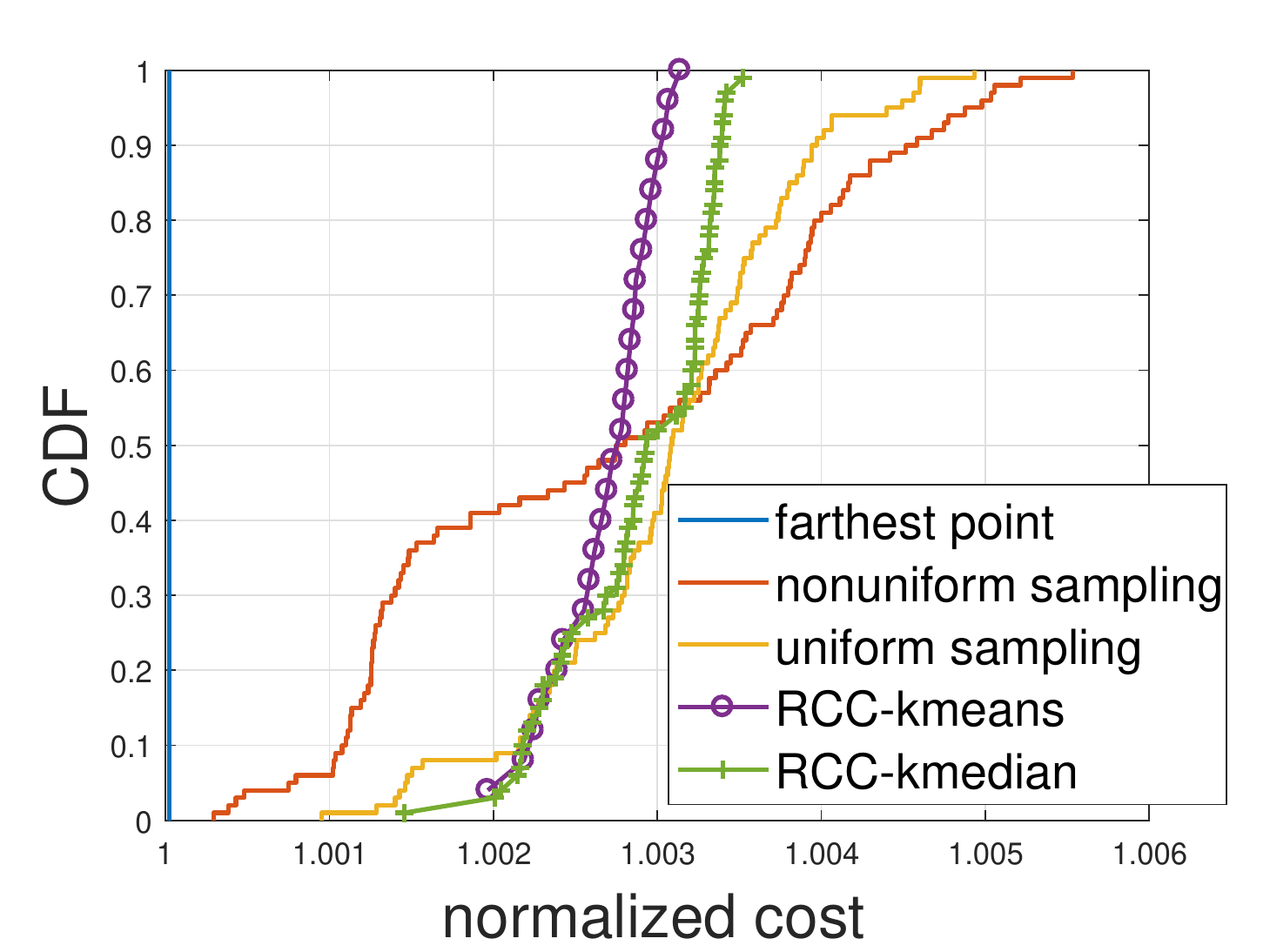}}
\centerline{\scriptsize (a) MEB}
\end{minipage}
\begin{minipage}{0.238\textwidth}
\centerline{
\includegraphics[width=1.05\textwidth,height=3.6cm]{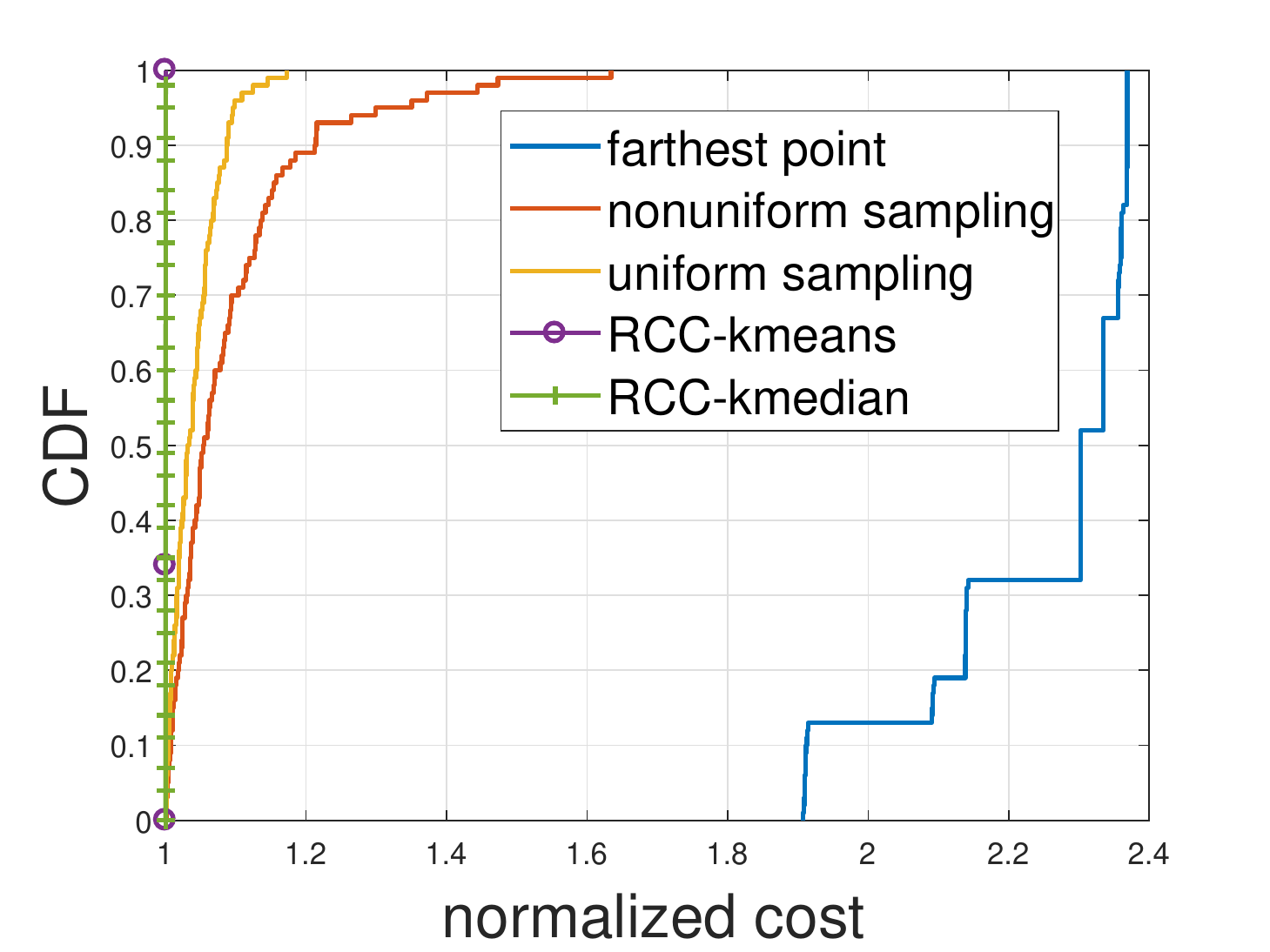}}
\centerline{\scriptsize (b) $k$-means ($k=2$)}
\end{minipage}
  \begin{minipage}{.238\textwidth}
  \centerline{
   \includegraphics[width=1.05\textwidth,,height=3.6cm]{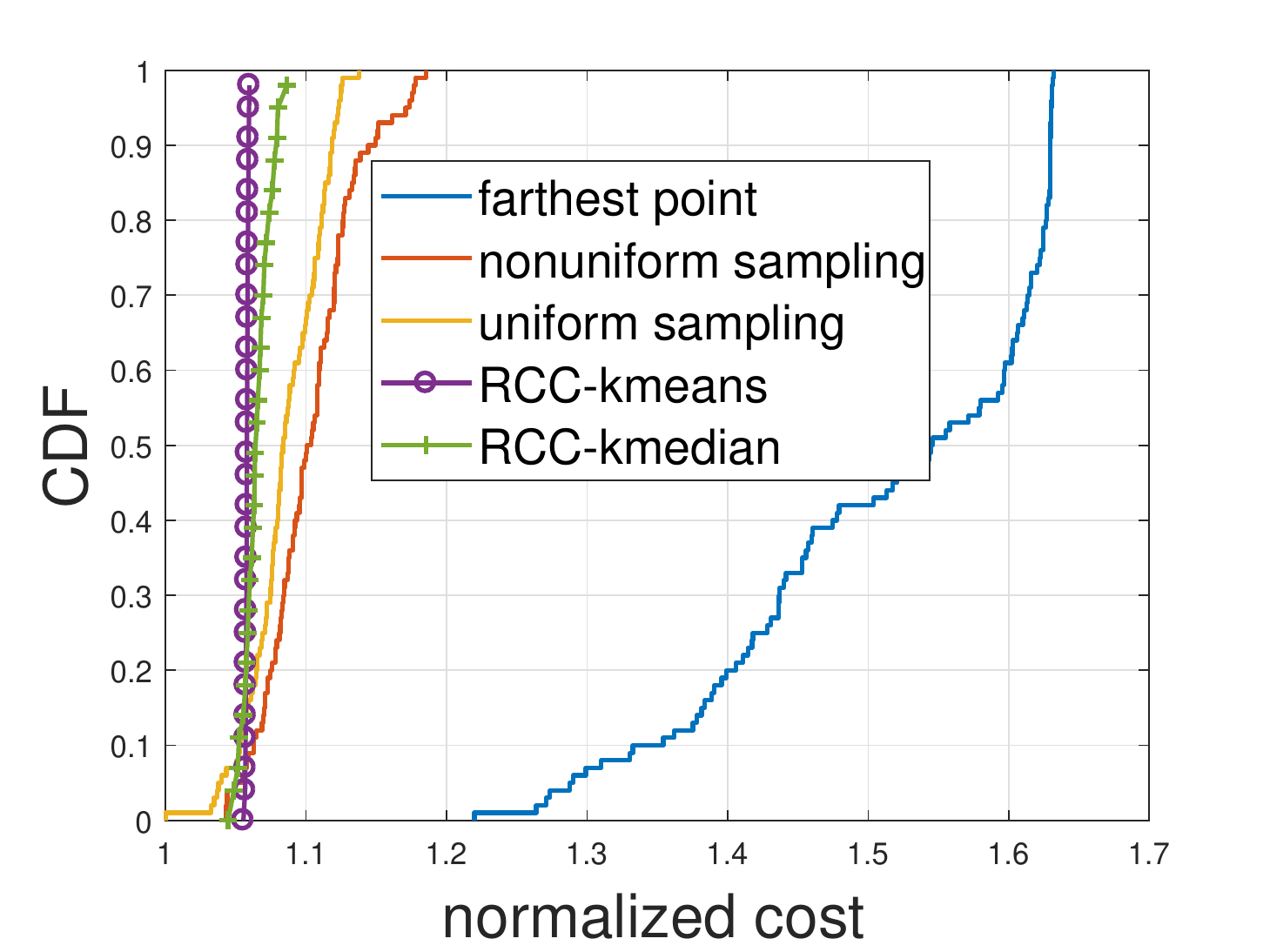}}
    \centerline{\scriptsize (c) PCA (7 components) }
  \end{minipage}
  \begin{minipage}{0.238\textwidth}
    \centerline{
  \includegraphics[width=1.05\textwidth,height=3.6cm]{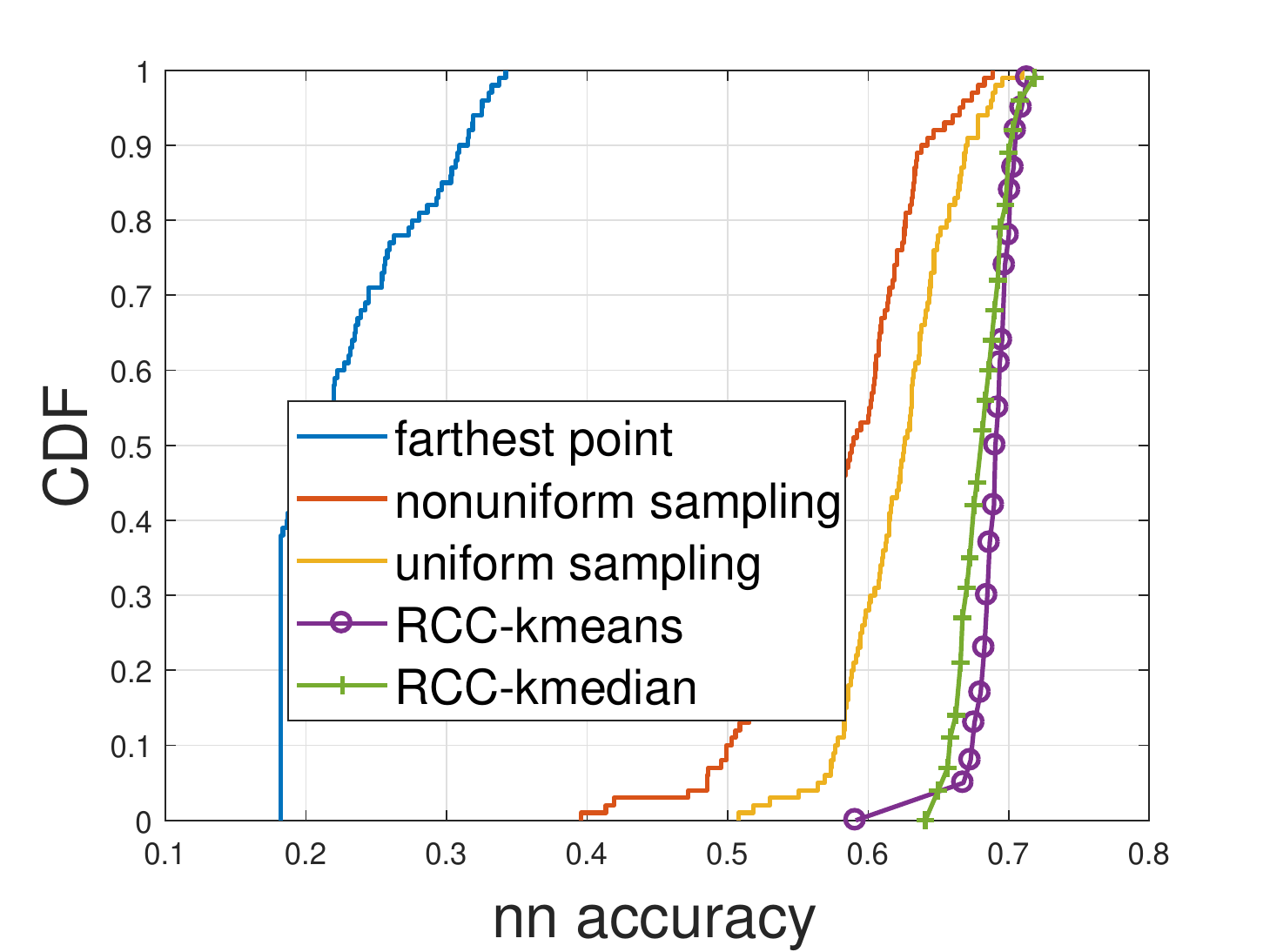}}
\centerline{\scriptsize (d) NN} 
   \end{minipage}    
\caption{Detailed evaluation on HAR dataset (label: `labels', coreset size: $50$). }
\label{fig:har50}
  \vspace{0em}
\end{figure}

\begin{table}[t]
{
\footnotesize
\renewcommand{\arraystretch}{1.3}
\caption{Average Running Time (sec) (`FP': farthest point, `NS': nonuniform sampling, `US': uniform sampling, `RS': RCC-kmeans, `RN': RCC-kmedian) } \label{tab:time}
\centerline{
\begin{tabular}{r|c|c|c|c|c}
  \hline
  algorithm & Fisher & Facebook & Pendigits & MNIST & HAR \\
  \hline
FP 
& 1.62 & 3.00 & 2.53 & 21.69 & 25.92 \\
\hline
NS 
& 0.019 & 0.027 & 0.095 & 7.42 & 0.69 \\
\hline 
US 
& 2.10e-04 & 4.60e-04 & 3.80e-04 & 0.01 & 0.0013 \\
\hline
RS 
& 0.0083 & 0.011 & 0.042 & 18.76 & 1.46 \\
\hline 
RN 
& 0.028 & 0.30 & 0.40 & 100.64 & 12.39 \\
  \hline
\end{tabular}
}
}
 \vspace{.5em}
\end{table}

Besides the average normalized costs, we also evaluated the CDFs of the results, shown in Figures~\ref{fig:fisher}--\ref{fig:har50}. 
The results show similar comparisons as observed before. 
Moreover, we see that the proposed algorithms (`RCC-kmeans' and `RCC-kmedian') also have significantly less performance variation than the benchmarks, especially the sampling-based algorithms (`nonuniform sampling' and `uniform sampling'). This means that the quality of the coresets constructed by the proposed algorithms is more reliable, which is a desirable property. 

Between the proposed algorithms, `RCC-kmeans' sometimes outperforms `RCC-kmedian', e.g., Figure~\ref{fig:mnist_size}~(c--d).  
Moreover, we note that `RCC-kmeans' can be an order of magnitude faster than `RCC-kmedian', as shown in Table~\ref{tab:time}. Note that our primary goal in constructing a robust coreset is to reduce the communication cost in scenarios like Figure~\ref{fig:systemmodel} while supporting diverse machine learning problems, instead of speeding up the coreset construction at the data source. This result shows that such robustness may come with certain penalty in running time. Nevertheless, the running time of `RCC-kmeans' is comparable to the benchmarks.  



\begin{figure}[tb]
\begin{minipage}{0.24\textwidth}
\centerline{
\includegraphics[width=1.05\textwidth,height=3.7cm]{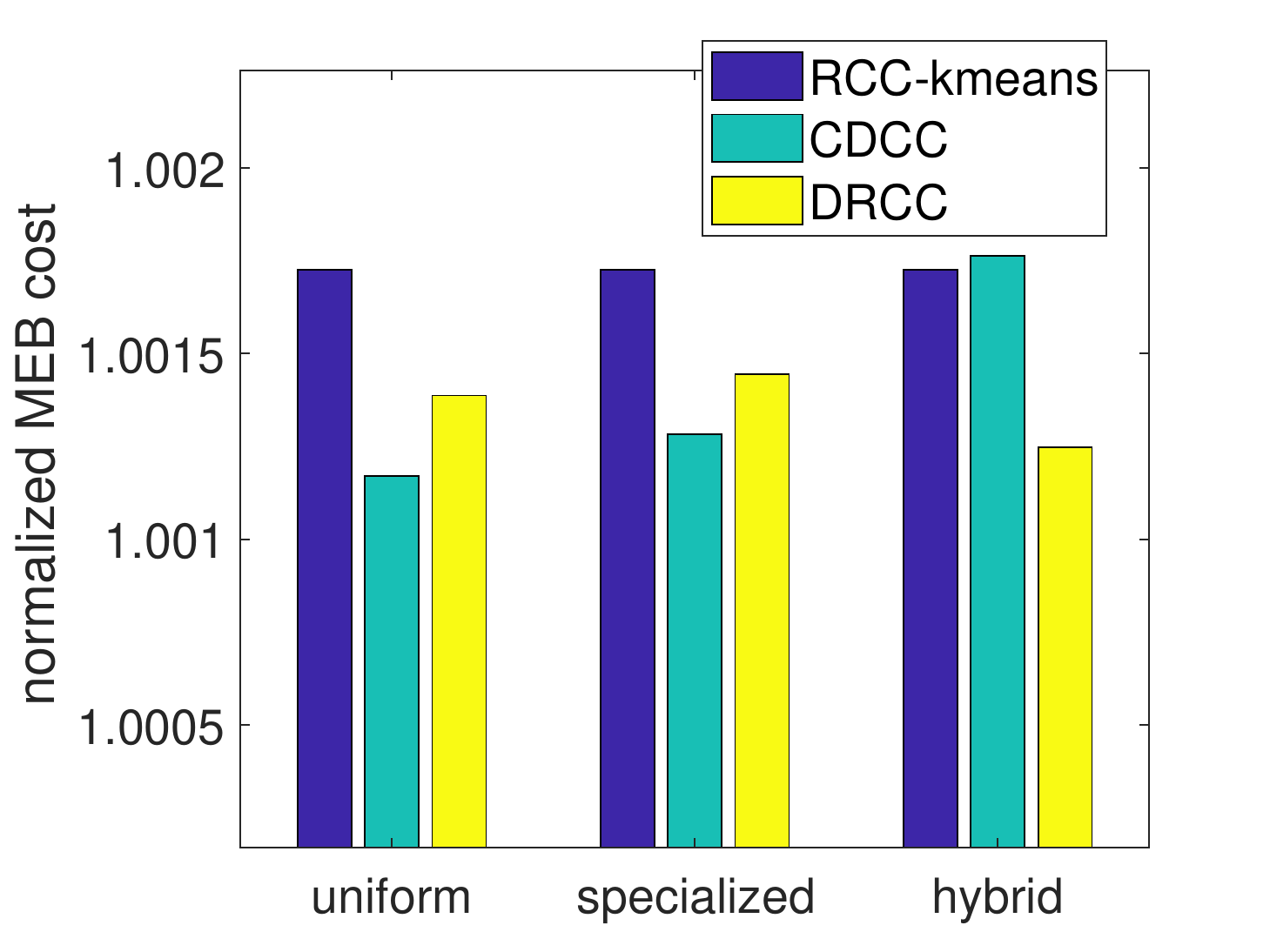}}
\centerline{\scriptsize (a) MEB}
\end{minipage}
\begin{minipage}{0.24\textwidth}
\centerline{
\includegraphics[width=1.05\textwidth,height=3.7cm]{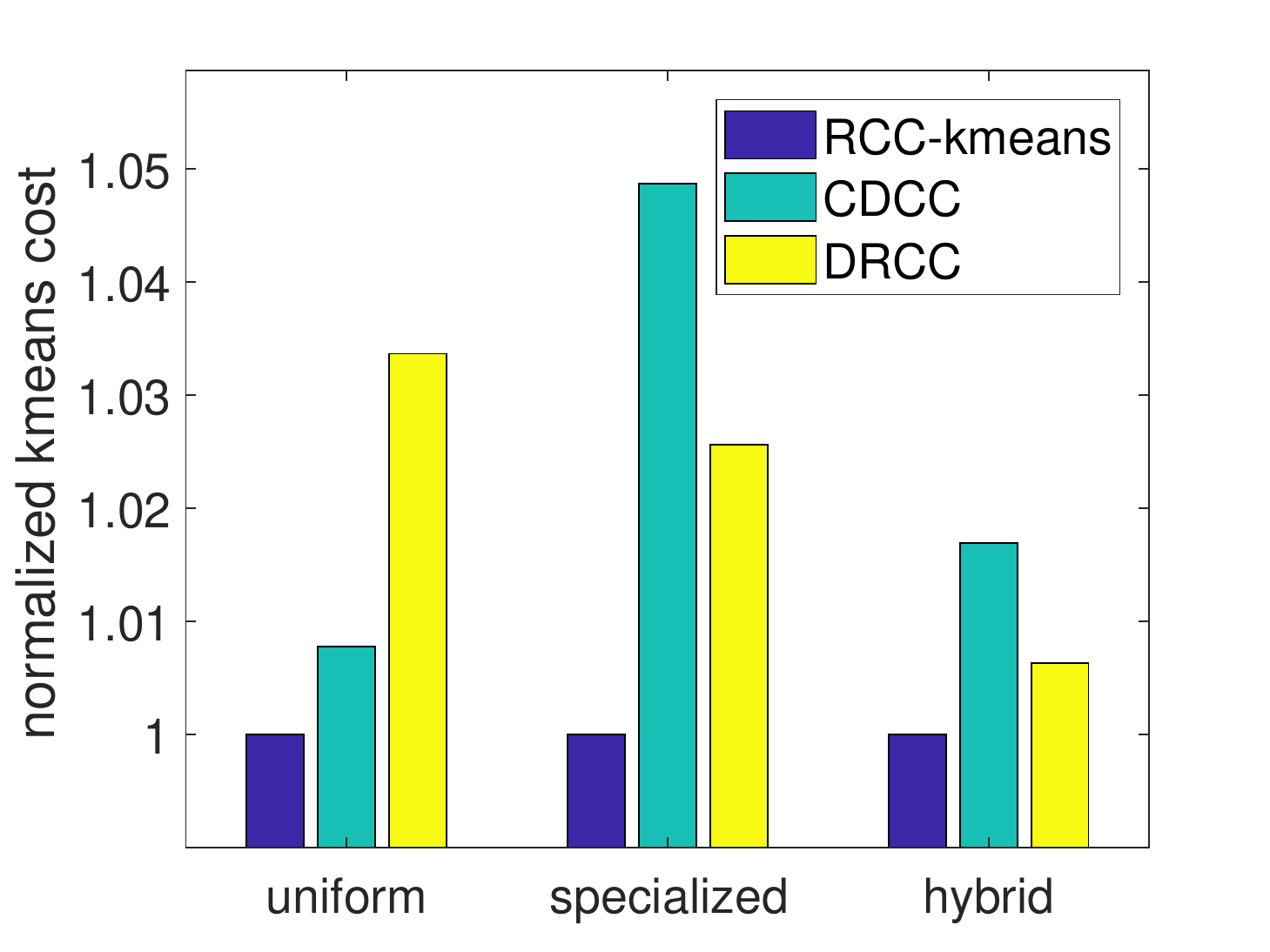}}
\centerline{\scriptsize (b) $k$-means ($k=2$)}
\end{minipage}
  \begin{minipage}{.24\textwidth}
  \centerline{
   \includegraphics[width=1.05\textwidth,height=3.7cm]{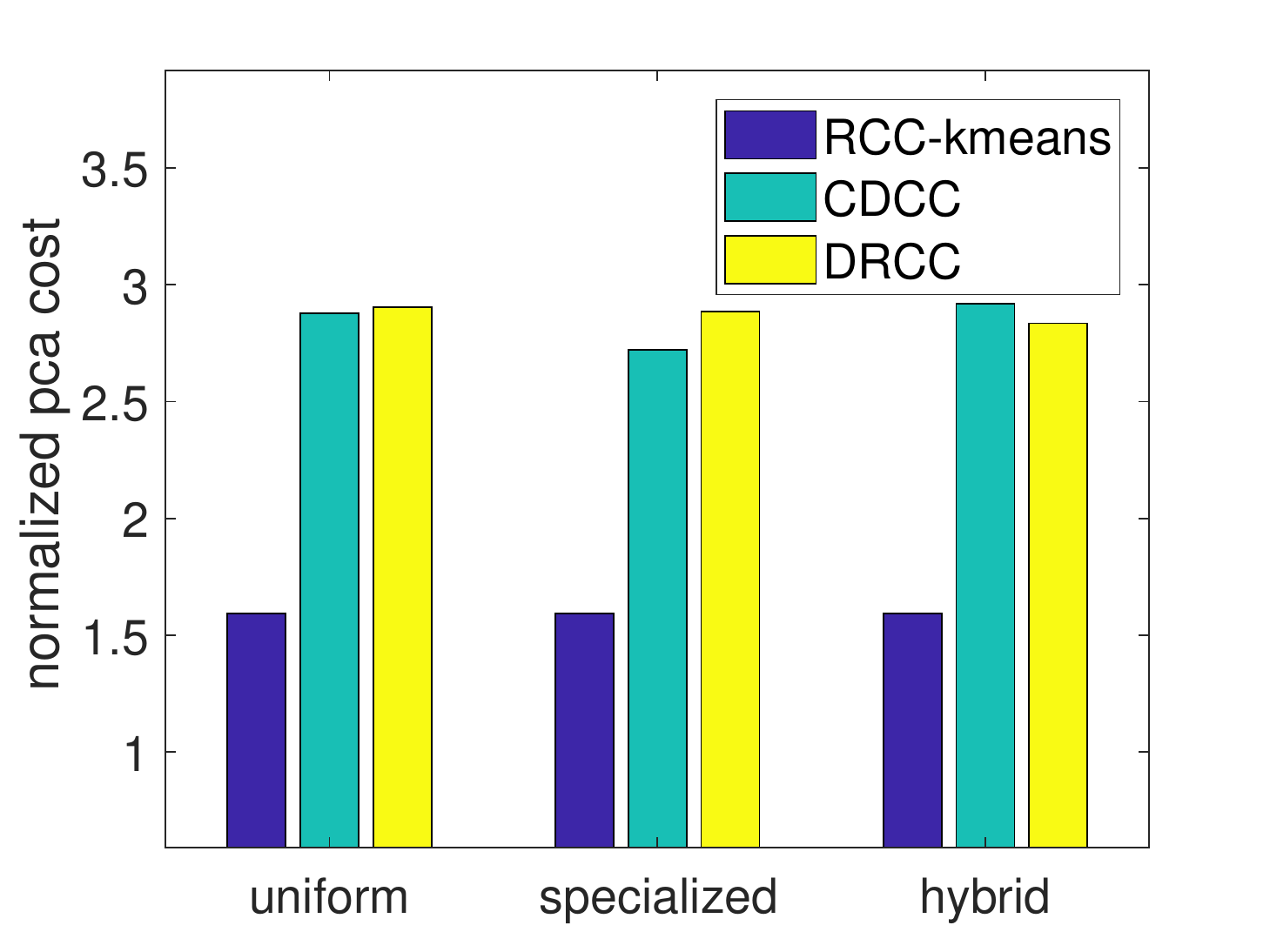}}
    \centerline{\scriptsize (c) PCA (300 components) }
  \end{minipage}
  \begin{minipage}{0.24\textwidth}
    \centerline{
  \includegraphics[width=1.05\textwidth,height=3.7cm]{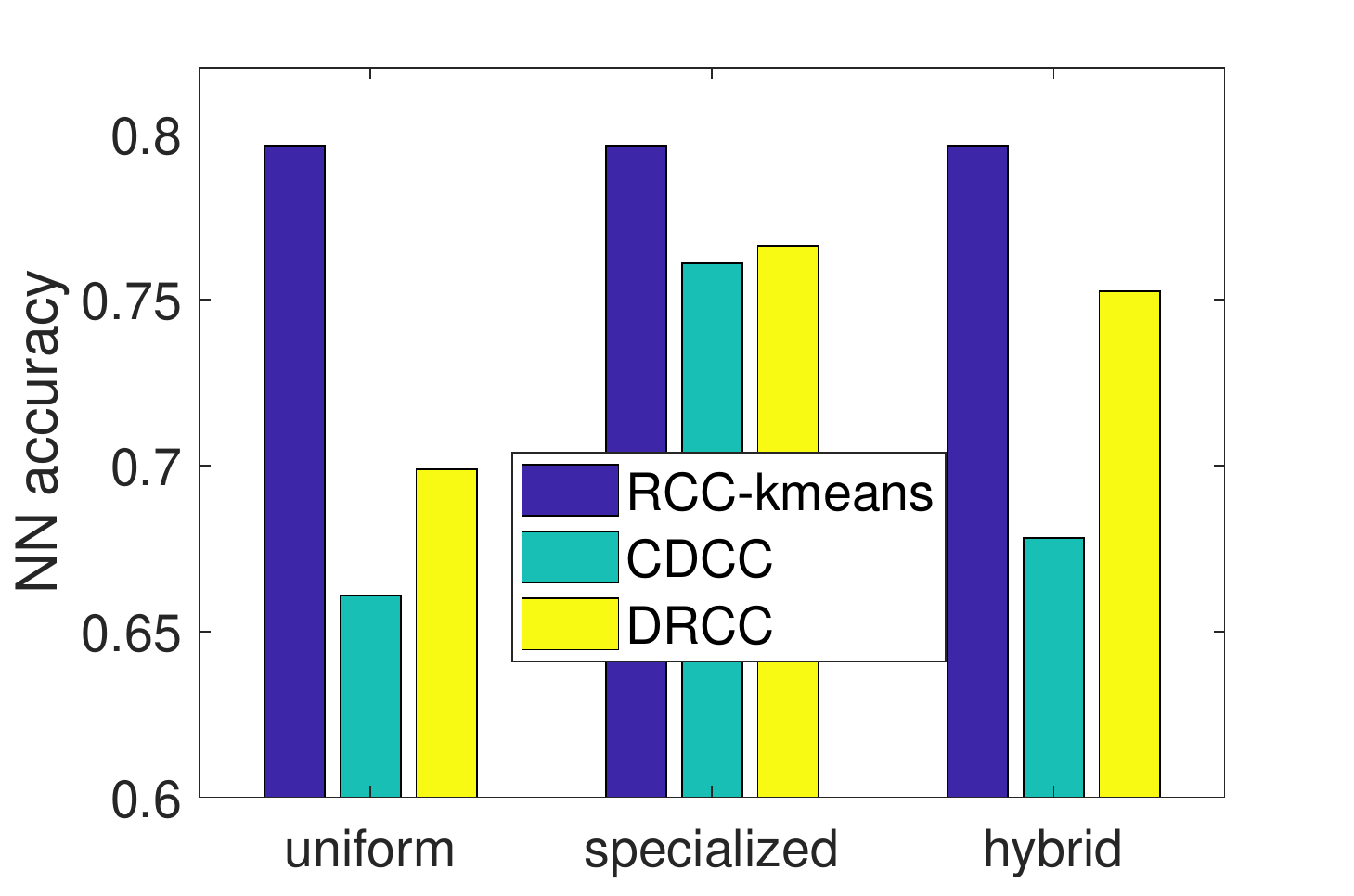}} 
\centerline{\scriptsize (d) NN} 
   \end{minipage}    
\caption{Evaluation on MNIST in distributed setting (label: `labels', coreset size: 400, $K=10$). }
\label{fig:MNIST_distributed}
\end{figure}

\begin{figure}[tb]
\begin{minipage}{0.24\textwidth}
\centerline{
\includegraphics[width=1.05\textwidth,height=3.7cm]{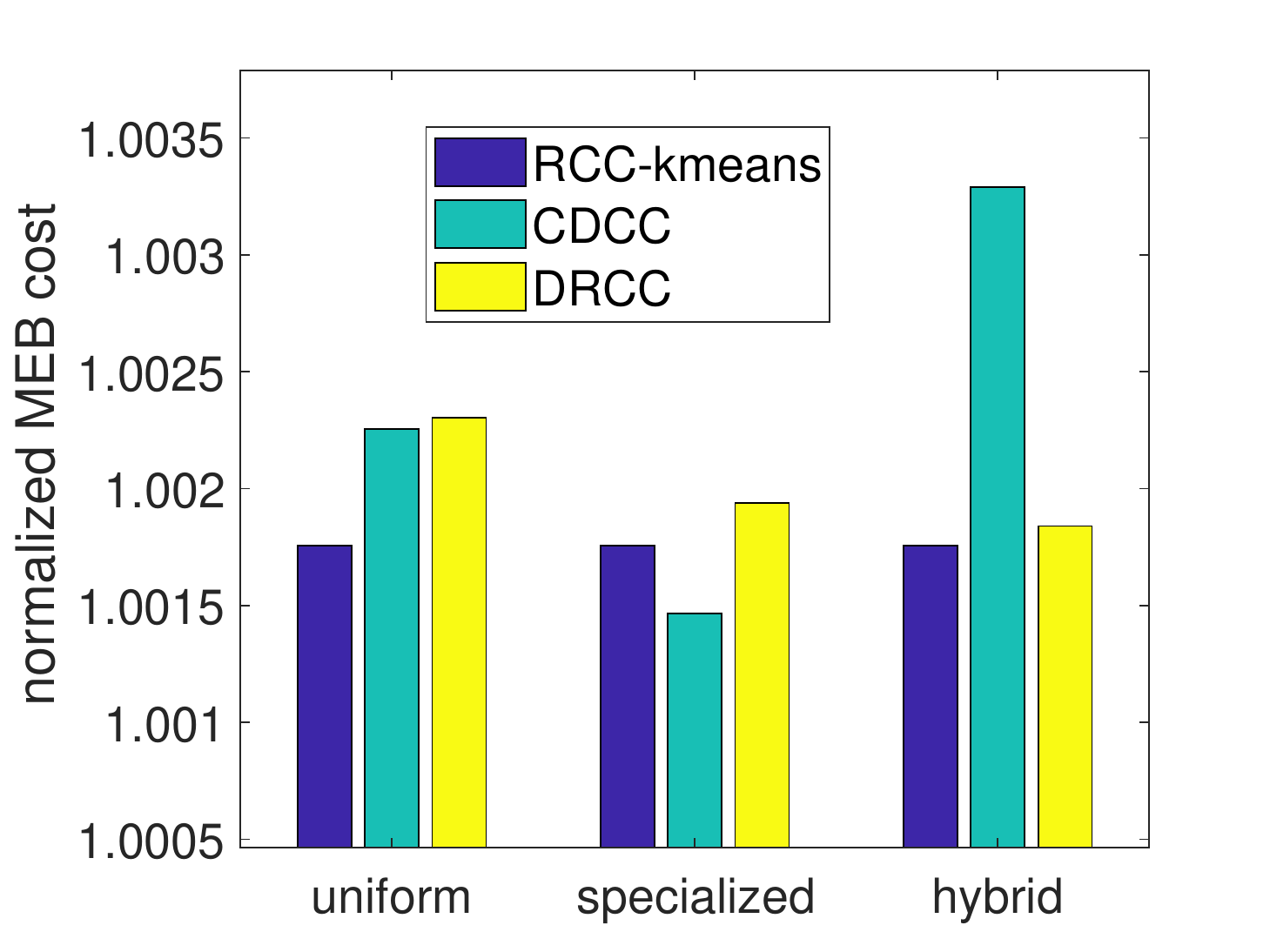}}
\centerline{\scriptsize (a) MEB}
\end{minipage}
\begin{minipage}{0.24\textwidth}
\centerline{
\includegraphics[width=1.05\textwidth,height=3.7cm]{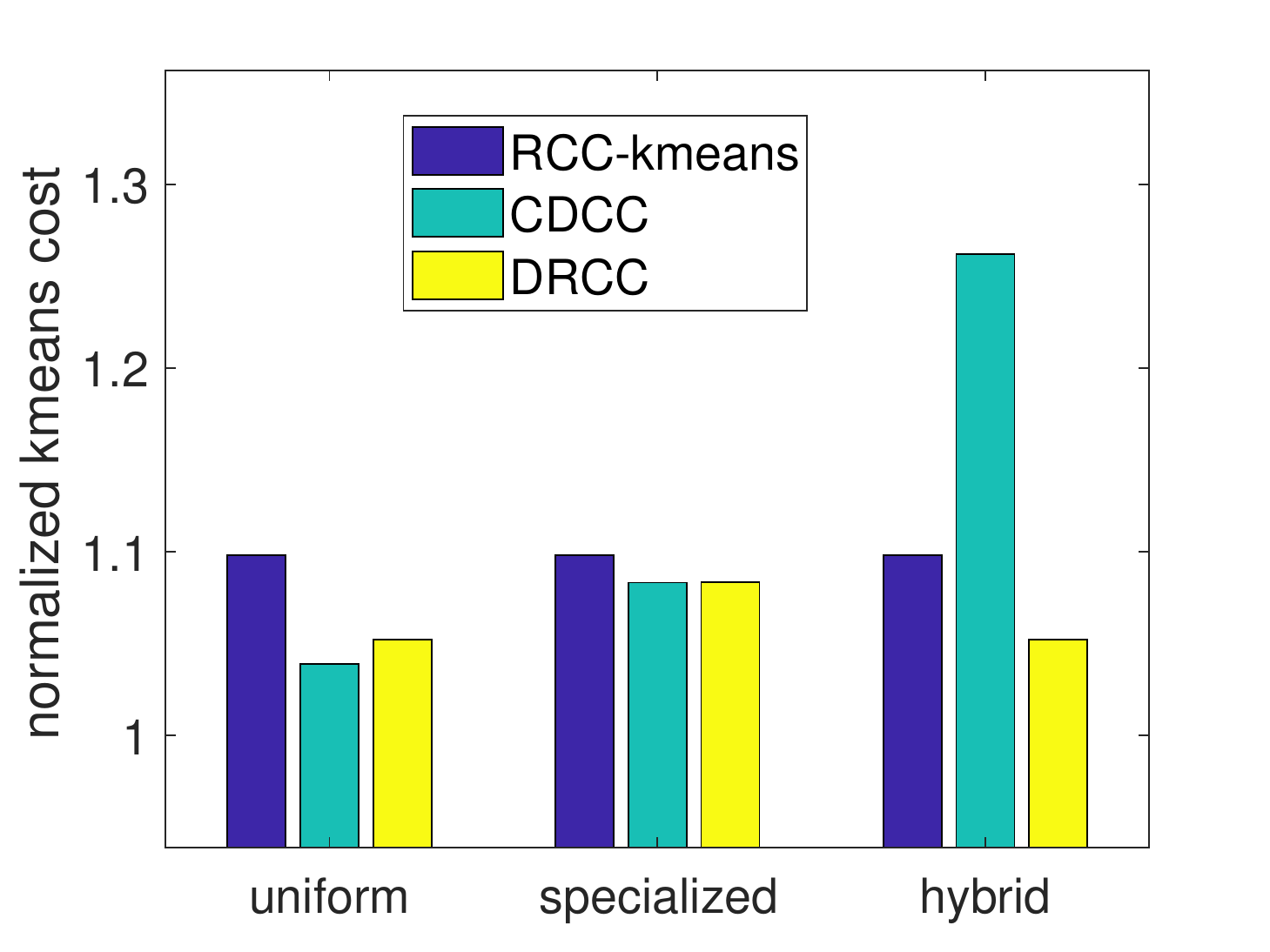}}
\centerline{\scriptsize (b) $k$-means ($k=2$)}
\end{minipage}
  \begin{minipage}{.24\textwidth}
  \centerline{
   \includegraphics[width=1.05\textwidth,height=3.7cm]{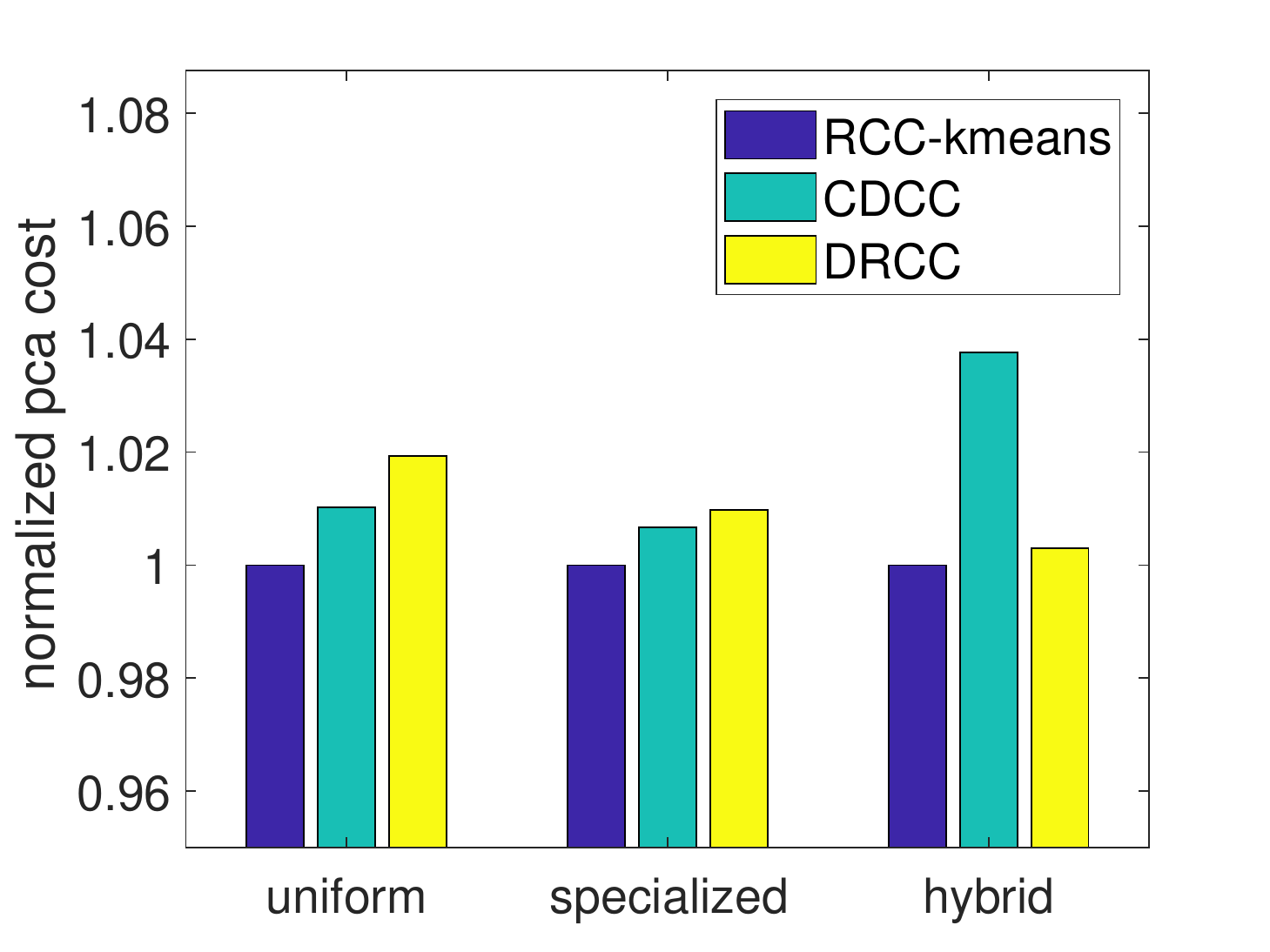}}
    \centerline{\scriptsize (c) PCA (7 components) }
  \end{minipage}
  \begin{minipage}{0.24\textwidth}
    \centerline{
  \includegraphics[width=1.05\textwidth,height=3.7cm]{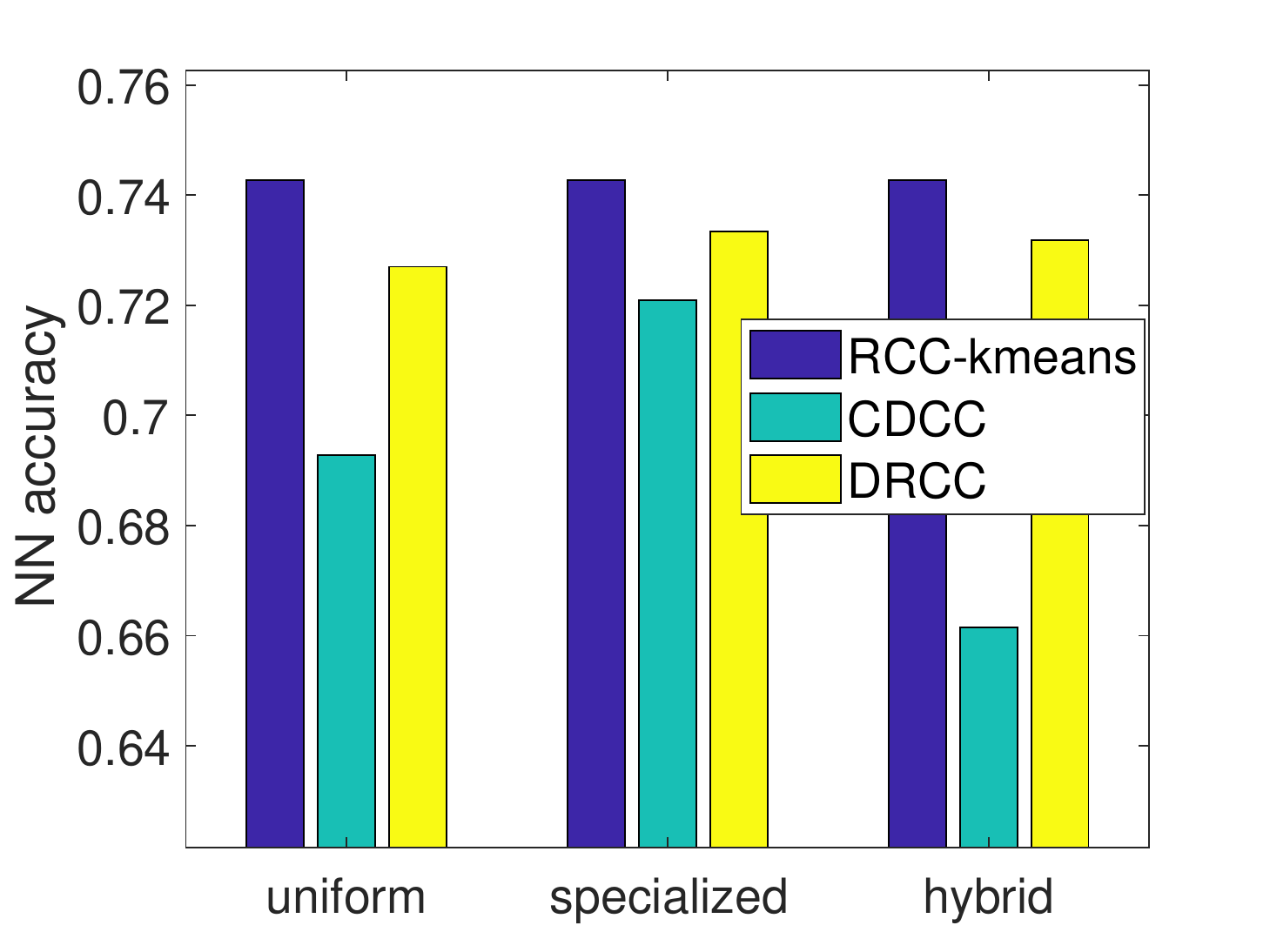}} 
\centerline{\scriptsize (d) NN} 
   \end{minipage}    
\caption{Evaluation on HAR in distributed setting (label: `activity', coreset size: 200, $K=10$). }
\label{fig:HAR_distributed}
\end{figure}


\emph{\bf Results in distributed setting:}
We use MNIST and HAR to generate distributed datasets at $n=L$ nodes according to the three aforementioned schemes (`uniform', `specialized', `hybrid'), where node $j$ ($j\in [n]$) is associated with label `$j-1$', and $n_0=\lfloor n/2 \rfloor$. We parameterize CDCC with $k=2$ according to the evaluated $k$-means problem. For DRCC (Algorithm~\ref{Alg:distributed RCC}), we solve line~\ref{DRCC: find k_j} by a greedy heuristic. As a bound, we also show the performance of RCC based on $k$-means clustering (`RCC-kmeans'), which is the best-performing algorithm {in the centralized setting}. We generate $5$ distributed datasets using each scheme and repeat coreset construction for $5$ times on each dataset. 
Figures~\ref{fig:MNIST_distributed}--\ref{fig:HAR_distributed} show the average results under fixed coreset size $N$ and parameter $K$ (defined in Algorithm~\ref{Alg:distributed RCC}); see Appendix~D for additional evaluation results when we vary these parameters. 

As CDCC blindly configures $k$ coreset points to be local centers at each node, it suffers when the local datasets are highly heterogeneous (under the `hybrid' scheme). 
By automatically tuning the numbers of local centers according to the local clustering costs, DRCC is able to customize its configuration to the dataset at hand and thus achieves a more robust performance, especially under the `hybrid' scheme.  
We note that as CDCC was designed to support $k$-means, its performance for $k$-means can be slightly better than DRCC (e.g., Figure~\ref{fig:MNIST_distributed}--\ref{fig:HAR_distributed}~(b) under the `uniform' scheme), but the difference is very small ($<3\%$). 
Meanwhile, the robustness of DRCC comes with a small cost in communication overhead, where each node needs to report $K$ scalars (here $K=10$) instead of one scalar as in CDCC. However, this difference is negligible compared to the cost of reporting the coreset itself (e.g., $400\times 401$ scalars for MNIST and $200\times 562$ scalars for HAR). Both algorithms generate coresets that are much smaller (by $97.2$--$99.3\%$) than the original dataset, significantly reducing the communication cost. 
Note that `RCC-kmeans' is a centralized algorithm that is only used as a benchmark here.  \looseness=-1

Additionally, Table~\ref{tab:time, distributed} shows the average running times of CDCC and DRCC on each dataset. Similar to the centralized setting, the better robustness of DRCC comes with certain penalty in running time due to the need to compute multiple instances of  $k$-clustering ($k=1,\ldots,K$) on local datasets. 

\begin{table}[t]
{
\footnotesize
\renewcommand{\arraystretch}{1.3}
\caption{Average Running Time (sec) } \label{tab:time, distributed}
\centerline{
\begin{tabular}{r|c|c}
  \hline
  algorithm &  MNIST & HAR \\
  \hline
CDCC  & 13.84 & 1.55 \\
\hline
DRCC & 31.66 & 2.42 \\
  \hline
\end{tabular}
}
}
 \vspace{.5em}
\end{table}

\section{Conclusion}\label{sec:Conclusion}

We show, both theoretically and empirically, that the $k$-clustering centers form a coreset that provides a guaranteed approximation for a broad set of machine learning problems with sufficiently continuous cost functions. As $k$-clustering (including $k$-means/median) is one of the most well-studied machine learning problems, this result allows us to leverage existing $k$-clustering algorithms for robust coreset construction. In particular, we have adapted an existing distributed $k$-clustering algorithm to construct a robust coreset over distributed data with a very low communication overhead. Our extensive evaluations on real datasets verify the superior performance of the proposed algorithms in supporting diverse machine learning problems.




\appendixnumbering{A}
\section*{Appendix A: Proofs}

\subsection{Proof of \Cref{lem:1}:}
\begin{proof}
By definition, $\opt(P, k) = \sum^k_{i = 1} \opt(P_i, 1)$. By letting $X_i \coloneqq \{X_{i ,1}, X_{i, 2}\} $ be the centers of the optimal 2-clustering of $P_i$, we have
\begin{align}
&\sum^k_{i = 1} \opt(P_i, 2) = \sum^k_{i = 1} \sum_{p \in P_i}w_p (\min_{x \in X_i} \dist(p, x))^z \nonumber \\
&\geq \sum^k_{i = 1} \sum_{p \in P_i}w_p (\min_{x \in \bigcup^k_{i = 1} X_i} \dist(p, x))^z \geq \opt(P, 2k).
\end{align}
Thus we have
\begin{align}
\sum^k_{i = 1} (\opt(P_i, 1) - \opt(P_i, 2))  \leq \opt(P, k) - \opt(P, 2k) \leq \epsilon'.
\end{align}
Since $\opt(P_i, 1) - \opt(P_i, 2) \geq 0$, $\opt(P_i, 1) - \opt(P_i, 2) \leq \epsilon'$, $\forall i \in [k]$.
\end{proof}

\subsection{Proof of \Cref{lem:2}:}
\begin{proof}
By definition of $k$-clustering, we have
\begin{equation}\label{eq:opt(Pi,1)}
\opt(P_i, 1) = \sum_{p \in P_i} w_p \dist(p, \mu(P_i))^z,
\end{equation}
\begin{align}\label{eq:opt(Pi,2)}
\opt(P_i, 2) \leq \sum_{p \in P_i^1}w_p \dist(p, \mu(P_i))^z + \sum_{p \in P_i^2}w_p \dist(p, p_0)^z
\end{align}
for any $p_0 \in P_i$, where $P_i^1$ is the subset of points in $P_i$ that are closer to $\mu(P_i)$ than $p_0$ (ties broken arbitrarily) and $P_i^2 \coloneqq P_i \setminus P_i^1$. Then subtracting (\ref{eq:opt(Pi,1)}) by (\ref{eq:opt(Pi,2)}), we have
\begin{align}
&\opt(P_i, 1) - \opt(P_i, 2) 
\nonumber \\ &\geq \sum_{p \in P_i^2} w_p ( \dist(p, \mu(P_i))^z - \dist(p, p_0)^z ).
\end{align}
As $\dist(p, \mu(P_i))^z - \dist(p, p_0)^z \geq 0$ for all $p \in P_i^2$,
\begin{align}
&w_p ( \dist(p, \mu(P_i))^z - \dist(p, p_0)^z ) \nonumber \\
&\leq \opt(P_i, 1) - \opt(P_i, 2) \nonumber\\
&\leq \epsilon',~ \forall p \in P_i^2.
\end{align}
In particular, for $ p = p_0$, $w_{p_0} \dist(p_0, \mu(P_i))^z \leq \epsilon'$ and thus $\dist(p_0, \mu(P_i)) \leq (\frac{\epsilon'}{\wmin})^{\frac{1}{z}}$. The proof completes by noting that this holds for any $p_0 \in P_i$.
\end{proof}

\subsection{Proof of \Cref{lem:3}:}
\begin{proof}
Since $\forall x \in \mathcal{X}$ and $p \in P_i$,
\begin{align}
(1-\epsilon)\cost(p, x) &\leq \cost(\mu(P_i), x) \leq (1+\epsilon)\cost(p, x),
\end{align}
multiplying this inequality by $w_p$ and then summing  over $p \in P_i$, we have
\begin{align}
(1-\epsilon)\sum_{p \in P_i}w_p\cost(p, x) \leq \cost(\mu(P_i), x)\sum_{p \in P_i}w_p \nonumber \\
\leq (1+\epsilon)\sum_{p \in P_i}w_p \cost(p, x), ~\forall x \in \mathcal{X}. \label{eq:approx per Pi}
\end{align}
Summing up (\ref{eq:approx per Pi}) over all $i\in [k]$, we have
\begin{align}
(1-\epsilon)\cost(P, x) \leq \cost(S, x) 
\leq (1+\epsilon)\cost(P, x), ~\forall x \in \mathcal{X}.\nonumber
\end{align}
Therefore, $S$ is an $\epsilon$-coreset for $P$ w.r.t. $\cost(P, x)$.
\end{proof}

\subsection{Proof of \Cref{lem:4}}
\begin{proof}
Taking maximum over $p \in P_i$ for (\ref{eq:lem3}), we have that $\forall i \in [k]$ and $x \in \mathcal{X}$,
\begin{align}
(1 - \epsilon) \max_{p \in P_i} \cost(p ,x) \leq \cost(\mu(P_i), x) \nonumber \\
\leq (1 + \epsilon) \max_{p \in P_i} \cost(p, x).  \label{eq:approx per Pi,max}
\end{align}
Taking maximum over $i \in [k]$ for (\ref{eq:approx per Pi,max}): $\forall x\in \mathcal{X}$,
\begin{align*}
(1 - \epsilon) \max_{p \in P} \cost(p ,x) \leq \max_{i \in [k]} \cost(\mu(P_i), x) \\
\leq (1 + \epsilon) \max_{p \in P} \cost(p, x).
\end{align*}
That is, $S$ is an $\epsilon$-coreset for $P$ w.r.t. $\cost(P, x)$.
\end{proof}

\subsection{Proof of \Cref{lem:1 suboptimal}:}
\begin{proof}
By definition and Assumption 2, we have 
\begin{equation}
    \approx(P,k) = \sum_{i=1}^k \approx(\widetilde{P}_i, 1)
\end{equation}
\begin{equation}
    \sum_{i=1}^k \approx(\widetilde{P}_i, 2) \geq \approx(P, 2k)
\end{equation}
Thus, we have
\begin{align}
&\sum_{i=1}^k (\approx(\widetilde{P}_i,1)-\approx(\widetilde{P}_i, 2)) \nonumber \\
&\hspace{5em} \leq \approx(P,k)-\approx(P,2k) \leq \epsilon'. \label{eq:lem1 suboptimal proof 1}
\end{align}
Let $\{\widetilde{P}_i^1, \widetilde{P}_i^2\}$ be the partition of $\widetilde{P}_i$ generated by the algorithm for $2$-clustering of $\widetilde{P}_i$. Under Assumption 1,
\begin{align}
&\approx(\widetilde{P}_i,2) \nonumber \\
&= \sum_{p\in \widetilde{P}_i^1} w_p \dist(p,\mu(\widetilde{P}_i^1))^z  + \sum_{p\in\widetilde{P}_i^2} w_p \dist(p,\mu(\widetilde{P}_i^2))^z \nonumber \\
&\leq \sum_{p\in \widetilde{P}_i^1}w_p \dist(p,\mu(\widetilde{P}_i))^z + \sum_{p\in\widetilde{P}_i^2} w_p \dist(p,\mu(\widetilde{P}_i))^z \nonumber \\
&= \approx(\widetilde{P}_i,1). \label{eq:lem1 suboptimal proof 2}
\end{align}
Combining (\ref{eq:lem1 suboptimal proof 1}, \ref{eq:lem1 suboptimal proof 2}) yields 
\begin{equation}
    \approx(\widetilde{P}_i, 1) - \approx(\widetilde{P}_i, 2) \leq \epsilon', \forall i\in [k].
\end{equation}
\end{proof}

\subsection{Proof of \Cref{lem:2 suboptimal}:}
\begin{proof}
First, by Assumption~1, $\approx(\widetilde{P}_i,1) = \sum_{p\in \widetilde{P}_i}w_p$ $\cdot \dist(p,\mu(\widetilde{P}_i))^z$. Moreover, by Assumption~3,
\begin{align}
\approx(\widetilde{P}_i,2) & \leq \sum_{p\in \widetilde{P}_i}w_p (\min_{x\in \{\mu(\widetilde{P}_i), p^*\}}\dist(p,x))^z \nonumber\\
& \leq \sum_{p\in \widetilde{P}_i\setminus \{p^*\}} w_p \dist(p, \mu(\widetilde{P}_i))^z,
\end{align}
where $p^*\coloneqq \argmax_{p\in \widetilde{P}_i}w_p \dist(p,\mu(\widetilde{P}_i))^z$.
Thus, we have
\begin{align}
&\epsilon' \geq \approx(\widetilde{P}_i, 1)-\approx(\widetilde{P}_i, 2) \nonumber \\
&\geq w_{p^*} \dist(p^*, \mu(\widetilde{P}_i))^z \geq w_p \dist(p, \mu(\widetilde{P}_i))^z,~\forall p\in \widetilde{P}_i.
\end{align}
Therefore, $\dist(p,\mu(\widetilde{P}_i)) \leq ({\epsilon'\over \wmin})^{1\over z}$, $\forall p\in \widetilde{P}_i$.
\end{proof}

\subsection{Proof of \Cref{thm:DRCC_sum-cost_kmedian}:}
\begin{proof}
The proof is based on a sampling lemma from \cite{Balcan13NIPS}:
\begin{lemma}[\cite{Balcan13NIPS}] \label{lem:sampling_lemma}
Let $F$ be a set of nonnegative functions $f: P\to \mathbb{R}_{\geq 0}$. Let $S$ be a set of i.i.d. samples from $P$, where each sample equals $p\in P$ with probability ${m_p\over \sum_{z\in P}m_z}$. Let $u'_q = {\sum_{p\in P}m_p \over m_q |S|}$ for $q\in P$. If for a sufficiently large constant $\alpha$, $|S|\geq {\alpha\over \epsilon^2}(\dim(F,P)+\log{1\over \delta})$, then with probability at least $1-\delta$, $\forall f\in F: \big| \sum_{p\in P}f(p) - \sum_{q\in S}u'_q f(q) \big| \leq \epsilon(\sum_{p\in P}m_p)(\max_{p\in P}{f(p) \over m_p})$.
\end{lemma}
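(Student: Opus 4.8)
The plan is to reduce the weighted importance-sampling estimator to an ordinary empirical mean, prove concentration for a single fixed $f$ by a Hoeffding bound, and then lift this to a guarantee that holds simultaneously for all $f\in F$ using the combinatorial complexity $\dim(F,P)$. Writing $M\coloneqq\sum_{p\in P}m_p$ and, for a fixed $f$, $h(p)\coloneqq f(p)/m_p$ with $R\coloneqq\max_{p\in P}h(p)$, the key observation is that $\sum_{p\in P}f(p)=M\cdot\frac{1}{M}\sum_{p}m_p h(p)=M\,\mathbb{E}_{p\sim m/M}[h(p)]$, while each i.i.d. sample $Q\in S$ contributes $u'_Q f(Q)=\frac{M}{m_Q|S|}f(Q)=\frac{M}{|S|}h(Q)$, so $\sum_{q\in S}u'_q f(q)=M\cdot\frac{1}{|S|}\sum_{Q\in S}h(Q)$ is exactly $M$ times the empirical average of $h$ over $S$. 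A one-line computation then gives unbiasedness, $\mathbb{E}\big[\sum_{q\in S}u'_q f(q)\big]=\sum_{p\in P}f(p)$, and shows that the target bound $\epsilon M R$ is equivalent to an additive error of $\epsilon$ on the empirical mean of the normalized function $h/R\in[0,1]$.

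Next I would treat a single fixed $f$. The summands $g_Q\coloneqq\frac{M}{|S|}h(Q)$ are i.i.d. and lie in $[0,\frac{MR}{|S|}]$, so Hoeffding's inequality yields $\Pr\big(\big|\sum_{Q\in S}g_Q-\sum_{p\in P}f(p)\big|\ge\epsilon MR\big)\le 2\exp(-2\epsilon^2|S|)$. This already establishes the lemma when $F$ is finite, via a union bound requiring only $|S|=O(\frac{1}{\epsilon^2}\log\frac{|F|}{\delta})$ samples, and it isolates the single place where sample size enters the exponent.

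The main obstacle, and the only step that actually uses $\dim(F,P)$, is passing to a uniform guarantee over a possibly infinite $F$. Here I would run the standard uniform-convergence machinery for bounded-range function classes on $\{h/R:\ f\in F\}\subseteq[0,1]^P$, whose associated range space has dimension $\dim(F,P)$. The plan is to: (i) symmetrize by introducing an independent ghost sample $S'$ of equal size, reducing the uniform deviation over $S$ to a deviation between $S$ and $S'$; (ii) condition on the multiset $S\cup S'$ and apply the Sauer--Shelah lemma, so that $F$ realizes at most $O\big((2|S|)^{\dim(F,P)}\big)$ distinct behaviors on the double sample; and (iii) combine the single-function tail bound from the previous step with a union bound over these finitely many behaviors. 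Solving $O\big((2|S|)^{\dim(F,P)}\big)\cdot\exp(-c\,\epsilon^2|S|)\le\delta$ for $|S|$ then produces the stated size $|S|=O\big(\frac{1}{\epsilon^2}(\dim(F,P)+\log\frac1\delta)\big)$.

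The delicate points I expect to spend the most care on are bounding the growth function \emph{purely} through $\dim(F,P)$ (rather than the ambient dimension of $\mathcal{X}$), and handling the $\log\frac1\epsilon$ factor that a naive pseudo-dimension argument would introduce; the clean bound without that factor relies on the definition of $\dim(F,P)$ from \cite{Balcan13NIPS} being tailored so that the relevant covering/shattering estimate is polynomial of degree $\dim(F,P)$. Once the symmetrization constants and this degree are folded into the ``sufficiently large constant'' $\alpha$, undoing the two normalizations (multiplying back by $R$ and then by $M$) returns the inequality in the form stated, with error $\epsilon\big(\sum_{p\in P}m_p\big)\big(\max_{p\in P}f(p)/m_p\big)$ holding for all $f\in F$ simultaneously with probability at least $1-\delta$.
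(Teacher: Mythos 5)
First, a point of reference: the paper itself never proves this lemma --- it is imported verbatim from \cite{Balcan13NIPS} and used as a black box inside the proof of \Cref{thm:DRCC_sum-cost_kmedian} --- so your attempt can only be compared against the argument in the cited literature. Your first two steps reproduce that argument's setup correctly: with $M=\sum_{p\in P}m_p$ and $h=f/m$, the identity $\sum_{q\in S}u'_q f(q)=\frac{M}{|S|}\sum_{Q\in S}h(Q)$, the unbiasedness computation, and the single-$f$ Hoeffding tail $2\exp(-2\epsilon^2|S|)$ are all right, and they correctly reduce the lemma to uniform convergence of the empirical mean over the normalized class $\{h/R\}\subseteq[0,1]^P$.

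The genuine gap is in your step (iii). Symmetrization plus a union bound over the at most $(2|S|)^{\dim(F,P)}$ behaviors on the double sample forces $\epsilon^2|S|\gtrsim \dim(F,P)\log|S|$, i.e.\ $|S|=O\bigl(\frac{1}{\epsilon^2}\bigl(\dim(F,P)\log\frac{\dim(F,P)}{\epsilon}+\log\frac{1}{\delta}\bigr)\bigr)$; the logarithmic factor depends on $\epsilon$ and $\dim(F,P)$, so it cannot be folded into the ``sufficiently large constant'' $\alpha$, and this route provably falls short of the stated sample size. You flag exactly this issue, but your proposed escape --- that the definition of $\dim(F,P)$ is ``tailored'' so the shattering estimate is polynomial of degree $\dim(F,P)$ --- does not work: the growth condition $|S|^m$ in \Cref{def:dimFP} is precisely what Sauer--Shelah already yields for any VC class, and VC classes still incur the log factor under the naive double-sample argument. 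What actually removes it is the chaining-based $\epsilon$-approximation theorem of Talagrand and Li--Long--Srinivasan, which is the ingredient the Feldman--Langberg framework \cite{Feldman11STOC}, and hence \cite{Balcan13NIPS}, invoke at this point. A second, related flaw: for real-valued $h$ the number of distinct ``behaviors'' on a double sample is not finite at all, so before any counting argument one must pass to the sublevel-set range space $B(x,r)$ of \Cref{def:dimFP} (or discretize function values, which reintroduces a $\log\frac{1}{\epsilon}$). In short: right reduction, right single-function bound, but the uniform-convergence step needs the Li--Long--Srinivasan/Talagrand theorem cited as a black box rather than bare symmetrization and Sauer--Shelah.
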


In our case, $S=\bigcup_{j=1}^n S_j$, and $B = \bigcup_{j=1}^n B_j^{k_j}$. Define $f_x(p) \coloneqq w_p(\cost(p,x)-\cost(b_p,x) + \rho \dist(p,b_p))$, where $b_p$ is the center in $B_j^{k_j}$ closest to $p\in P_j$. By the $\rho$-Lipschitz-continuity of $\cost(p,x)$, $f_x(p)\geq 0$ and $f_x(p) \leq 2\rho w_p \dist(p, b_p)$. By lines~\ref{DRCC: allocate t} and \ref{DRCC: sample S_j} in Algorithm~\ref{Alg:distributed RCC}, $k$-median-based DRCC generates $S$ via i.i.d. sampling from $P$ with probabilities proportional to $m_p = w_p \dist(p, b_p)$. Therefore, by \Cref{lem:sampling_lemma}, there exists $|S| = O({1\over \epsilon^2}(\dim(F,P)+\log{1\over \delta}))$ such that with probability at least $1-\delta$, $\forall x\in \mathcal{X}$: 
\begin{align}\label{eq:DRCC proof 1}
\Delta \coloneqq \big|\sum_{p\in P}f_x(p) - \sum_{q\in S}u'_q f_x(q) \big| \leq \epsilon(\sum_{p\in P}m_p)(\max_{p\in P}{f_x(p)\over m_p}).
\end{align}
The righthand side of (\ref{eq:DRCC proof 1}) equals the righthand side of (\ref{eq:DRCC approx error}) as $\sum_{p\in P}m_p = \sum_{j=1}^n c(P_j, B_j^{k_j})$ and ${f_x(p)\over m_p}\leq 2\rho$. 

We will show that the lefthand side of (\ref{eq:DRCC proof 1}) also equals the lefthand side of (\ref{eq:DRCC approx error}). Specifically,  
\begin{align}
\sum_{p\in P} f_x(p) = &\sum_{p\in P}w_p \cost(p,x) - \sum_{b\in B}\cost(b,x)\sum_{p\in P_b}w_p \nonumber \\
&+ \rho \sum_{p\in P} m_p,
\end{align}
and since $u'_q w_q=u_q$ for each $q\in S$ (line~\ref{DRCC: set sample weight} in Algorithm~\ref{Alg:distributed RCC}),
\begin{align}
&\sum_{q\in S}u'_q f_x(q) = \sum_{q\in S}u_q \cost(q,x) \nonumber\\
&-\sum_{b\in B}\cost(b,x)\sum_{q\in P_b\cap S}u_q 
+ \rho \sum_{q\in S}u_q \dist(q,b_q).
\end{align}
Since $\sum_{q\in S}u_q \dist(q,b_q) = \sum_{p\in P}m_p$, 
\begin{align}
\Delta &= \Big|\sum_{p\in P}w_p \cost(p,x) - \sum_{q\in S}u_q \cost(q,x) \nonumber\\
&\hspace{1em} - \sum_{b\in B}\cost(b,x)(\sum_{p\in P_b} w_p - \sum_{q\in P_b\cap S}u_q) \Big|    \nonumber \\
&= \big|\sum_{p\in P}w_p \cost(p,x) -  \sum_{q\in S\cup B}u_q \cost(q,x)\big|,
\end{align}
as $u_b = \sum_{p\in P_b} w_p - \sum_{q\in P_b\cap S}u_q$ (line~\ref{DRCC: set center weight} in Algorithm~\ref{Alg:distributed RCC}).
\end{proof}

\appendixnumbering{B}
\section*{Appendix B: Analysis of Lipschitz Constant}

$\bullet$ For MEB, $\cost(p,x)= \dist(p,x)$, where $x\in \mathbb{R}^d$ denotes the center of the enclosing ball. For any data points $p,p'\in \mathbb{R}^d$, by triangle inequality, we have:
\begin{align}
|\dist(p,x)-\dist(p',x)| \leq \dist(p,p'). 
\end{align}
Hence, its cost function is 1-Lipschitz-continuous ($\rho=1$).

$\bullet$ For $k$-median, $\cost(p,x) = \min_{q\in x}\dist(p,q)$, where $x\subset \mathbb{R}^d$ denotes the set of $k$ centers. For any data points $p,p'\in \mathbb{R}^d$, \emph{without loss of generality (WLOG)}, suppose $\min_{q\in x}\dist(p,q) \geq \min_{q\in x}\dist(p',q) = \dist(p',q')$ for some $q'\in x$. Then
\begin{align}
&|\min_{q\in x}\dist(p,q) -  \min_{q\in x}\dist(p',q)|\nonumber\\
& =  \min_{q\in x}\dist(p,q)  -  \dist(p',q') \nonumber\\
&\leq \dist(p,q') - \dist(p',q') \leq \dist(p,p').
\end{align}
Hence, $\rho=1$ for $k$-median. 

$\bullet$ For $k$-means, $\cost(p,x) = \min_{q\in x}\dist(p,q)^2$, where $x$ denotes the set of $k$ centers. Similar to the above, for any data points $p,p'\in \mathbb{R}^d$,  suppose WLOG that $\min_{q\in x}\dist(p,q) \geq \min_{q\in x}\dist(p',q) = \dist(p',q')$ for some $q'\in x$. Then
\begin{align}
&|\min_{q\in x}\dist(p,q)^2 - \min_{q\in x}\dist(p',q)^2| \nonumber \\
&\leq \dist(p,q')^2 - \dist(p',q')^2 \nonumber\\
& = (\dist(p,q')+\dist(p',q'))(\dist(p,q') - \dist(p',q')) \nonumber \\
&\leq 2\Delta \cdot \dist(p,p'),
\end{align}
where $\Delta$ is the diameter of the sample space (assuming that the centers also reside in the sample space). Hence, $\rho=2\Delta$ for $k$-means. 

$\bullet$ For PCA, $\cost(p,x) = \dist(p,xp)^2$, where $x=W W^T$ is the projection matrix, and $W$ is the matrix consisting of the first $l$ ($l<d$) principle components as columns. For any data points $p,p'\in \mathbb{R}^d$, assuming WLOG that $\dist(p, x p) \geq \dist(p',x p')$, we have
\begin{align}
&|\dist(p, xp)^2 - \dist(p',x p')^2| \nonumber \\
& = (\dist(p, xp)+\dist(p',x p'))  \cdot (\dist(p, xp)-\dist(p',x p')). \label{eq:PCA analysis}
\end{align}
The first factor in (\ref{eq:PCA analysis}) is upper-bounded by $2\Delta$ ($\Delta$: diameter of sample space), as long as the projections $xp,\: xp'$ reside in the sample space. The second factor is upper-bounded by 
\begin{align}
&\hspace{-3em} \dist(p, x p') -\dist(p',x p') + \dist(xp', xp) \nonumber\\
&\leq \dist(p,p') + ||x||_2 \cdot \dist(p,p'),
\end{align}
which is due to triangle inequality and the sub-multiplicative property of $\ell$-2 norm, i.e., $\dist(xp', xp) = ||x(p'-p)||_2\leq ||x||_2 \cdot ||p'-p||_2$. As the principle components are mutually orthogonal unit vectors, we have $||x||_2\leq l$. Plugging these into (\ref{eq:PCA analysis}) gives
\begin{align}
|\dist(p, xp)^2 - \dist(p',x p')^2| 
\leq 2\Delta (l+1) \dist(p,p'),
\end{align}
i.e., $\rho=2\Delta(l+1)$ for PCA. 

$\bullet$ For SVM, the cost function is defined as: $\cost(p,x) = \max(0,1-p_d(p_{1:d-1}^T x_{1:d-1}+x_d))$, where $p_{1:d-1}\in \mathbb{R}^{d-1}$ denotes the numerical portion of a data point $p$, and $p_d\in \{1, -1\}$ denotes its label. Consider two points $p,\: p'$ that are identical with each other except the label, i.e., $p_{1:d-1}=p'_{1:d-1}$ and $p_d = - p'_d$. Suppose that $p_{1:d-1}^T x_{1:d-1}+x_d < -1$, $p_d = 1$, and $p'_d = -1$. Then we have $\dist(p,p') = 2$, $\cost(p',x)=0$, and $\cost(p,x) = 1-(p_{1:d-1}^T x_{1:d-1}+x_d)$. As $\cost(p,x)$ for SVM is unbounded in general, the ratio
\begin{align}
{|\cost(p,x)-\cost(p',x)|\over \dist(p,p')} = {1\over 2}\big[1-(p_{1:d-1}^T x_{1:d-1}+x_d) \big]    
\end{align}
is unbounded. Therefore, $\rho = \infty$ for SVM. 

\appendixnumbering{C}
\section*{Appendix C: Analysis of Dimension of Function Space}

\begin{definition}[\cite{Balcan13NIPS}]\label{def:dimFP}
Let $F\coloneqq \{f_x:\: x\in \mathcal{X}\}$, where each element $f_x: P\to\mathbb{R}_{\geq 0}$ is a function from a set $P$ to nonnegative real numbers. Define $B(x,r)\coloneqq \{p\in P:\: f_x(p)\leq r\}$. The \emph{dimension of the function space $\dim(F,P)$} is the smallest integer $m$ such that 
\begin{align}
|\{S\cap B(x,r):\: x\in \mathcal{X},\: r\geq 0\}| \leq |S|^m,~~~ \forall S\subseteq P.    
\end{align} 
\end{definition}

The dimension of function space is related to the \emph{Vapnik-Chervonenkis (VC) dimension}, defined as follows. We refer to $(P, \mathcal{R})$ as a \emph{range space} if $P$ is a set of points and $\mathcal{R}$ is a family of subsets of $P$. 

\begin{definition}[VC dimension \cite{Har-Peled:11book}]\label{def:VC_dimension}
The VC dimension of a range space $(P,\mathcal{R})$, denoted by $\dVC(P,\mathcal{R})$, 
is the maximum cardinality of a set $S\subseteq P$ that is \emph{shattered} by $\mathcal{R}$, i.e., $\{S\cap R:\: R\in\mathcal{R}\}$ contains all the subsets of $S$. 
\end{definition}

\begin{lemma}[Corollary~5.2.3 \cite{Har-Peled:11book}]\label{lem:dimFP_bound}
If $(P,\mathcal{R})$ is a range space of VC dimension $m$, then for every $S\subseteq P$, we have $|\{S\cap R:\: R\in \mathcal{R}\}|\leq |S|^m$.
\end{lemma}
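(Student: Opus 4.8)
The plan is to recognize \Cref{lem:dimFP_bound} as a Sauer--Shelah type growth-function bound and to prove it in two stages: first establish the sharp combinatorial estimate $|\{S\cap R:\,R\in\mathcal{R}\}|\le \sum_{i=0}^{m}\binom{|S|}{i}$ (Sauer's lemma), and then bound the binomial sum by $|S|^m$ through an elementary inequality. Throughout I would write $n=|S|$ and let $\mathcal{F}=\{S\cap R:\,R\in\mathcal{R}\}$ denote the trace of the range space on $S$. The fact I would use repeatedly is that a subset $T\subseteq S$ is shattered by $\mathcal{R}$ if and only if it is shattered by $\mathcal{F}$, so the hypothesis $\dVC(P,\mathcal{R})=m$ forces $\mathcal{F}$ to shatter no subset of $S$ of size $m+1$.

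For Sauer's lemma I would argue by induction on $n$, allowing $m$ to vary. The base cases $n=0$ and $m=0$ are immediate, since a family of VC dimension $0$ contains at most one set. For the inductive step, fix a point $x\in S$, set $S'=S\setminus\{x\}$, and split $\mathcal{F}$ using the projection $A\mapsto A\setminus\{x\}$. Two derived families on $S'$ carry the count: the projection $\mathcal{F}'=\{A\setminus\{x\}:\,A\in\mathcal{F}\}$, and the ``doubled'' family $\mathcal{F}''=\{A\subseteq S':\,A\in\mathcal{F}\text{ and }A\cup\{x\}\in\mathcal{F}\}$. Since the projection is at most two-to-one and a fiber has size two exactly for the sets counted by $\mathcal{F}''$, one obtains the exact decomposition $|\mathcal{F}|=|\mathcal{F}'|+|\mathcal{F}''|$. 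The family $\mathcal{F}'$ is just the trace on $S'$, hence still has VC dimension at most $m$, and the induction hypothesis gives $|\mathcal{F}'|\le\sum_{i=0}^m\binom{n-1}{i}$.

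The crux---and the step I expect to be the main obstacle to phrase cleanly---is the VC-dimension drop for the doubled family: $\mathcal{F}''$ has VC dimension at most $m-1$. I would prove this by contradiction. If $\mathcal{F}''$ shattered some $T\subseteq S'$ with $|T|=m$, then for every $B\subseteq T$ there is $A\in\mathcal{F}''$ with $A\cap T=B$, and by construction both $A$ and $A\cup\{x\}$ lie in $\mathcal{F}$; their traces on $T\cup\{x\}$ realize $B$ and $B\cup\{x\}$, so as $B$ ranges over all subsets of $T$ the family $\mathcal{F}$ shatters $T\cup\{x\}$, a set of size $m+1$, contradicting $\dVC(P,\mathcal{R})=m$. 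Hence $|\mathcal{F}''|\le\sum_{i=0}^{m-1}\binom{n-1}{i}$ by induction, and Pascal's identity collapses the two bounds into $|\mathcal{F}|\le\sum_{i=0}^m\binom{n}{i}$.

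Finally I would convert the binomial bound into the stated form through the elementary estimate $\sum_{i=0}^{m}\binom{n}{i}\le n^{m}$, valid in the relevant parameter range, checking the few small-parameter boundary cases directly; combined with Sauer's lemma this yields $|\{S\cap R:\,R\in\mathcal{R}\}|\le|S|^m$ for every $S\subseteq P$, as claimed. The only genuinely substantive ingredient is the shattering argument for $\mathcal{F}''$; the projection decomposition and the closing binomial-to-polynomial estimate are routine once that is in place.
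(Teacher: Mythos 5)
The paper contains no proof of this lemma to compare against: it is imported verbatim as Corollary~5.2.3 of \cite{Har-Peled:11book}, and Appendix~C uses it as a black box. What you have written is the canonical argument sitting behind that citation, and its combinatorial core is correct: the reduction to the trace family $\mathcal{F}$, the exact decomposition $|\mathcal{F}|=|\mathcal{F}'|+|\mathcal{F}''|$ via the at-most-two-to-one projection $A\mapsto A\setminus\{x\}$, the shattering argument showing the doubled family $\mathcal{F}''$ has VC dimension at most $m-1$, and the Pascal-identity collapse to $\sum_{i=0}^{m}\binom{n}{i}$ are all exactly the standard proof of Sauer's lemma, correctly executed.

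The one genuine gap is the closing step $\sum_{i=0}^{m}\binom{n}{i}\le n^{m}$, which you defer to ``checking the few small-parameter boundary cases directly.'' Those cases do not check out: for $m=1$ the left side is $n+1>n$ for \emph{every} $n$, and for $n=1$, $m\ge 1$ it is $2>1$. These are not artifacts of a lossy estimate---the lemma as literally stated fails there. For instance, the half-lines $\{(-\infty,a]:a\in\mathbb{R}\}$ on the real line form a range space of VC dimension $1$ whose trace on $n$ points has $n+1$ sets. The inequality, and hence the lemma, does hold for $n\ge 2$ and $m\ge 2$: for $m=2$ it reduces to $(n-2)(n+1)\ge 0$; for $3\le m\le n$ one has $\sum_{i=0}^{m}\binom{n}{i}\le(en/m)^{m}\le n^{m}$ since $m\ge e$; and for $m>n\ge 2$ the left side is $2^{n}\le n^{n}\le n^{m}$. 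So to make your proof complete you must either restrict to $n,m\ge 2$---the convention tacitly in force in \cite{Har-Peled:11book} and in \Cref{def:dimFP}, and harmless for this paper, where the lemma only feeds the bound $\dim(F,P)\le \dVC(P,\mathcal{R})$ up to such degenerate parameters---or stop at the sharp Sauer bound $\sum_{i=0}^{m}\binom{n}{i}$, which you have already fully established, and record the polynomial form only in the stated range.
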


By \Cref{def:dimFP} and \Cref{lem:dimFP_bound}, we have $\dim(F,P) \leq \dVC(P,\mathcal{R})$ for $\mathcal{R} \coloneqq \{B(x,r):\: x\in\mathcal{X}, r\geq 0\}$. This allows us to bound the dimension of a function space by bounding the VC dimension of the corresponding range space. The VC dimension has an intuitive meaning that it is the number of free parameters to uniquely specify a set in $\mathcal{R}$, e.g., the VC dimension of intervals is $2$, the VC dimension of planar disks is $3$, and the VC dimension of half spaces in $\mathbb{R}^d$ is $d+1$ \cite{Har-Peled:11book}. In our case, we conjecture that the VC dimension is $O(d_{\mathcal{X}})$, where $d_{\mathcal{X}}$ is the number of parameters to uniquely specify an $x\in\mathcal{X}$.

\appendixnumbering{D}
\section*{Appendix D: Additional Evaluations}

\subsection{Error Bound} 

In addition to the cost and accuracy, we have also evaluated the relative error in using the coreset to approximate the cost on the original dataset, defined as $|\cost(P, x_S) - \cost(S, x_S)|/\cost(P, x_S)$, where $x_S$ is the model parameter computed on a coreset $S$.  By Definition~\ref{def:epsilon-coreset}, this error should be upper-bounded by $\epsilon$ if $S$ is an $\epsilon$-coreset. 
In Tables~\ref{tab:epsilon, z=1}--\ref{tab:epsilon, z=2}, we show the maximum relative approximation error over all the Monte Carlo runs for MEB, together with the value of $\epsilon$ computed according to (\ref{eq:epsilon - opt, new}). 
Indeed, the error is always upper-bounded by $\epsilon$. We note that $\epsilon$ tends to grow with the dimension of the dataset and can be greater than one. This is because the bound in (\ref{eq:epsilon - opt, new}) is based on the maximum Euclidean distance between a data point and the corresponding coreset point representing it, which tends to grow with the dimension of the dataset. 
Meanwhile, we also note that the bound tends to be loose, and the actual approximation error can be much smaller than $\epsilon$. This is because 
by Definition~\ref{def:epsilon-coreset}, $\epsilon$ needs to upper-bound the relative approximation error for any model parameter $x\in \mathcal{X}$, and may thus be loose at the computed model parameter $x_S$. 
We note that the bounds for the other machine learning problems ($k$-means, PCA, SVM, NN) equal the bound for MEB scaled by their corresponding Lipschitz constants, which can be large or even infinite (see Table~\ref{tab:cost}).  Nevertheless, our  experiments have shown that the proposed coresets can be used to train these models with competitive performances in both cost and accuracy. 

\begin{table}[tb]
\footnotesize
\renewcommand{\arraystretch}{1.3}
\caption{Error bound $\epsilon$ for MEB ($z=1$)} \label{tab:epsilon, z=1}
\centering
\begin{tabular}{r|c|c|c}
  \hline
  dataset & coreset size &  max relative error & $\epsilon$  \\
  \hline
   Fisher's iris & 20  & 0.0053   & 0.1073   \\
  \hline
  Facebook & 80 & 0.0344 & 1.18  \\
  \hline
  Pendigits &  400 & 0.0026 & 2.1112 \\
  \hline
  MNIST & 400 & 0.0024 & 10.74 \\
  \hline
  HAR & 400 & 5.6054e-05 & 3.9212 \\  
  \hline
\end{tabular}
\end{table}
\normalsize

\begin{table}[tb]
\footnotesize
\renewcommand{\arraystretch}{1.3}
\caption{Error bound $\epsilon$ for MEB ($z=2$)} \label{tab:epsilon, z=2}
\centering
\begin{tabular}{r|c|c|c}
  \hline
  dataset & coreset size &  max relative error & $\epsilon$  \\
  \hline
  Fisher's iris & 20 & 1.1896e-05  & 0.1093   \\
  \hline
  Facebook & 80 & 1.9976e-06 & 1.3711  \\
  \hline
  Pendigits & 400 & 4.3876e-05 & 2.0257 \\
  \hline
  MNIST & 400 & 0.0020 & 8.53 \\
  \hline
  HAR &  400  &  4.9972e-07  &  3.5612  \\
  \hline
\end{tabular}
\end{table}
\normalsize

\subsection{Original Machine Learning Performance}

We provide the performance of applying each machine learning algorithm directly to the original dataset in Table~\ref{tab:original cost}. For unsupervised learning problems (MEB, $k$-means, PCA), these were the denominators in evaluating the normalized costs of models learned on coresets. Each machine learning problem is parameterized as in Section~\ref{sec:Performance Evaluation}. 

\begin{table}[t]
{
\footnotesize
\renewcommand{\arraystretch}{1.3}
\caption{Original Machine Learning Performance (cost for MEB, $k$-means, PCA; accuracy for SVM and NN)} \label{tab:original cost}
\centerline{
\begin{tabular}{r|c|c|c|c|c}
  \hline
  problem & Fisher's & Facebook & Pendigits & MNIST & HAR\\
  \hline
MEB       & 2.05  & 7.65    & 18.06   & 90.43 & 60.26 \\
\hline
$k$-means & 84.54 & 853.67  & 2.00e+05 & 4.98e+07 & 2.86e+06 \\
\hline 
PCA       & 1.94  & 197.19  & 910.79  & 1.73e+04 & 4.46e+05 \\
\hline
SVM/NN\footnotemark       & 100\% & 89.36\% & 99.33\% & 87.01\% & 78.01\% \\
  \hline
\end{tabular}
}
}
 \vspace{.5em}
\end{table}
\footnotetext{As mentioned before, for Fisher's iris, Facebook, and Pendigits, we evaluated SVM accuray; for MNIST and HAR, we evaluated NN accuracy. }

\subsection{Additional Evaluations in Distributed Setting}

To evaluate the impact of the coreset size ($N$) and the maximum number of local centers ($K$), we vary each of these parameters while fixing the data distribution scheme as the `uniform' scheme. Figures~\ref{fig:MNIST_N_test}--\ref{fig:HAR_N_test} are the results of varying $N$ under fixed $K$. As expected, all the algorithms benefit from a larger coreset size, but we observe more significant improvements for the distributed algorithms (CDCC and DRCC). 
Figures~\ref{fig:MNIST_K_test}--\ref{fig:HAR_K_test} show the results of varying $K$ under fixed $N$. Although RCC-kmeans and CDCC do not depend on $K$, we still show them as benchmarks. Overall, increasing $K$ improves the quality of the models trained on the coreset generated by DRCC, as it gives DRCC more space to optimize its configuration. Specifically, we see that increasing $K$ from $1$ to $2$ notably improves the performance in most cases, but further increasing $K$ does not bring notable improvements for the unsupervised learning problems. The accuracy of NN, in contrast, keeps improving as $K$ increases. We note that the small dip at $K=2$ in the $k$-means cost is because this parameter leads to $2$ local centers at most nodes, which coincides with the number of global centers we are computing. 

\begin{figure}[tb]
\begin{minipage}{0.24\textwidth}
\centerline{
\includegraphics[width=1.05\textwidth,height=3.7cm]{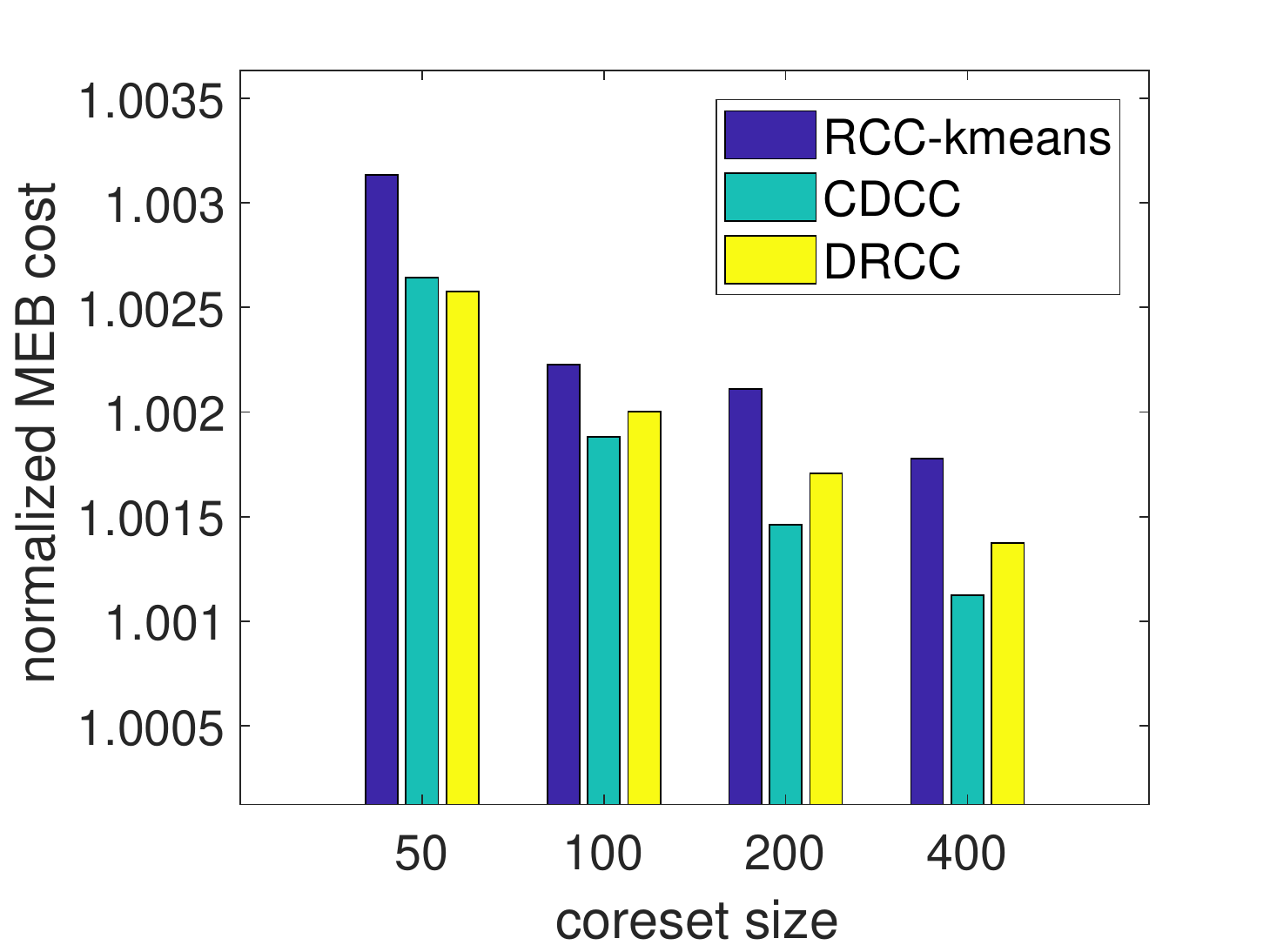}}
\centerline{\scriptsize (a) MEB}
\end{minipage}
\begin{minipage}{0.24\textwidth}
\centerline{
\includegraphics[width=1.05\textwidth,height=3.7cm]{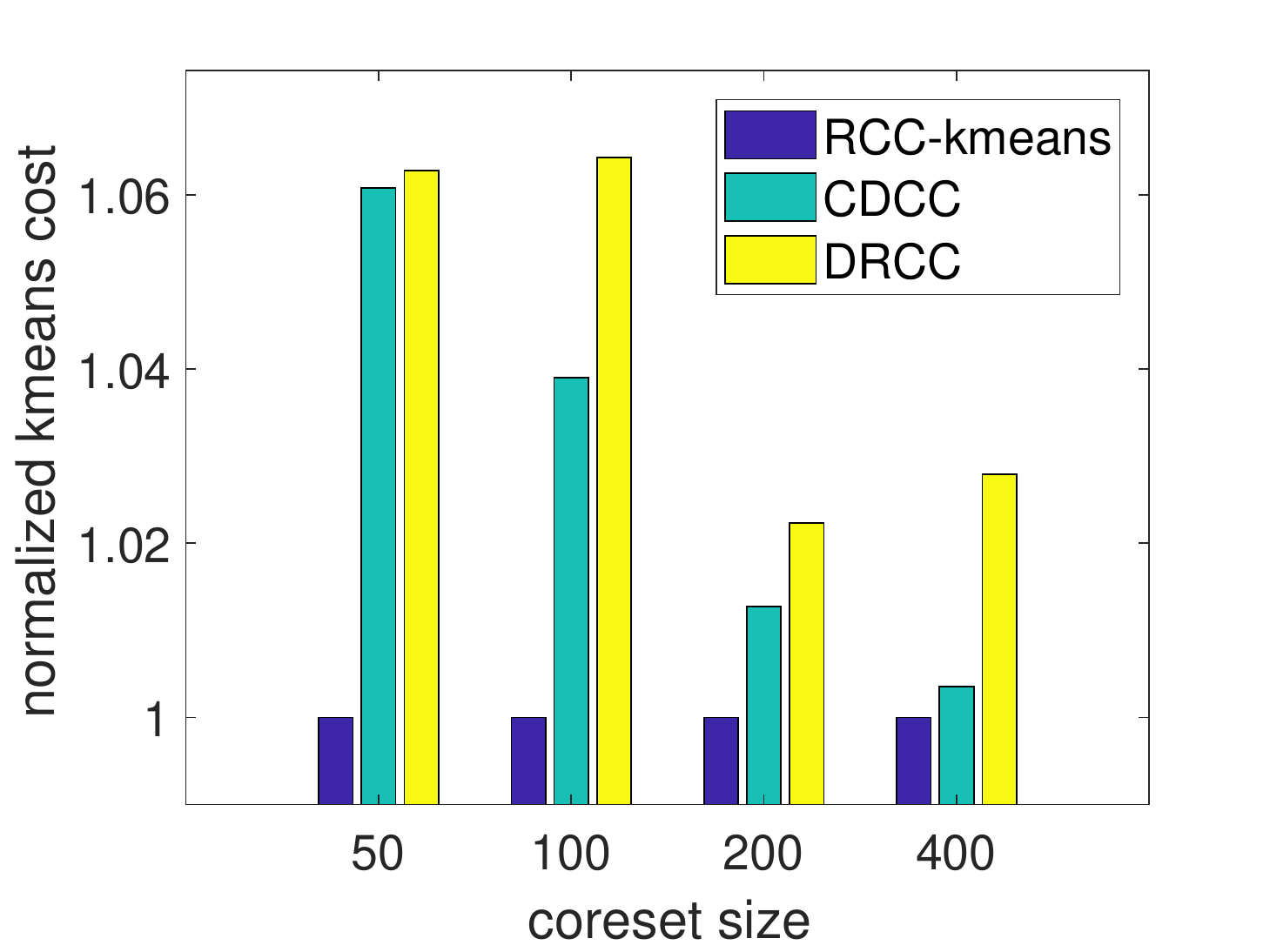}}
\centerline{\scriptsize (b) $k$-means ($k=2$)}
\end{minipage}
  \begin{minipage}{.24\textwidth}
  \centerline{
   \includegraphics[width=1.05\textwidth,height=3.7cm]{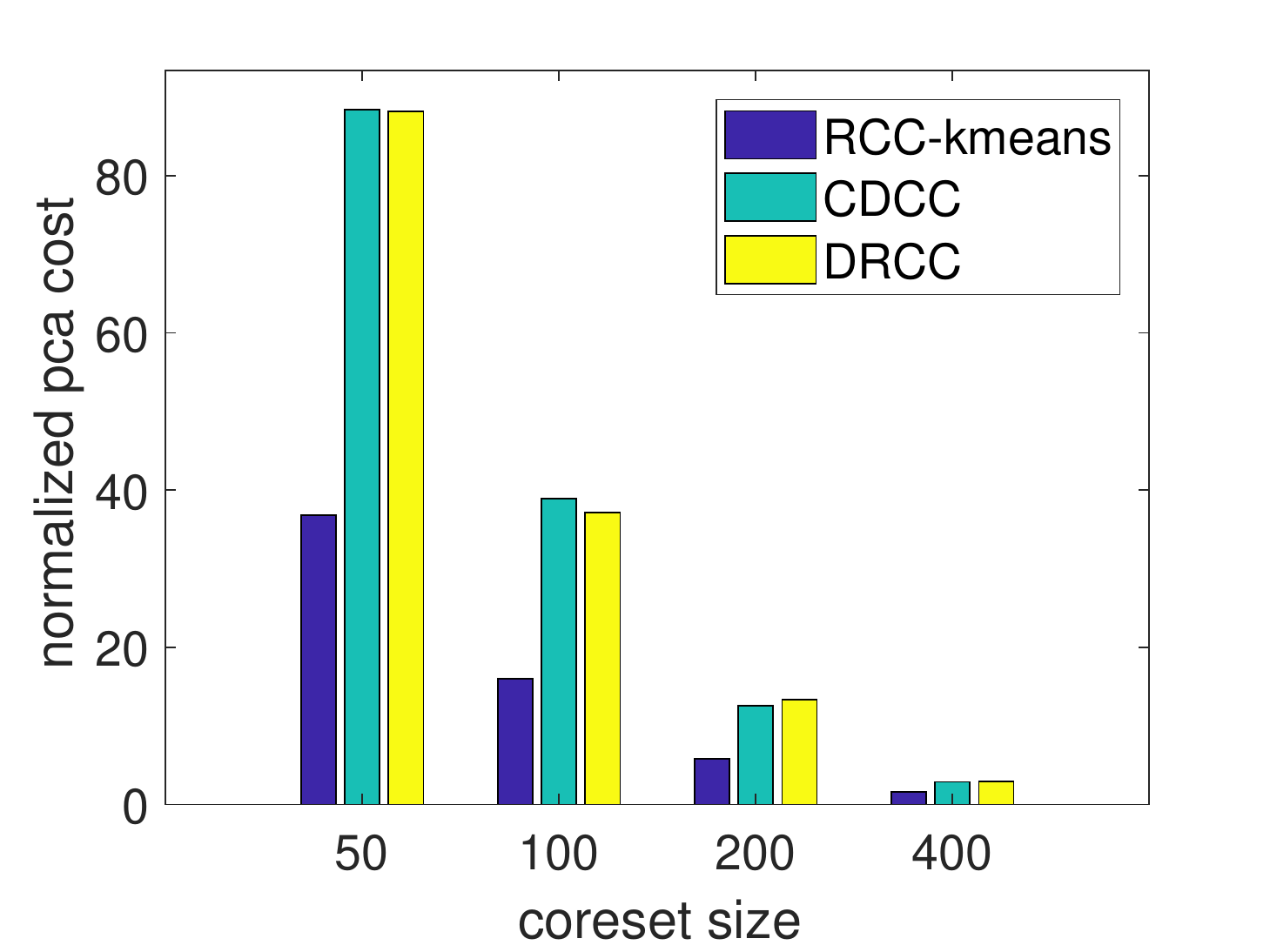}}
    \centerline{\scriptsize (c) PCA (300 components) }
  \end{minipage}
  \begin{minipage}{0.24\textwidth}
    \centerline{
  \includegraphics[width=1.05\textwidth,height=3.7cm]{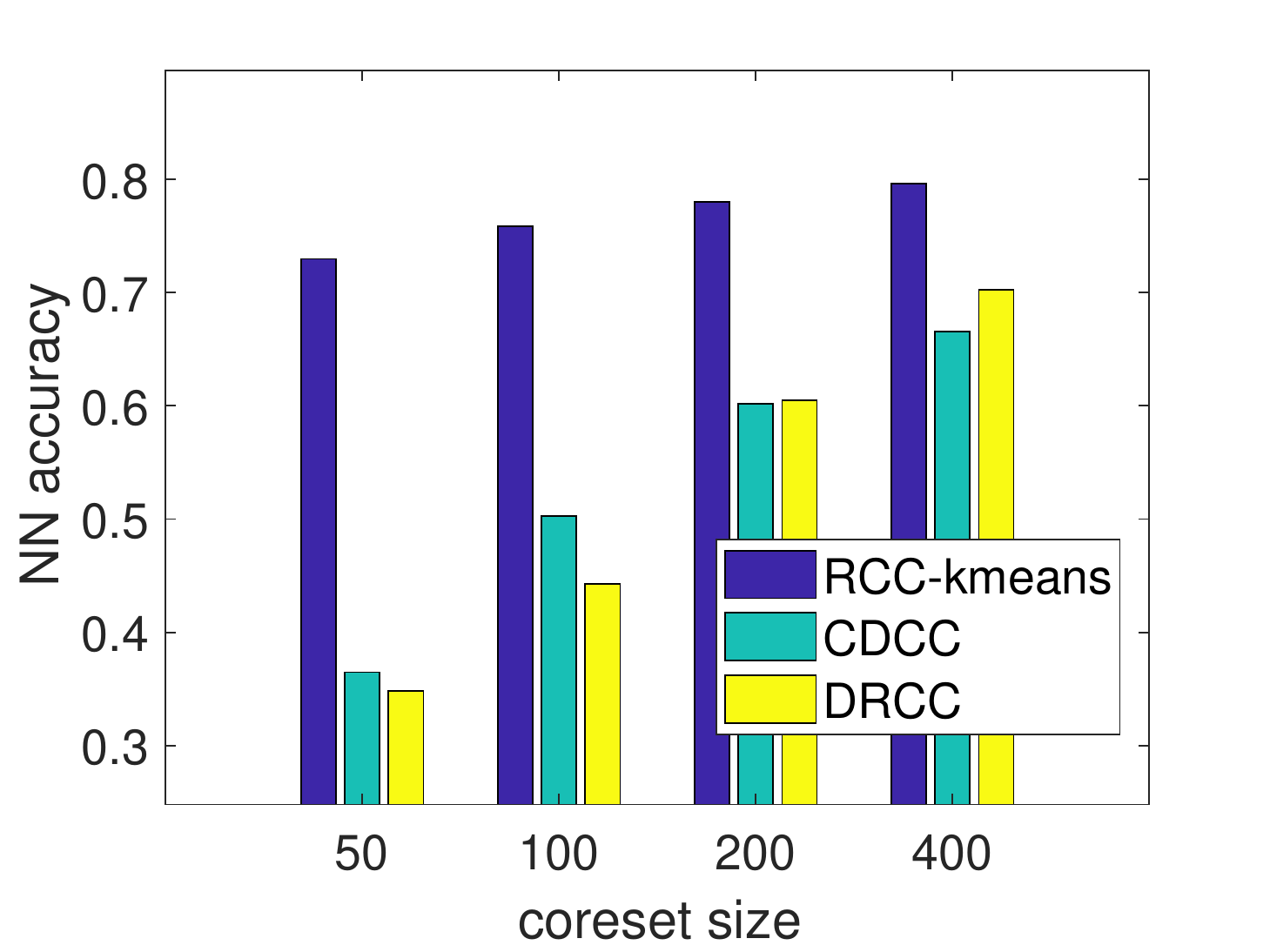}} 
\centerline{\scriptsize (d) NN} 
   \end{minipage}    
\caption{Evaluation on MNIST in distributed setting with varying coreset size ($K=10$). }
\label{fig:MNIST_N_test}
\end{figure}

\begin{figure}[tb]
\begin{minipage}{0.24\textwidth}
\centerline{
\includegraphics[width=1.05\textwidth,height=3.7cm]{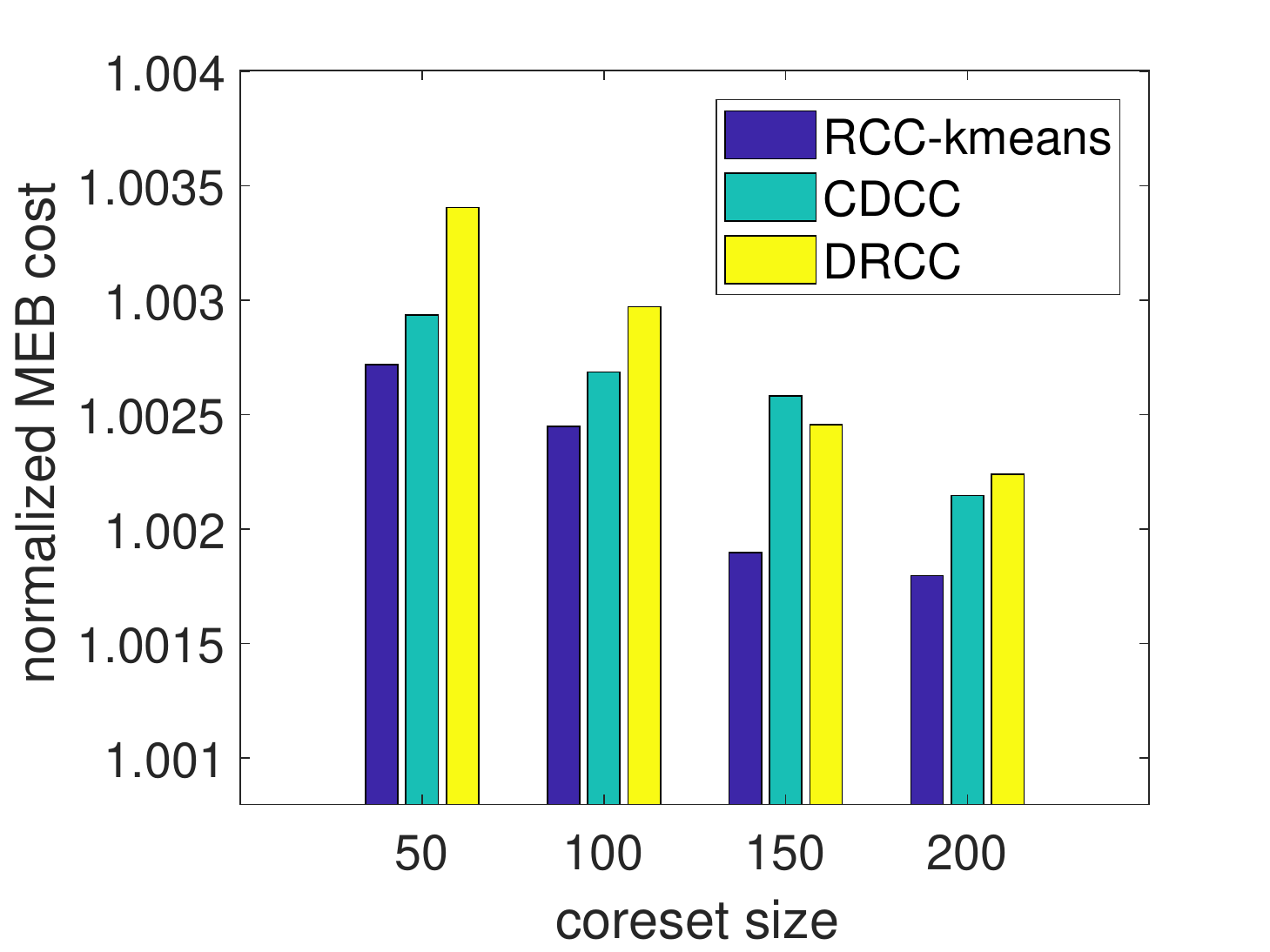}}
\centerline{\scriptsize (a) MEB}
\end{minipage}
\begin{minipage}{0.24\textwidth}
\centerline{
\includegraphics[width=1.05\textwidth,height=3.7cm]{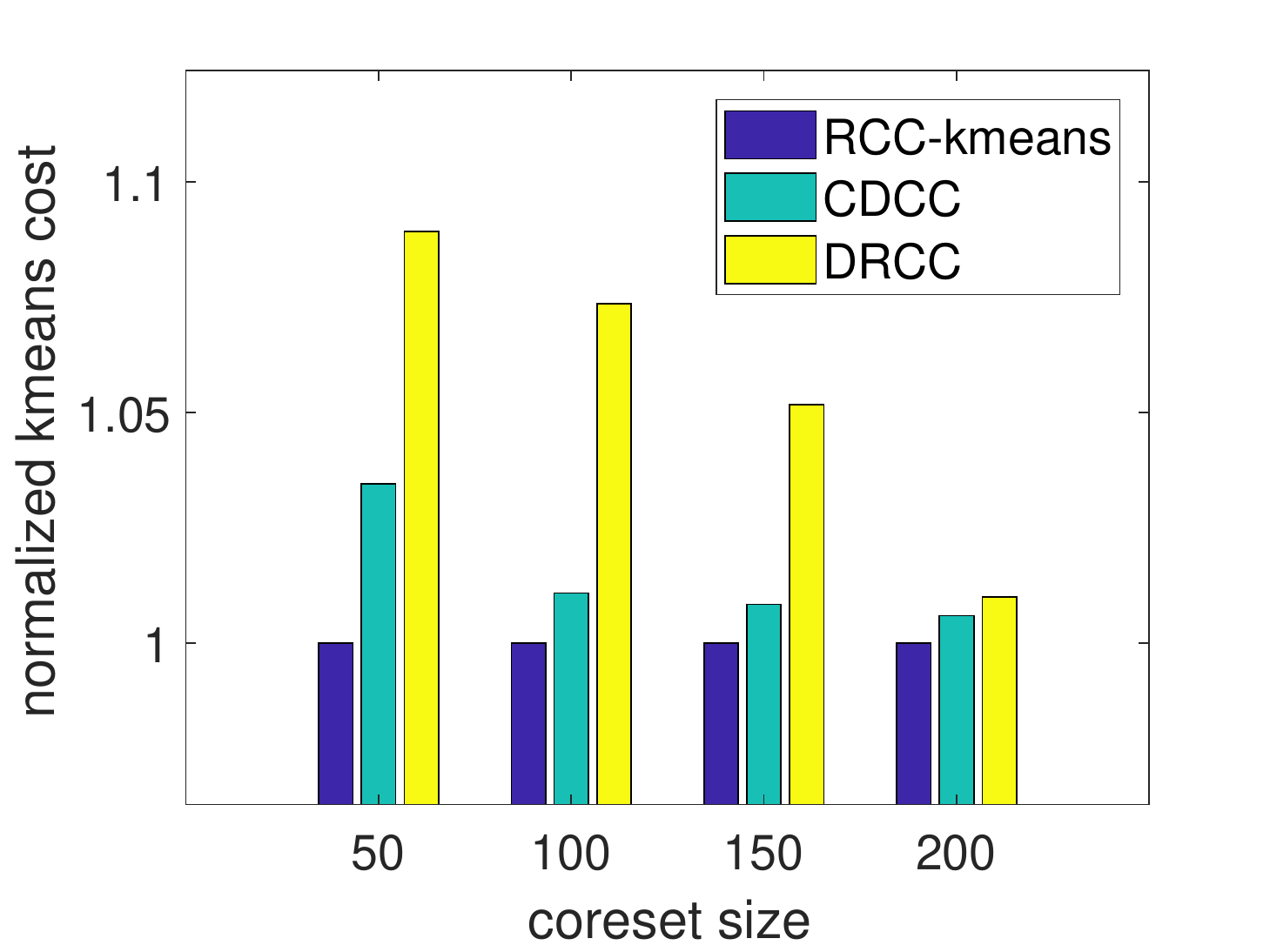}}
\centerline{\scriptsize (b) $k$-means ($k=2$)}
\end{minipage}
  \begin{minipage}{.24\textwidth}
  \centerline{
   \includegraphics[width=1.05\textwidth,height=3.7cm]{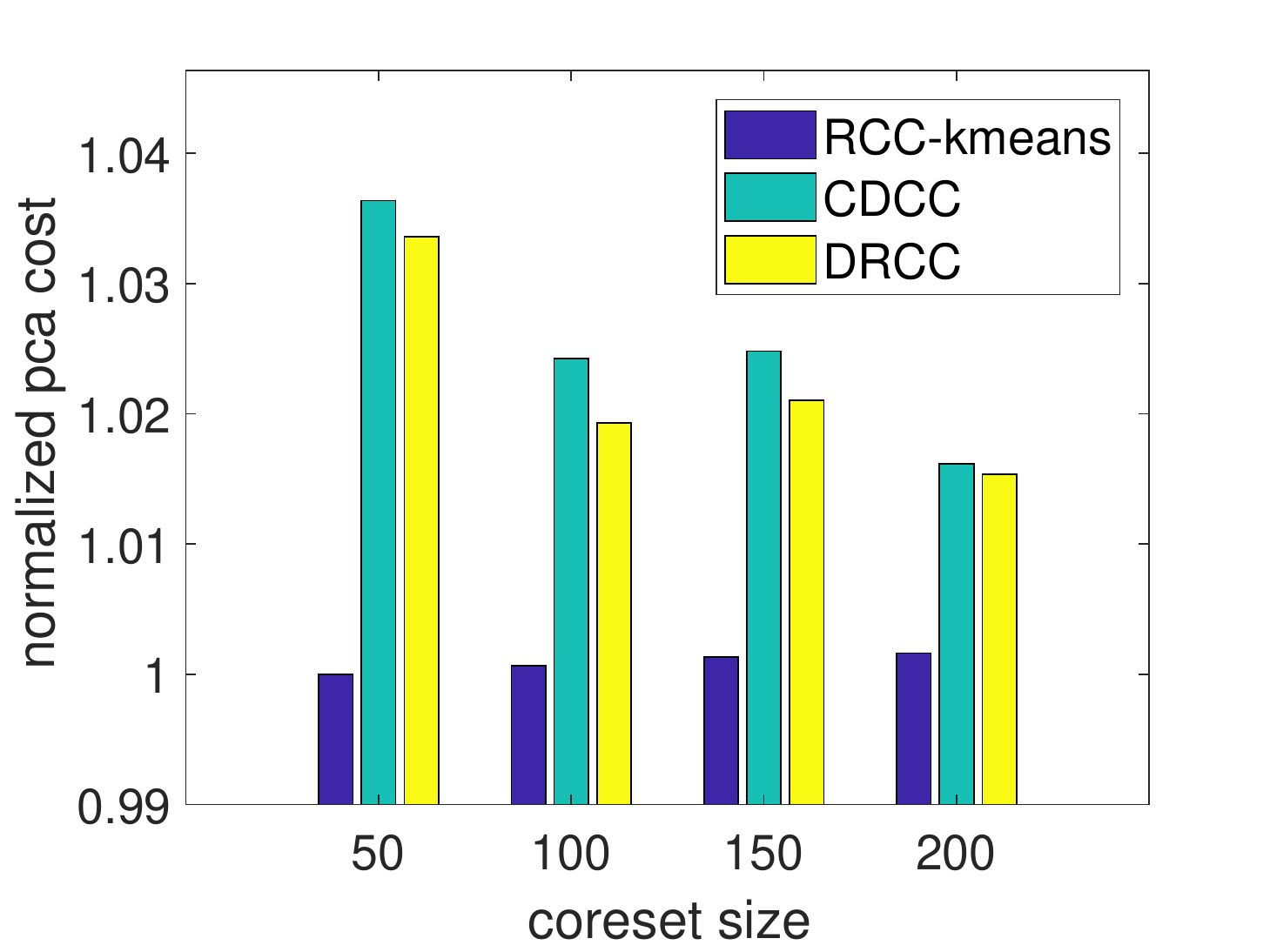}}
    \centerline{\scriptsize (c) PCA (7 components) }
  \end{minipage}
  \begin{minipage}{0.24\textwidth}
    \centerline{
  \includegraphics[width=1.05\textwidth,height=3.7cm]{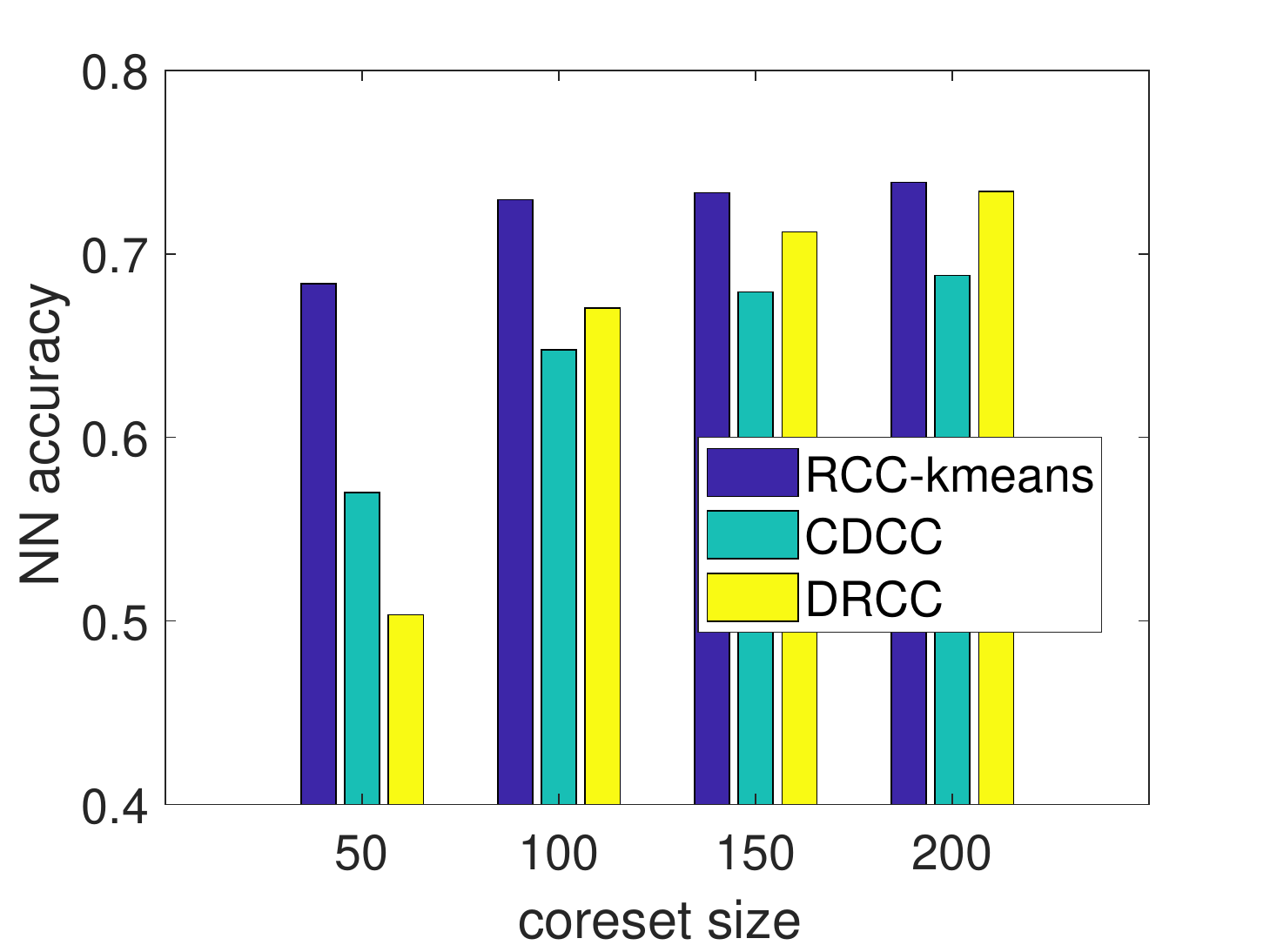}} 
\centerline{\scriptsize (d) NN} 
   \end{minipage}    
\caption{Evaluation on HAR in distributed setting with varying coreset size ($K=10$). }
\label{fig:HAR_N_test}
\end{figure}

\begin{figure}[tb]
\begin{minipage}{0.24\textwidth}
\centerline{
\includegraphics[width=1.05\textwidth,height=3.7cm]{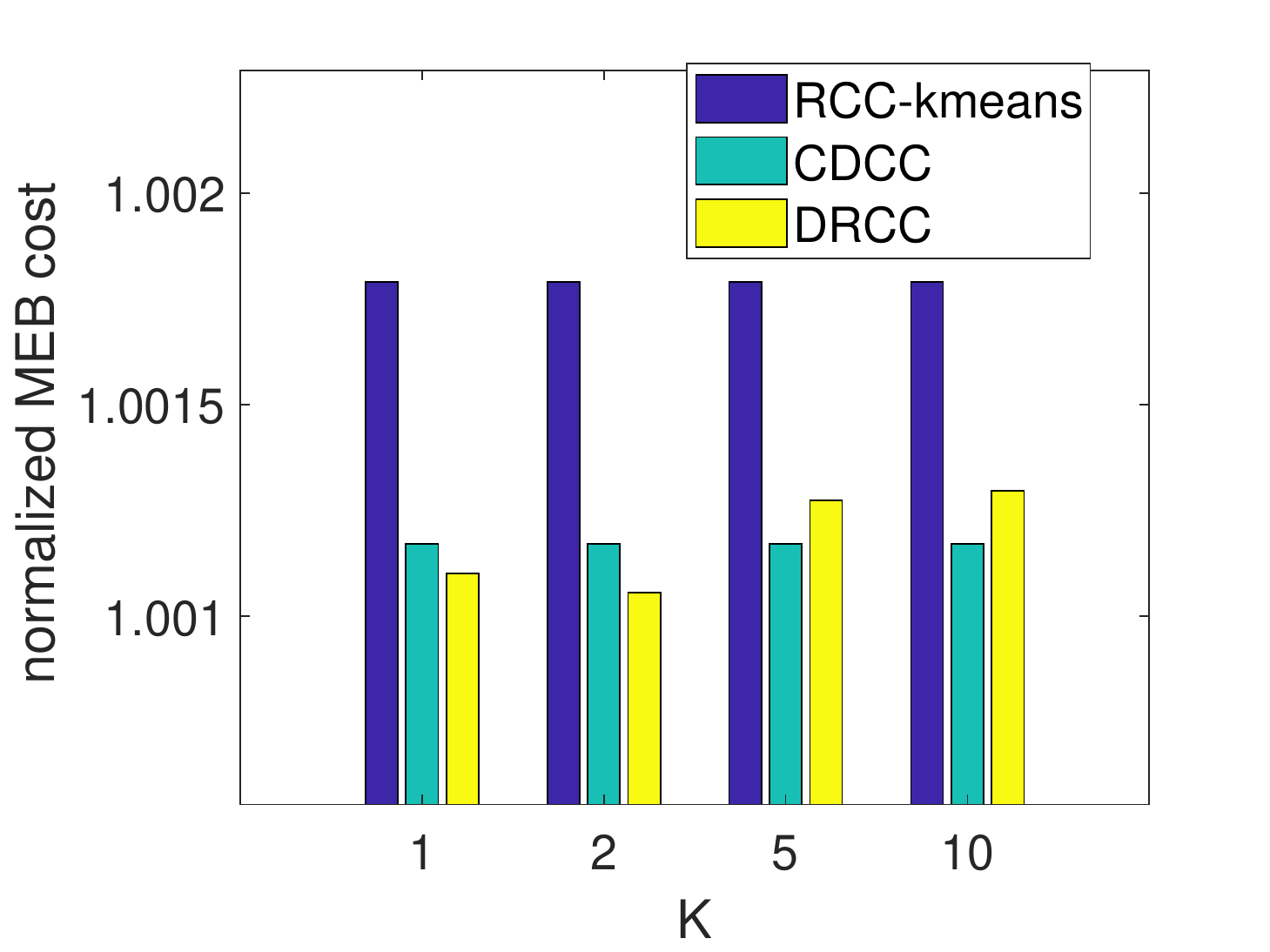}}
\centerline{\scriptsize (a) MEB}
\end{minipage}
\begin{minipage}{0.24\textwidth}
\centerline{
\includegraphics[width=1.05\textwidth,height=3.7cm]{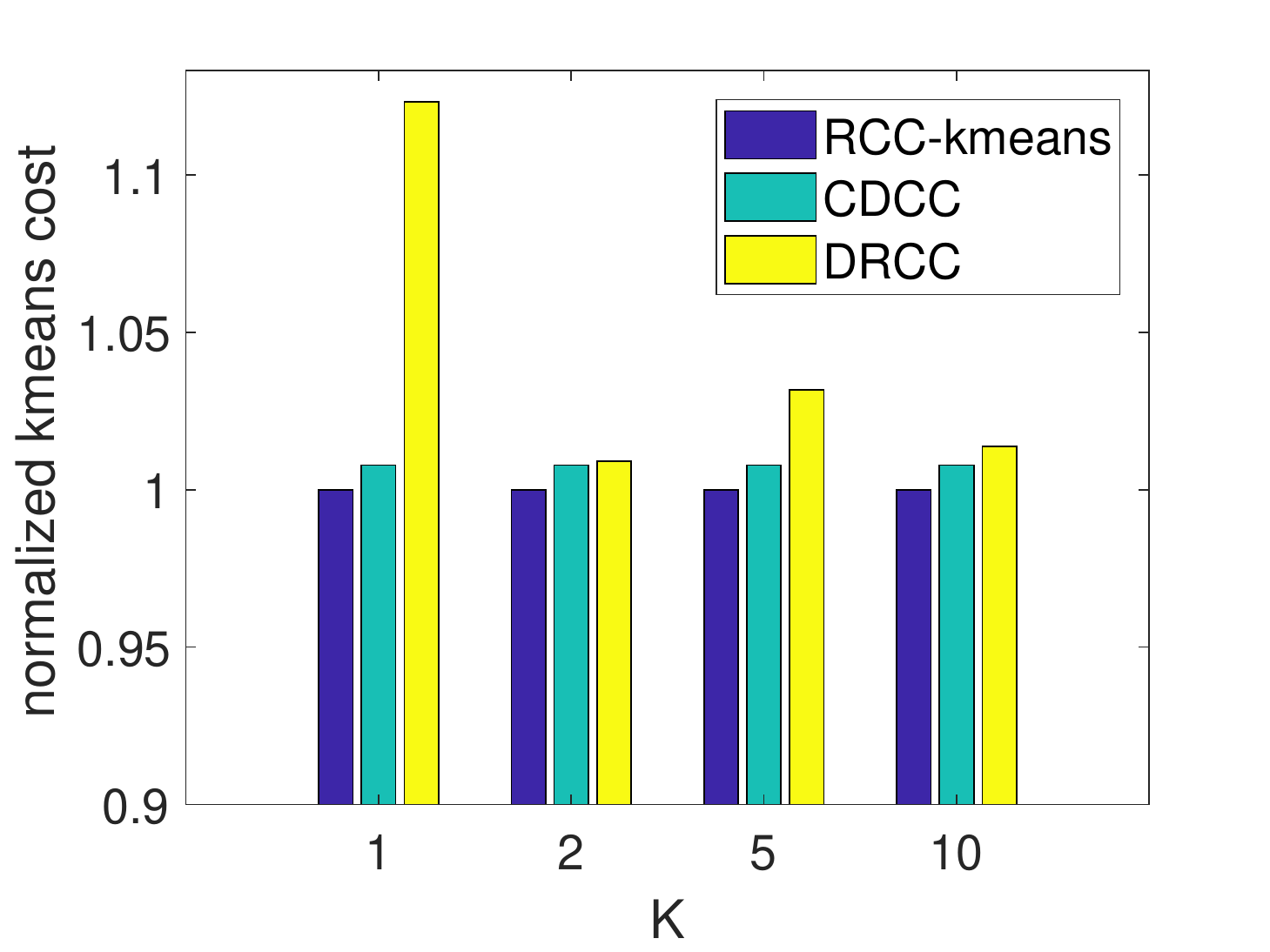}}
\centerline{\scriptsize (b) $k$-means ($k=2$)}
\end{minipage}
  \begin{minipage}{.24\textwidth}
  \centerline{
   \includegraphics[width=1.05\textwidth,height=3.7cm]{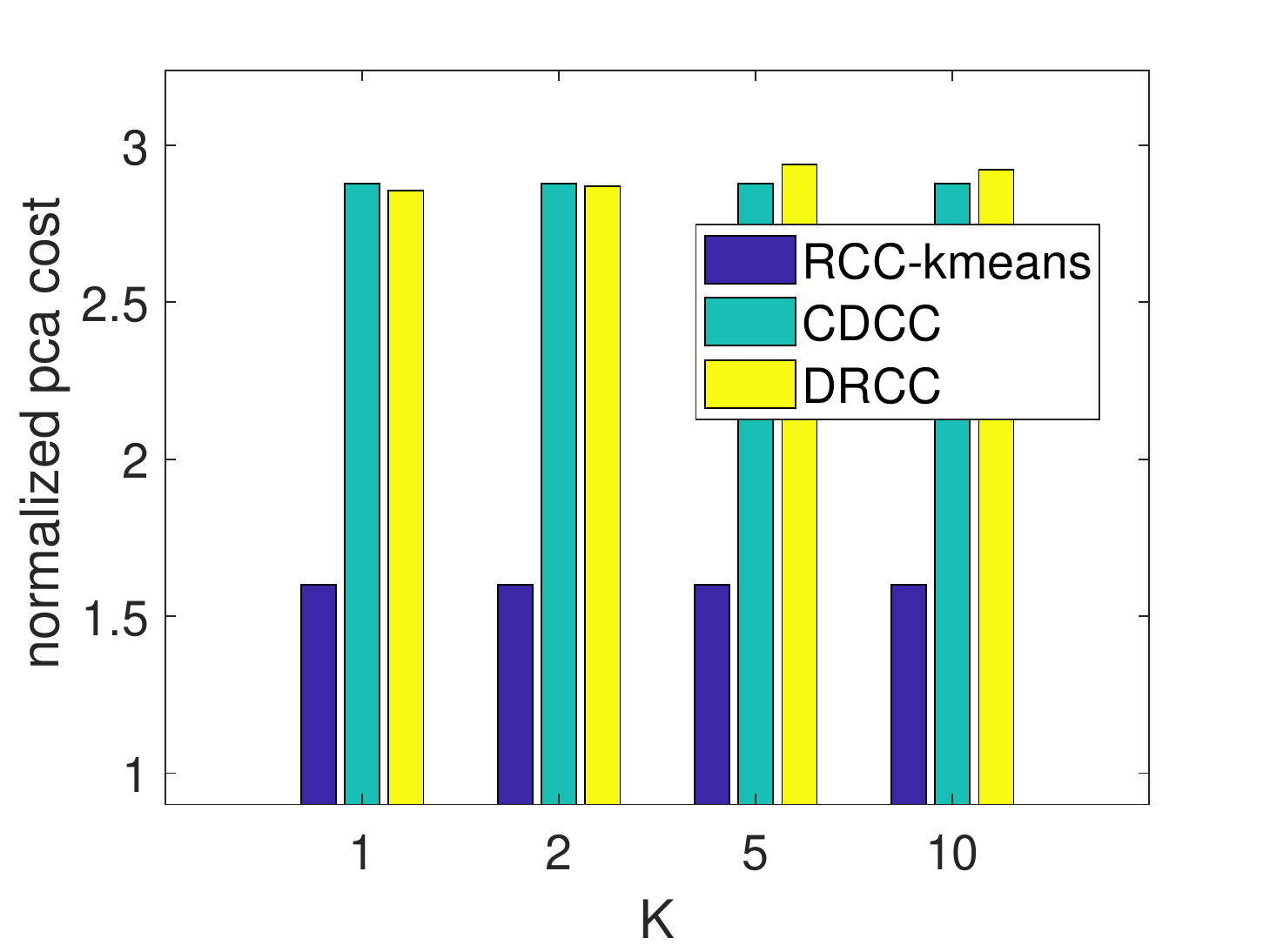}}
    \centerline{\scriptsize (c) PCA (300 components) }
  \end{minipage}
  \begin{minipage}{0.24\textwidth}
    \centerline{
  \includegraphics[width=1.05\textwidth,height=3.7cm]{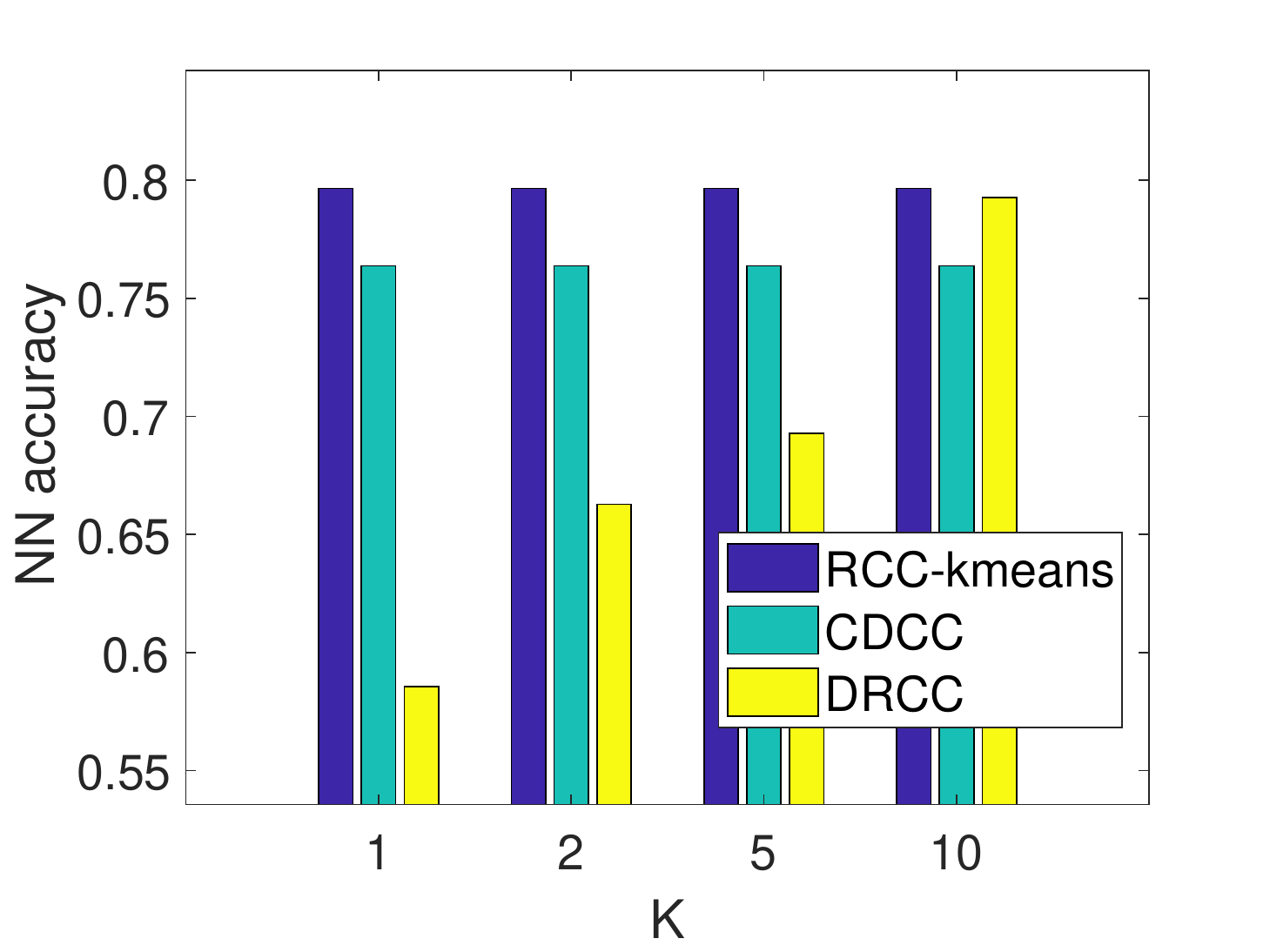}} 
\centerline{\scriptsize (d) NN} 
   \end{minipage}    
\caption{Evaluation on MNIST in distributed setting with varying $K$ ($N=400$). }
\label{fig:MNIST_K_test}
   \vspace{-0.2in}
\end{figure}

\begin{figure}[tb]
\begin{minipage}{0.24\textwidth}
\centerline{
\includegraphics[width=1.05\textwidth,height=3.7cm]{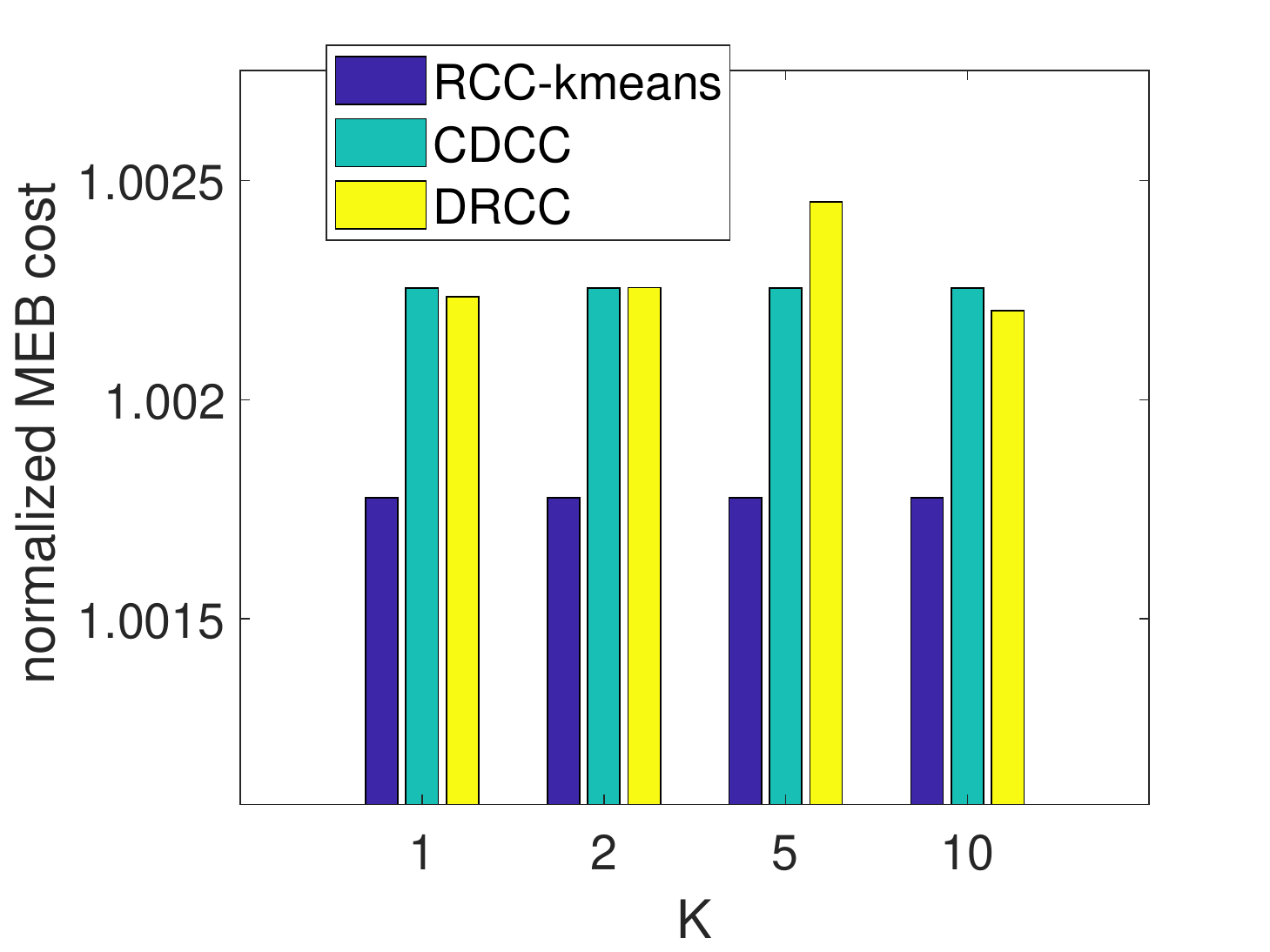}}
\centerline{\scriptsize (a) MEB}
\end{minipage}
\begin{minipage}{0.24\textwidth}
\centerline{
\includegraphics[width=1.05\textwidth,height=3.7cm]{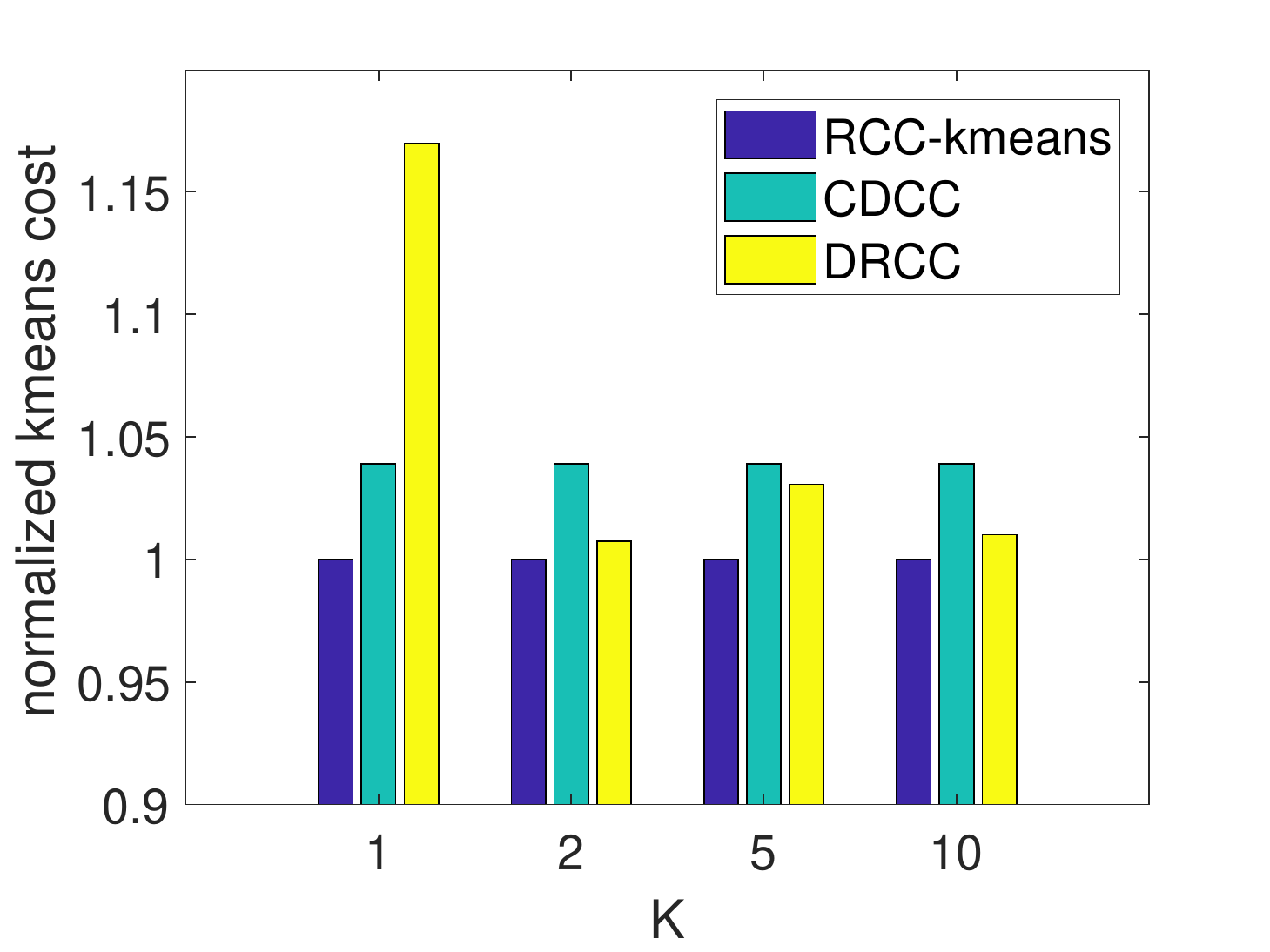}}
\centerline{\scriptsize (b) $k$-means ($k=2$)}
\end{minipage}
  \begin{minipage}{.24\textwidth}
  \centerline{
   \includegraphics[width=1.05\textwidth,height=3.7cm]{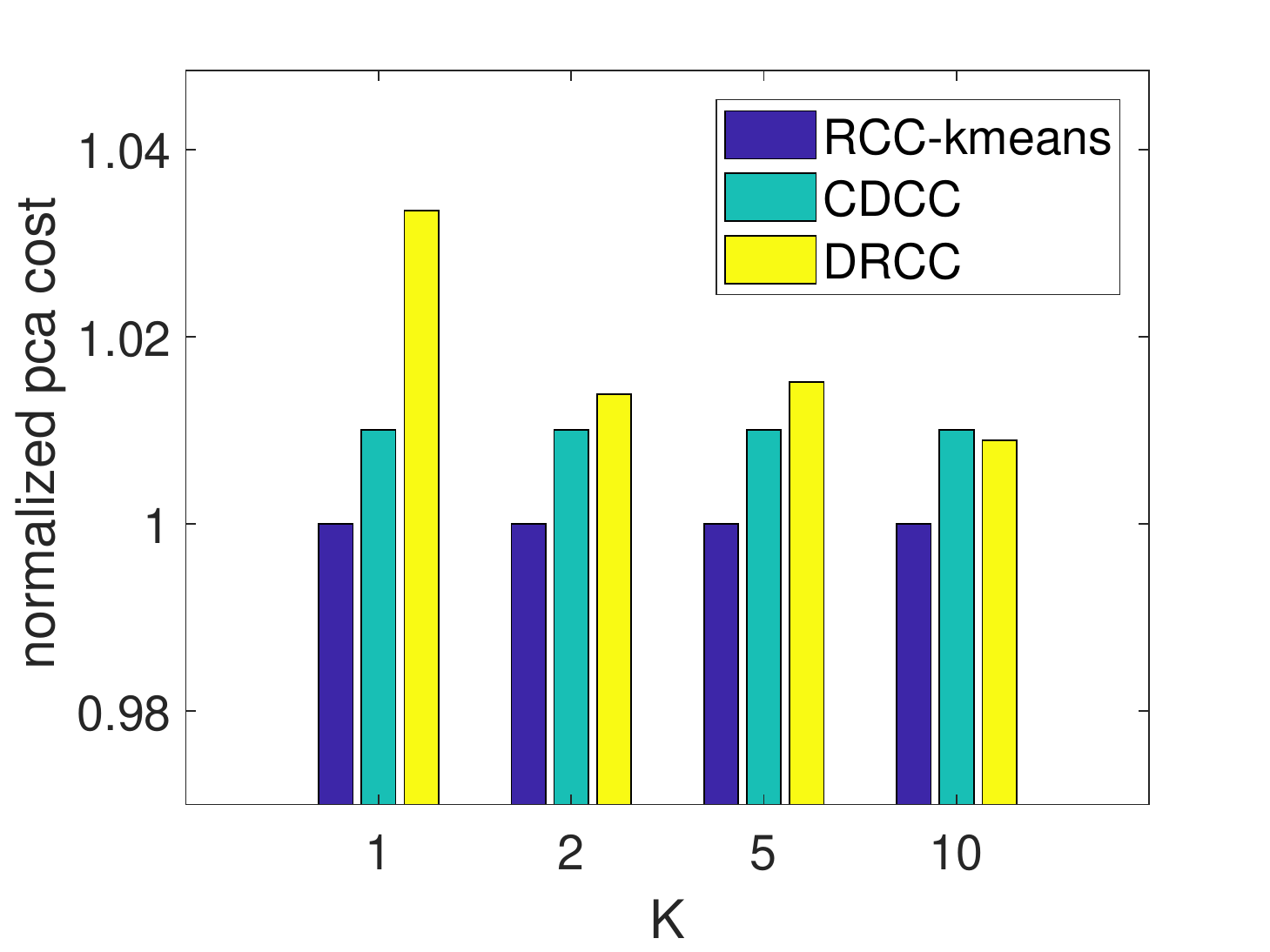}}
    \centerline{\scriptsize (c) PCA (7 components) }
  \end{minipage}
  \begin{minipage}{0.24\textwidth}
    \centerline{
  \includegraphics[width=1.05\textwidth,height=3.7cm]{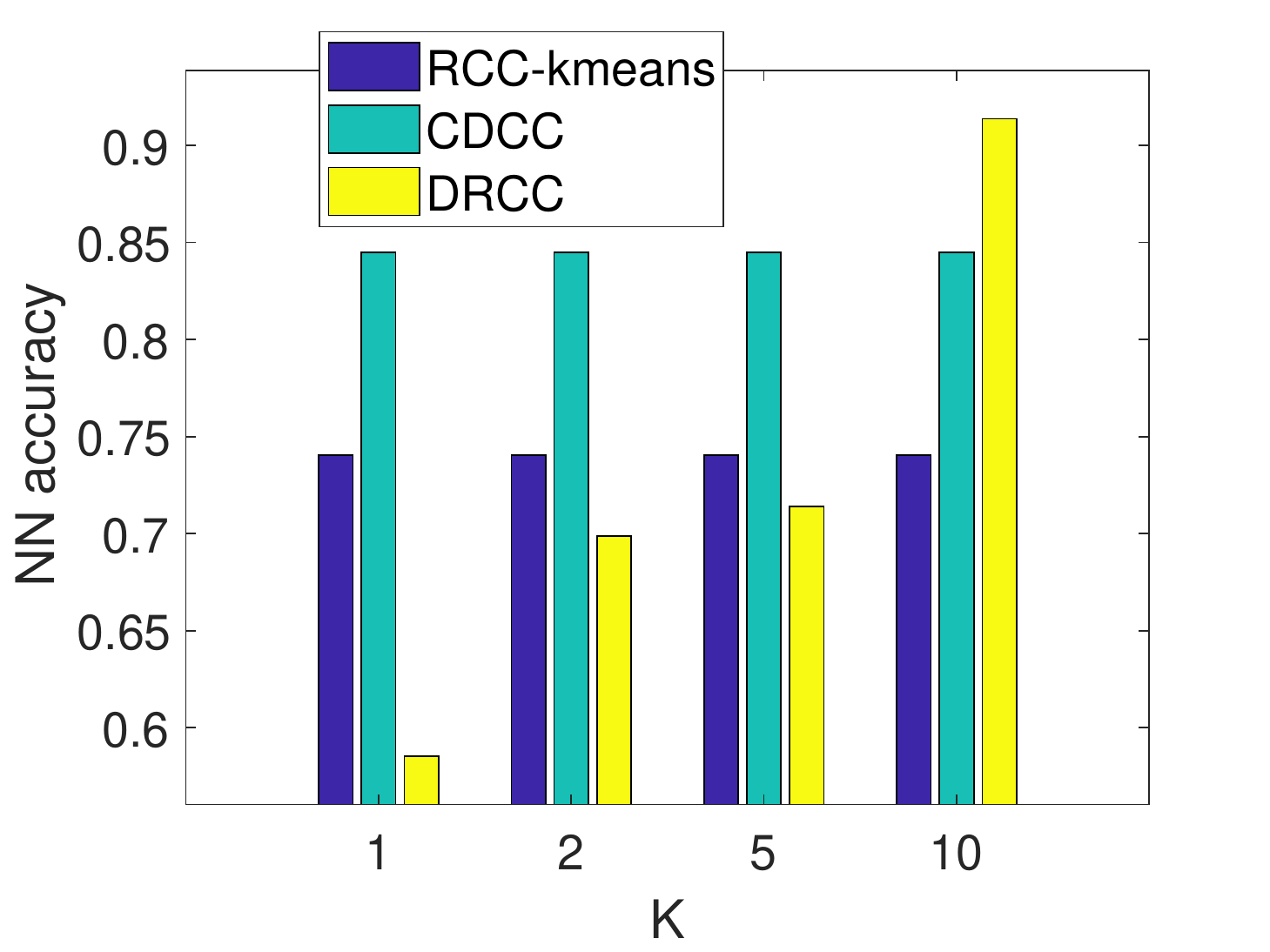}} 
\centerline{\scriptsize (d) NN} 
   \end{minipage}    
\caption{Evaluation on HAR in distributed setting with varying $K$ ($N=200$). }
\label{fig:HAR_K_test}
\end{figure}

\bibliographystyle{IEEEtran}
\bibliography{mybibSimplified}

\begin{IEEEbiography}[{\includegraphics[width=1in,height=1.25in,clip,keepaspectratio]{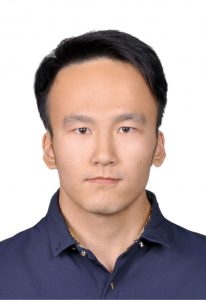}}]{Hanlin Lu}
(S'19) received the B.S. degree in mathematics from Beihang University in 2017. He is a Ph.D. candidate in Computer Science and Engineering at Pennsylvania State University, advised by Prof. Ting He. His research interest includes coreset and distributed machine learning.
\end{IEEEbiography}

\begin{IEEEbiography}[{\includegraphics[width=1in,height=1.25in,clip,keepaspectratio]{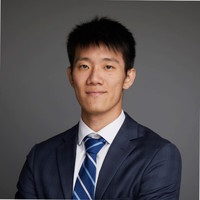}}]{Ming-Ju Li} is a Master’s student in Electrical and Computer Engineering as well as a research assistant in Network Science Research Group under the instruction of Dr. Ting He at The Pennsylvania State University. His research interests are Computer Network and Machine Learning.
\end{IEEEbiography}

\begin{IEEEbiography}[{\includegraphics[width=1in,height=1.25in,clip,keepaspectratio]{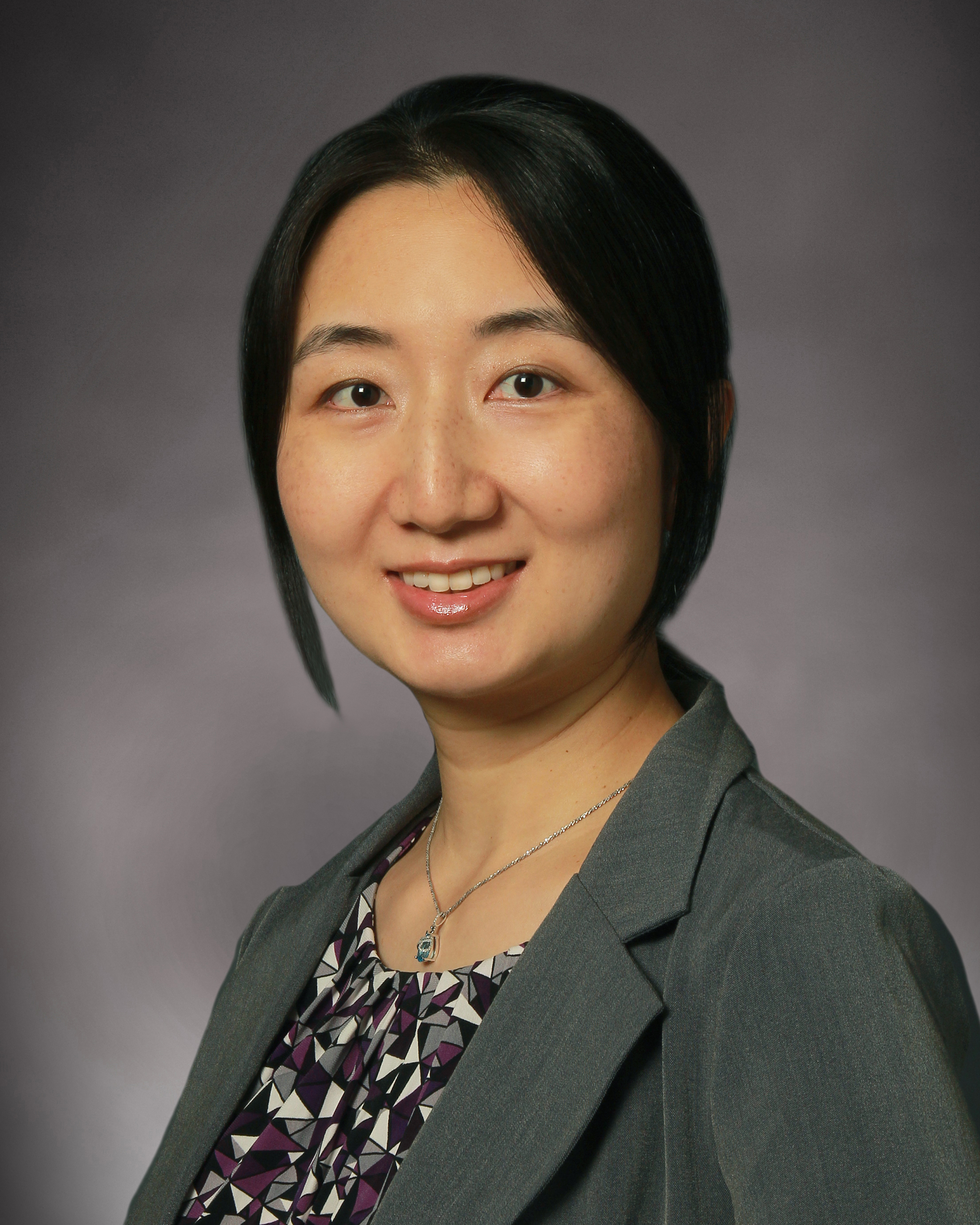}}]{Ting He}
(SM'13) received the B.S. degree in computer science from Peking University, China, in 2003 and the Ph.D. degree in electrical and computer engineering from Cornell University, Ithaca, NY, in 2007. Dr. He is an Associate Professor in the School of Electrical Engineering and Computer Science at Pennsylvania State University, University Park, PA. 
Her work is in the broad areas of computer networking, network modeling and optimization, and statistical inference. 
Dr. He is an Associate Editor for IEEE Transactions on Communications (2017-2020) and IEEE/ACM Transactions on Networking (2017-2021). 

\end{IEEEbiography}

\begin{IEEEbiography}[{\includegraphics[width=1in,height=1.25in,clip,keepaspectratio]{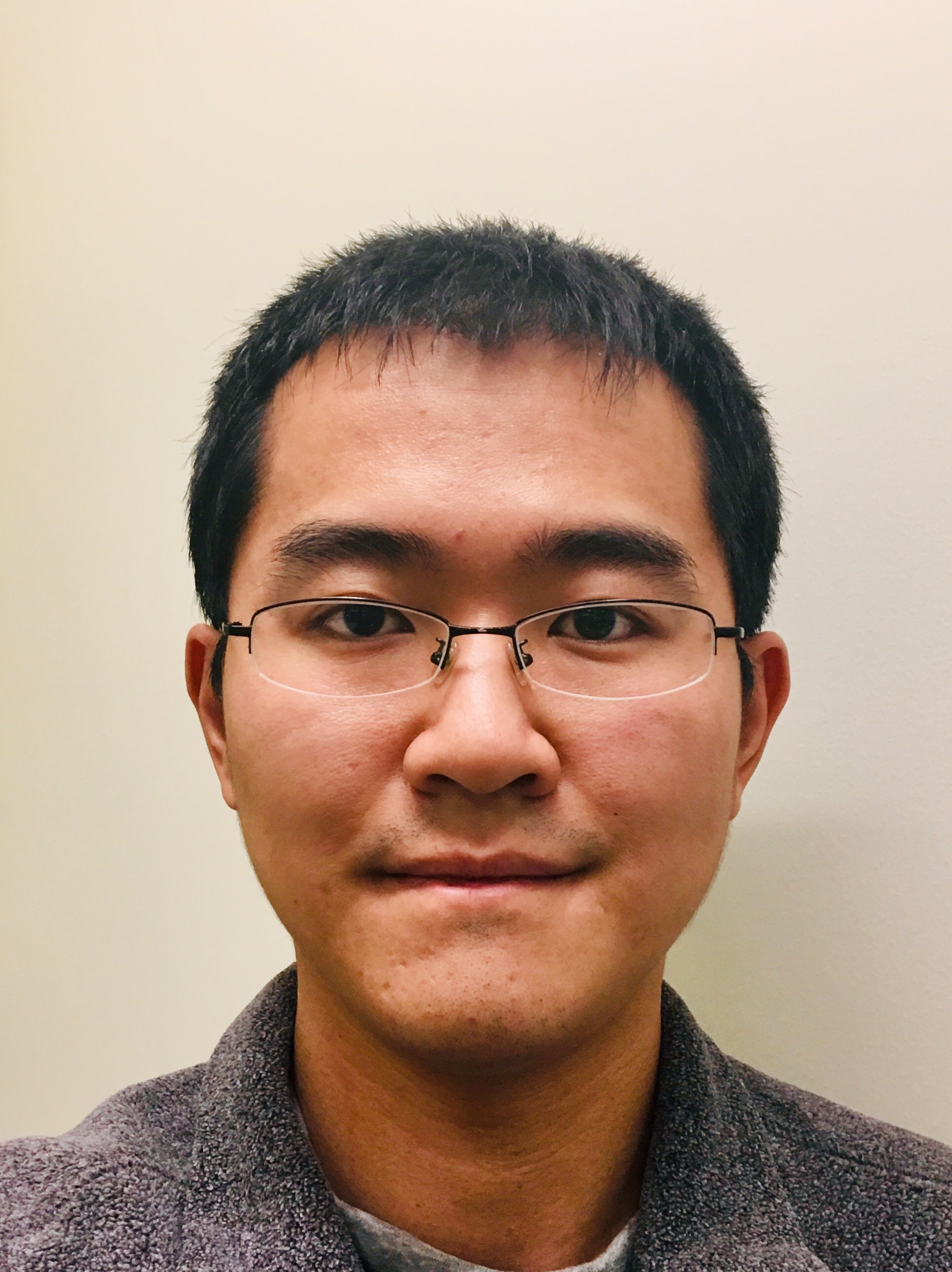}}]{Shiqiang Wang} (S’13–M’15) received his Ph.D. from the Department of Electrical and Electronic Engineering, Imperial College London, United Kingdom, in 2015. Before that, he received his master's and bachelor's degrees at Northeastern University, China, in 2011 and 2009, respectively. He is currently a Research Staff Member at IBM T. J. Watson Research Center. 
His current research focuses on theoretical and practical aspects of mobile edge computing, cloud computing, and machine learning. Dr. Wang serves as an associate editor of IEEE Access. He served as a technical program committee (TPC) member of several international conferences including IEEE ICDCS, IJCAI, WWW, IFIP Networking, IEEE GLOBECOM, IEEE ICC, and as a reviewer for a number of international journals and conferences. He received the IBM Outstanding Technical Achievement Award (OTAA) in 2019, multiple Invention Achievement Awards from IBM since 2016, Best Paper Finalist of the IEEE International Conference on Image Processing (ICIP) 2019, and Best Student Paper Award of the Network and Information Sciences International Technology Alliance (NIS-ITA) in 2015.
\end{IEEEbiography}

\begin{IEEEbiography}[{\includegraphics[width=1in,height=1.25in,clip,keepaspectratio]{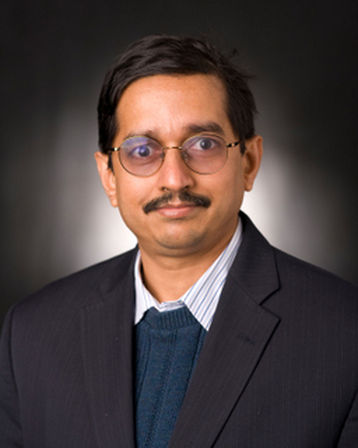}}]{Vijaykrishnan Narayanan} (F'11) is the Robert Noll Chair of Computer Science and Engineering and Electrical Engineering at The Pennsylvania State University. His research interests are in Computer Architecture, Power-Aware Computing, and Intelligent Embedded Systems. He is a Fellow of IEEE and ACM.
\end{IEEEbiography}

\begin{IEEEbiography}[{\includegraphics[width=1in,height=1.25in,clip,keepaspectratio]{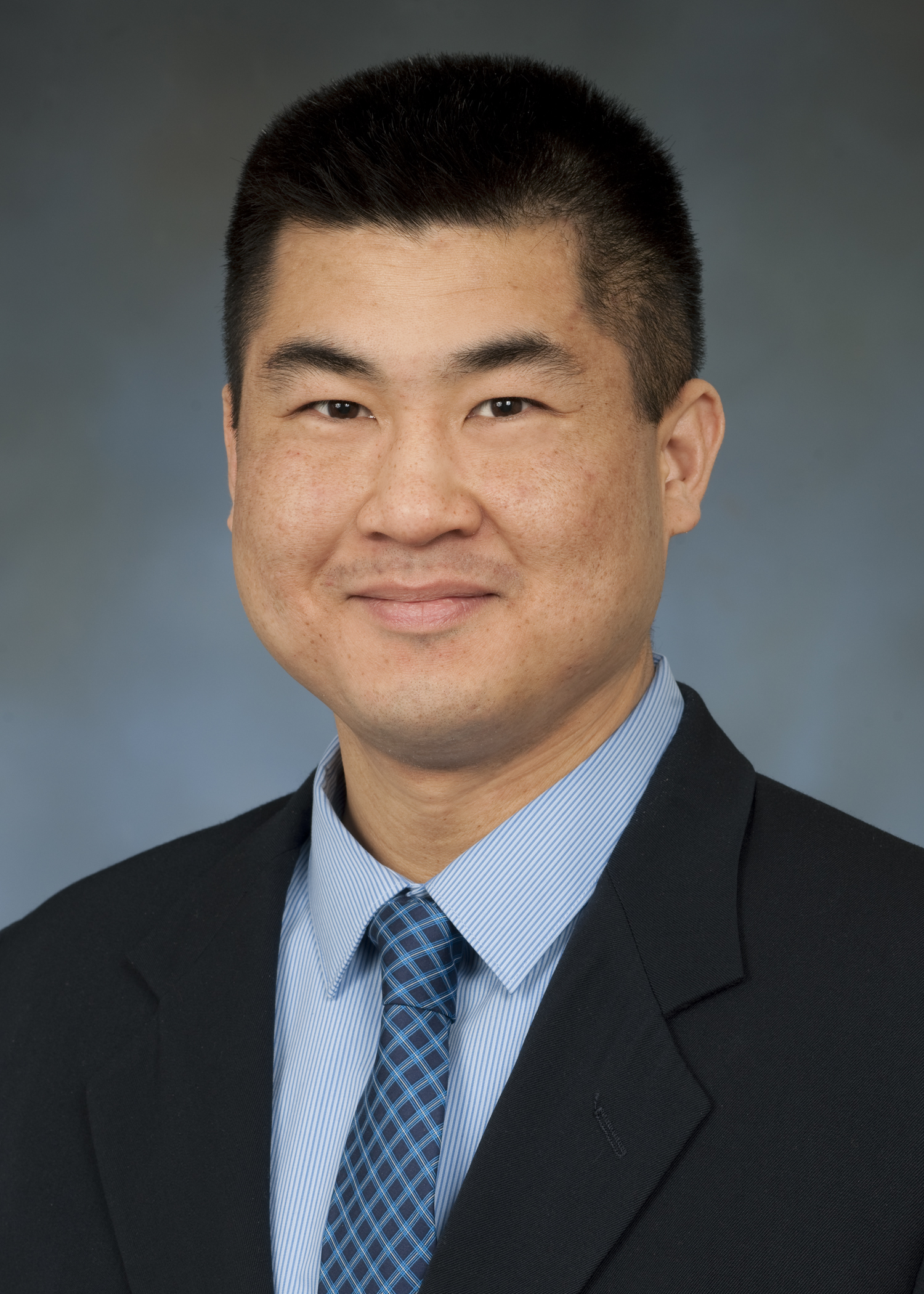}}]{Kevin Chan} (S’02–M’09–SM’18) received the B.S. degree in electrical and computer engineering and engineering and public policy from Carnegie Mellon University, Pittsburgh, PA, USA, in 2001, and the M.S. and Ph.D. degrees in electrical and computer engineering from the Georgia Institute of Technology, Atlanta, GA, USA, in 2003 and 2008, respectively. He is currently a Research Scientist with the Computational and Information Sciences Directorate, U.S. Army Combat Capabilities Development Command, Army Research Laboratory, Adelphi, MD, USA. He is actively involved in research on network science, distributed analytics, and cybersecurity. He has received multiple best paper awards and the NATO Scientific Achievement Award. He has served on the Technical Program Committee for several international conferences, including IEEE DCOSS, IEEE SECON, and IEEE MILCOM.  He is a Co-Editor of the IEEE Communications Magazine Military Communications and Networks Series.\looseness=-1
\end{IEEEbiography}

\end{document}